\documentclass{colt2015} 


\title{Convex Risk Minimization and Conditional Probability Estimation}
\usepackage{times}
\usepackage{caption}



\coltauthor{
  \Name{Matus Telgarsky} \Email{mtelgars@cs.ucsd.edu}\\
  \addr University of Michigan, Ann Arbor
  \AND
  \Name{Miroslav Dud\'ik} \Email{mdudik@microsoft.com}\\
  \addr Microsoft Research
  \AND
  \Name{Robert Schapire} \Email{schapire@microsoft.com}\\
  \addr Microsoft Research and Princeton University
}

\usepackage{eucal}
\usepackage{nccmath}
\newcommand{\mint}{\medint\int}

\usepackage{thmtools}

\usepackage{hyperref}

\usepackage{cleveref}

\usepackage{mathrsfs}
\usepackage{wrapfig}
\usepackage{tikz}
\usepackage{mathtools}
\usepackage{enumitem}
\usepackage{nicefrac}

\usetikzlibrary{calc,intersections,through,positioning,matrix}
\usetikzlibrary{decorations.markings}

%
\def\ddefloop#1{\ifx\ddefloop#1\else\ddef{#1}\expandafter\ddefloop\fi}
\def\ddef#1{\expandafter\def\csname b#1\endcsname{\ensuremath{\mathbf{#1}}}}
\ddefloop ABCDEFGHIJKLMNOPQRSTUVWXYZ\ddefloop
\def\ddef#1{\expandafter\def\csname bb#1\endcsname{\ensuremath{\mathbb{#1}}}}
\ddefloop ABCDEFGHIJKLMNOPQRSTUVWXYZ\ddefloop
\def\ddef#1{\expandafter\def\csname c#1\endcsname{\ensuremath{\mathcal{#1}}}}
\ddefloop ABCDEFGHIJKLMNOPQRSTUVWXYZ\ddefloop
\def\ddef#1{\expandafter\def\csname v#1\endcsname{\ensuremath{\boldsymbol{#1}}}}
\ddefloop ABCDEFGHIJKLMNOPQRSTUVWXYZabcdefghijklmnopqrstuvwxyz\ddefloop
\def\ddef#1{\expandafter\def\csname
  v#1\endcsname{\ensuremath{\boldsymbol{\csname #1\endcsname}}}}
\ddefloop
    {alpha}{beta}{gamma}{delta}{epsilon}{varepsilon}{zeta}{eta}{theta}{vartheta}{iota}{kappa}{lambda}{mu}{nu}{xi}{pi}{varpi}{rho}{varrho}{sigma}{varsigma}{tau}{upsilon}{phi}{varphi}{chi}{psi}{omega}{Gamma}{Delta}{Theta}{Lambda}{Xi}{Pi}{Sigma}{varSigma}{Upsilon}{Phi}{Psi}{Omega}\ddefloop

\def\1{\mathds 1}
\def\R{\mathbb R}

\def\Z{\mathbb Z}
\def\cRz{\cR_{\textup{z}}}

\def\fR{\mathfrak R}

\def\SPAN{\textup{span}}

\def\sign{\textup{sign}}

\DeclareMathOperator{\dom}{dom}

\def\Ker{\textup{Ker}}

\def\Pr{\textup{Pr}}

\def\Bal{\textup{Bal}} 

\newcommand{\ip}[2]{\left\langle #1, #2 \right \rangle}

\setlength{\marginparwidth}{0.5in}

\newcommand\hmu{{\widehat \mu}}

\newcommand\hcR{\widehat \cR}
\newcommand\hcE{\hat \cE}

\newcommand\barq{\bar q}
\newcommand\barr{\bar r}
\newcommand\barf{\bar f}  
\newcommand\hatq{\hat q}
\newcommand\q{q}
\newcommand{\bars}{\bar s}
\newcommand{\barw}{\bar w}

\newcommand{\Parens}[1]{\left(#1\right)}
\newcommand{\bigParens}[1]{\bigl(#1\bigr)}
\newcommand{\BigParens}[1]{\Bigl(#1\Bigr)}
\newcommand{\BiggParens}[1]{\Biggl(#1\Biggr)}
\newcommand{\Bracks}[1]{\left[#1\right]}
\newcommand{\bigBracks}[1]{\bigl[#1\bigr]}
\newcommand{\BigBracks}[1]{\Bigl[#1\Bigr]}
\newcommand{\Braces}[1]{\left\{#1\right\}}
\newcommand{\bigBraces}[1]{\bigl\{#1\bigr\}}
\newcommand{\BigBraces}[1]{\Bigl\{#1\Bigr\}}

\newcommand{\Set}[1]{\left\{#1\right\}}
\newcommand{\norm}[1]{\lVert#1\rVert}
\newcommand{\bigNorm}[1]{\bigl\lVert#1\bigr\rVert}
\newcommand{\abs}[1]{\lvert#1\rvert}
\newcommand{\bigAbs}[1]{\bigl\lvert#1\bigr\rvert}
\newcommand{\BigAbs}[1]{\Bigl\lvert#1\Bigr\rvert}
\newcommand{\angles}[1]{\langle#1\rangle}

\newcommand{\one}{\boldsymbol{1}}
\newcommand{\eps}{\varepsilon}
\newcommand{\ind}{\bbI}

\newcommand{\Eq}[1]{Eq.~\eqref{eq:#1}}

\newcommand{\Lclass}{\bbL}
\newcommand{\Ldiff}{\Lclass^{\!\!2+}}
\newcommand{\Lbnd}{\Ldiff_\textup{\mdseries b}}

\newcommand{\Dcan}{\cD_\star}

\newenvironment{proofof}[1]{\begin{proof}\textbf{(of {#1})}}{\end{proof}}
\newlist{enumroman}{enumerate}{1}
\setlist[enumroman]{label=\textup{(\roman*)},noitemsep}
\newlist{enumromansep}{enumerate}{1}
\setlist[enumromansep]{label=\textup{(\roman*)}}
\newlist{enumromansquash}{enumerate}{1}
\setlist[enumromansquash]{label=\textup{(\roman*)},noitemsep,topsep=0pt}

\begin{document}

\maketitle

\begin{abstract}
  This paper proves, in very general settings, that
  convex risk minimization is a procedure to select a unique conditional probability model determined
  by the classification problem.
  Unlike most previous work, we give results that are general enough to include cases in which no minimum exists, as occurs typically, for instance, with standard boosting algorithms.
  Concretely, we first show that any sequence of predictors minimizing convex risk over the source
  distribution will converge to this unique model when the class of predictors is linear (but potentially of infinite dimension).
  Secondly, we show the same result holds for \emph{empirical} risk minimization whenever this class of predictors is finite dimensional,
  where the essential technical contribution is a norm-free generalization bound.
\end{abstract}

\begin{keywords}
  Convex duality, classification, conditional probability estimation, maximum entropy, consistency, Orlicz spaces.
\end{keywords}

\section{Introduction}
\label{sec:intro}

The goal in (binary) classification is to learn to accurately predict
the label $y\in\{-1,+1\}$ associated with an input $x$.
Unfortunately, it is NP-hard even to approximate this problem in easy cases \citep{raghavendra_halfspace_hardness};
thus a computationally attractive surrogate is often utilized.
Foremost amongst these is \emph{convex risk minimization}
in which a sequence of predictors are produced which minimize in the limit
some convex upper bound on a predictor's classification error.
In this paper, we attempt to analyze the effectiveness of such methods
in as much generality as possible.
Specifically, we aim to address the following questions:

\begin{description}
  \item[(Q1)]
    Suppose a sequence of predictors minimizes the convex risk over
    the true distribution.
    Does this sequence converge to some concrete object?
    This question is murky because convex functions need not have a
    minimum; for instance, the function $e^x$ has no minimum, but
    rather is minimized in the limit $x\rightarrow -\infty$.
    For the high-dimensional problems considered in convex risk
    minimization, the minimum may also only occur ``at infinity'' but
    in a far less straightforward way.
    This is typically the case, for instance, for standard boosting
    algorithms like AdaBoost~\citep{schapire_freund_book_final}.
    In such cases, what can be said concretely about the convergence
    of a minimizing sequence?

  \item[(Q2)]
    Now suppose a given sequence of predictors minimizes the \emph{empirical} convex risk,
    meaning the convex risk over some finite random draw from the distribution.
    What can be said about convergence with respect to the true
    distribution?
    In other words, what can be said about generalization and learning?
    The resolution is unclear here as well, since the preceding question highlights the need for
    predictors to be arbitrarily large, thus dooming the boundedness
    on which most standard statistical procedures rely \citep[Section 4]{bbl_esaim}.
\end{description}
In this paper we resolve both these questions by showing that convex
risk minimization converges to a unique conditional probability model $\bar \eta$.

\paragraph{Main results.}

To state our main theorems, we first present our learning setting.
We consider \emph{linear} classes of functions.
    That is,
    given a base set $\cH$ of prediction functions $h:\cX\to [-1,+1]$,
    the corresponding linear class consists of weightings of these
    functions described by weight vectors $w$ with $\sum_{h\in\cH}
    \abs{w[h]}<\infty$ where $w[h]$ denotes the weight of the
    function $h$, and where it is understood that these weights are
    non-zero only on a countable subset of $\cH$.
    Formally, this class is
    \[
      \BigBraces{ x \mapsto \sum_{h\in \cH} w[h]\,h(x) : \sum_{h\in\cH} \bigAbs{w[h]} <\infty }.
    \]
This setting recovers, for instance, the classical regression setting by choosing $\cH$ to consist of
    covariates corresponding to the dimensions of $x$, as well as the classical boosting setting by leaving $\cH$ arbitrary.
    Let $L_1(\cH)$ denote all possible choices for $w$ as above;
    moreover, given $w\in L_1(\cH)$,
    let $Hw: \cX \to \R$ denote the corresponding element of the linear class, meaning, $(Hw)(x)=\sum_h w[h]\,h(x)$. Thus,
    $H$ is a linear operator, abstractly collecting the elements of
    $\cH$ as ``columns''.

The loss functions $\ell$ that we study come from a large class $\Lbnd$ of certain twice continuously differentiable losses,
    whose precise definition is deferred to \Cref{sec:proof_outlines}.
Both the well-studied logistic loss $\ell_{\log}(r) := \ln(1+\exp(r))$
and exponential loss
    $\ell_{\exp}(r) := \exp(r)$ belong to $\Lbnd$.  With respect to
loss $\ell$, we define the population and
    empirical convex risk to be
    \begin{align*}
      \cR(w)
      := \mint \ell\BigParens{-y(Hw)(x)}d\mu(x,y)
      \qquad\textup{and}\qquad
      \hat\cR_n(w)
      := \frac 1 n \sum_{i=1}^n \ell\BigParens{-y_i(Hw)(x_i)},
    \end{align*}
    where $\Parens{(x_i,y_i)}_{i=1}^n$ is an i.i.d.\@ draw of size
    $n$ from the true distribution $\mu$.  Lastly, we
    define the excess convex risk $\cE(w) := \cR(w) - \inf_{v\in L_1(\cH)} \cR(v)$, with $\hcE_n$ defined analogously.

There are well-established methods for converting the models
produced using convex risk minimization into
\emph{conditional probability models}.
Specifically,
given loss $\ell\in\Lbnd$, functions $\cH$, and weighting $w\in L_1(\cH)$,
we define
    \begin{equation}  \label{eq:phi-def}
      \phi(r)
      := \frac {\ell'(r)}{\ell'(r) + \ell'(-r)}
      \qquad\textup{and}\qquad
      \eta_w(x,y) := \phi\BigParens{y(Hw)(x)}.
    \end{equation}
This function $\eta_w(x,y)$, which is well-defined with range $[0,1]$
for all $\ell\in\Lbnd$,
can be regarded as an estimate of the
conditional probability
$\Pr[Y=y|x]$
\citep{friedman_hastie_tibshirani_statboost,zhang_convex_consistency,bartlett_jordan_mcauliffe}.
For example, logistic loss $\ell_{\log}$ yields the usual sigmoid
$\phi(r) = (1 + \exp(-r))^{-1}$.

Our convergence results do not apply to the weighting sequences $(w_i)_{i\geq 1}$
directly, since, as earlier mentioned, these will often have no limit.
Instead we prove convergence of their corresponding
{conditional probability models}.
Specifically,
our first main result, the resolution of \textbf{(Q1)},
states that minimizing $\cR$ implies convergence to a unique conditional probability model $\bar\eta$.

\begin{theorem}
  \label{fact:convergence:simplified}
  Let loss $\ell \in \Lbnd$, probability measure $\mu$, and hypotheses $\cH$ be given.
  Then there exists a unique conditional probability model $\bar\eta : \cX \times \{-1,+1\} \to [0,1]$
  and a function $f_1:\R\to\R_+$ with $f_1(\eps) \to 0$ as $\eps \downarrow 0$
  such that every $w\in L_1(\cH)$ satisfies
  \[
    \mint \left| \bar \eta(x,y) - \eta_w(x,y)\right|d\mu(x,y)
    \leq f_1\left(
      \cE(w)
    \right).
  \]
  In particular, every sequence $(w_i)_{i\geq 1}$ with $\lim_{i\to\infty} \cE(w_i) = 0$
  satisfies $\eta_{w_i} \to \bar \eta$ in $L_1(\mu)$.
\end{theorem}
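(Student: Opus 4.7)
The natural route is convex duality. I would cast $\inf_{w \in L_1(\cH)} \cR(w)$ as the dual of an entropy-maximization problem over conditional-probability models, in the spirit of the classical MaxEnt / logistic-regression duality. Letting $\ell^*$ denote the Fenchel conjugate of $\ell$ and passing to the Lagrangian, the dual maximizes a strictly concave entropy-type functional over conditionals $\eta:\cX\times\{-1,+1\}\to[0,1]$ satisfying mean-matching constraints of the form $\mint y\,h(x)\,\eta(x,y)\,d\mu(x,y) = \mint y\,h(x)\,d\mu(x,y)$ for every $h\in\cH$. The first step is to establish strong duality and to show that this dual supremum is attained. Because every $\ell\in\Lbnd$ is strictly convex, $\ell^*$ is strictly convex on the interior of its domain and the dual maximizer is unique; defining $\bar\eta$ to be this maximizer gives the object promised by the theorem, \emph{independently} of any primal minimizing sequence.

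The quantitative bound then follows from a Bregman-divergence identity. A Fenchel--Young calculation combined with dual feasibility of $\eta_w$ should yield a Pythagorean-type identity
\[
  \cE(w) \;=\; D_\ell(\bar\eta,\eta_w),
\]
where $D_\ell$ is the conditional Bregman divergence induced by $\ell^*$: for logistic loss this is the standard conditional KL divergence, and for general $\ell\in\Lbnd$ the strict convexity and twice-differentiability of $\ell^*$ make $D_\ell$ strongly convex in its second argument on compact subsets of $(0,1)^2$. A Pinsker-type inequality---either by reduction to the classical $\|\cdot\|_1^2\le 2\,\mathrm{KL}$ bound or directly via an integral-remainder Taylor bound driven by $\ell''$---then yields
\[
  \mint\bigAbs{\bar\eta(x,y)-\eta_w(x,y)}\,d\mu(x,y)\;\le\;\sqrt{c\,D_\ell(\bar\eta,\eta_w)}\;=\;\sqrt{c\,\cE(w)}
\]
on the regime where $\cE(w)$ is small, with the trivial cap $2$ covering the remaining regime. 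This furnishes the required modulus $f_1$.

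The main obstacle is strong duality in the stated generality: the primal infimum need not be attained (minimizing sequences $w_i$ can diverge in $L_1(\cH)$-norm), $\cH$ may be infinite-dimensional, and losses such as $\lexp$ grow super-polynomially, so Slater-type constraint qualifications fail and standard conjugate-duality theorems do not apply off the shelf. The remedy is to work in an Orlicz function space tailored to the growth of $\ell$ and to exploit the observation that even when $w_i$ diverges in norm, the images $\eta_{w_i}$ lie in the weakly compact set of $[0,1]$-valued conditionals; any weak cluster point must satisfy the dual feasibility and first-order optimality conditions and, by strict concavity of the dual objective, must equal $\bar\eta$. A secondary technical hurdle is verifying the Pinsker-type inequality uniformly across $\Lbnd$, which ultimately reduces to the twice-continuous-differentiability and positive-curvature conditions built into the definition of that class.
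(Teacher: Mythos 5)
Your construction of $\bar\eta$ is the paper's: strong Fenchel duality in an Orlicz space tailored to $\ell$, attainment and (by strict convexity of $\ell^*$) uniqueness of the dual optimum $\barq$, and $\bar\eta$ defined from $\barq$ rather than from any primal minimizing sequence. (One caveat even here: the dual variable is an unnormalized density $\q$ with decorrelation constraint $\int \q\, y h\, d\mu = 0$, not a conditional $\eta$ with a mean-matching constraint, and showing that the ratio $\barq(x,-y)/(\barq(x,-y)+\barq(x,y))$ is well-defined requires proving $\barq(x,y)+\barq(x,-y)>0$ a.e., which is a separate argument. Also, weak$^*$ cluster points of $\eta_{w_i}$ in the $[0,1]$-valued ball do not give $L_1(\mu)$ convergence, which is what the theorem asserts; the paper explicitly notes the lack of $L_1$ compactness.)

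The genuine gap is in the quantitative step. First, the identity $\cE(w)=D_\ell(\bar\eta,\eta_w)$ is not available: $\eta_w$ (equivalently the density $\ell'(Aw)$) is not dual feasible for general $w$, so there is no Pythagorean identity; what the duality gap actually equals is a pointwise primal Bregman divergence between $Aw$ and $\barf=(\ell^*)'(\barq)$. Second, and more fundamentally, the Pinsker-type inequality fails because the problem is not confined to compact subsets of $(0,1)^2$: on the ``easy'' set where $\barq=0$, one has $\bar\eta=1$ and $(Aw)\to-\infty$ along minimizing sequences, and since $\liminf_{z\to-\infty}\ell''(z)=0$ for every $\ell\in\Lbnd$, the curvature driving any such inequality vanishes exactly where it is needed. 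The paper must therefore split $\cZ$ into the easy set $\cD^c$ and the difficult set $\cD=\{\barq>0\}$: on $\cD^c$ it uses a Markov/margin argument (\Cref{fact:Dc_controls}), entirely unlike Pinsker, and on $\cD$ the Taylor/curvature argument only works after truncating to $|Aw|\le c_1$ and $c_2\le\barq\le c_3$ (\Cref{fact:D_controls}), with the excluded pieces $S_\pm$ and $V$ bounded via the decorrelation constraint and an Orlicz H\"older inequality. Consequently the resulting modulus $f_1$ is \emph{not} $\sqrt{c\,\eps}$: it contains the term $\mu(\{\barq\in(0,c_2)\}\cup\{\barq>c_3\})$, which tends to zero only by continuity of measure as the truncation levels are balanced against $\eps$, with no rate in general. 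Your proposed two-line finish would prove a rate that the theorem does not (and cannot, in this generality) deliver, and it silently assumes curvature that the loss class does not have.
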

Note that the existence of $\bar\eta$ is not immediate given the existence of sequences minimizing $\cR$
since the collection of mappings from $\cX$ to $[0,1]$ is not compact in the $L_1(\mu)$ metric in general.
Instead, the proof here constructs $\bar\eta$ directly via duality, and thereafter uses duality
to control these sequences.

\Cref{fact:convergence:simplified} carries two essential consequences.
First, our analysis provides
a convergence concept for algorithms utilizing
convex risk minimization that is more general than previous approaches
in the sense that it can handle, for instance, the
unregularized boosting methods of \citet[Algorithm 1]{zhang_yu_boosting}, or even any regularized scheme with regularization weakening to zero.
Secondly, the real-valued model $Hw$ can be used for classification
simply by taking its sign, which is exactly equivalent to
the sign of $\eta_w(\cdot,1) - 1/2$, that is, the more likely label
according to the corresponding conditional probability model $\eta_w$.
Therefore,
convergence properties of $(\eta_{w_i})_{i\geq 1}$ imply convergence properties of the classification errors made by $(Hw_i)_{i\geq 1}$, complementary to the results of
\citet{bartlett_jordan_mcauliffe} and \citet{zhang_convex_consistency};
see \Cref{fact:zo}.

Next comes the resolution of \textbf{(Q2)}: under the assumption $|\cH|<\infty$,
we show that it suffices to minimize the empirical risk $\hat\cR_n$.

\begin{theorem}
  \label{fact:findim:gen:simplified}
  Suppose the setting of \Cref{fact:convergence:simplified}, in particular the existence of $\bar \eta$,
  but additionally that $|\cH|<\infty$.
  There exists a nonincreasing function $f_2 : \R\to\R_+$
  such that, with probability at least $1-\delta$ over an i.i.d.\@ draw of size $n\geq \Omega(\ln(1/\delta))$ from $\mu$,
  every $w\in L_1(\cH)$ satisfies
  \[
    \mint \left| \bar \eta(x,y) - \eta_w(x,y)\right|d\mu(x,y)
    =
    \cO\BiggParens{
      f_2\left(\hcE_n(w)\right)
      \BiggParens{
        \sqrt{\hcE_n(w)}
        + \sqrt{\frac {\ln(n) + \ln(1/\delta)}{n}}
      }
    },
  \]
  where $\Omega(\cdot)$ and $\cO(\cdot)$ omit constants based on $\cH$, $\ell$, and $\mu$, but not on the
  sample, or on $w$.
  In particular, any sequence $(w_i)_{i\geq 1}$ with $\lim_{i\to\infty} \hcE_i(w_i) = 0$
  satisfies $\eta_{w_i} \to \bar \eta$ in $L_1(\mu)$ a.s.
\end{theorem}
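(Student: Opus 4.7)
The plan is to combine \Cref{fact:convergence:simplified} with a concentration argument that avoids any dependence on $\|w\|_1$. Applying \Cref{fact:convergence:simplified} to the empirical measure $\hat\mu_n$ yields an empirical analogue $\widehat\eta_n$ satisfying $\int\bigAbs{\widehat\eta_n-\eta_w}\,d\hat\mu_n \leq f_1(\hcE_n(w))$, and then the triangle inequality gives
\[
\int \bigAbs{\bar\eta-\eta_w}\,d\mu
\;\leq\;
\int \bigAbs{\bar\eta-\widehat\eta_n}\,d\mu
\;+\;
\int \bigAbs{\widehat\eta_n-\eta_w}\,d\mu.
\]
It thus suffices to (i) upgrade the $L_1(\hat\mu_n)$ bound on $\int\bigAbs{\widehat\eta_n-\eta_w}$ into an $L_1(\mu)$ bound, and (ii) show $\widehat\eta_n\to\bar\eta$ quantitatively in $L_1(\mu)$.

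For (i), I would invoke a pseudo-dimension / Rademacher argument over the class $\{\eta_w : w\in L_1(\cH)\}$. The key observation is that although $\|w\|_1$ and $Hw$ are unbounded, the composition $\eta_w = \phi\bigParens{y\cdot(Hw)(x)}$ always lies in $[0,1]$, and since $|\cH|=d$ is finite, $\{Hw\}$ is a $d$-dimensional linear space, so $\{\eta_w\}$ has pseudo-dimension $O(d)$. Standard symmetrization then delivers an $O\bigParens{\sqrt{d\ln(n/\delta)/n}}$ uniform deviation between $\mu$ and $\hat\mu_n$ on this class, independent of any norm of $w$; this is the \emph{norm-free} generalization bound advertised in the abstract.

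For (ii), both $\bar\eta$ and $\widehat\eta_n$ arise from the duality construction of \Cref{fact:convergence:simplified} applied against $\mu$ and $\hat\mu_n$, respectively; with $d$ finite, this duality is a finite-dimensional convex program whose optimality conditions are $d$ moment equations of the form $E_\nu[h(X)\,g(X,Y)]=c_h$, one per $h\in\cH$. The empirical and population versions of these $d$ moments differ by $O\bigParens{\sqrt{\ln(d/\delta)/n}}$ via Hoeffding and a union bound, and a stability analysis of the convex dual then propagates this perturbation to $\int\bigAbs{\bar\eta-\widehat\eta_n}\,d\mu$. This stability modulus is precisely what the theorem packages as the factor $f_2$.

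The principal obstacle is step (ii): delivering a quantitative stability bound for the dual solution when the dual may be only strictly (rather than uniformly strongly) convex, so that $\widehat\eta_n$ depends continuously but perhaps non-Lipschitz on the perturbed constraints. This explains why $f_2$ is only required to be nonincreasing --- it may degrade as $\hcE_n(w)\downarrow 0$, but in a way controlled so that the product $f_2(\hcE_n(w))\bigParens{\sqrt{\hcE_n(w)}+\sqrt{\ln(n/\delta)/n}}$ still vanishes in the correct limit. The final almost-sure statement follows from the high-probability bound by choosing $\delta = 1/n^2$ and applying Borel--Cantelli.
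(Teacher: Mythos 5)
Your decomposition has two genuine gaps, and they sit exactly where the paper's machinery does its work. First, the object $\widehat\eta_n$ obtained by applying \Cref{fact:convergence:simplified} to $\hat\mu_n$ is built from a dual optimum $\widehat q_n\in L_{\beta^*}(\hat\mu_n)$, which is defined only $\hat\mu_n$-a.e., i.e., only on the $n$ sample points. The quantities $\int\abs{\bar\eta-\widehat\eta_n}\,d\mu$ and $\int\abs{\widehat\eta_n-\eta_w}\,d\mu$ in your triangle inequality are therefore not defined: there is no canonical extension of the empirical limit model off the sample (the natural candidate $\eta_{\hat w_n}$ fails because the empirical primal optimum typically does not exist, which is the premise of the whole paper). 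Relatedly, the function $f_1$ produced by \Cref{fact:convergence:simplified} depends on the measure through $\norm{\barq}_{\beta^*}$, $c_{\ell,\mu}$, and the level sets of $\barq$; instantiated at $\hat\mu_n$ these are random, so even your step (i) bound $\int\abs{\widehat\eta_n-\eta_w}d\hat\mu_n\le f_1(\hcE_n(w))$ does not come with a sample-independent $f_1$ without further concentration arguments.

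Second, your step (ii) --- quantitative stability of the dual optimum under perturbation of the $d$ moment constraints --- is the hard part, and you correctly flag it as the ``principal obstacle'' but do not supply it. Without uniform strong convexity of the dual (which fails near $s=0$, where $(\ell^*)'(s)\to-\infty$, and near the boundary of $\dom\ell^*$), an $O(\sqrt{\ln(d/\delta)/n})$ perturbation of the constraints gives no rate for $\widehat q_n\to\barq$. The paper's proof never compares empirical and population dual optima. Instead it fixes the \emph{population} objects $\barq$, $\bar\eta$, and the canonical difficult set $\Dcan$, splits $\int\abs{\eta_w-\bar\eta}d\mu$ along $\Dcan$ and $\Dcan^c$, and handles the two pieces with \Cref{fact:D_controls} and \Cref{fact:Dc_controls}, whose inputs are excess risks rather than dual variables. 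The norm-free character comes not from the boundedness of $\eta_w$ but from the balance $\Bal(\mu_{|\Dcan})>0$ (\Cref{fact:findim:D:0}), which bounds $\|w_\perp\|_1$ by $\cR(w;\hmu_{|\Dcan})/(\bars\Bal)$ and yields local strong convexity of the risk over $\Dcan$ (\Cref{fact:findim:sc}); the easy set is handled by a VC/margin bound (\Cref{fact:findim:gen:helper:Dc}). None of this structure appears in your proposal, and it is precisely what substitutes for the missing dual stability.
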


Note that perhaps the most natural approach to proving this theorem---namely, to apply properties
of Rademacher complexity of Lipschitz functions \citep{bbl_esaim}---introduces a dependence on the norm
of $\|w\|$.
Instead, the bound above only exhibits a dependence on $\hcE_n(w)$,
which can be made arbitrarily small by considering only nearly optimal choices.
Depending on $\hcE_n(w)$ rather than $\|w\|$ is essential as these minimizing sequences will generally exhibit unboundedly growing norms,
a fact often encountered in practice (see \Cref{sec:experiments}).
Note that while \Cref{fact:findim:gen:simplified} requires strictly convex losses,
it is proved via generalization bounds which can handle more than just $\Lbnd$, in particular
the hinge loss (see \Cref{fact:findim:gen:helper:Dc,fact:findim:gen:helper:D}).

\paragraph{Illustrative example.}

Suppose $\cX = [-1,1]^2$ and $\cH$ consists of the coordinate functions.
Consider $\ell=\ell_{\log}$,
i.e., logistic regression.  Suppose
that the measure $\mu$ puts all of the mass on points $x$ that fall
into two well-separated rectangular regions (depicted as red and blue in \Cref{fig:margins}), with points in the blue region having $\Pr[Y=1|x]=1$ and points in the red region having $\Pr[Y=-1|x]=1$.
From the figure, it is clear that there exist two distinct vectors, $w_1$ and $w_2$, both of which define the lines (perpendicular to them) separating positive and negative examples.

\begin{wrapfigure}{r}{0.35\textwidth}
  \vspace{-30pt}
  \begin{center}
    \includegraphics[width=0.3\textwidth]{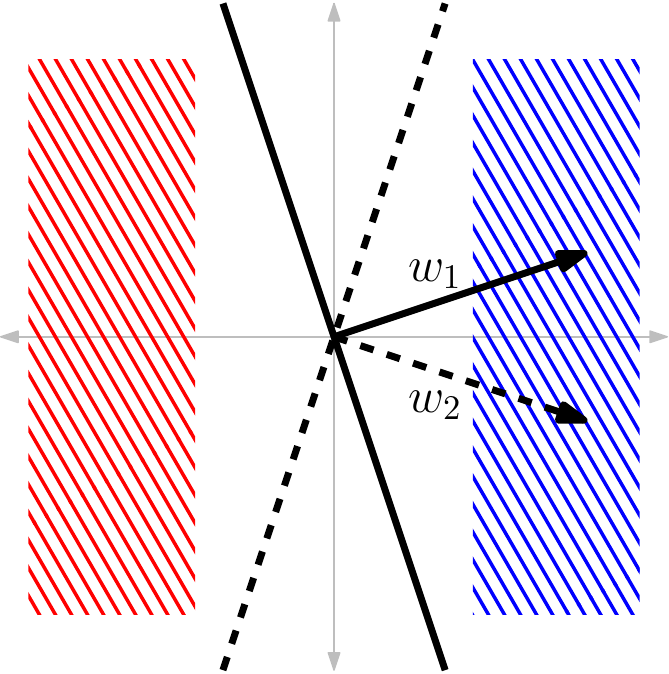}
  \end{center}
  \vspace{-20pt}
  \caption{A well-separated classification problem.}
  \label{fig:margins}
  \vspace{-10pt}
\end{wrapfigure}

The convex risk $\cR$ is minimized by both of the sequences $(iw_1)_{i\geq 1}$ and $(iw_2)_{i\geq 1}$;
moreover, the infimal risk is 0, which is not attained by
any $w\in L_1(\cH) = \R^2$, and
every minimizing sequence has norms growing unboundedly.

Conceivably, minimizing logistic loss could lead one algorithm to follow the sequence $(iw_1)_{i\geq 1}$ and another to follow $(iw_2)_{i\geq 1}$. Both of these sequences
converge in the $L_1(\mu)$ metric;
their respective limit points, $\eta^{(1)}$ and $\eta^{(2)}$, are equal to 1, $1/2$, and 0 (for the positive class) on those points which have inner product,
respectively, positive, 0, and negative to $w_1$ or $w_2$.  Consequently, $\eta^{(1)} \neq \eta^{(2)}$. This shows that two different runs of logistic regression could give different probability estimates at some points. How then can \Cref{fact:convergence:simplified}
give a unique limit $\bar\eta$?
The resolution is that \Cref{fact:convergence:simplified} gives convergence in the $L_1(\mu)$ metric.  In particular, $w_1$ and $w_2$ only
disagree on the region between the two point clouds; this is a measure zero set, and thus $\eta^{(1)} = \eta^{(2)}$ $\mu$-a.e.

Note that in this setting, it is also straightforward to prove an analog of the uniform deviation bounds of \Cref{fact:findim:gen:simplified};
indeed, applying either VC theory \citep{bbl_esaim} or margin
bounds \citep{boosting_margin} will yield a bound that also
lacks dependence on $\|w\|$.  The distinction, however, is what both results say when applied to a sequence which does not achieve
zero classification error.  As will be shown in \Cref{fact:zo}, the classification error of these sequences may be erratic,
and therefore only loosely describes convergence behavior.  On the other hand, \Cref{fact:convergence:simplified}
and \Cref{fact:findim:gen:simplified} give a concrete object, $\bar\eta$, to which all minimizing sequences converge.

\paragraph{Classification errors and consistency.}

Let $\cRz(g) := \Pr[ Y \neq \sign(g(X))]$ denote the classification error of any mapping $g : \cX\to \R$,
where $\sign(r) := \one[r\geq 0]-\one[r<0]$.
Recall that the signs of $\eta_w(\cdot,1) - 1/2$ and $Hw$ agree, which
suggests that, because $\eta_w \to \bar\eta$
as provided by \Cref{fact:convergence:simplified,fact:findim:gen:simplified},
there might be a relationship between
$\bigParens{\cRz(Hw_i)}_{i\geq 1}$ and $\bigParens{\cRz(\bar \eta(\cdot,1) - 1/2)}_{i\geq 1}$.  However, convergence is stymied by the points where $\bar\eta = 1/2$,
that is, the points where $\sign(\bar\eta(\cdot,1) - 1/2)$
is discontinuous.
The following result provides that, excluding this set, the desired convergence indeed occurs;
in order to state it succinctly,
further let
$\eta_\mu(x,y) := \Pr[Y=y|x]$ denote the true conditional probability model,
and $\mu_\cX$ the marginal distribution along $\cX$.

\begin{proposition}
  \label[proposition]{fact:zo}
  Suppose the setting of \Cref{fact:convergence:simplified},
  and let $(w_i)_{i=1}^\infty$ be any sequence with $\eta_{w_i} \to \bar \eta$ in the $L_1(\mu)$ metric,
  and set $\Lambda := \{ (x,y) : \bar\eta(x,y) = 1/2 \}$.
  Then
  \[
    \limsup_{i\to\infty}
    \!\left|
      \cRz(Hw_i)
      -
      \cRz\!\Parens{\bar \eta(\cdot,1) - \frac 1 2}\!
    \right|\!
    \leq
    \limsup_{i\to\infty}
    \underbrace{
    \!\left|
      \mint_{\Lambda} \BigParens{2\eta_\mu(x,1) - 1}\one\!\Bracks{\eta_{w_i}(x,1) < \frac 1 2}\!d\mu_\cX(x)
    \right|}_{\star}.
  \]
  Moreover, there exist choices of $(\mu,\cH,\ell)$ such that $\star > 0$ and the inequality is an equality.
\end{proposition}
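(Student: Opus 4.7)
The plan is to rewrite the classification-error difference as a single integral over $\cX$, split at $\Lambda$, show the complement contributes nothing in the limit, and exhibit a small four-point example for the equality claim.

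First, because $\ell\in\Lbnd$ is strictly convex, $\phi$ from \eqref{eq:phi-def} is strictly increasing with $\phi(0)=1/2$; hence $\sign(Hw_i(x))$ and $\sign(\eta_{w_i}(x,1)-1/2)$ coincide, and in particular $\sign(g(x))=-1$ iff $g(x)<0$. Writing
\[
  \cRz(g) = \int_\cX \bigBracks{\eta_\mu(x,1)\one[\sign(g(x))=-1] + \eta_\mu(x,-1)\one[\sign(g(x))=+1]}\,d\mu_\cX(x)
\]
for $g\in\{Hw_i,\ \bar\eta(\cdot,1)-1/2\}$, subtracting, and using $\eta_\mu(x,1)+\eta_\mu(x,-1)=1$ collapses the difference to
\[
  \cRz(Hw_i) - \cRz\BigParens{\bar\eta(\cdot,1)-\tfrac12} = \int_\cX (2\eta_\mu(x,1)-1)\BigParens{\one[\eta_{w_i}(x,1)<\tfrac12] - \one[\bar\eta(x,1)<\tfrac12]}\,d\mu_\cX(x).
\]

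Second, I would split the integral at the $\cX$-projection of $\Lambda$ (which is well-defined since $\bar\eta(x,1)=1/2\iff\bar\eta(x,-1)=1/2$). On $\Lambda$ the second indicator vanishes, so the $\Lambda$-piece has absolute value $\star$. On $\Lambda^c$ I would argue the integral tends to $0$: the identity $\phi(-r)=1-\phi(r)$ reduces $L_1(\mu)$ convergence of $\eta_{w_i}$ to $L_1(\mu_\cX)$ convergence of $\eta_{w_i}(\cdot,1)$, and the standard ``every subsequence has a further subsequence converging a.e.'' device gives a sub-subsequence converging $\mu_\cX$-a.e.; on $\Lambda^c$ where $\bar\eta(x,1)\neq1/2$, the two indicators eventually agree for $\mu_\cX$-a.e. $x$, so by bounded convergence the $\Lambda^c$-integral tends to $0$ along the sub-subsequence, hence along the original sequence. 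The claimed bound then follows from $|A_i+B_i|\leq|A_i|+|B_i|$ upon taking $\limsup$. This is the main obstacle, because indicators are discontinuous in $\eta$ and $L_1$ convergence does not transfer to them directly; the subsequence device plus pointwise separation on $\Lambda^c$ are both essential.

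Third, for the existence of $(\mu,\cH,\ell)$ with $\star>0$ and matching equality, take $\cX=\{x_1,\ldots,x_4\}$ under uniform $\mu_\cX$, $\eta_\mu(\cdot,1)=(0.8,0.3,0.6,0.5)$, $\ell=\ell_{\log}$, and $\cH=\{h\}$ with $h=(+1,+1,-1,-1)$. The cancellation $p_1+p_2=p_3+p_4=1.1$ gives $\cR(w)=\tfrac12(\ell(-w)+\ell(w))$, uniquely minimized at $\bar w=0$, so $\bar\eta\equiv1/2$ on $\cX$ and $\Lambda=\cX\times\{-1,+1\}$. For the minimizing sequence $w_i=-1/i$, $\sign(Hw_i)=(-1,-1,+1,+1)$ gives $\cRz(Hw_i)=0.5$, $\cRz(\bar\eta(\cdot,1)-\tfrac12)=0.45$, and $\star=\tfrac14\abs{(2p_1-1)+(2p_2-1)}=0.05$, so both sides equal $0.05$ for every $i$, establishing equality of the $\limsup$s.
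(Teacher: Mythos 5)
Your proof is correct, and its skeleton---the pointwise identity reducing the classification-risk difference to $\int(2\eta_\mu(x,1)-1)\bigParens{\one[\eta_{w_i}(x,1)<1/2]-\one[\bar\eta(x,1)<1/2]}d\mu_\cX$, followed by the split at $\Lambda$---is exactly the paper's (cf.\ \Cref{fact:zo:ub}, which follows \citealp[Theorem 2.1]{DGL}). You diverge in two sub-steps. First, on $\Lambda^c$ the paper does not use the subsequence/a.e.-convergence device; it majorizes $\one[g_1\neq g_2]$ by $\min\{1,\,|\eta_{w}(x,1)-\bar\eta(x,1)|/|\bar\eta(x,1)-1/2|\}$ and runs a threshold argument with $\sigma:=\sqrt{\norm{\eta_w-\bar\eta}_1}$, which yields a quantitative modulus (how fast the off-$\Lambda$ term vanishes as a function of $\norm{\eta_w-\bar\eta}_1$); your qualitative argument is sound and suffices for the $\limsup$ statement, but gives no rate. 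Your observation that $\phi(-r)=1-\phi(r)$ turns $L_1(\mu)$ convergence into $L_1(\mu_\cX)$ convergence of $\eta_{w_i}(\cdot,1)$ is a nice shortcut the paper doesn't need to make explicit. Second, your tightness example differs from the paper's (\Cref{fact:zo:lb:new}): the paper uses two deterministic-label point masses and the oscillating sequence $w_i=(-1)^i/i$, so that $\cRz(Hw_i)$ genuinely oscillates and the construction is insensitive to the convention for $\sign(0)$; your four-point noisy-label example with $w_i=-1/i$ makes both sides constant in $i$, which is arguably cleaner for verifying equality, though the value of $\cRz(\bar\eta(\cdot,1)-1/2)$ does lean on the paper's convention $\sign(0)=+1$ (the absolute values still match under the opposite convention, so this is cosmetic). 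I checked your arithmetic: $\cR(w)=\tfrac12(\ell(-w)+\ell(w))$, $\bar w=0$, $\bar\eta\equiv 1/2$, $\cRz(Hw_i)=0.5$, $\cRz(\bar\eta(\cdot,1)-\tfrac12)=0.45$, and $\star=0.05$ all hold.
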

The proposition implies that
the difference between the classification error of $\bar\eta$ and that of $\eta_{w_i}$ is bounded by $\mu(\bar\eta=1/2)$ in the limit. The fact that the bound in the proposition can be tight, i.e., there is a gap between the classification risks even as $\eta_{w_i}\to\bar\eta$, implies that the classification risk cannot be easily used to show convergence of $\eta_{w_i}$.
Similarly, as discussed with the example in \Cref{fig:margins}, any approach to the generalization analysis
that bounds classification error, such as VC theory, will be problematic since the classification
error can behave erratically, as provided by the possibility of $\star>0$ in \Cref{fact:zo}.

\newcommand{\MF}{\mathrm{MF}}

Finally, recall the classical consistency results
\citep{zhang_convex_consistency,bartlett_jordan_mcauliffe},
which may be summarized as follows.
Let $\MF$ denote the set of all measurable functions.
Then there exists a function $f_3 :\R \to \R_+$
with $f_3(\eps)\to 0$ as $\eps\downarrow 0$ so that every $w\in L_1(\cH)$ satisfies
\[
  \cRz(w) - \inf_{f\in\MF} \cRz(f) \leq f_3\BigParens{ \cR(w) - \inf_{f\in\MF} \cR(f) },
\]
where the last expression overloads $\cR(f) = \int \ell(-yf(x))d\mu(x,y)$.
As such, this result can be seen as a combination of \Cref{fact:convergence:simplified} and \Cref{fact:zo}
when $\SPAN(\cH)$ is a rich family of functions (e.g., dense in $\MF$).
Consequently, the results of the present work can be seen as complementary,
providing a specific convergence target $\bar \eta$ in the case of smaller $\SPAN(\cH)$
(e.g., when $\inf_{w\in L_1(\cH)}\cR(w) > \inf_{f\in\MF}\cR(f)$), rather than a single-sided bound as above.


\paragraph{Outline.}

We close this introductory section with further notation.
In \Cref{sec:duality}, we construct $\bar\eta$ via convex duality,
and sketch the proofs of
\Cref{fact:convergence:simplified,fact:findim:gen:simplified}
in \Cref{sec:proof_outlines}.
Many appendices collect further technical discussions and proof details.

\paragraph{Basic notation.}

Symbols defined in the preceding subsections---risk $\cR$, excess risk $\cE$, link function $\phi$, conditional probability model $\eta_w$---will continue
to be used in future sections.  The weighting space $L_1(\cH)$ should be viewed as the $L_1$ space over the counting measure on elements of $\cH$;
since $h\in\cH$ always has $\sup_x |h(x)|\leq 1$, it follows that
$\sup_x |(Hw)(x)| \leq \|w\|_1$.
Furthermore, in addition to the operator $H$, also let $A$ denote the
operator for which $(Aw)(x,y) = -y(Hw)(x)$,
whereby
\[
  \cR(w)
  = \mint \ell\bigParens{(Aw)(x,y)}d\mu(x,y) = \mint \ell(Aw)d\mu,
\]
where the last form  drops integration variables for succinctness.

We assume that $\mu$ can be \emph{disintegrated} \citep{pollard_disintegration}
into a marginal measure $\mu_\cX$ over $\cX$,
and a conditional probability $\eta_\mu(x,y) := \Pr[Y=y|x]$.
Let $\cZ := \cX\times \{-1,+1\}$ be the set of all $(x,y)$ pairs.
Given any subset $C\subseteq \cZ$, we define the intersection measure $\mu_C(S) := \mu(C\cap S)$
and conditional measure $\mu_{|C}$, where $\mu(C)>0$ implies $\mu_C(S) =
\mu_{|C}(S)\mu(C)$.
We use a ``hat'' symbol to
denote empirical measures, such as $\hat\mu$, $\hat \mu_C$, $\hat \mu_{|C}$.
To avoid ambiguity, we sometimes write $\cR(\cdot;\nu)$ and $\cE(\cdot;\nu)$ to denote risk and excess risk
when integration is over a measure $\nu$.

Every loss $\ell:\R\to\R_+$ considered in this paper is a
\emph{classification loss}, meaning it is convex, non-decreasing,
and satisfies $\ell(0) > 0$ and $\inf_{z\in\R} \ell(z) = 0$.
The class of all such losses is denoted $\Lclass$.
The subset of these that are {strictly convex} and {twice continuously
differentiable} (i.e., $\ell''>0$) is denoted $\Ldiff$.
The more restrictive class $\Lbnd\subseteq\Ldiff$ will be defined in
\Cref{sec:proof_outlines}.
For classification losses, which are not necessarily differentiable,
we write $\ell'(z)$ to denote a fixed choice from the subgradient $\partial\ell(z)$; thus, a classification loss is described by a pair $(\ell,\ell')$ satisfying $\ell'(z)\in\partial\ell(z)$.

\section{Duality: The journey to the optimal conditional probability model $\bar\eta$}
\label{sec:duality}

This section shows the existence of the optimal conditional probability model $\bar\eta$. The key
challenge is the infinite dimensional setting, that is,
the fact that the hypothesis space $\cH$ and the
sample space $\cZ$ are infinite. To develop some intuition, we begin by studying the finite dimensional
case.

\subsection{Warm-up: Finite dimensional case}
\label{sec:warmup}

Assume for now that the hypothesis set is finite, $|\cH|=d$, and
the measure $\mu$ is uniform over $n$ data points. Consider
the problem of optimizing exponential loss over this measure, i.e.,
\begin{equation}
\label{eq:warmup:primal}
   \inf_{w\in\R^d}
   \Bracks{
      \sum_{i=1}^n e^{-y_i(Hw)(x_i)}
   }
\enspace.
\end{equation}
The conditional model for the exponential loss is
\begin{equation}  \label{eq:eta-for-exp}
  \eta_w(x,y)=\frac{e^{y(Hw)(x)}}{e^{y(Hw)(x)} + e^{-y(Hw)(x)}}
\enspace.
\end{equation}
Recalling the example from \Cref{fig:margins}, note how easily the infimum to \Eq{warmup:primal}
may fail to be attained.  In particular, if there exists $\hat w\in\R^d$ defining a
hyperplane which strictly separates the
positive and negative examples,
then the sequence $(j\hat w)_{j\geq 1}$ achieves zero risk in the limit, whereas every element
$w\in\R^d$
achieves a positive risk.  On the other hand,
$\eta_{j\hat w}(x_i,y_i)\to 1$ as $j\to\infty$.
So in this case, $\bar\eta$, which needs to be defined only over the examples $(x_i,y_i)$,
is described by $\bar\eta(x_i,y_i)=1$.

Similar to other studies of risk minimization stymied by the problem of missing minimizers
\citep{collins_schapire_singer_adaboost_bregman},
we consider the convex dual to \Eq{warmup:primal}. The dual of loss minimization of a linear model is the problem of maximizing entropy subject to constraints, where different losses yield different kinds of entropy~\citep{collins_schapire_singer_adaboost_bregman,AltunSm06}. The dual of \Eq{warmup:primal} is
\begin{equation}
\label{eq:warmup:dual}
   \max_{\q\in\R^n_+}
   \Bracks{
      \sum_{i=1}^n \bigParens{-\q_i\ln\q_i + \q_i}
   }
\quad
   \text{s.t.}
\quad
   \sum_{i=1}^n \q_i\bigParens{y_i h(x_i)} = 0
   \text{ for all }
   h\in\cH
\enspace.
\end{equation}
The objective on the left is an unnormalized entropy of the dual variable vector $q$,
representing an unnormalized reweighting of examples. The unnormalized entropy
is being maximized over the set of reweightings, which satisfy ``decorrelation'' constraints on the right.
Specifically, the constraints require that
the reweighting $\q$ be uncorrelated with every hypothesis, making the reweighted prediction problem
as hard as possible. Note that $\q=0$ is always feasible, but the unnormalized entropy pushes the
solution away from zero whenever feasible (the slope of entropy at zero is $-\infty$ (\Cref{fact:loss_prop}.v)). \Cref{fact:duality}
shows that the dual maximum is always attained, unlike the primal minimum. However, if both the primal maximum
$\bar w$ and dual maximum $\barq$ are attained, then $\barq_i=\exp\bigParens{-y_i(H\bar w)(x_i)}$.
For a general differentiable loss $\ell$, the optimality conditions yield $\barq_i=\ell'\bigParens{-y_i(H\bar w)x_i}$. If there is any example $j$ such that $x_j=x_i$, but the label is flipped ($y_j=-y_i$),
then we can rewrite $\barq_j$
as $\barq_j=\exp\bigParens{y_i(H\bar w)(x_i)}$ for exponential loss, and $\barq_j=\ell'\bigParens{y_i(H\bar w)(x_i)}$ for a general loss. Let $-i$ denote such an index $j$ if it exists. Contrasting the definition of $\eta_{\bar w}$
in \Eq{eta-for-exp} with the optimality condition for $\barq$ suggests defining
\[
  \bar\eta(x_i,y_i)=
\begin{cases}
    \barq_{-i} / \Parens{\barq_{-i}+\barq_i}
        & \text{if }\barq_i>0,
\\
      1 & \text{if }\barq_i=0,
\end{cases}
\]
where in the absence of the example with the flipped label, define $\barq_{-i}=\ell'\bigParens{-(\ell')^{-1}(\barq_i)}$ to emulate such an example; for exponential loss, $\barq_{-i}=1/\barq_i$. The value of $\bar\eta$ for $\barq_i=0$ is obtained by taking the limit $\barq_i\to 0$ (i.e., $\barq_{-i}\to\infty$ for exponential loss).
The next section shows that this $\bar\eta$ is the correct limit object,
even for an infinite sample space and an infinite hypothesis set.

\subsection{Infinite dimensional case}

Before constructing $\bar\eta$ and proving \Cref{fact:convergence:simplified},
we establish an infinite dimensional duality result similar
to the finite dimensional result from \Cref{sec:warmup}. In the primal, we now minimize
an integral rather than a sum. In the dual, we optimize over
unnormalized densities over $\cZ$.
Recall that the linear map $A$ returns functions
on $\cZ$ such that $(Aw)(x,y)=-y(Hw)(x)$.
Formally, we seek the following duality result:
\begin{align}
\label{eq:duality:verbose}
   \inf_{w\in L_1(\cH)}
   \Bracks{
   \mint \ell(Aw)d\mu
   }
\;
=
\;
&
   \max_{\q\in\cQ}
   \Bracks{
      -\mint \ell^*\BigParens{\q(x,y)}d\mu(x,y)
   }
\\
\notag
&
   \text{s.t.}
   \mint \q(x,y)\BigParens{y h(x)} d\mu(x,y)= 0
   \text{ for all }
   h\in\cH
\end{align}
where $\ell^*(s) := \sup_{r} [rs - \ell(r)]$ is the conjugate of $\ell$ (see \Cref{sec:banach}).
For example, when $\ell$ denotes the exponential loss,
we find that $\ell^*(s)=s\ln s - s$ for $s\ge0$ and $\ell^*(s)=\infty$ for $s<0$,
giving rise to the non-negativity constraint on $\q$ and the dual objective we already saw in \Eq{warmup:dual}.

A crucial technical
question is the choice of $\cQ$, i.e., the set that $\q$ is selected from.
Following the intuition of \Cref{sec:warmup}, the goal is to construct $\bar\eta(x,y) = \barq(x,-y) / \bigParens{ \barq(x,-y) + \barq(x,y) }$.
The space $\cQ$ should be large enough to allow construction of any conditional probability distribution $\eta$ for $\mu_\cX$. To achieve this, it suffices to make sure that all measures which are absolutely continuous with respect to $\mu$ have their densities included in $\cQ$.
In fact, our set can be slightly smaller: it just needs to include all densities for which the dual objective,
i.e., the integral $\int\ell^*(\q)d\mu$, is finite.



One candidate class of functional spaces is $L_p(\mu)$, where $p\ge 1$. These are Banach spaces
of measurable functions with the norm defined by $\|f\|_p=\Parens{\int |f|^p\,d\mu}^{1/p}$. The
space $L_p(\mu)$ contains all measurable functions with $\|f\|_p <\infty$. However,
in our setting, we instead want to place restrictions on the allowed functions $\q$ based on
the integral $\int \ell^*(\q)d\mu$
rather than $\int|\q|^p d\mu$.
Therefore, instead of working with $L_p(\mu)$ spaces, we work with their generalization called
\emph{large Orlicz spaces} (\citealp{leonard_orlicz}, and \Cref{sec:orlicz}), which allows us
to tailor the set $\cQ$ to $\ell^*$.


In detail, the construction of a {large Orlicz space} begins with a non-negative
convex function $\theta:\R\to[0,\infty]$ symmetric around zero (i.e., $\theta(r)=\theta(|r|)$), not identical to zero (i.e., $\theta(r)\to\infty$ as $r\to\infty$, by convexity), and with $\theta(0)=0$.
This function $\theta$ serves the same role as the $p$-th power
function in the construction of $L_p(\mu)$. The conditions that
we place on $\theta$ make it possible to define
``the unit ball'' of functions, analogous to the unit ball in $L_p(\mu)$, namely
\[
   \cB\coloneqq\Set{f \text{ measurable}:\:\mint \theta\bigParens{f(z)}d\mu(z)\le 1}
\enspace.
\]
This set is then used to define the norm 
$
   \|f\|_\theta = \inf\{r\ge 0:\:f\in r\cB\}
$,
where the norm equals $\infty$ if $f$ is outside the scaled ball $r\cB$ for all $r\ge 0$.
The \emph{large Orlicz space} $L_\theta(\mu)$ is defined to contain all measurable functions with $\|f\|_\theta<\infty$. For $p\ge 1$, the choice $\theta(s)=|s|^p$ recovers the $L_p(\mu)$ spaces.
(See \Cref{sec:orlicz} for further background.)

Now we are ready to answer what the space $\cQ$ should be.
Following the construction of \citep{leonard_entropy}, we begin by introducing
a symmetrized version of the loss $\ell$ with the first-order Taylor expansion at zero removed:
%
\begin{equation}
\label{eq:beta}
    \beta(s) \coloneqq \max\Set{
      \ell( s) - \BigParens{\ell(0) + s \ell'(0)} ,\;
      \ell(-s) - \BigParens{\ell(0) + (-s) \ell'(0)}
    }
\enspace.
\end{equation}
It turns out that the Orlicz space $L_{\beta^*}(\mu)$, derived from the conjugate $\beta^*$, satisfies our desideratum on~$\cQ$:
it contains all the densities with respect to $\mu$ whose dual objective is finite (see \Cref{fact:orlicz_prop}.iii).
The next theorem spells out the duality result of \Eq{duality:verbose} with a more succinct representation
of constraints via \emph{adjoint} $A^\top$ of the operator $A$. The adjoint is a generalization
of the matrix transpose. The adjoint $A^\top$ is a linear operator
which maps $\q$ into a linear function on $L_1(\cH)$ defined by $(A^\top\q)(w)=\int (Aw)(z)\,\q(z)\,d\mu$. The constraint
of \Eq{duality:verbose} is equivalent to requiring $(A^\top\q)(w)=0$ for all $w$, i.e., $A^\top\q$ is
required to be the zero of the vector space
of linear functions on $L_1(\cH)$. Thus, the constraint can be written as $A^\top\q=0$, highlighting the fact that it is
a linear constraint on $\q$.

Apart from the duality result, the theorem also enumerates several important properties of the dual optimum, which are relevant for the construction of $\bar\eta$
in \Cref{def:bareta} below.
Properties (i) and (ii) show that $\bar\eta$ is a well-defined conditional probability.
Property (iii) implies that $\bar\eta(x,y)=\eta_{\bar w}(x,y)=\phi(y(H\bar w)(x))$ when the primal optimum exists and the loss
is differentiable. Property (iv) looks more technical: it implies that when the primal optimum $\bar w$ does not exist,
$\bar h(x)\coloneqq(\ell')^{-1}(\barq(x,-1))$
can serve a similar role as $H\bar w$, because $\bar\eta(x,y)=\phi(y\bar h(x))$; indeed, we use this construction of $\bar h$
in \Cref{subsec:sketch:convergence}.
%
\begin{theorem}
  \label{fact:duality}
  Let finite measure $\mu$ over $\cZ$,
  hypotheses $\cH$,
  and loss function $\ell \in \Lclass$
  be given, with $\beta$ defined by \Eq{beta}.
  Then
\begin{equation}
\label{eq:duality}
   \inf_{w\in L_1(\cH)}
   \Bracks{
      \mint \ell(Aw)d\mu
   }
\quad
=
\quad
   \max_{
       \q\in L_{\beta^*}(\mu):\:\:
       A^\top\q=0
   }
   \Bracks{
      -\mint \ell^*(\q)d\mu
   }
\enspace.
\end{equation}
  A dual optimum $\barq$ always exists, and can be chosen to satisfy the following, $\mu$-a.e.\@ over $(x,y)$:
  \begin{enumroman}
    \item
      $\barq(x,y) \ge 0$.
    \item
      $\barq(x,y) + \barq(x,-y) > 0$.
    \item
      $\barq(x,y) \in\partial\ell(A\bar w)(x,y)$ where $\bar w$ is a primal optimum (if it exists).
  \end{enumroman}
  Furthermore,
  \begin{enumroman}[resume]
    \item
      If $\ell\in \Ldiff$, then
      $(\ell')^{-1}\bigParens{\barq(x,y)}=-(\ell')^{-1}\bigParens{\barq(x,-y)}$, $\mu$-a.e.\@ over all $(x,y)$ for which $(\ell')^{-1}$ is defined at both $\barq(x,y)$ and $\barq(x,-y)$.
   \item
      If $\ell$ is differentiable, then
      $\barq$ is unique (up to $\mu$-null sets).
  \end{enumroman}
\end{theorem}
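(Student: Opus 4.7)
The plan is to derive \Eq{duality} as an instance of infinite-dimensional Fenchel-Rockafellar duality in the Orlicz pairing $L_\beta(\mu)\leftrightarrow L_{\beta^*}(\mu)$. Setting $F(u):=\mint\ell(u)\,d\mu$ on $L_\beta(\mu)$, its Fenchel conjugate is $F^*(\q)=\mint\ell^*(\q)\,d\mu$ on $L_{\beta^*}(\mu)$, the standard interchange of conjugation and integration on Orlicz spaces, which I would defer to the appendix. The operator $A:L_1(\cH)\to L_\beta(\mu)$ is bounded because $\|Aw\|_\infty\leq\|w\|_1$ and $\beta$ is finite on bounded arguments, and its adjoint $A^\top$ sends $\q$ to the linear functional $w\mapsto\mint(Aw)\q\,d\mu$. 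The Fenchel dual of $\inf_w F(Aw)$ is then $\sup\{-F^*(\q):A^\top\q=0\}$, matching the right-hand side of \Eq{duality}. Weak duality follows at once by integrating the Fenchel-Young inequality $\ell(Aw)+\ell^*(\q)\geq(Aw)\q$ against $\mu$ and using $A^\top\q=0$.

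The main obstacle will be strong duality together with \emph{attainment} of the dual, which must hold even when the primal infimum is not attained. Here I would invoke the Orlicz-space duality machinery of \citet{leonard_orlicz,leonard_entropy}, which is tailored to this entropy-like minimization and designed to yield dual attainment without primal attainment. Concretely, a maximizing sequence $\q_n$ has $\mint\ell^*(\q_n)\,d\mu$ bounded, from which $\ell^*$--$\beta^*$ domination (\Cref{fact:orlicz_prop}.iii) bounds $\mint\beta^*(\q_n)\,d\mu$ and hence $\|\q_n\|_{\beta^*}$. Banach--Alaoglu in the weak-$*$ topology on $L_{\beta^*}(\mu)$ produces a cluster point $\barq$; feasibility passes to the limit because $\{\q:A^\top\q=0\}$ is a weak-$*$ closed intersection of hyperplanes, and $\q\mapsto\mint\ell^*(\q)\,d\mu$ is weak-$*$ lower semicontinuous, so $\barq$ attains the supremum and closes the duality gap.

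Properties (i), (iii), and (v) should follow by standard arguments. Since $\ell$ is non-decreasing, $\ell^*(s)=+\infty$ for $s<0$, forcing any finite-objective dual feasible $\q$ to be non-negative $\mu$-a.e., giving (i). Property (iii) is complementary slackness: zero duality gap forces pointwise equality in the integrated Fenchel-Young inequality, which is equivalent to $\barq(x,y)\in\partial\ell((A\bar w)(x,y))$ whenever a primal optimum $\bar w$ exists. Property (v) holds because differentiability of $\ell$ gives strict convexity of $\ell^*$ on the interior of its domain, making the dual objective strictly concave on the relevant region and therefore uniquely maximized up to $\mu$-null sets.

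Properties (ii) and (iv) will both follow from a single disintegration plus pointwise-optimization argument. Rewriting $\mint\ell^*(\q)\,d\mu=\mint\sum_y\ell^*(\q(x,y))\eta_\mu(x,y)\,d\mu_\cX(x)$ and the constraint as $\mint h(x)\psi(x)\,d\mu_\cX(x)=0$ for all $h\in\cH$, where $\psi(x):=\q(x,1)\eta_\mu(x,1)-\q(x,-1)\eta_\mu(x,-1)$, one sees that any perturbation of $\barq$ that preserves $\psi(x)$ pointwise remains feasible. Consequently, $\mu_\cX$-a.e.\ in $x$, the pair $(\barq(x,1),\barq(x,-1))$ must solve the two-variable problem of maximizing $-\sum_y\ell^*(\q(x,y))\eta_\mu(x,y)$ subject to the same $\psi(x)$, and a Lagrange-multiplier calculation yields $(\ell^*)'(\barq(x,1))=-(\ell^*)'(\barq(x,-1))$ on the interior of $\dom(\ell^*)$; this is (iv), since $(\ell^*)'=(\ell')^{-1}$ for $\ell\in\Ldiff$. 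For (ii), the same pointwise problem forbids the optimum from sitting at $\barq(x,1)=\barq(x,-1)=0$: the right slope of $-\ell^*$ at $0$ is $+\infty$ (since $\inf\ell=0$ forces $\ell'(z)\to 0$ as $z\to-\infty$ by \Cref{fact:loss_prop}.v), so the perturbation $(\epsilon\eta_\mu(x,-1),\epsilon\eta_\mu(x,1))$ preserves $\psi(x)=0$ and strictly improves the objective for small $\epsilon>0$. The chief remaining technical care will be measurable selection of the pointwise maximizer and the handling of boundary values of $\dom(\ell^*)$.
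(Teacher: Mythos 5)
Your overall architecture matches the paper's: \Eq{duality} is obtained from Fenchel--Rockafellar duality in an Orlicz pairing, with the conjugate of the integral functional computed by the Rockafellar interchange theorem, and properties (i), (iii), (v) are argued essentially as in the paper (non-negativity from $\ell^*=\infty$ on the negative axis, (iii) from pointwise Fenchel--Young equality, (v) from strict convexity of $\ell^*$). Where you genuinely diverge: you get dual \emph{attainment} by a Banach--Alaoglu compactness argument on a maximizing sequence, whereas the paper reads attainment off directly from its version of Fenchel duality (\Cref{thm:fenchel}); and you prove (ii) and (iv) by a disintegration-plus-pointwise-Lagrangian argument, whereas the paper proves (ii) by an explicit improving perturbation and (iv) by tracking a primal minimizing sequence through the vanishing Fenchel gap. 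Your route to (iv) is attractive in that it never mentions the primal, but it shifts the burden onto a measurable-selection step that the paper's minimizing-sequence argument avoids entirely; you acknowledge this but do not discharge it.

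Two concrete gaps. First, the pairing. You pair $L_\beta(\mu)$ with $L_{\beta^*}(\mu)$, but these are not a compatible dual pair in general (the topological dual of the large space $L_\beta$ can strictly contain $L_{\beta^*}$), and $u\mapsto\int\ell(u)\,d\mu$ need not be finite on all of $L_\beta(\mu)$. The paper instead pairs the \emph{small} Orlicz space $M_\beta(\mu)$ with $L_{\beta^*}(\mu)$: by \Cref{prop:orlicz}.iv this is a genuine compatible pairing, by \Cref{fact:orlicz_prop}.i the primal integral functional is finite everywhere on $M_\beta(\mu)$, and these two facts are exactly what let \Cref{thm:fenchel} deliver both the zero duality gap and the max on the dual side in one stroke. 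Your compactness argument, even once repaired (weak${}^*$ lower semicontinuity of $\q\mapsto\int\ell^*(\q)d\mu$ is cleanest precisely because it is the conjugate of a functional on $M_\beta$), only yields attainment of the dual supremum; it does not by itself close the duality gap, which you leave to an unspecified invocation of the literature. The choice $M_\beta$ versus $L_\beta$ is the substantive content here, not a bookkeeping detail.

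Second, property (ii) on the degenerate set. Your perturbation $\bigParens{\eps\,\eta_\mu(x,-1),\,\eps\,\eta_\mu(x,1)}$ indeed preserves $\psi(x)$ and strictly improves the objective where $\eta_\mu(x,1)\in(0,1)$ (and for that you only need $\ell^*<0$ on $(0,\bars]$, which holds for every $\ell\in\Lclass$ since $\ell^*(\bars)=-\ell(0)<0$; the infinite slope of $\ell^*$ at $0^+$ you invoke is available only for $\Ldiff$ and is not needed). But when $\eta_\mu(x,1)\in\{0,1\}$ your perturbation modifies $\barq$ only on a $\mu$-null label, so it changes neither the objective nor the constraints and proves nothing; meanwhile the companion point $(x,y)$ with $\eta_\mu(x,y)=1$ can carry positive $\mu$-mass, and the constraints may genuinely force $\barq=0$ there. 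Claim (ii) at such points therefore cannot be established variationally: one must explicitly redefine $\barq$ to equal $\bars$ on the $\mu$-null set $\{\eta_\mu=0\}$, which is the final step of the paper's proof and is missing from yours.
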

Using part (v), we obtain that the following defines
a unique $\bar\eta$ (up to $\mu$-null sets):
\begin{definition}
\label[definition]{def:bareta}
  Let $\ell \in\Lclass$ be differentiable and $\barq$
  be the dual optimum satisfying
  conditions (i) and (ii) of \Cref{fact:duality}. We define
  the optimal conditional model $\bar\eta$ as
  \begin{equation}
    \label{eq:bar_eta}
    \bar\eta(x,y) = \frac {\barq(x,-y)}{\barq(x,-y) + \barq(x,y)}
\enspace.
  \end{equation}
\end{definition}
This is the $\bar\eta$ that appears in \Cref{fact:convergence:simplified}.
This theorem will be proved in the next section.





\section{Convergence and generalization via easy and difficult sets}
\label{sec:proof_outlines}

We saw in \Cref{sec:warmup} that the conjugate $\ell^*$ of the exponential loss has
an infinite slope at zero;
the same turns out to be true for all losses in $\Ldiff$ (\Cref{fact:loss_prop}.v).
Informally, this means that the dual optimization avoids setting $\barq=0$ unless forced to do so by the decorrelation constraint $A^\top\barq=0$.
We will see that this distinction between the set of points where $\barq=0$ and the set where $\barq>0$ is fundamentally important to the analysis,
a fact seen before in the analysis of boosting \citep{mukherjee_rudin_schapire_adaboost_convergence_rate,primal_dual_boosting,mjt_logistic}.
We call these two sets of points ``easy'' and ``difficult'' (respectively) for reasons which we illustrate on an example.

\paragraph{An example.}

Consider
the example in \Cref{fig:difficult:set}, which builds on the example from \Cref{fig:margins}.
In addition to the two well-separated regions of positive and
negative examples, we now add an
alternating sequence of positive and negative point masses along the line $\gamma$ orthogonal
to the weight vector $w_1$. Each weight vector $w\in\R^2$ represents a linear predictor returning the inner product $x\mapsto w\cdot x$. The \emph{margin} of a data point $(x,y)$ with respect to this predictor is $y(w\cdot x)$.
The decorrelation
constraint (see Eq.~(\ref{eq:duality:verbose})) requires that the weighted margin of every hypothesis (and of every linear combination) according to the density $\q$ is equal to zero. The predictor described by $w_1$ gives a positive margin to all points in the two separated regions (the easy set) and zero margin to those along the line $\gamma$ (the difficult set). Hence, any $\q$ satisfying the decorrelation constraint must equal zero over these two regions. On the other hand, because the point masses along $\gamma$ are antisymmetric around zero, each of them can receive the density $\q(x,y)=\bar s$ where $\bar s$ is a minimizer of $\ell^*$
(it always exists by \Cref{fact:loss_prop}.i).

\begin{wrapfigure}{r}{0.35\textwidth}
  \vspace{-30pt}
  \begin{center}
    \includegraphics[width=0.3\textwidth]{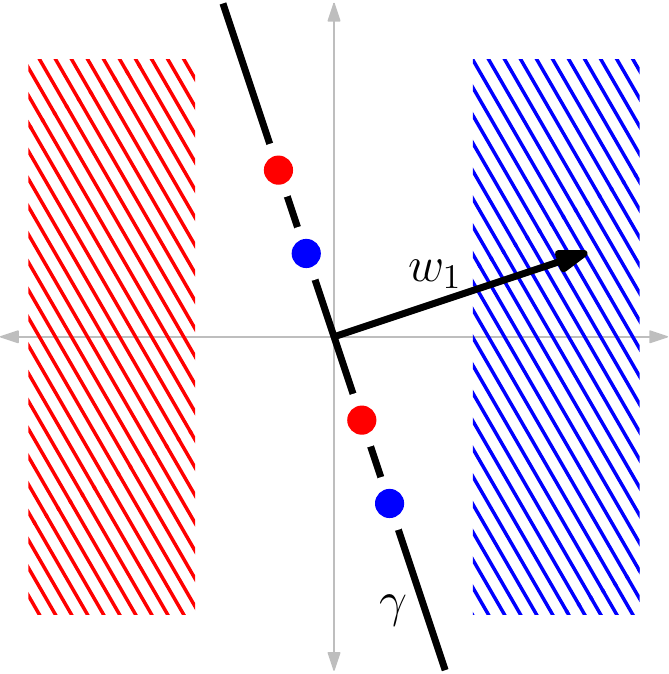}
  \end{center}
  \vspace{-20pt}
  \caption{Easy and difficult sets.}
  \label{fig:difficult:set}
  \vspace{-5pt}
\end{wrapfigure}

In the primal, the sequence $(iw_1)_{i\ge 1}$ still minimizes the risk as follows. First, the risk in the two regions goes to zero. Next, the risk of any weight vector $w$ over points along $\gamma$ is only a function of the projection of $w$ onto $\gamma$. Since the masses along $\gamma$ are antisymmetric and the loss function is convex (and increasing as the prediction is more wrong), the projection needs to be at the origin to minimize the risk along $\gamma$. This is exactly the case for $iw_1$ by orthogonality.

If the example were to be slightly perturbed, so that the point masses would still lie on $\gamma$ in an alternating pattern (but not antisymmetric), a minimizing sequence would take the form $(\hat w + iw_1)_{i\ge 1}$ where $\hat w\in\gamma$ would be the minimizer of the risk of the points along $\gamma$. Because of the alternating pattern such a minimizer would be bound to exist.

\paragraph{Preliminaries.}

Several aspects of the example carry over to the general setting. First, it can be shown that the risk on points where $\bar q=0$ converges to zero when the primal is minimized, that is, a perfect classification is achieved. Therefore, we call this set of points ``easy''. Second, the points where $\bar q>0$ cannot be further ``separated'' in the sense that any $w$ under which some non-null measure of these points receives a positive margin also yields a non-null measure of points with a negative margin. We call this set ``difficult''.

\begin{definition}
Given a finite measure $\mu$, hypotheses $\cH$, loss $\ell \in \Lclass$,
and a dual optimum $\barq$ satisfying the conditions of \Cref{fact:duality},
the \emph{difficult set} is defined as
$
  \cD\coloneqq\{z\in\cZ : \barq(z) > 0\}
$.
Its complement $\cD^c$ is called the \emph{easy set}.
\end{definition}

The next lemma (and the following corollary)
show that, similar to the example,
all of the risk is in fact due to the difficult set.
The lemma proves equality of the
dual objectives for $\mu$ and the restricted measure $\mu_\cD$, and furthermore that
$\barq$ is feasible and optimal for both problems. The corollary highlights the implications
in the primal, that by optimizing the risk on $\mu$, we optimize the risk
on the difficult set, and drive the risk on the easy set to zero. For technical reasons,
both results are stated for \emph{supersets} of difficult sets.

\begin{lemma}
  \label[lemma]{lemma:dual:D}
  Given a finite measure $\mu$,
  hypotheses $\cH$,
  loss $\ell \in \Lclass$,
  a difficult set $\cD$ and an associated dual optimum $\barq$,
  let $D$ be an arbitrary (measurable) superset of the difficult set: $\cD\subseteq D$.
  Then the dual optimal values for $\mu$ and $\mu_D$ are equal:
  \begin{align*}
    \max_{\q\in L_{\beta^*}(\mu):\:A^\top \q = 0}
       \Bracks{ - \mint \ell^*(\q)d\mu }
    \;=\;
    \max_{\q\in L_{\beta^*}(\mu_D):\:A^\top \q = 0}
       \Bracks{ - \mint_D \ell^*(\q)d\mu }
  \enspace.
  \end{align*}
  The general dual optimum $\barq$ is feasible for both problems and attains both
  maxima.
Moreover, if  $\barq_D$ is a dual optimum for $\mu_D$, then
  $\hat \q(z) := \barq_D(z) \one[z\in D]$ is also a dual optimum for both problems.
\end{lemma}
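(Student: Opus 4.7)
The plan is to reduce everything to two elementary facts: (a) the conjugate $\ell^*$ vanishes at zero (since $\inf_r\ell(r)=0$ implies $\ell^*(0)=-\inf_r\ell(r)=0$), and consequently $\beta^*(0)=0$, so extending a function by zero on $D^c$ and restricting a function to $D$ are both isometric/non-expansive moves in $L_{\beta^*}$ and preserve the dual objective; and (b) the general dual optimum $\barq$ is supported on $\cD\subseteq D$, since $\cD^c\supseteq D^c$ is exactly the set where $\barq=0$ by definition of the difficult set.

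First I would verify that $\barq$ is feasible and optimal for the $\mu_D$-problem. Because $\barq\equiv 0$ on $D^c$, the two dual constraints $\int(Aw)\barq\,d\mu=0$ (for all $w\in L_1(\cH)$) and $\int_D(Aw)\barq\,d\mu=0$ are literally the same integral, so the $\mu$-feasibility of $\barq$ from \Cref{fact:duality} immediately gives $\mu_D$-feasibility. Likewise $\|\barq\|_{\beta^*,\mu_D}\le\|\barq\|_{\beta^*,\mu}<\infty$ by monotonicity of the Luxemburg norm under restriction, so $\barq\in L_{\beta^*}(\mu_D)$. Finally, using $\ell^*(0)=0$,
\[
  -\mint\ell^*(\barq)d\mu
  =-\mint_D\ell^*(\barq)d\mu-\mint_{D^c}\ell^*(0)d\mu
  =-\mint_D\ell^*(\barq)d\mu,
\]
so the two objective values at $\barq$ coincide.

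Next I would show the two maxima are equal by exhibiting mutual feasibility. On one side, the zero-extension $q:=\tilde q\,\one[z\in D]$ of any $\mu_D$-feasible $\tilde q$ clearly satisfies $A^\top q=0$ (the constraint integrals agree as above), lies in $L_{\beta^*}(\mu)$ with the same Luxemburg norm (again by $\beta^*(0)=0$), and has matching objective $-\int\ell^*(q)d\mu=-\int_D\ell^*(\tilde q)d\mu$; hence the $\mu$-maximum dominates the $\mu_D$-maximum. In the other direction, $\barq$ itself realizes the $\mu$-maximum and, by the previous paragraph, is $\mu_D$-feasible with the same objective value, so the $\mu_D$-maximum dominates the $\mu$-maximum. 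Equality of the two maxima follows, and $\barq$ attains both.

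The final ``moreover'' claim is then automatic: given any $\mu_D$-dual optimum $\barq_D$, its zero-extension $\hat q(z):=\barq_D(z)\one[z\in D]$ is $\mu$-feasible with $-\int\ell^*(\hat q)d\mu=-\int_D\ell^*(\barq_D)d\mu$ equal to the common optimal value, so $\hat q$ is $\mu$-optimal; and since $\hat q\equiv\barq_D$ on $D$, it is also $\mu_D$-optimal. The only points requiring care are the Orlicz-space bookkeeping---namely that restriction to $\mu_D$ does not increase the Luxemburg norm and that zero-extension preserves it---both of which follow from $\beta^*(0)=0$ and the definition of $\|\cdot\|_{\beta^*}$; I anticipate this to be the only step where a careless reader might stumble, but it is routine once $\ell^*(0)=0$ is noted.
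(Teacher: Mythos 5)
Your proposal is correct and follows essentially the same route as the paper: both arguments rest on $\barq$ vanishing off $\cD\subseteq D$ (so feasibility and the objective transfer between $\mu$ and $\mu_D$ via $\ell^*(0)=0$), and on zero-extending the restricted optimum $\barq_D$ to get feasibility for the full problem. The only cosmetic difference is that you establish $\max_\mu\ge\max_{\mu_D}$ by zero-extending an arbitrary $\mu_D$-feasible point rather than only $\barq_D$, which changes nothing of substance.
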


\begin{corollary}
\label[corollary]{cor:hc_split}
  Let $D$ be a superset of a difficult set, $\cD\subseteq D$. Then
  $\cE(w;\mu_D) \le \cE(w)$ and
  $\cR(w;\mu_{D^c}) \le \cE(w)$
  for all $w\in L_1(\cH)$.
\end{corollary}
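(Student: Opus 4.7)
The plan is to reduce the corollary to three simple observations: an additive decomposition of the risk along the partition $\{D, D^c\}$, equality of the primal infima for $\mu$ and $\mu_D$ (obtained by pairing \Cref{lemma:dual:D} with the strong duality in \Cref{fact:duality}), and nonnegativity of the loss.

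First I would write, for any $w\in L_1(\cH)$, the split
\[
  \cR(w)
  \;=\; \mint \ell(Aw)\,d\mu
  \;=\; \mint_{D} \ell(Aw)\,d\mu + \mint_{D^c} \ell(Aw)\,d\mu
  \;=\; \cR(w;\mu_D) + \cR(w;\mu_{D^c}),
\]
which is valid because $D$ and $D^c$ partition $\cZ$ and $\ell\ge 0$.

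Next I would invoke \Cref{lemma:dual:D} to conclude that the dual optimal values for $\mu$ and $\mu_D$ coincide; applying the duality equality of \Cref{fact:duality} once for $\mu$ and once for the (finite) measure $\mu_D$ then yields
\[
  \inf_{v\in L_1(\cH)} \cR(v;\mu) \;=\; \inf_{v\in L_1(\cH)} \cR(v;\mu_D) \;=:\; R^*.
\]
Subtracting $R^*$ from both sides of the decomposition above gives the key identity
\[
  \cE(w) \;=\; \cE(w;\mu_D) + \cR(w;\mu_{D^c}).
\]
Since $\ell\ge 0$ forces $\cR(w;\mu_{D^c})\ge 0$, this identity immediately yields $\cE(w;\mu_D)\le \cE(w)$; and since $\cE(w;\mu_D)\ge 0$ by definition of infimum, the same identity yields $\cR(w;\mu_{D^c})\le \cE(w)$.

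I do not expect a genuine obstacle: the only nonroutine point is recognizing that \Cref{fact:duality} applies verbatim to $\mu_D$ (as it is a finite measure on $\cZ$), so that the two primal infima are equal rather than merely bounded one against the other. Once the equality $R^* = R^*_D$ is in hand, both inequalities fall out from the additive split together with the sign of $\ell$ and the definition of excess risk.
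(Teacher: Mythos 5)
Your proposal is correct and follows essentially the same route as the paper: both rest on \Cref{lemma:dual:D} (equality of the dual, hence primal, optimal values for $\mu$ and $\mu_D$) together with $\ell\ge 0$. The only difference is presentational---you package both inequalities into the single identity $\cE(w)=\cE(w;\mu_D)+\cR(w;\mu_{D^c})$, whereas the paper derives the two bounds separately from the same ingredients.
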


We wrap up this section by defining the class $\Lbnd$ appearing in our main
results. While the class may appear restrictive, it
contains the logistic and exponential losses by \Cref{fact:bbL3_containment}:
\begin{definition}
\label[definition]{def:lbnd}
  The class $\Lbnd\subseteq\Ldiff$ consists of strictly convex, twice continuously
  differentiable classification losses $\ell$, which in addition satisfy the following conditions:
\begin{enumromansquash}
\item The link function $\phi$, derived from $\ell$ as in \Eq{phi-def},
        is Lipschitz-continuous with constant $L_\phi$.
\item For some $c_\ell>0$,
      the derivative $\ell'$ satisfies $\ell'(r) \leq c_\ell \ell(r)$ whenever $r\le 0$.
\item For every finite measure $\mu$ over $\cZ$,
      there exists $c_{\ell,\mu} \geq 0$ with
      $\|f\|_{\beta} \leq c_{\ell,\mu} \int \ell(f)d\mu$
      for every measurable $f:\cZ\to\R_+$.
\end{enumromansquash}
\end{definition}

\subsection{Proof outline for \Cref{fact:convergence:simplified}}
\label{subsec:sketch:convergence}


Recall that our goal is to show that risk minimization yields
convergence of $\eta_w$ to $\bar\eta$.
First consider the easy set $\cD^c$. By \Cref{cor:hc_split},
minimizing $\cR(w)$, i.e., taking $\cE(w)$ to zero,
leads to $\cR(w;\mu_{\cD^c})$ becoming arbitrarily small.
This in turn means that most predictions $(Hw)(x)$ will not only have the correct sign, but will also have a large margin.
%
This observation can be used to obtain
the following bounds on a partition of the easy set $\cD^c$ into two sets:
$S_r$ and $\cD^c\setminus S_r$. The bound on $\mu(S_r)$ is also
a bound on $\int_{S_r} |\bar\eta-\eta_w|d\mu$ because $|\bar\eta-\eta_w|\le 1$.
Thus, together these bound $\int_{\cD^c} |\bar\eta-\eta_w|d\mu$.
%

%
\begin{lemma}
  \label[lemma]{fact:Dc_controls}
  Given a finite measure $\mu$,
  hypotheses $\cH$,
  loss $\ell \in \Lclass$,
  and a difficult set $\cD$,
  let $D$ be an arbitrary (measurable) superset of the difficult set: $\cD\subseteq D$.
  Let any $w\in L_1(\cH)$ and $r > 0$ be given,
  and define $S_r:= \{ z \in D^c : \ell\bigParens{(Aw)(z)} \geq r \}$.
  Then:
  \begin{enumromansquash}
  \item $\mu(S_r) \leq \cR(w;\mu_{D^c})/r \leq \cE(w)/r$,
  \item
  $
    \int_{D^c\setminus S_r}\left|
      \bar \eta - \eta_w
    \right|d\mu
    \leq
    r \mu(D^c\setminus S_r)\max\Braces{ 1/\ell(0),\; c_\ell/\ell'(0)}
    \quad\text{if $\ell \in \Lbnd$}$.
  \end{enumromansquash}
\end{lemma}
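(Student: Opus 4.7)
For part (i), apply Markov's inequality to the nonnegative function $\ell(Aw)$ on $D^c$: since $S_r$ is precisely the level set $\{z \in D^c : \ell((Aw)(z)) \geq r\}$,
\[
  \mu(S_r) \;\le\; \frac{1}{r}\mint_{D^c} \ell(Aw)\,d\mu \;=\; \frac{\cR(w;\mu_{D^c})}{r},
\]
and the outer inequality $\cR(w;\mu_{D^c}) \le \cE(w)$ is exactly the second half of \Cref{cor:hc_split}.

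For part (ii), the first step is to identify $\bar\eta$ on the easy set. For any $z=(x,y) \in D^c \subseteq \cD^c$ we have $\barq(x,y)=0$ by definition of $\cD$, and then condition (ii) of \Cref{fact:duality} forces $\barq(x,-y)>0$ $\mu$-a.e. Plugging into \Cref{def:bareta} gives $\bar\eta(x,y)=1$ $\mu$-a.e.\ on $D^c$. Writing $a := (Aw)(x,y) = -y(Hw)(x)$ and unwinding the definition of $\phi$ in \eqref{eq:phi-def}, a short calculation yields
\[
  \bigAbs{\bar\eta(x,y) - \eta_w(x,y)}
  \;=\; 1 - \phi(-a)
  \;=\; \frac{\ell'(a)}{\ell'(a) + \ell'(-a)}.
\]

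The remainder is a case split on the sign of $a$, valid for $z \in D^c \setminus S_r$ (where $\ell(a) < r$). If $a \le 0$, monotonicity of $\ell'$ gives the denominator bound $\ell'(a)+\ell'(-a) \ge \ell'(-a) \ge \ell'(0)$, while condition (ii) of \Cref{def:lbnd} bounds the numerator by $\ell'(a) \le c_\ell\,\ell(a) < c_\ell r$, so $|\bar\eta-\eta_w|\le c_\ell r/\ell'(0)$. If instead $a > 0$, the nondecreasing $\ell$ together with the classification assumption $\ell(0)>0$ gives $\ell(0) \le \ell(a) < r$, hence $1 \le r/\ell(0)$, so the trivial bound $|\bar\eta-\eta_w|\le 1$ already suffices. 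Combining the two cases yields $|\bar\eta-\eta_w|\le r \max\{1/\ell(0),\,c_\ell/\ell'(0)\}$ pointwise on $D^c\setminus S_r$; integrating over this set gives the claim. The only real subtlety is the identification $\bar\eta\equiv 1$ on $D^c$, which hinges on \Cref{fact:duality}(ii) to rule out the degenerate case $\barq(x,y)=\barq(x,-y)=0$; everything else is Markov, monotonicity of $\ell'$, and the structural inequality supplied by membership in $\Lbnd$.
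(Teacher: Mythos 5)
Your proposal is correct and follows essentially the same route as the paper: Markov's inequality plus \Cref{cor:hc_split} for part (i), and for part (ii) the identification $\bar\eta\equiv 1$ on $D^c$ followed by the numerator bound $\ell'(a)\le c_\ell\,\ell(a)<c_\ell r$ and the denominator bound $\ell'(-a)\ge\ell'(0)$. The only cosmetic difference is that the paper splits globally on whether $r\ge\ell(0)$ (in which case the trivial bound $|\bar\eta-\eta_w|\le 1$ finishes) rather than pointwise on the sign of $(Aw)(z)$; the two case analyses are equivalent.
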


It remains to control $\eta_w$ over $\cD$.
%
%
%
%
As mentioned earlier, the decorrelation constraint implies that the difficult set $\cD$ cannot be ``separated'' in the sense that any $w$ under which some subset of $\cD$ with a positive measure $\mu$ has a positive margin (i.e., correct predictions), also yields a positive measure of points in $\cD$ with a negative margin (i.e., incorrect predictions). Since the loss is increasing over negative margins, this structure implies that the risk over $\cD$ has a minimizer over each one-dimensional subspace (similar reasoning to the example of \Cref{fig:difficult:set}).
This one-dimensional property can be used in finite dimensions to argue that the risk must have a minimizer over the difficult set, and we pursue this line of reasoning in \Cref{subsec:sketch:findim:gen}.
But here, we need an alternative approach.

As discussed in \Cref{fact:duality}.iv, if $\ell\in\Ldiff$, then
$\bar\eta(x,y)=\phi(-\barf(x,y))$ with $\barf(x,y)=(\ell')^{-1}(\barq(x,y))$ whenever
$(\ell')^{-1}$ is defined for both $\barq(x,y)$ and $\barq(x,-y)$. Fortunately, this can be shown
to hold $\mu$-a.e.\@ over $\cD$. Thus, over $\cD$,
we can write $\abs{\bar\eta-\eta_w}=\bigAbs{\phi(-\barf)-\phi(-Aw)}$. The next
lemma uses a second-order Taylor expansion at $\barf$ to further derive a bound on this difference.

In \Cref{fact:D_controls}, we split
the difficult set into four subsets
and we either bound their mass, which in turn bounds
the integral of $\abs{\bar\eta-\eta_w}$, or
directly bound the integral. The integral is controlled directly over the subset $U$ by the mentioned Taylor bound, and so it requires the bounds on the range of $Aw$ and $\barf$ (via $\barq$), and a corresponding lower bound $\tau$ on the second derivative. The subset $S_+$ contains points with a large loss, so its mass is controlled by the risk. The control of the
subset $S_-$ is the most technical. The set includes points where the predictions are correct, but the density $\barq$ is large. The bound is based on the decorrelation constraint as well as property (iii) in \Cref{def:lbnd}. All three bounds depend on $w$ only via its risk; this is indeed key to establishing
\Cref{fact:convergence:simplified}. The set $V$ needs to be controlled separately.


\begin{lemma}
  \label[lemma]{fact:D_controls}
  Given a finite measure $\mu$ with $\mu(\cZ)\le 1$,
  hypotheses $\cH$,
  loss $\ell \in \Lbnd$,
  a difficult set $\cD$ and an associated dual optimum $\barq$,
  let a weighting $w\in L_1(\cH)$ be given, along with scalars $c_1 > 0$, $c_2 > 0$, $c_3 > c_2$,
  and
  $\tau := \min\{ \inf_{|z|\leq c_1} \ell''(z) , \inf_{z\in[c_2,c_3]} \ell''((\ell^*)'(z)) \}$.
  Define the following sets:
  \begin{align*}
    U &\coloneqq \Set{z\in \cD : |(Aw)(z)| \leq c_1 \text{ and } c_2 \leq \barq(z) \leq c_3 },
&   S_+ &\coloneqq \Set{ z \in \cD : (Aw)(z) > c_1 },
\\  S_- &\coloneqq \Set{ z \in \cD : (Aw)(z) < -c_1 \text{ and } \barq(z) \ge c_2 },
&  V &\coloneqq \Set{ z \in \cD : \barq(z) < c_2 \text{ or } \barq(z) > c_3}.
  \end{align*}
  Then $\cD=U\cup S_+\cup S_-\cup V$,
  \[
    \mu(S_+)  \le
    \frac {\cR(w)}{c_1\ell'(0)},
    \quad
    \mu(S_-)  \le
    \frac {2 c_{\ell,\mu} \|\barq\|_{\beta^*}\cR(w)}{c_1c_2},
    \quad
    \mint_{U}\left|
      \bar\eta
      -
      \eta_w
      \right|d\mu
      \leq
      L_\phi\sqrt{\frac {2\cE(w;\mu_\cD)}{\tau}}.
  \]
\end{lemma}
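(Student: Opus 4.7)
The plan is to analyze the four pieces of $\cD$ separately. The partition $\cD = U \cup S_+ \cup S_- \cup V$ is immediate from the definitions: a point $z\in\cD$ falls in $V$ unless $\barq(z)\in[c_2,c_3]$, and in that case the value of $(Aw)(z)$ places $z$ in $U$, $S_+$, or $S_-$ (the case $(Aw)(z)<-c_1$ placing $z$ in $S_-$ automatically since $\barq(z)\ge c_2$). For $\mu(S_+)$, convexity gives $\ell(r)\ge\ell(0)+r\ell'(0)$, and $\ell'(0)>0$ for classification losses (else monotonicity and convexity would force $\ell\equiv\ell(0)>0$ on $\R_-$, contradicting $\inf\ell=0$), so on $S_+$ we have $\ell((Aw)(z))\ge c_1\ell'(0)$, yielding $c_1\ell'(0)\mu(S_+)\le\int_{S_+}\ell(Aw)\,d\mu\le\cR(w)$.

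The main obstacle is the bound on $\mu(S_-)$, which combines the decorrelation constraint $A^\top\barq=0$ with Orlicz H\"older and property~(iii) of $\Lbnd$. Since $\barq=0$ off $\cD$, decorrelation reads $\int(Aw)\barq\,d\mu=0$. On $S_-$, $(-Aw)\barq\ge c_1c_2$, so
\[
  c_1c_2\,\mu(S_-) \;\le\; \int_{S_-}(-Aw)\barq\,d\mu \;\le\; \int(Aw)_-\barq\,d\mu \;=\; \int(Aw)_+\barq\,d\mu,
\]
where the final equality uses decorrelation (since $\int(Aw)_+\barq\,d\mu-\int(Aw)_-\barq\,d\mu=0$). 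H\"older in the Orlicz pair $(\beta,\beta^*)$ gives $\int(Aw)_+\barq\,d\mu\le 2\|(Aw)_+\|_\beta\|\barq\|_{\beta^*}$, and property~(iii) of \Cref{def:lbnd} applied to $f=(Aw)_+\ge 0$ yields $\|(Aw)_+\|_\beta\le c_{\ell,\mu}\int\ell((Aw)_+)\,d\mu$. Since $\ell$ is non-decreasing, $\ell((Aw)_+)\le\ell(Aw)+\ell(0)$ pointwise, and combined with $\mu(\cZ)\le 1$ this last integral is controlled by a multiple of $\cR(w)$ (with the additive $\ell(0)$ terms absorbable into the constant).

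For $\int_U|\bar\eta-\eta_w|\,d\mu$, start from \Cref{fact:duality}.iv: $\barf:=(\ell')^{-1}(\barq)$ is well-defined on $U$ (where $\barq\ge c_2>0$) and $\bar\eta=\phi(-\barf)$ holds $\mu$-a.e.\@ on $\cD$. With $\eta_w=\phi(-Aw)$ and $\phi$ being $L_\phi$-Lipschitz, $|\bar\eta-\eta_w|\le L_\phi|Aw-\barf|$. The second-order Taylor expansion
\[
  \ell(Aw) \;=\; \ell(\barf) + \barq(Aw-\barf) + \tfrac12\ell''(\xi)(Aw-\barf)^2,
\]
with $\xi$ between $Aw$ and $\barf$ and using $\ell'(\barf)=\barq$, integrated over $\cD$, simplifies in two stages: the linear term $\int_\cD\barq\cdot Aw\,d\mu$ vanishes by decorrelation, while the Fenchel--Young equality $\ell(\barf)+\ell^*(\barq)=\barq\barf$ reduces the zero-order contribution to $-\int_\cD\ell^*(\barq)\,d\mu$, which equals $\inf_v\cR(v;\mu_\cD)$ by \Cref{lemma:dual:D} and the duality in \Cref{fact:duality}. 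Hence $\cE(w;\mu_\cD)=\tfrac12\int_\cD\ell''(\xi)(Aw-\barf)^2\,d\mu$. Restricting the integral to $U$, where the constraints $|Aw|\le c_1$ and $\barq\in[c_2,c_3]$ force $\xi$ into a region on which the definition of $\tau$ gives $\ell''(\xi)\ge\tau$, yields $\int_U(Aw-\barf)^2\,d\mu\le 2\cE(w;\mu_\cD)/\tau$. Cauchy--Schwarz with $\mu(U)\le 1$ then delivers the claimed bound $L_\phi\sqrt{2\cE(w;\mu_\cD)/\tau}$.
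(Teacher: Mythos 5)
Your treatment of the partition, of $S_+$, and of $S_-$ follows the paper's proof essentially verbatim: Markov-type bounding via $\ell(c_1)\ge c_1\ell'(0)$ for $S_+$, and for $S_-$ the decorrelation identity $\int_{Aw>0}(Aw)\barq\,d\mu=-\int_{Aw<0}(Aw)\barq\,d\mu$ combined with the Orlicz H\"older inequality and \Cref{def:lbnd}.iii. (Your worry about $\ell((Aw)_+)$ picking up $\ell(0)$ on $\{Aw\le 0\}$ is real, but the paper's own proof elides the same point, so I will not count it against you; note, though, that an additive $\ell(0)$ is \emph{not} absorbable into the stated multiplicative constant, so as written your $S_-$ bound is off by an additive term.) The Taylor/Fenchel computation for $U$, including the vanishing of the linear term via $A^\top\barq=0$ and the identification $-\int_\cD\ell^*(\barq)\,d\mu=\inf_v\cR(v;\mu_\cD)$ via \Cref{lemma:dual:D}, also matches the paper.

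The genuine gap is the step ``$\bar\eta=\phi(-\barf)$ holds $\mu$-a.e.\ on $\cD$,'' which you present as an immediate consequence of \Cref{fact:duality}.iv. That item only yields the antisymmetry $(\ell')^{-1}(\barq(x,y))=-(\ell')^{-1}(\barq(x,-y))$ at points where $(\ell')^{-1}$ is defined at \emph{both} $\barq(x,y)$ and $\barq(x,-y)$. On $U$ you know $\barq(x,y)\ge c_2>0$, but nothing yet guarantees $\barq(x,-y)>0$: if the mirrored point lies in $\cD^c$ then $\barq(x,-y)=0$, which is outside the range of $\ell'$, so \Cref{fact:duality}.iv is silent, and indeed $\bar\eta(x,y)=\barq(x,-y)/(\barq(x,-y)+\barq(x,y))=0$ while $\phi(-\barf(x,y))\in(0,1)$ --- the identity simply fails there. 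The paper closes this hole with a dedicated structural lemma (\Cref{fact:terrible}), showing one may choose $\barq$ so that $\mu$-a.e.\ $(x,y)\in\cD$ has $(x,-y)\in\cD$; its proof is a nontrivial contradiction argument built on \Cref{fact:Dc_controls}, not a one-line observation. (A second, smaller omission: $(\ell^*)'$ may be undefined where $\barq$ equals $L=\lim_{r\to\infty}\ell'(r)$; the paper dispatches this set as $\mu$-null via \Cref{fact:barq:endpoint}.) Without these two ingredients your bound on $\int_U|\bar\eta-\eta_w|\,d\mu$ does not go through.
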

To prove \Cref{fact:convergence:simplified} from here,
first split $\int|\eta_w - \bar \eta|d\mu$ along $\cD$ and $\cD^c$,
and apply \Cref{fact:Dc_controls} and \Cref{fact:D_controls} to the two pieces;
the goal is to show that all terms go to zero as $\cE(w)\to 0$.
In the terms resulting from \Cref{fact:Dc_controls},
this is handled by
the choice $r:=\sqrt{\cE(w)}$.  Similarly, it is possible (although considerably more challenging)
to balance $c_1,c_2,c_3,\tau$ arising from \Cref{fact:D_controls}.

\subsection{Proof outline for \Cref{fact:findim:gen:simplified}}
\label{subsec:sketch:findim:gen}

In this section we sketch the proof of the generalization bound from the introduction
(\Cref{fact:findim:gen:simplified}). Unlike the foregoing results,
here we assume that the hypothesis space is finite, $|\cH|=d$.

Similar to \Cref{subsec:sketch:convergence}, the proof treats the easy set and the difficult
set separately. On the easy set, where zero risk is possible in the limit, linear predictors actually
achieve zero \emph{classification error} when viewed as half-space classifiers. Finite dimension $d$
then implies a finite VC dimension and the corresponding generalization bound. In the remainder,
we only focus on the difficult set.

We build on the fact that on the difficult set $\cD$ the risk is eventually increasing along any direction
which lies in the ``span'' of $\cD$ (similar to the example of \Cref{fig:difficult:set}).
In the
finite dimension $d$, this will imply a bound on the norm of the
optimizer of risk over $\cD$, and also enable the application of Rademacher complexity
to obtain a generalization bound.

We begin with a specific lower bound in each direction $w$ within the ``span''. The bound is obtained
by integrating over all points with a negative margin, i.e., $(Aw)(z)>0$.
Because of the lack of separators over $\cD$, the bound is non-zero. Taking
an infimum over all directions yields a uniform bound called \emph{balance}.
%
%
While the following definition
is written for any measure $\mu$, it is going to be primarily applied with $\mu_\cD$ substituted for $\mu$:
\begin{definition}
  \label{defn:findim:ker_bal}
  The \emph{balance} associated with hypotheses $\cH$, $|\cH|=d$, and measure $\mu$ is defined as
  $
    \Bal(\mu)
    \coloneqq \inf\left\{
    \int |(Aw)(z)|_+\,d\mu(z) : w\in \Ker(\mu)^\perp, \|w\|_1= 1\right\}
  $,
  where $|s|_+\coloneqq\max\{s,0\}$ denotes the non-negative part, and
  $\Ker(\mu) \coloneqq \bigBraces{ w\in\R^d : (Aw)(z)=0, \text{ $\mu$-a.e.\@ over $z$} }$
  denotes the subspace of $\R^d$ with no effect on risk under $\mu$.
\end{definition}
The ``span''
corresponds to the orthogonal complement of the kernel $\Ker(\mu)$. In the example of \Cref{fig:difficult:set},
the difficult set consisted of the points on the line $\gamma$, and the kernel $\Ker(\mu_\cD)$ was the subspace
spanned by the vector $w_1$, which had no effect on the risk over points on $\gamma$.
The only interesting directions from the perspective of this risk were in the orthogonal complement $\Ker(\mu_\cD)^\perp$.

In finite dimension $d$, we obtain that $\Bal(\mu_\cD)>0$ whenever $\mu(\cD)>0$ (\Cref{fact:findim:D:0}).
This yields a non-trivial risk bound from the definition of balance, using the fact that $\ell(r)\geq \ell(0) + r\ell'(0) \geq r\ell'(0)$ (by convexity and non-negativity of $\ell$):
\[
  \cR(w;\mu_\cD)
  \geq \mint_{Aw > 0} \ell(Aw)d\mu_\cD
  \geq \mint_{Aw > 0} \ell'(0) (Aw) d\mu_\cD \geq \ell'(0) \|w\|_1 \Bal(\mu_\cD).
\]
Rearranging, we also obtain a norm bound $\|w\|_1 \leq \cR(w) / (\ell'(0)\Bal(\mu_\cD))$, which enables the use of
Rademacher complexity in the analysis of generalization on $\cD$.

A less obvious consequence is that
for a given finite hypothesis class $\cH$
and measure $\mu$, there exists a maximal difficult set.
This difficult set, common to the entire class $\Ldiff$, is called
the \emph{canonical difficult set} $\Dcan$ (for concreteness, we define it for $\ell=\exp$).
Informally, its existence follows from
the property shared by all losses $\ell\in\Ldiff$ that
$(\ell^*)'(s)\uparrow \infty$ as $s\downarrow 0$ (\Cref{fact:loss_prop}.v); consequently,
the optimization prevents $\barq$ from taking on the value zero unless forced by constraints,
and thus yields the largest possible difficult set:
%
\begin{definition}
  \label{def:findim:D}
  For a finite measure $\mu$ and a hypothesis set with $|\cH|<\infty$,
  the \emph{canonical difficult set} $\Dcan$
  is defined as any difficult set associated with $\ell=\exp$.
\end{definition}

\begin{proposition}
  \label[proposition]{fact:findim:Dcan}
  Given a finite measure $\mu$, a hypothesis set with $|\cH|<\infty$,
  and the corresponding canonical difficult set $\Dcan$,
  we have:
  \begin{enumromansquash}
    \item
      For any $\ell \in \Lclass$ and any corresponding difficult set $\cD$,
      we have $\cD\subseteq\Dcan$ $\mu$-a.e.
    \item
      For any $\ell \in \Ldiff$ and any corresponding difficult set $\cD$,
      we have $\cD=\Dcan$ $\mu$-a.e.
  \end{enumromansquash}
\end{proposition}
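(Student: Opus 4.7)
The plan is to prove both parts by exploiting first-order optimality of the respective dual optima together with the steep-slope property $(\ell^*)'(s)\downarrow-\infty$ as $s\downarrow 0$, which holds for $\ell_\exp$ and, by \Cref{fact:loss_prop}.v, for every $\ell\in\Ldiff$. The key geometric picture is that the dual optimization is pushed to support $\barq$ on as large a set as possible, and this maximal support is precisely $\Dcan$ when the slope at $0$ is infinite.

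For part (i), I would fix a dual optimum $\barq$ for $\ell$ (with difficult set $\cD$) and the exp-loss dual optimum $\barq_\exp$ (with difficult set $\Dcan$), and argue by contradiction assuming $\mu(\cD\setminus\Dcan)>0$. Since $\barq\geq 0$ and $A^\top \barq=0$, the path $\q_t:=\barq_\exp+t\barq$ is non-negative and satisfies $A^\top\q_t=0$, so it is a feasible competitor in the exp-dual. The exp-dual objective along this path, $f(t):=-\int\ell_\exp^*(\q_t)\,d\mu$, is concave, and the contribution of its derivative at $t=0^+$ splits into an integral over $\Dcan$ (where $(\ell_\exp^*)'(\barq_\exp)=\ln\barq_\exp$ is finite) and one over $\Dcan^c$ (where $(\ell_\exp^*)'(0^+)=-\infty$). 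On the set $\cD\setminus\Dcan\subseteq\Dcan^c$, where $\barq>0$, the latter contribution to $f'(0^+)$ is $+\infty$, contradicting optimality $f(0)=\max_t f(t)$.

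For part (ii), the same perturbation argument is symmetric in $\ell_\exp$ and any $\ell\in\Ldiff$, since both enjoy $(\ell^*)'(0^+)=-\infty$. Swapping roles and perturbing the $\ell$-dual optimum $\barq$ by $\barq_\exp$ would blow up the $\ell$-dual objective derivative at $0$ if $\Dcan\setminus\cD$ had positive measure, contradicting optimality of $\barq$. Thus $\Dcan\subseteq\cD$ $\mu$-a.e., which combined with part (i) yields $\cD=\Dcan$ $\mu$-a.e.

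The main obstacle is Orlicz-space compatibility: $\barq$ lies in $L_{\beta^*}(\mu)$ while the exp-dual lives in $L_{\beta_\exp^*}(\mu)$, and $\barq$ need not a priori belong to the exp target space, so $\q_t$ might fall outside the feasible set. I plan to handle this by restricting to the common superset $D:=\cD\cup\Dcan$, invoking \Cref{lemma:dual:D} so that both $\barq|_D$ and $\barq_\exp|_D$ are simultaneously feasible and optimal on $\mu_D$, and then truncating the perturbation direction at level $M$ to land in $L_{\beta_\exp^*}(\mu_D)$. The sign of the blow-up is preserved by truncation since each positive slice contributes non-negatively to $f'(0^+)$, and the divergence $f'(0^+)=+\infty$ is extracted by Fatou's lemma applied to the positive part of the difference quotient, rather than dominated convergence.
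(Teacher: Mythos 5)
Your overall strategy---a dual perturbation exploiting $(\ell^*)'(0^+)=-\infty$---is a sensible intuition, but the execution has a genuine gap precisely at the point you flag, and the proposed fix does not work. Truncating the perturbation direction at level $M$ destroys the linear constraint: $A^\top\bigParens{\barq\,\one[\barq\le M]}\neq 0$ in general, so the truncated path is not dual-feasible and its objective value tells you nothing about optimality of $\barq_{\exp}$. Without truncation, the path $\q_t=\barq_{\exp}+t\barq$ need not have finite dual objective: $\int\lexp^*(\barq)\,d\mu$ can be $+\infty$ even though $\int\ell^*(\barq)\,d\mu\le 0$ (take $\ell$ with superexponential growth, so that $\ell^*$ grows strictly slower than $s\ln s$), in which case $-\int\lexp^*(\q_t)\,d\mu=-\infty$ for every $t>0$ and no contradiction arises; the same obstruction defeats the Fatou argument, which needs an integrable minorant for the difference quotient at $t=1$. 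Part (ii) is worse: when $\dom\ell^*$ is bounded (e.g., logistic loss, $\dom\ell^*=[0,1]$) while $\barq_{\exp}$ is unbounded, $\barq+t\barq_{\exp}$ exits $\dom\ell^*$ on a set of positive measure for every $t>0$. Note also that your argument nowhere uses $|\cH|<\infty$, which should raise suspicion, since the paper's proof leans on finite dimension in an essential way.

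The paper resolves exactly this feasibility/boundedness problem by building the competitor differently (\Cref{fact:findim:hc_rels:helper}). Given a set $S$ with $\Bal(\mu_S)>0$ (which holds for $S=\cD_1$ by \Cref{fact:findim:bal_positive}, using compactness in $\R^d$), \Cref{fact:findim:primal_optimum} produces a \emph{primal} optimum $\bar w_S$ over $\mu_S$ with $\|\bar w_S\|_1$ bounded; its associated dual optimum $\barq_S=\ell'(A\bar w_S)$ is then automatically bounded, exactly feasible, and strictly positive ($\ell'>0$ for $\ell\in\Ldiff$). Adding $\barq_S\one_S$ to $\barq$ yields a feasible dual variable strictly positive on $D=\cD\cup S$, so $\Bal(\mu_D)>0$ and the problem over $\mu_D$ admits a dual optimum positive a.e.\@ on $D$; since the dual optimum over $\mu_D$ is unique by strict convexity (\Cref{prop:subgrad:strict}) and $\barq$ also attains it (\Cref{lemma:dual:D}) while vanishing on $S\setminus\cD$, one gets $\mu(S\setminus\cD)=0$. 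If you want to keep a perturbation flavor, you would need to perturb toward this bounded, feasible $\barq_S\one_S$ rather than toward the other loss's dual optimum; as written, your argument does not close.
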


We finish this section with the Rademacher complexity style bound on the
excess risk over the canonical difficult set $\cD^*$, based on the norm bound implied by the balance. The
key insight is that the quantities in the bound depend on $w$ only through the empirical risk $\cR(w;d\hmu_{|\Dcan})$.
\Cref{fact:findim:gen:simplified} is then proved by splitting
$\int |\eta_w - \bar\eta|d\mu$ along $\Dcan$ and $\Dcan^c$, and controlling the pieces
by a combination of \Cref{fact:Dc_controls} with the VC style bound (\Cref{fact:findim:gen:helper:Dc}) used
to select $r$, and \Cref{fact:D_controls} with the scalars chosen via \Cref{fact:findim:gen:helper:D}.
\begin{lemma}
  \label[lemma]{fact:findim:gen:helper:D}
  Let probability measure $\mu$,
  hypotheses $\cH$ with $|\cH|=d$,
  loss function $\ell \in \Lclass$,
  subgradient $\bars \in \partial \ell(0)$,
  and a canonical difficult set $\Dcan$ with $\mu(\Dcan) > 0$
  be given.
  Set
  $\tau{(r)} := \inf_{|z| \leq r} \ell''(z)$,
  $\Bal_\star\coloneqq\Bal(\mu_{|\Dcan})$
  and let
  $
     B_w := 2 + \bigBracks{\ell(0) + 2 \cR(w;d\hmu_{|\Dcan})}\big/(\bars\Bal_\star)
  $,
  and
  $
  n \ge 256 \ln(8d/\delta) \big/ \Bal_\star^2
  $.
  Then
  with probability at least $1-4\delta$ over a draw from $\mu_{|\Dcan}$ of size $n$,
  the following statements hold simultaneously for every $w\in L_1(\cH)$:
  \begin{enumromansquash}
    \item
      $\displaystyle
        |(Aw)(z)| \leq B_w
        \quad\text{for $\mu$-a.e. and $\hmu$-a.e. $z\in\Dcan$}
      $.

    \item
      $\displaystyle
        \cE(w, \mu_{|\Dcan})
        \leq
        \cE(w, \hmu_{|\Dcan})
        +
        10\ell(2B_w)\sqrt{\ln(8dB_w^2/\delta) \big/n}
      $.
    \item
      $\displaystyle
        \cE(w, \mu_{|\Dcan})
        \leq
        2\cE(w, \hmu_{|\Dcan})
        +
        \frac {1024 \ell'(2B_w)^2 \ln(8dB_w^2/\delta)}{n \Bal_\star^2 \tau{(B_w)}}
      \quad\text{if $\ell \in \Lbnd$}
      $.
  \end{enumromansquash}
\end{lemma}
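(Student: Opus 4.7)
My strategy bootstraps from part (i): once the pointwise bound $|(Aw)(z)| \leq B_w$ is in hand, parts (ii) and (iii) reduce to localized uniform-convergence arguments, with (iii) additionally exploiting the strong convexity of $\ell$ on the bounded range $[-2B_w, 2B_w]$. The linchpin throughout is the balance $\Bal_\star > 0$ (positive because $\mu(\Dcan) > 0$, via \Cref{fact:findim:D:0}), which converts empirical risk into an $\ell_1$ bound on the projection of $w$ onto $\Ker(\mu_{|\Dcan})^\perp$.

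For (i), I would first establish a uniform lower bound on the empirical balance. Because $|\cH| = d$, the map $w \mapsto \int |(Aw)(z)|_+ \, d\nu$ is $1$-Lipschitz in $\|w\|_1$ uniformly in $\nu$, and the unit sphere of $\Ker(\mu_{|\Dcan})^\perp$ in $\R^d$ admits an $\eps$-cover of size $(1/\eps)^{O(d)}$. Combining this cover with Hoeffding's inequality and a union bound, the hypothesis $n \geq 256\ln(8d/\delta)/\Bal_\star^2$ yields with probability at least $1-\delta$ that $\int |Aw|_+ \, d\hmu_{|\Dcan} \geq (\Bal_\star/2)\|w\|_1$ for every $w \in \Ker(\mu_{|\Dcan})^\perp$, and simultaneously that $\Ker(\hmu_{|\Dcan}) = \Ker(\mu_{|\Dcan})$ (the inclusion $\Ker(\mu)\subseteq\Ker(\hmu)$ holds automatically since $Aw=0$ $\mu$-a.e.\ implies $Aw=0$ at all sample points a.s., and the reverse follows from the positive empirical balance). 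Convexity of $\ell$ and $\bars\in\partial\ell(0)$ give $\cR(w;\hmu_{|\Dcan}) \geq \int_{Aw\geq 0}(\ell(0) + \bars Aw)\,d\hmu_{|\Dcan} \geq \bars\int|Aw|_+ \,d\hmu_{|\Dcan}$. Decomposing $w = w_0 + w_\perp$ with $w_0\in\Ker(\mu_{|\Dcan})$ and noting $Aw = Aw_\perp$ both $\mu$- and $\hmu$-a.e.\ on $\Dcan$, one obtains $\|w_\perp\|_1 \leq 2\cR(w;\hmu_{|\Dcan})/(\bars\Bal_\star)$, whence $|(Aw)(z)| = |(Aw_\perp)(z)| \leq \|w_\perp\|_1$. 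The extra $2 + \ell(0)/(\bars\Bal_\star)$ in $B_w$ provides slack that absorbs the $\ell(0)$ floor of the risk at $w=0$ (the empirical-minimizer side of the argument in (ii)--(iii)) and a constant from the peeling step.

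For (ii) and (iii), with (i) available I restrict attention to the class $\cF_K := \{z\mapsto \ell((Aw)(z)) : w\in L_1(\cH),\ \|w_\perp\|_1\leq K\}$, whose members are bounded by $\ell(K)$ and $\ell'(K)$-Lipschitz in $Aw$. Finite dimension $d$ yields Rademacher complexity of order $\ell(K)\sqrt{d/n}$, which gives the slow rate. A peeling argument over a geometric grid $K=2^j$, allotting probability $\delta/K^2$ to each level, injects the $\ln(8dB_w^2/\delta)$ factor; substituting $K=B_w$ and applying the sandwich $\cE(w;\mu_{|\Dcan}) \leq \cE(w;\hmu_{|\Dcan}) + 2\sup_{\cF_{B_w}}\bigl|\cR(\cdot;\mu_{|\Dcan}) - \cR(\cdot;\hmu_{|\Dcan})\bigr|$ closes (ii). For (iii), when $\ell\in\Lbnd$, strong convexity $\ell''\geq\tau(B_w)$ on $[-B_w,B_w]$ gives the quadratic lower bound $\cE(w;\nu)\geq (\tau(B_w)/2)\,\|A(w-w^*_\nu)\|_{L_2(\nu)}^2$ on either measure $\nu\in\{\mu_{|\Dcan},\hmu_{|\Dcan}\}$; the $2B_w$ expands the range to contain both $w$ and the competing minimizer within a common ball. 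Combining this quadratic lower bound with the $\ell'(2B_w)$-Lipschitz bound on losses via a Bartlett--Bousquet--Mendelson style local-Rademacher argument (so-called ``Bernstein'' or offset-complexity analysis) converts the $1/\sqrt{n}$ rate into the claimed $\ell'(2B_w)^2\ln(\cdot)/(n\,\tau(B_w)\,\Bal_\star^2)$ fast rate.

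\textbf{Main obstacle.} The principal difficulty is the self-referential nature of $B_w$: it depends on $\cR(w;\hmu_{|\Dcan})$, so the function class $\cF_{B_w}$ is $w$-dependent and a naive union bound is infeasible. The peeling argument must discretize on a geometric grid of empirical risk levels and apply uniform convergence at each, with spacing chosen so that the $\ln(B_w^2)$ overhead is absorbed into the stated bound. Orchestrating this peeling simultaneously with the uniform-balance event---so that the kernel identification $\Ker(\hmu_{|\Dcan}) = \Ker(\mu_{|\Dcan})$, the norm bound, and the localized Rademacher complexity all hold on one common event---is where the bulk of the technical bookkeeping resides.
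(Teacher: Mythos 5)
Your plan follows essentially the same route as the paper's proof: part (i) via a uniform deviation bound for the empirical balance plus the kernel identification $\Ker(\hmu_{|\Dcan})=\Ker(\mu_{|\Dcan})$ and the norm bound $\|w_\perp\|_1\le 2\cR(w;\hmu_{|\Dcan})/(\bars\Bal_\star)$; part (ii) via Lipschitz--Rademacher bounds with peeling over norm levels weighted by $\delta/K^2$; part (iii) via the strong-convexity modulus $\tau(B_w)\Bal_\star^2$ combined with a localized (Sridharan-style) fast-rate argument. One concrete caveat: for the empirical-balance step you propose an $\eps$-cover of the unit sphere of $\Ker(\mu_{|\Dcan})^\perp$ of size $(1/\eps)^{O(d)}$ with a union bound, which yields a deviation of order $\sqrt{d/n}$ and hence requires $n\gtrsim (d+\ln(1/\delta))/\Bal_\star^2$ — this does not match the stated hypothesis $n\ge 256\ln(8d/\delta)/\Bal_\star^2$; the paper instead bounds the Rademacher complexity of $\{z\mapsto\max\{(Aw)(z),0\}:\|w\|_1\le 1\}$ by $\sqrt{2\ln(2d)/n}$ (contraction plus the $\ell_1$-ball bound), and the same substitution fixes your quoted $\ell(K)\sqrt{d/n}$ rate in part (ii), which as written would not recover the $\sqrt{\ln(8dB_w^2/\delta)/n}$ form of the conclusion.
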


\acks{The authors thank Rastislav Telg{\' a}rsky for pointing out that the topology should always be adapted to the problem at hand;
may he rest in peace.}

\addcontentsline{toc}{section}{References}
\bibliography{ab}

\appendix

\section{Convex analysis in Banach spaces}
\label{sec:banach}

This appendix covers convex analysis results for functional spaces. It is based on \citet{Rockafellar74} and \citet{roc_conv_int_1}.

\paragraph{Banach spaces.}
A \emph{Banach space} is a complete normed vector space.
The space $\R^n$ with the Euclidean norm is a Banach space. Given a measure $\mu$ on $\cZ$ and $p\ge 1$, the Banach space $L_p(\mu)$
consists of all measurable functions $f:\cZ\to\R$ with the finite norm
$
  \norm{f}_p\coloneqq\Parens{\int |f|^p\,d\mu}^{1/p}
$.

The analog of an inner product for Banach spaces is a \emph{pairing}. Given two Banach spaces $U$ and $V$, their pairing is described by a bilinear form $U\times V\to \R$, denoted $\angles{u,v}$. Thus, each $u\in U$ describes a linear map $v\mapsto \angles{u,v}$ on $V$ and vice versa. Each Banach space is endowed with the topology implied by its norm, but other topologies are possible. We say that the topologies on $U$ and $V$ are \emph{compatible} with the pairing if the linear functions described by $u\in U$ and $v\in V$ are continuous, and if they comprise all continuous linear functions on $V$ and $U$, respectively. A Euclidean space $\R^n$ with the norm topology is compatibly paired with itself via standard inner product.
Given $1<p,q<\infty$ such that $1/p+1/q=1$, the spaces $L_p(\mu)$ and $L_q(\mu)$ with their norm topologies are a compatible pairing with the bilinear form
$
  \angles{f,g}=\int fg\,d\mu
$.
One construction of compatible pairings begins with a Banach space $U$ under norm topology, then takes its \emph{topological dual} $U'$, i.e.,
the space of all continuous linear functions on $U$, and endows $U'$ with the \emph{weak${}^*$ topology}.
In the rest of the paper, when we talk about ``paired Banach spaces'' we assume that they have been endowed with compatible topologies.

\paragraph{Convexity, conjugacy, subgradients.} Given a Banach space $U$, a function $F:U\to(-\infty,\infty]$ is called \emph{proper}
if it is not equal to $\infty$
everywhere. The set of points where $F$ is finite is called its \emph{domain} and denoted $\dom F$.
The \emph{epigraph} of $F$ is the set of points above the graph of the function $\Set{(u,t):\:u\in U,\,t\in\R,\,t\ge F(u)}$. The function $F$ is called \emph{convex} if its epigraph is convex. It is called \emph{closed} if its epigraph is closed.

Let $U$ and $V$ be paired Banach spaces.
Let $F:U\to(-\infty,\infty]$ be a closed proper convex function. The \emph{conjugate} of $F$ is defined by
$
  F^*(v) \coloneqq \sup_{u\in U} \bigBracks{ \angles{u,v}-F(u) }
$.
It is also a closed proper convex function and $F^{**}=F$ (Theorem 5 of \citealp{Rockafellar74}). From the definition of a conjugate, we get \emph{Fenchel's inequality}
\[
   F(u)+F^*(v)\ge\angles{u,v}
\enspace.
\]
The \emph{subgradient} of $F$ at $u$ is the set
$
  \partial F(u) \coloneqq \Set{v\in V:\: F(u') \ge F(u) + \angles{u'-u,v}\text{ for all }u'\in U}
$.
For a closed proper convex function $F$, the following statements
are equivalent (Corollary 12A and the foregoing discussion of \citealp{Rockafellar74}) (first-order optimality for conjugates):
\begin{enumroman}
\item $F(u)+F^*(v)=\angles{u,v}$,
\item $v\in\partial F(u)$,
\item $u\in\partial F^*(v)$.
\end{enumroman}

\paragraph{Integrals as convex functionals.} Consider a finite measure $\mu$ on $\cZ$, and assume
we are given a pairing of Banach spaces $U$ and $V$ via bilinear form $\angles{u,v}=\int u(z)v(z)\,d\mu(z)$,
i.e., $U$ and $V$ are subsets of measurable functions on $\cZ$. Let $f:\R\to(-\infty,\infty]$ be
a closed proper convex function. We study properties of the function $F$ on $U$ defined by the integral
\[
  F(u)=\int f\bigParens{ u(z) }\,d\mu(z)
\enspace.
\]
To establish its closedness and study conjugacy we need the following definition, adapted from
\citet{roc_conv_int_1} for the case of a finite measure $\mu$:
\begin{definition}
\label[definition]{def:decomposable}
We say that a Banach space of measurable functions on $\cZ$ is decomposable with respect to a finite measure $\mu$
if the following conditions hold:
\begin{enumroman}
\item $U$ contains every bounded measurable function from $\cZ$ to $\R$.
\item If $u\in U$ and $E$ is a measurable set, then $U$ contains $u\cdot\one_E$ where $\one_E$ is the
indicator of the set $E$.
\end{enumroman}
\end{definition}
The following proposition is a rephrasing of the corollary on page 534 of \citet{roc_conv_int_1}:
\begin{proposition}
\label[proposition]{prop:conj:int}
If $\mu$ is finite and $U$ and $V$ are decomposable then $F(u)$ is a closed proper convex
function, and its conjugate is
\[
  F^*(v)=\int f^*\bigParens{ v(z) }\,d\mu(z).
\]
\end{proposition}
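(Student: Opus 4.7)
The plan is to verify, in turn, convexity, properness, closedness, and the conjugate formula; the conjugate formula itself doubles as the route to closedness via biconjugation. Throughout, the pairing $\angles{u,v}=\int uv\,d\mu$ together with decomposability ensures that bounded measurable functions (including constants) lie in both $U$ and $V$, and the finiteness of $\mu$ keeps elementary integrability arguments routine. In particular, any $u\in U$ must pair finitely with the constant function $1\in V$, so $u\in L_1(\mu)$; symmetrically $V\subseteq L_1(\mu)$.

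Convexity of $F$ is immediate from pointwise convexity of $f$ combined with monotonicity of the integral. Properness splits into two parts. Since $f$ is proper, there exists $c\in\dom f$; the constant function $u\equiv c$ is bounded and hence lies in $U$ by decomposability (i), and $F(u)=f(c)\mu(\cZ)$ is finite, so $F\not\equiv+\infty$. For $F>-\infty$, a closed proper convex $f$ on $\R$ admits an affine minorant $f(t)\ge at+b$; integrating gives $F(u)\ge a\int u\,d\mu+b\mu(\cZ)$, which is finite for every $u\in U\subseteq L_1(\mu)$.

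For the conjugate, the upper bound $F^*(v)\le \int f^*(v)\,d\mu$ follows at once from Fenchel's inequality $u(z)v(z)-f(u(z))\le f^*(v(z))$ applied pointwise and then integrated, which is legal since the integrand on the left is integrable by the pairing. For the matching lower bound, fix $v\in V$ and, for each integer $n\ge 1$, construct a measurable function $u_n:\cZ\to[-n,n]$ that $1/n$-approximately attains the truncated supremum $\sup_{|t|\le n}[tv(z)-f(t)]$ pointwise; the existence of such a measurable selection follows from a standard measurable selection theorem applied to the closed near-maximizer multifunction (which is measurable because $v$ is). By boundedness plus decomposability (i), each $u_n$ lies in $U$. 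As $n\to\infty$ the integrands $u_n v - f(u_n)$ increase (modulo the vanishing $1/n$ slack) to $f^*(v)$ pointwise, with a uniform lower bound available from the affine minorant of $f$ evaluated on $[-n,n]$; monotone convergence after subtracting this uniform lower bound then yields $\angles{u_n,v}-F(u_n)\to\int f^*(v)\,d\mu$, which gives $F^*(v)\ge\int f^*(v)\,d\mu$.

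Closedness of $F$ is obtained by symmetry. Applying exactly the preceding argument with $f$ replaced by $f^*$ and the roles of $U$ and $V$ swapped (the decomposability hypothesis is symmetric) shows that $G(v):=\int f^*(v)\,d\mu$ is a proper convex function on $V$ whose conjugate is $G^*(u)=\int f^{**}(u)\,d\mu=F(u)$, using $f^{**}=f$ for closed proper convex $f$. Since any conjugate is automatically closed, $F=G^*$ is closed proper convex, completing the proof. The main technical hurdle is the measurable selection used to construct the $u_n$; the truncation to $[-n,n]$ is essential both for guaranteeing a measurable bounded selection and for landing the selection inside $U$ via decomposability, and it is what allows one to handle points where $f^*(v(z))=+\infty$ via a monotone rather than dominated convergence argument.
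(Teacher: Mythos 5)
Your argument is essentially correct, but it takes a different route from the paper: the paper does not prove \Cref{prop:conj:int} at all, presenting it as a direct rephrasing of the corollary on page 534 of \citet{roc_conv_int_1}, whereas you give a self-contained proof specialized to the case at hand (a single closed proper convex $f:\R\to(-\infty,\infty]$ not depending on $z$, finite $\mu$). Your specialization is genuinely simpler than the cited general result: the Fenchel-inequality direction, the truncation to $[-n,n]$ with a measurable (near-)maximizer, boundedness plus \Cref{def:decomposable}(i) to place the selection in $U$, monotone convergence against the integrable minorant $z\mapsto c_0 v(z)-f(c_0)$ for a fixed $c_0\in\dom f$, and closedness via biconjugation are exactly the ingredients of Rockafellar's proof, but you avoid the machinery of normal integrands needed when $f$ varies with $z$ (and, notably, condition (ii) of decomposability is never needed in your version). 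A few small points to tighten: the selection $u_n$ only makes sense once $n$ is large enough that $[-n,n]\cap\dom f\neq\emptyset$, and it should be chosen inside $\dom f$ so that $u_n v-f(u_n)\geq s_n-1/n>-\infty$ pointwise; you should also note that this same bound (together with the affine minorant of $f$) forces $F(u_n)<\infty$, so $\angles{u_n,v}-F(u_n)$ really equals $\int\bigBracks{u_n v-f(u_n)}d\mu$ and no $\infty-\infty$ issue arises; and when deducing $U\subseteq L_1(\mu)$ it is cleaner to pair $u$ with the bounded function $\sign(u)\in V$ rather than with the constant $1$. With those minor repairs the argument is sound, and the concluding biconjugation step correctly delivers closedness of $F$ (lower semicontinuity in the compatible topology) since $F=G^*$ is a supremum of continuous affine functionals.
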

Next proposition presents two additional results relating the properties of $F$ and $f$:
\begin{proposition}
\label[proposition]{prop:subgrad:strict}
If $\mu$ is finite and $U$ and $V$ are decomposable then
\begin{enumroman}
\item $v\in\partial F(u)$ if and only if $v(z)\in\partial f\bigParens{u(z)}$, $\mu$-a.e.\@ over $z$.
\item If $f$ is strictly convex, then so is $F$.
\end{enumroman}
\end{proposition}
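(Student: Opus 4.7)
The plan is to obtain both parts by lifting pointwise identities for $f$ to integrated identities for $F$, using \Cref{prop:conj:int} together with the first-order optimality characterization of subgradients recalled earlier in this appendix.

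\textbf{Part (i).}  I would start from Fenchel's inequality applied pointwise: for every $z$,
\[
  f\bigParens{u(z)} + f^*\bigParens{v(z)} \ge u(z)\,v(z),
\]
with equality if and only if $v(z)\in \partial f\bigParens{u(z)}$.  Integrating against $\mu$ and invoking \Cref{prop:conj:int} yields
\[
  F(u) + F^*(v) \;\ge\; \angles{u,v},
\]
with equality in the integrated form if and only if the pointwise inequality is an equality $\mu$-a.e., since the difference of the two sides is a nonnegative measurable function whose integral is zero exactly when it vanishes $\mu$-a.e.  By the Fenchel characterization of subgradients recalled earlier, $v\in\partial F(u)$ is equivalent to $F(u)+F^*(v)=\angles{u,v}$.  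Combining these equivalences gives (i).

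\textbf{Part (ii).}  Here I would argue directly from strict convexity of $f$ applied pointwise.  Take $u_1, u_2\in\dom F$ with $u_1\ne u_2$ in $U$ and $\lambda\in(0,1)$; the inequality $u_1\ne u_2$ in $U$ means that $u_1$ and $u_2$ differ on a measurable set $E$ with $\mu(E)>0$.  Since $F(u_i)<\infty$, we have $u_i(z)\in\dom f$ $\mu$-a.e.\@ for $i=1,2$, and strict convexity of $f$ gives
\[
  f\bigParens{\lambda u_1(z)+(1-\lambda)u_2(z)} < \lambda f\bigParens{u_1(z)} + (1-\lambda)f\bigParens{u_2(z)}
  \quad \text{$\mu$-a.e.\@ on $E$,}
\]
while convexity of $f$ gives the nonstrict version everywhere.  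Letting $g(z)$ denote the nonnegative gap between the right and left sides, I would observe that $g\ge 0$ on $\cZ$ and $g>0$ $\mu$-a.e.\@ on $E$; hence $\int g\,d\mu>0$, which after rearrangement is exactly $F(\lambda u_1+(1-\lambda)u_2)<\lambda F(u_1)+(1-\lambda)F(u_2)$.

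\textbf{Main obstacle.}  There is no deep obstacle; both parts reduce to bookkeeping around null sets and effective domains.  The one care point is in (i), where one must verify that the a.e.\@ pointwise equality really does entail the integrated equality even when $F(u)$ or $F^*(v)$ could a priori be infinite.  This is resolved by noting that once the pointwise Fenchel equality holds $\mu$-a.e., the pairing $\angles{u,v}=\int u\,v\,d\mu$ is finite by assumption, and this forces $f\bigParens{u(\cdot)}$ and $f^*\bigParens{v(\cdot)}$ to be $\mu$-integrable (each is bounded below by an affine function of $u\,v$ via Fenchel), so both $F(u)$ and $F^*(v)$ are finite and the integrated equality is well-defined.
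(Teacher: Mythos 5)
Your proof is correct and follows essentially the same route as the paper's: part (i) is the identical chain of equivalences through the integrated Fenchel equality and its pointwise a.e.\@ version, and part (ii) is just the direct (contrapositive-free) form of the paper's argument that pointwise convexity forces the integrated gap to be strictly positive when $u_1$ and $u_2$ differ on a set of positive measure. Your extra remark on finiteness in (i) is a reasonable tidying of a point the paper leaves implicit.
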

\begin{proof}
To show part (i), use first-order optimality for conjugates to obtain that $v\in\partial F(u)$ if and only if
\begin{equation}
\label{eq:subgrad:int}
   F(u)+F^*(v)=\angles{u,v}
\enspace.
\end{equation}
Since Fenchel's inequality holds pointwise, i.e., $f\bigParens{u(z)}+f^*\bigParens{v(z)}\ge u(z)v(z)$, \Eq{subgrad:int}
is equivalent to
\[
   f\bigParens{u(z)}+f^*\bigParens{v(z)}= u(z)v(z),\quad\text{$\mu$-a.e.\@ over $z$,}
\]
which, again by first-order optimality for conjugates, is equivalent to
\[
   v(z)\in\partial f\bigParens{u(z)},\quad\text{$\mu$-a.e.\@ over $z$,}
\]
completing the proof of part (i). Part (ii) can be shown by contradiction. Assume that $F$ is not
strictly convex, i.e., $F$ is flat along a line segment connecting points $u_1$ and $u_2$ which
differ on a set of non-zero measure. Let $u=(u_1+u_2)/2$. The flatness of $F$ means that $F(u)=[F(u_1)+F(u_2)]/2$,
but pointwise, by convexity, $f\bigParens{u(z)}\le \Bracks{f\bigParens{u_1(z)}+f\bigParens{u_2(z)}}/2$,
so we must actually have $f\bigParens{u(z)}=\Bracks{f\bigParens{u_1(z)}+f\bigParens{u_2(z)}}/2$,
$\mu$-a.e.\@ over $z$. Since $u_1$ and $u_2$ differ on a set of non-zero measure, we obtain that
$f$ cannot be strictly convex.
\end{proof}

\paragraph{Fenchel's duality.} Given pairings $(X,Y)$ and $(U,V)$ of Banach spaces and a continuous linear
map $A:X\to U$, its \emph{adjoint} is a linear map $A^\top:V\to Y$ defined by
$
  \angles{x,\,A^\top v}=\angles{Ax,\,v}
$.
We finish this section by stating a version of Fenchel duality
used in this paper. It is a rephrasing of the duality in Example 11' and Eq.~(8.26) on page 50 of \citet{Rockafellar74}, adapted to stronger conditions (specifically, $\dom F=X$ and $\dom G=U$):
\begin{theorem}
\label{thm:fenchel}
Let $(X,Y)$ and $(U,V)$ be pairings of Banach spaces. Let $F:X\to\R$ and $G:U\to\R$ be closed proper convex functions and $A:X\to U$ be a continuous linear operator. Then
\[
  \inf_{x\in X} \BigBracks{ F(x) + G(Ax) }
  =
  \max_{v\in V} \BigBracks{ -F^*(-A^\top v)-G^*(v) }
\enspace.
\]
The point $\bar x$ is the primal minimizer if and only if there exists a dual maximizer $\bar{v}$ such that
\[
  -A^\top\bar v\in\partial F(\bar x)
\enspace,\quad
  \bar v\in\partial G(A\bar x)
\enspace.
\]
\end{theorem}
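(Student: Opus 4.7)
The plan is to combine three ingredients: (a) a uniform lower bound on the empirical balance $\Bal(\hmu_{|\Dcan})$ in terms of $\Bal_\star$, (b) the one-sided norm--risk inequality that this furnishes on $\Ker(\mu_{|\Dcan})^\perp$, and (c) a peeling argument over dyadic shells of $B_w$ within each of which standard Rademacher/Talagrand uniform-deviation tools apply. The central obstacle is that $B_w$ itself depends on $w$ through its empirical risk on $\Dcan$, so no single a priori complexity bound covers all $w$ simultaneously; peeling supplies the required uniformity, at the cost of the $\ln(B_w^2)$ factor appearing in (ii) and (iii).

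I would first show that with probability at least $1-\delta$ over the draw from $\mu_{|\Dcan}$, every $w\in\Ker(\mu_{|\Dcan})^\perp$ satisfies $\int (Aw)_+\,d\hmu_{|\Dcan}\geq (\Bal_\star/2)\|w\|_1$. After normalizing $\|w\|_1=1$ the class $\{(Aw)_+:w\in\Ker(\mu_{|\Dcan})^\perp,\ \|w\|_1=1\}$ is $1$-bounded, and using $\ell_1/\ell_\infty$ duality (each $h\in\cH$ has $\|h\|_\infty\leq 1$ and $|\cH|=d$) together with Massart's finite-class lemma, its Rademacher complexity is $O(\sqrt{\ln d /n})$; symmetrization plus McDiarmid give the balance bound for $n\geq 256\ln(8d/\delta)/\Bal_\star^2$. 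For part~(i), decompose $w=w_\perp+w_K$ along $\Ker(\mu_{|\Dcan})^\perp\oplus\Ker(\mu_{|\Dcan})$: since $(Aw_K)(z)=0$ for $\mu$-a.e.\ $z\in\Dcan$ by definition of the kernel, the same holds $\hmu$-a.s., so $|(Aw)(z)|=|(Aw_\perp)(z)|\leq\|w_\perp\|_1$ on $\Dcan$, both $\mu$- and $\hmu$-a.e. Convexity of $\ell$ gives $\ell(r)\geq\ell(0)+\bars r\geq \bars r$ for $r>0$, so integrating against $\hmu_{|\Dcan}$ and invoking the empirical balance bound yields
\[
   \cR(w;\hmu_{|\Dcan})\;\geq\;\bars\int (Aw_\perp)_+\,d\hmu_{|\Dcan}\;\geq\;(\bars\Bal_\star/2)\,\|w_\perp\|_1,
\]
hence $\|w_\perp\|_1\leq 2\cR(w;\hmu_{|\Dcan})/(\bars\Bal_\star)\leq B_w$. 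The extra additive $2+\ell(0)/(\bars\Bal_\star)$ in $B_w$ provides slack for the low-risk regime and ensures $B_w\geq 2$ so that $\ln B_w^2$ is well-behaved.

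For (ii) and (iii), fix for each integer $k\geq 0$ the shell $\cW_k:=\{w:B_w\in[2^k,2^{k+1})\}$; by (i), every $w\in\cW_k$ satisfies $|(Aw)(z)|\leq 2^{k+1}$ a.s.\ on $\Dcan$, so $\ell\circ Aw$ is bounded by $\ell(2^{k+1})$ and $\ell'(2^{k+1})$-Lipschitz. Assign confidence $\delta_k:=\delta/(2(k+1)^2)$ and union-bound over $k$. Within $\cW_k$, the Rademacher complexity of $\{Aw:\|w\|_1\leq 2^{k+1}\}$ is $O(2^{k+1}\sqrt{\ln d /n})$ by the $\ell_1/\ell_\infty$ argument above, and Talagrand's contraction gives uniform deviation of order $O(\ell'(2^{k+1})2^{k+1}\sqrt{\ln d /n})=O(\ell(2^{k+1})\sqrt{\ln d /n})$; substituting $2^{k+1}\leq 2B_w$ and absorbing the $\ln(2(k+1)^2/\delta)$ from the union bound into $\ln(8dB_w^2/\delta)$ (using $k\lesssim\ln B_w$) yields (ii). For (iii), the lower bound $\tau(B_w)\leq\ell''$ on $[-B_w,B_w]$ furnishes a Bernstein/self-bounding condition
\[
   \Var\bigBracks{\ell(Aw)-\ell(A\hat w^{\star})}\;\leq\; O\bigParens{\ell'(2B_w)^2/\tau(B_w)}\cdot\cE(w;\hmu_{|\Dcan}),
\]
where $\hat w^{\star}$ is the empirical risk minimizer on $\Dcan$; plugging this variance bound into Talagrand's inequality and localizing around $\hat w^{\star}$ within each shell gives the claimed fast-rate bound. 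The hardest step is the last: one must carry the shell-dependent $\tau(B_w)$, $\ell'(2B_w)$, and Lipschitz/boundedness constants through the Bernstein localization while keeping the peeling logarithmic factor matched to $\ln(B_w^2/\delta)$ so that the stated single-shot bound is recovered.
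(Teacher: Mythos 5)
There is a fundamental mismatch: the statement you were asked to prove is the abstract Fenchel duality theorem for paired Banach spaces (strong duality $\inf_x [F(x)+G(Ax)] = \max_v [-F^*(-A^\top v)-G^*(v)]$ with attainment of the dual maximum, together with the primal--dual optimality characterization $-A^\top\bar v\in\partial F(\bar x)$, $\bar v\in\partial G(A\bar x)$), but your proposal proves nothing of the sort. What you have sketched---empirical balance $\Bal(\hmu_{|\Dcan})$, the norm bound $\|w_\perp\|_1\le B_w$, dyadic peeling over shells of $B_w$, Rademacher/Talagrand deviation bounds, and a Bernstein-type localization for the fast rate---is a proof outline for the finite-dimensional generalization lemma over the canonical difficult set (\Cref{fact:findim:gen:helper:D}), not for \Cref{thm:fenchel}. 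None of the ingredients needed for the duality theorem appear: there is no use of conjugacy, no weak duality via Fenchel's inequality, no argument for the absence of a duality gap, no compactness or value-function argument yielding attainment of the dual maximum, and no derivation of the subgradient optimality conditions.

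For the record, the paper does not prove \Cref{thm:fenchel} from first principles either; it is stated as a rephrasing of the duality in Example 11$'$ and Eq.~(8.26) of \citet{Rockafellar74}, specialized to the case $\dom F=X$ and $\dom G=U$ (finiteness of $F$ and $G$ everywhere is the stability condition that guarantees zero duality gap and dual attainment there). A self-contained proof would have to establish weak duality from Fenchel's inequality applied to $\angles{Ax,v}=\angles{x,A^\top v}$, then show the optimal-value (perturbation) function is finite and continuous at the origin so that strong duality holds and the dual supremum is attained, and finally read off the stated subgradient conditions from the equality case $F(\bar x)+F^*(-A^\top\bar v)=\angles{\bar x,-A^\top\bar v}$ and $G(A\bar x)+G^*(\bar v)=\angles{A\bar x,\bar v}$. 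Your submission, whatever its merits as a proof of the generalization lemma, does not engage with this statement at all.
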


\section{Orlicz spaces}
\label{sec:orlicz}

The duality result of \Cref{sec:duality} is an application of Fenchel's duality (\Cref{thm:fenchel}).
As discussed in \Cref{sec:duality}, the key challenge in applying the duality is the choice of
appropriate pairings of Banach spaces. This appendix develops properties of specific
Banach spaces, called \emph{Orlicz spaces}, which will be sufficiently flexible to obtain
pairings that satisfy our desiderata.

Orlicz spaces generalize $L_p(\mu)$ spaces introduced in \Cref{sec:banach}.
The construction of an Orlicz space begins with a non-negative
convex function $\theta:\R\to[0,\infty]$ symmetric around zero, not identical to zero, and with $\theta(0)=0$,
which serves the same role as the $p$-th power
function in the construction of $L_p(\mu)$. Given the function $\theta$ and a measure $\mu$,
we first define the unit ball of functions
\[
   \cB\coloneqq\Set{f \text{ measurable}:\:\int \theta\bigParens{f(z)}d\mu(z)\le 1}
\enspace,
\]
which is then used to define the norm $\norm{\cdot}_\theta$:
\[
   \|f\|_\theta = \inf\{r\ge 0:\:f\in r\cB\}
\]
where the norm equals $\infty$ if $f$ is outside the scaled ball $r\cB$ for all $r\ge 0$.

The \emph{large Orlicz space} $L_\theta(\mu)$ and the \emph{small Orlicz space} $M_\theta(\mu)$ are
defined as
\begin{align*}
   L_\theta(\mu) &\coloneqq\Set{f\text{ measurable}:\:\exists r>0, \int \theta(rf)d\mu < \infty}
\enspace,
\\
   M_\theta(\mu) &\coloneqq\Set{f\text{ measurable}:\:\forall r>0, \int \theta(rf)d\mu < \infty}
\enspace.
\end{align*}
From the definition it is clear that
\[
  L_\theta(\mu)=\Set{f:\:\norm{f}_\theta<\infty}
\enspace,
\]
so for $p\ge 1$ and $\theta(s)=|s|^p$, we recover $L_p(\mu)$ spaces. The definition also implies $M_\theta(\mu)\subseteq L_\theta(\mu)$.
The following proposition summarizes key properties of Orlicz spaces used in this paper. Parts (i--iv) are paraphrased from
Proposition 1.4, Proposition 1.14, Proposition 1.18 and Theorem 2.2 of \citet{leonard_orlicz}:
\begin{proposition}
\label[proposition]{prop:orlicz}
Let $\mu$ be a finite measure and $\theta:\R\to[0,\infty]$ be
a closed convex function symmetric around zero, such that $\theta(0)=0$ and neither $\theta$
nor its conjugate $\theta^*$ are identically zero. Then the following hold:
\begin{enumroman}
\item $\theta^*$ is also symmetric around zero and $\theta^*(0)=0$.
\item $L_\theta(\mu)$ and $M_\theta(\mu)$ are Banach spaces with the norm $\norm{\cdot}_\theta$.
\item For all $f\in L_\theta(\mu)$ and $g\in L_{\theta^*}(\mu)$:
$
   \int\abs{fg}d\mu\le 2\norm{f}_\theta\norm{g}_{\theta^*}
$.
\item If $\theta$ is real-valued, i.e., $\dom\theta=\R$, then the topological dual of $M_\theta$
      is isomorphic to $L_{\theta^*}$.
\item If $\theta$ is real-valued, i.e., $\dom\theta=\R$, then $M_\theta$
      and $L_{\theta^*}$ are decomposable.
\end{enumroman}
\end{proposition}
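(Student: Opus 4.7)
The statement splits naturally into cited results, parts (i)--(iv), and the genuinely new content, part (v). For (i)--(iv), my plan is simply to invoke \citet{leonard_orlicz}: part (i) is Proposition 1.4, part (ii) is Proposition 1.14, part (iii) is Proposition 1.18, and part (iv) is Theorem 2.2, under renaming of notation. All the real work lies in part (v), which asks for decomposability of $M_\theta(\mu)$ and $L_{\theta^*}(\mu)$ in the sense of \Cref{def:decomposable}, assuming $\theta$ is real-valued. I would verify the two conditions of decomposability (containment of all bounded measurable functions, and closure under multiplication by indicators of measurable sets) for each space separately.

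For $M_\theta(\mu)$ the argument is a routine pointwise estimate. Convexity, $\theta(0)=0$, and symmetry together imply $\theta$ is non-decreasing on $[0,\infty)$: for $0\leq s_1\leq s_2$, writing $s_1=(1-\lambda)s_2$ with $\lambda=(s_2-s_1)/s_2$ and applying convexity gives $\theta(s_1)\leq (1-\lambda)\theta(s_2)\leq\theta(s_2)$. Hence if $|f|\leq M$ pointwise, then for any $r>0$ we have $\theta(rf)\leq\theta(rM)<\infty$ (using real-valuedness), and finiteness of $\mu$ yields $\int\theta(rf)d\mu\leq\theta(rM)\mu(\cZ)<\infty$, so $f\in M_\theta(\mu)$. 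For the indicator-product condition, $|rf\one_E|\leq|rf|$ pointwise gives $\theta(rf\one_E)\leq\theta(rf)$, which integrates to $\int\theta(rf\one_E)d\mu\leq\int\theta(rf)d\mu<\infty$ for every $r>0$, so $f\one_E\in M_\theta(\mu)$.

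For $L_{\theta^*}(\mu)$ the same template works once I have the preliminary fact that $\theta$ real-valued forces $0\in\interior\dom\theta^*$. The argument I would use: by convexity with $\theta(0)=0$, the quotient $\theta(r)/r$ is non-decreasing on $(0,\infty)$, with limit $s^*:=\lim_{r\to\infty}\theta(r)/r$; positivity $s^*>0$ holds because $\theta\not\equiv 0$ (else the monotonicity would force $\theta\equiv 0$ on $[0,\infty)$, contradicting the hypothesis). For $|s|<s^*$, the map $r\mapsto rs-\theta(r)$ tends to $-\infty$ at $\pm\infty$ by symmetry of $\theta$, so its supremum is finite and $(-s^*,s^*)\subseteq\dom\theta^*$. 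Given this, for any bounded $f$ with $|f|\leq M$ I choose $r=s^*/(2M)$ so that $|rf|\leq s^*/2$, and then symmetry of $\theta^*$ (part (i)) plus monotonicity of $\theta^*$ on $[0,\infty)$ yields $\theta^*(rf)\leq\theta^*(s^*/2)<\infty$, whence $\int\theta^*(rf)d\mu<\infty$ and $f\in L_{\theta^*}(\mu)$. The indicator-product condition carries over unchanged from the $M_\theta$ argument, again using symmetry and monotonicity of $\theta^*$ on $[0,\infty)$.

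The only step I anticipate requiring care is the preliminary lemma that $0\in\interior\dom\theta^*$; everything else reduces to pointwise monotonicity of $\theta$ or $\theta^*$ combined with $\mu(\cZ)<\infty$. I would state and prove that lemma first, then run the two-condition verification in parallel for $M_\theta(\mu)$ and $L_{\theta^*}(\mu)$.
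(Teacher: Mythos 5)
Your proposal is correct and follows essentially the same route as the paper: cite \citet{leonard_orlicz} for (i)--(iv), and prove (v) by checking the two conditions of \Cref{def:decomposable} pointwise, using monotonicity of $\theta$ (resp.\ $\theta^*$) on $[0,\infty)$, symmetry, and finiteness of $\mu$; the indicator condition is handled in both proofs by the observation that $|g|\le|f|$ pointwise preserves membership. The only difference is how you get finiteness of $\theta^*$ on a neighborhood of zero: the paper simply notes that $\dom\theta^*\neq\{0\}$ (otherwise $\theta=\theta^{**}\equiv 0$, contradicting the hypothesis) and then uses symmetry and convexity of $\dom\theta^*$ to extract an interval $[-\eps,\eps]$, whereas you establish the stronger, more explicit fact $(-s^*,s^*)\subseteq\dom\theta^*$ via the asymptotic slope $s^*=\lim_{r\to\infty}\theta(r)/r>0$; both arguments are valid, and yours is somewhat more self-contained. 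One small repair: your choice $r=s^*/(2M)$ is undefined when $s^*=\infty$ (e.g.\ $\theta=\beta$ for the exponential loss); fix this by picking any finite $\sigma\in(0,s^*)$ and setting $r=\sigma/M$, so that $\theta^*(rf)\le\theta^*(\sigma)<\infty$, exactly as in the paper's use of $\eps/a$.
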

\begin{proof}\textbf{of (v)}
Let $f$ be a bounded measurable function, say $|f|\le a$. Then $\theta(rf(z))\le\theta(ra)$,
so
\[
  \int\theta(rf)d\mu\le\theta(ra)\mu(\cZ)<\infty
  \text{ for all $r>0$,}
\]
implying $f\in M_\theta(\mu)$. Also, since $\dom\theta^*\ne\Set{0}$ and $\theta^*(0)=0$, there must
be some $\eps>0$ such that $[-\eps,\eps]\subseteq\dom\theta^*$, and
\[
  \int\theta^*\Parens{\frac{\eps}{a}f}d\mu\le\theta^*(\eps)\mu(\cZ)<\infty
\]
implying $f\in L_\theta^*(\mu)$. To argue that condition (ii) of \Cref{def:decomposable} holds,
note that if $f\in M_\theta(\mu)$ then any $g$ with $|g|\le|f|$ must also be in $M_\theta(\mu)$,
and similarly for $L_{\theta^*}(\mu)$.
\end{proof}

\section{Rademacher complexity}
\label{sec:rademacher}

This section collects various results from the literature on Rademacher complexity.
To start, given a set of vectors $V\subseteq \R^n$,
and letting $\sigma\in \{-1,+1\}^n$
denote a vector of $n$ independent
Rademacher random variables (i.e., $\Pr[\sigma_i = +1] = \Pr[\sigma_i = -1] = 1/2$ for all $i$),
define the \emph{Rademacher complexity $\fR$} of $V$ as
\[
  \fR(V) := \bbE\left( \sup_{v\in V} \frac 1 n\sum_{i=1}^n v_i\sigma_i \right).
\]
To define the Rademacher complexity of a function $f$ or function class $\cF$ applied to a sample $\cS:=(z_i)_{i=1}^n$,
 define $f\circ \cS := (f(z_i))_{i=1}^n \in \R^n$,
and similarly overload  $\cF\circ \cS \subseteq \R^n$,
finally defining $\fR(\cF) := \fR(\cF\circ \cS)$.
Note that these definitions match the presentation of \emph{local Rademacher complexity} \citep{bartlett_local_rademacher},
whereas the original definition included an absolute value around the innermost summation \citep{bartlett_mendelson_rademacher,bbl_esaim}.

The essential link between Rademacher complexity and deviation bounds is as follows.
\begin{lemma}[{\citealp[Theorem 26.5]{shai_shai_book}}]
  \label[lemma]{fact:rad:dev}
  Let loss $\ell$ and function class $\cF$ be given.
  Then with probability at least $1-\delta$ over a draw of size $n$ from $\mu$,
  \begin{align*}
    \sup_{f\in\cF} \left(\mint\ell(f)d\mu - \mint\ell(f)d\hmu_n\right)
    &\leq 2\fR(\ell\circ \cF\circ \cS)
    + 4 \sup_{\substack{z\in\cS\\f\in \cF}}|\ell(f(z))| \sqrt{\frac {2\ln(4/\delta)}{n}}
    \\
    &\leq
    4\max\Bigg\{1,\ \sup_{\substack{z\in\cS\\f\in \cF}}|\ell(f(z))|\Bigg\}
    \sqrt{\frac {\fR(\ell\circ\cF\circ\cS)^2}{2} + \frac {4\ln(4/\delta)}{n}}.
  \end{align*}
\end{lemma}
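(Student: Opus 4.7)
The plan has three stages. Stage~1 establishes a high-probability good event on which $\Bal(\hmu_{|\Dcan}) \geq \Bal_\star/2$; Stage~2 uses balance to convert risk bounds into $L_\infty$ bounds on $Aw$, proving~(i); Stage~3 invokes Lemma~\ref{fact:rad:dev} on the bounded class from~(i) to obtain~(ii), and adds strong convexity to get~(iii). The main obstacle is the bootstrap between~(i) and~(ii)/(iii): the bound $B_w$ on $|Aw|$ is itself data-dependent, so a dyadic peeling argument is needed to apply the uniform Rademacher bound on each scale.

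For Stage~1, $\Bal(\nu)$ is the infimum of $\int |Aw|_+\,d\nu$ over the unit sphere of $\Ker(\nu)^\perp$. The integrand is $[0,1]$-valued and the function class $\{z\mapsto |Aw|_+ : \|w\|_1\leq 1\}$ has VC-subgraph dimension $O(d)$, since it is determined by linear combinations of the $d$ coordinate functions $(y h(x))_{h\in\cH}$. A standard uniform deviation argument (or Lemma~\ref{fact:rad:dev} applied to the identity loss) then gives, under $n\geq 256\ln(8d/\delta)/\Bal_\star^2$, that
\[
  \sup_{\|w\|_1\leq 1}\;\bigAbs{\mint |Aw|_+\,d\mu_{|\Dcan} - \mint |Aw|_+\,d\hmu_{|\Dcan}} \;\leq\; \Bal_\star/4
\]
with probability $\geq 1-\delta$. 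A separate concentration handles the potentially differing kernels under $\mu_{|\Dcan}$ and $\hmu_{|\Dcan}$, allowing one to work on $\Ker(\mu_{|\Dcan})^\perp$ throughout. Hence $\Bal(\hmu_{|\Dcan})\geq \Bal_\star/2$ on the good event.

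For Stage~2, convexity and $\bars\in\partial\ell(0)$ give $\ell(r)\geq \ell(0)+\bars r \geq \bars r$ for $r\geq 0$; integrating over $\{Aw>0\}\cap\Dcan$ and using the definition of balance yields $\cR(w;\hmu_{|\Dcan})\geq \bars\Bal(\hmu_{|\Dcan})\|\pi(w)\|_1$, where $\pi$ projects onto $\Ker(\hmu_{|\Dcan})^\perp$. On the good event this becomes $\|\pi(w)\|_1\leq 2\cR(w;\hmu_{|\Dcan})/(\bars\Bal_\star)$; since $|Aw(z)|\leq \|\pi(w)\|_1$ pointwise and $Aw$ is unchanged by the projection $\mu$- and $\hmu$-a.e.\ on $\Dcan$, this yields~(i). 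The $\ell(0)$ summand in $B_w$ absorbs the $\ell(0)$ offset dropped when lower-bounding $\ell$, and the additive $2$ covers the trivially small-norm case $\|w\|_1\leq 2$.

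For Stage~3, on the good event stratify by dyadic scales $B\in\{2,4,8,\ldots\}$ and consider $\cF_B:=\{w\in L_1(\cH): |Aw(z)|\leq B\text{ $\mu$- and $\hmu$-a.e.\@ on }\Dcan\}$. The class $\ell\circ A\circ\cF_B$ has envelope $\ell(B)$; by Lipschitz contraction and Massart's lemma for $d$ coordinate functionals, its Rademacher complexity is $O\bigParens{\ell'(B)\,B\sqrt{\ln d/n}}$. Applying Lemma~\ref{fact:rad:dev} inside each $\cF_B$ at failure probability $\delta/(2B^2)$ and peeling the dyadic scale containing $B_w$ produces~(ii), with the $\ln(8dB_w^2/\delta)$ factor arising from the union bound. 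For~(iii), under $\ell\in\Lbnd$ the loss is $\tau(B)$-strongly convex on $[-B,B]$ with $\ell'(B)$-bounded derivative; the resulting variance bound on $\ell(Aw)-\ell(Aw^\star)$, combined with a Bernstein or local-Rademacher peeling argument in the spirit of Bartlett--Bousquet--Mendelson, yields the claimed fast rate, the factor-$2$ inflation on $\cE(w;\hmu_{|\Dcan})$ coming from the standard localization constant.
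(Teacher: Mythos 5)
Your proposal does not actually prove the statement in question. The statement here is the generic Rademacher-complexity deviation bound itself (\Cref{fact:rad:dev}): a uniform high-probability bound on $\sup_{f\in\cF}\bigParens{\int\ell(f)d\mu-\int\ell(f)d\hmu_n}$ in terms of $2\fR(\ell\circ\cF\circ\cS)$ and a term scaling with $\sup_{z\in\cS,f\in\cF}|\ell(f(z))|\sqrt{\ln(1/\delta)/n}$. What you have written is instead a proof plan for \Cref{fact:findim:gen:helper:D} (balance stability, the norm bound $B_w$ via $\Bal$, dyadic peeling, and a localized fast-rate argument), and in the course of it you explicitly invoke \Cref{fact:rad:dev} twice as a black box. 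As a proof of \Cref{fact:rad:dev} this is therefore vacuous: the tool you are asked to establish is assumed. A proof of the actual claim would go by symmetrization, bounding the expected supremum of the deviations by $2\fR(\ell\circ\cF\circ\cS)$, followed by a bounded-differences (McDiarmid-type) concentration step in which the per-coordinate range is controlled by the supremum of $|\ell(f(z))|$, yielding the first displayed inequality; the second line is then elementary algebra (pull out $\max\{1,\sup|\ell(f(z))|\}$ and use an inequality of the form $a+b\le\sqrt{2(a^2+b^2)}$, adjusting constants). The paper does not reprove this at all: it cites \citet[Theorem 26.5]{shai_shai_book} and only supplies proofs for the auxiliary properties in \Cref{fact:rad:prop}.

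For what it is worth, your three stages do track the paper's real treatment of the difficult set: balance stability is \Cref{fact:findim:bal_stable}, the risk-to-norm conversion is \Cref{fact:bal} together with \Cref{fact:findim:primal_optimum}, and the strong-convexity/localization step is \Cref{fact:findim:sc}, all assembled with a peeling over norm shells in the proof of \Cref{fact:findim:gen:helper:D}. But that is a different lemma; none of it addresses the deviation inequality you were asked to prove.
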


Thanks to \Cref{fact:rad:dev}, the task of controlling deviations has been reduced to the task of approximating $\fR$.
The following bounds are used throughout.

\begin{lemma}[{See also \citealp[Chapter 26]{shai_shai_book}}]
  \label[lemma]{fact:rad:prop}
  Let a collection of vectors $V\subseteq \R^n$ and a sample $\cS:=(z_i)_{i=1}^n$ be given.
  \begin{enumroman}
    \item
      For any scalar $c\in R$ and any $v_0 \in \R^n$,  
      $\fR(cV + v_0) \leq |c|\fR(V)$.
    \item
      For sets $(V_j)_{j=1}^\infty$ with $V_j\subseteq \R^n$
      and $0 \in V_j$ for all $j\geq 1$, it follows that
      $\fR(\cup_{j\geq 1} V_j) \leq \sum_{j\geq 1} \fR(V_j)$.
    \item
      For $z_i\in\R^d$
      and a set of linear predictors $\cW := \{ z\mapsto w\cdot z : w\in\R^d, \|w\|_1\leq B\}$,
      it follows that $\fR(\cW) = \fR(\cW\circ \cS)\leq B\sup_{z\in\cS}\|z\|_\infty  \sqrt{2\ln(2d)/n}$.
    \item
      For any $L$-Lipschitz function $\ell:\R\to\R$, it follows that $\fR(\ell \circ V) \leq L\fR(V)$.
  \end{enumroman}
\end{lemma}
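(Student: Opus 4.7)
My plan is to verify the four properties by unwinding the definition of $\fR$ and invoking standard probabilistic tools; all four are classical, and the work is largely a matter of careful bookkeeping.

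For (i), I would expand $\fR(cV+v_0) = (1/n)\bbE\sup_{v\in V}\sum_i \sigma_i(c v_i + v_{0,i})$ and split the inner sum. The $v_0$ piece contributes $\sum_i \sigma_i v_{0,i}$, which is constant in $v$ and therefore pulls out of the supremum; its expectation is zero because $\bbE\sigma_i=0$. For $c\ge 0$, the factor $c$ pulls out of the supremum directly; for $c<0$, the distributional symmetry $\sigma\stackrel{d}{=}-\sigma$ absorbs the sign and yields the factor $|c|$. For (ii), the hypothesis $0\in V_j$ gives $\sup_{v\in V_j}\sum_i\sigma_i v_i \ge 0$ pointwise in $\sigma$, so each $\fR(V_j)\ge 0$; then $\sup_{v\in\cup_j V_j}\sum_i\sigma_i v_i = \sup_j \sup_{v\in V_j}\sum_i\sigma_i v_i \le \sum_j \sup_{v\in V_j}\sum_i\sigma_i v_i$ (using non-negativity of each summand), and taking expectations preserves the inequality.

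For (iii), the key observation is that $\sum_i \sigma_i (w\cdot z_i) = w \cdot \sum_i \sigma_i z_i$, so the supremum over $\|w\|_1\le B$ equals $B\bigNorm{\sum_i \sigma_i z_i}_\infty$ by $\ell_1/\ell_\infty$ duality. Writing the $\ell_\infty$ norm as $\max_{j\in[d]}|\sum_i\sigma_i z_{i,j}| = \max_{j\in[d],\,\epsilon\in\{\pm 1\}}\epsilon \sum_i \sigma_i z_{i,j}$, each of the $2d$ candidate summands is mean-zero and sub-Gaussian with parameter at most $n\sup_{z\in\cS}\|z\|_\infty^2$ by Hoeffding, so the standard maximum-of-sub-Gaussians bound (equivalently, Massart's finite-class lemma) yields $\bbE\bigNorm{\sum_i \sigma_i z_i}_\infty \le \sup_{z\in\cS}\|z\|_\infty \sqrt{2n\ln(2d)}$; dividing by $n$ and scaling by $B$ produces the stated bound.

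For (iv), I would appeal to the Ledoux--Talagrand contraction principle in its one-sided form, matching the paper's convention of a supremum without outer absolute value. The proof proceeds by induction on the number of coordinates already ``contracted'': conditioning on $\sigma_1,\dots,\sigma_{n-1}$, the expectation over $\sigma_n\in\{\pm 1\}$ rewrites the supremum involving $\ell(v_n)$ as $(1/2)\sup_{v,v'\in V}[A(v) + A(v') + \ell(v_n) - \ell(v'_n)]$, where $A(v)$ collects the first $n-1$ terms; the $L$-Lipschitz hypothesis bounds the $\ell$-difference by $L|v_n - v'_n|$, and a sign-swap argument (exploiting that both $v$ and $v'$ range over the same $V$) removes the absolute value, after which the pair-supremum recombines into $\bbE_{\sigma_n}\sup_v[A(v) + L\sigma_n v_n]$, supplying the extra factor $L$. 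The main obstacle is precisely this sign-swap recombination, which is the technical heart of the Ledoux--Talagrand argument; every other property in the lemma reduces to linearity of expectation, duality, or a Hoeffding-style concentration bound.
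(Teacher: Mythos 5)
Your proposal is correct and, for part (ii)---the only part the paper proves itself---it is essentially identical to the paper's argument: the hypothesis $0\in V_j$ makes each inner supremum nonnegative, so the supremum over the union is bounded by the sum, and the exchange of expectation with the infinite sum (which the paper justifies via Tonelli, and which your nonnegativity observation likewise licenses) finishes the proof. For parts (i), (iii), and (iv) the paper simply cites \citet[Lemmas 26.6, 26.11, 26.9]{shai_shai_book}; the arguments you sketch (symmetry of $\sigma$ for the scaling, $\ell_1/\ell_\infty$ duality plus Massart's finite-class bound for the linear class, and the Ledoux--Talagrand contraction with the sign-swap recombination) are precisely the standard proofs behind those citations, so there is no genuine divergence, only more work done in-house.
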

Note that the aforementioned alternate form of $\fR$ using an absolute value breaks (i),
whereas it strengthens (ii) by allowing the condition $0\in V_j$ to be dropped.
\begin{proof}
  Proofs of parts (i), (iii), and (iv) can be found in \citep[Lemma 26.6, Lemma 26.11,  Lemma 26.9]{shai_shai_book};
  consequently, it only remains to handle (ii).
  For convenience, define $V_\infty := \cup_{j\geq 1} V_j$.
  Given any fixed $\sigma\in \{-1,+1\}^n$,
  the assumption $0\in V_j$ implies
  \[
    \sup_{v\in V_\infty} v\cdot\sigma \geq \sup_{v\in V_j} v\cdot\sigma \geq 0.
  \]
  Consequently, by Tonelli's theorem,
  \begin{align*}
    \fR(V_\infty)
    &= \bbE\left( \sup_{v\in V_\infty} \frac 1 n v\cdot\sigma \right)
    = \bbE\left( \sup_{j\geq 1}\sup_{v\in V_j} \frac 1 n v\cdot\sigma \right)
    \leq \bbE\left( \sum_{j\geq 1}\sup_{v\in V_j} \frac 1 n v\cdot\sigma \right)
    \\
    &= \sum_{j\geq 1}\bbE\left( \sup_{v\in V_j} \frac 1 n v\cdot\sigma \right)
    = \sum_{j\geq 1}\fR(V_j).
  \end{align*}
\end{proof}

\section{Experiments}
\label{sec:experiments}



\begin{table}
    \caption{Description of Datasets}
    \label{table:datasets}
    \begin{center}
\begin{tabular}{l||r|r|r}
    Dataset & $n$ (\#examples) & $s$ (average sparsity) & $d$ (dimension)
\\
\hline
\hline
20news &  18845  &  93.9  & 101631
\\
activity &  165632  &  18.5 & 20
\\
adult &  48842  &  12.0  & 105
\\
bio &  145750  &  73.4  & 74
\\
census &  299284  &  32.0  & 401
\\
covtype &  581011  &  11.9  & 54
\\
eeg &  14980  &  14.0  & 14
\\
ijcnn1&  24995  &  13.0  & 22
\\
kdda&  8407751  &  36.3  & 19306083
\\
kddcup2009 &  50000  &  58.4  & 71652
\\
letter &  20000  &  15.6  & 16
\\
magic04 &  19020  &  10.0  & 10
\\
maptaskcoref &  158546  &  40.5  & 5944
\\
mushroom &  8124  &  22.0  & 117
\\
nomao &  34465  &  82.3  & 174
\\
poker&  946799  &  10.0  & 10
\\
rcv1&  781265  &  75.7  & 43001
\\
shuttle&  43500  &  7.0  & 9
\\
skin &  245057  &  2.9  & 3
\\
vehv2binary &  299254  &  48.6  & 105
\\
w8a &  49749  &  11.7  & 300
\\
\end{tabular}
\end{center}
\end{table}

\begin{figure}
  \begin{center}
    \includegraphics[width=0.8\textwidth]{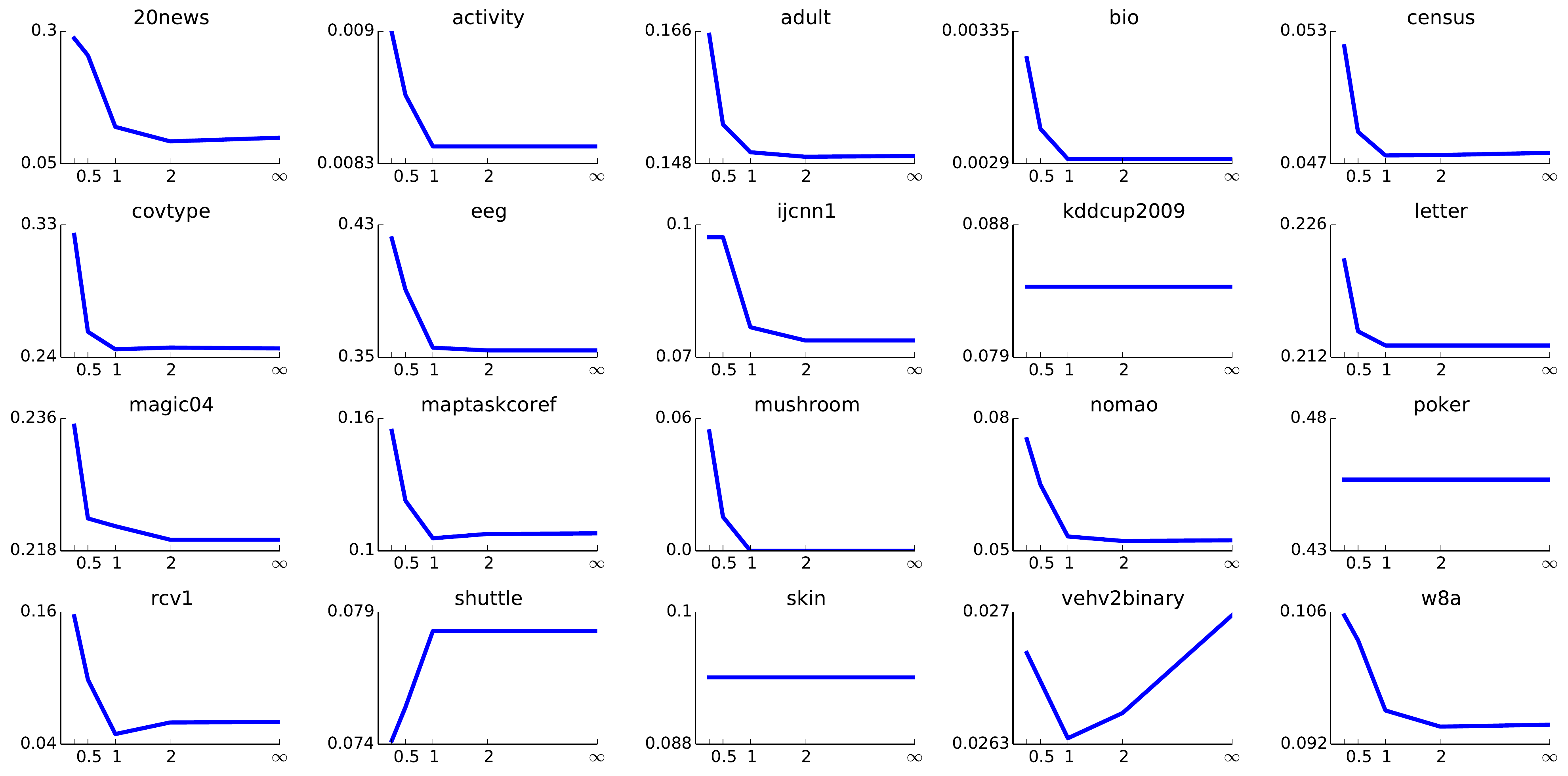}
  \end{center}
    \caption{Proportion of classification errors on various testing sets of linear classifiers
      trained by applying L-BFGS to regularized logistic regression (ERM with logistic loss);
        test error
        is on the vertical axis, and exponent $p$ of regularization coefficient
        $1/n^p$ is along the horizontal axis.
      For more detail, please see \Cref{sec:experiments}.}
    \label{fig:reg_plots}
\end{figure}

\begin{figure}
  \begin{center}
    \includegraphics[width=0.8\textwidth]{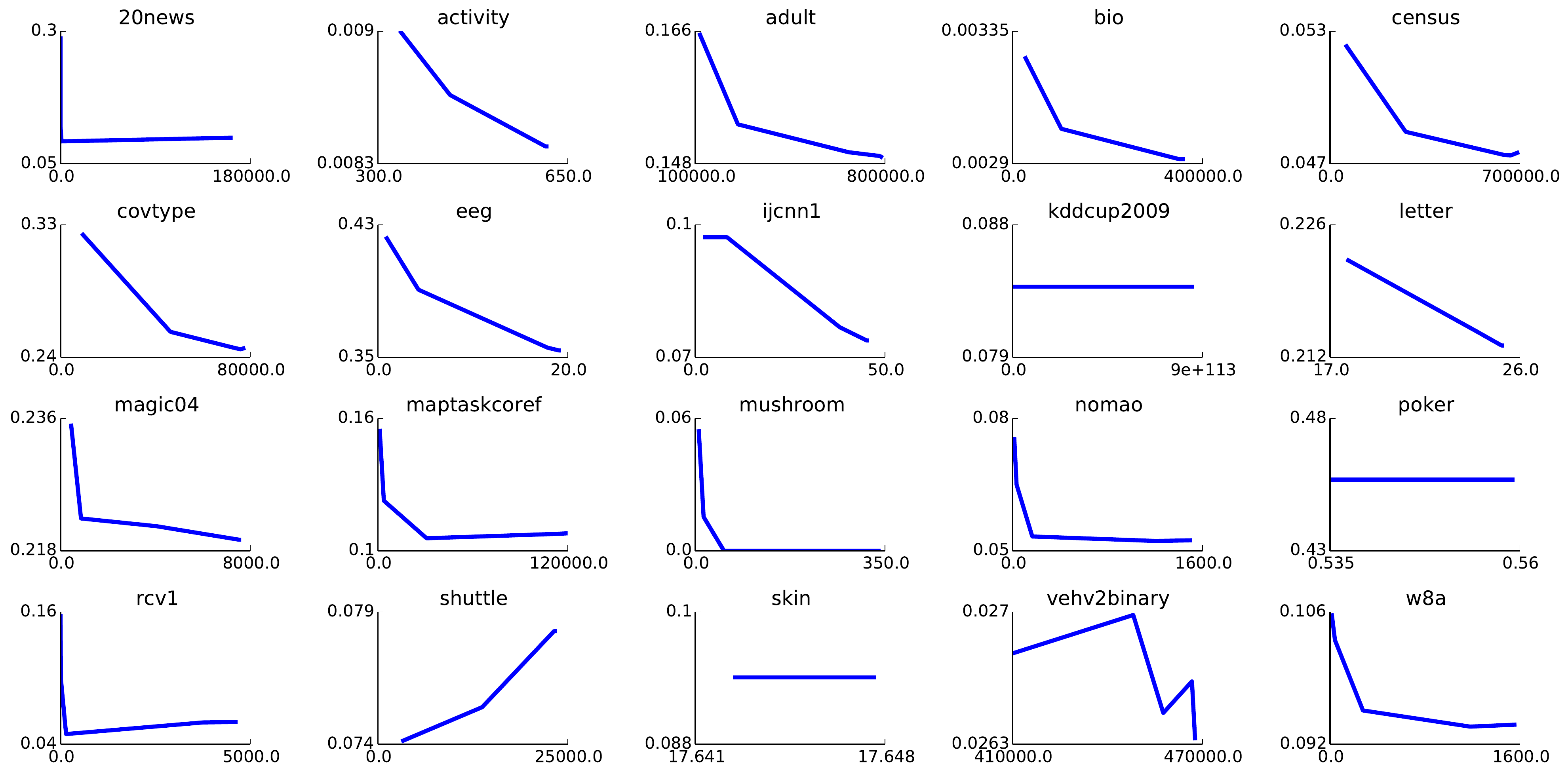}
  \end{center}
  \caption{Companion plot to \Cref{fig:reg_plots};
    vertical axis is once again the proportion of classification errors,
    but the horizontal axis is now the quantity $\|w\|_2 \max_i\|x_i\|_2$,
    meaning the norm of the vector output by L-BFGS, scaled by the data norm.
    This quantity is relevant since it appears in the standard
    Rademacher bounds for linear functions
    (see \Cref{sec:rademacher} and \citealp[Chapter 26]{shai_shai_book}).
    For more detail, please see \Cref{sec:experiments}.}
  \label{fig:normish_plots}
\end{figure}

In this appendix we demonstrate that the best performance on a wide variety of data sets can
be obtained with little or no regularization.
While there is some discussion of some methods' ability to seemingly avoid overfitting
\citep{boosting_margin,friedman_gradient_boosting},
this observation is primarily folklore, which served as a motivation for
our experiments,
depicted in \Cref{fig:reg_plots}. They were conducted as follows:
\begin{enumerate}
  \item
    We collected twenty datasets from a variety of sources (UCI, KDD Cup,
    libsvm data repository, and a few others), as described in \Cref{table:datasets}.
  \item
    Each dataset was split into 5 different (training, testing) pairs of size (80\%, 20\%).
  \item
    $\ell$ was chosen to be the logistic loss $\ln(1+\exp(\cdot))$
    and $\cH$ consisted of the coordinates, yielding the setting of
    logistic regression.
  \item
    We minimized the regularized empirical risk, i.e., $\hat\cR_n(w)+\lambda\|w\|_2^2/2$, where
    $\lambda$ was given the form $1/n^p$, where $p$ ranged over $\{ 1/2, 1, 2, \infty\}$,
    with $1/n^\infty = 0$.
  \item
    L-BFGS was applied to this regularized variant of $\hcR_n$ for each training/test split and each setting of
    the regularization parameter.  Each point in \Cref{fig:reg_plots} is the median across the
    five splits of the data.  Standard L-BFGS code was used (via \texttt{scikit-learn}),
    with very relaxed termination criteria in order to avoid early stopping
    ($\texttt{pgtol} = 10^{-9}$, $\texttt{factr} = 100$).
    In order to provide evidence that early stopping was avoided,
    please see \Cref{fig:normish_plots}, which roughly captures the norms of the selected predictors.
\end{enumerate}

    Note that even as the norm of $w$ increases, the classification error converges, and in most cases it
    is in fact minimized at large norms.
It is essential that the plots depict classification error,
whereby \Cref{fact:findim:gen:simplified} and \Cref{fact:zo}
explain why they behave stably.
By contrast, if the goal were to recover specific iterates or control the loss
itself, there are lower bounds indicating a dependence on norms is necessary
\citep{colt_logistic_hazan}.

\section{Properties of classification losses}

\subsection{Basics}

\begin{lemma}
  \label[lemma]{fact:loss_prop}
  \begin{enumroman}
    \item
      If $\ell \in \Lclass$, then $\lim_{z\to-\infty}\ell(z) = 0$,
      $\ell^*(0) = 0$,
      $\ell^*(s) = \infty$ whenever $s < 0$,
      and $\bars\in \partial\ell(0)$ satisfies $\ell^*(\bars) = \min_{s\in\R} \ell^*(s) = -\ell(0) < 0$.
    \item
      If $\ell \in \Ldiff$, then $\ell'>0$ and $\lim_{z\to-\infty}\ell'(z) = 0$.
    \item
      If $\ell \in \Ldiff$, then $\liminf_{z\to-\infty}\ell''(z) = 0$.
    \item
      If $\ell \in \Ldiff$ and $\ell$ is Lipschitz, then $\liminf_{z\to\infty}\ell''(z) = 0$.
    \item
      If $\ell \in \Ldiff$,
      then $\lim_{s\downarrow 0}(\ell^*)'(s) = -\infty$.
      Additionally, if $\lim_{r\to\infty} \ell'(r)\eqqcolon L<\infty$,
      then $\lim_{s\uparrow L} (\ell^*)'(s) = \infty$.
  \end{enumroman}
\end{lemma}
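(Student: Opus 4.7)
The plan is to prove the five parts in order, using the definitions of $\Lclass$ and $\Ldiff$ together with the standard conjugacy tools recalled in \Cref{sec:banach}.

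For part (i), $\lim_{z\to-\infty}\ell(z)=0$ follows immediately from $\ell$ being nondecreasing with $\inf\ell=0$. Then $\ell^*(0)=\sup_r[-\ell(r)]=-\inf_r\ell(r)=0$, and for $s<0$ letting $r\to-\infty$ in the supremum defining $\ell^*(s)$ sends the linear term $rs$ to $+\infty$ while $\ell(r)\to 0$, so $\ell^*(s)=\infty$. For the minimum claim I would invoke first-order optimality for conjugates: $\bars\in\partial\ell(0)$ gives the Fenchel equality $\ell(0)+\ell^*(\bars)=0\cdot\bars=0$, hence $\ell^*(\bars)=-\ell(0)<0$; and the same characterization gives $0\in\partial\ell^*(\bars)$, so $\bars$ minimizes $\ell^*$.

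For part (ii), strict convexity plus monotonicity forces $\ell'>0$: if $\ell'(z_0)=0$ at some $z_0$, then monotonicity of $\ell'$ (from $\ell''\ge0$) combined with $\ell'\ge0$ (from $\ell$ nondecreasing) would give $\ell'\equiv 0$ on $(-\infty,z_0]$, contradicting strict convexity. For the limit, convexity yields $\ell(0)\ge\ell(z)+\ell'(z)(-z)$, so $\ell'(z)\le\ell(0)/|z|$ for $z<0$, which tends to $0$. Parts (iii) and (iv) are then direct: if $\liminf_{z\to-\infty}\ell''(z)=c>0$, pick $z_0$ with $\ell''\ge c/2$ on $(-\infty,z_0]$; integrating forces $\ell'(z)\to-\infty$ as $z\to-\infty$, contradicting (ii). The same argument at $+\infty$ would force $\ell'$ to be unbounded, contradicting the Lipschitz hypothesis in (iv).

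For part (v), I would use the conjugacy correspondence $s\in\partial\ell(r)\Leftrightarrow r\in\partial\ell^*(s)$. Since $\ell''>0$, $\ell'$ is continuous and strictly increasing and so bijects $\R$ onto an open interval $(0,L)$, where $L:=\lim_{r\to\infty}\ell'(r)\in(0,\infty]$ (positivity of the left endpoint is part (ii); $L>0$ follows since $\ell'(0)>0$). On $(0,L)$, $\ell^*$ is differentiable with $(\ell^*)'(s)=(\ell')^{-1}(s)$. Sending $s\downarrow 0$ forces $(\ell')^{-1}(s)\to-\infty$, and if $L<\infty$ then $s\uparrow L$ forces $(\ell')^{-1}(s)\to+\infty$.

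The main obstacle I anticipate is bookkeeping rather than a single hard step: being careful that Fenchel's equality in part (i) is applied with $\bars$ a possibly non-unique subgradient when $\ell$ need not be differentiable, and carefully pinning down the effective domain of $\ell^*$ in part (v) so that the identity $(\ell^*)'=(\ell')^{-1}$ is valid on exactly the range of $\ell'$. Both amount to invoking the right statements from \Cref{sec:banach}, but they should be cited explicitly to avoid sloppiness.
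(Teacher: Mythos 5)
Your proposal is correct and follows essentially the same route as the paper: part (i) via Fenchel's equality and first-order optimality for conjugates, and part (v) via differentiability of $\ell^*$ on the interior of its domain with $(\ell^*)'=(\ell')^{-1}$. The only differences are cosmetic—for (ii)--(iv) you use the subgradient inequality $\ell'(z)\le\ell(0)/|z|$ and an integration-by-contradiction argument where the paper uses the Mean Value Theorem, but these are equivalent elementary derivations of the same facts.
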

\begin{proof}
  \begin{enumromansep}
    \item
      The first property follows from $\inf_{z\in\R}\ell(z) = 0$, which for convex $\ell$ with
      $\lim_{z\to-\infty} \ell(z) > 0$ implies $\ell$ is not nondecreasing.

      The second property follows from
      $\ell^*(0) = \sup_{r\in \R}(0\cdot r - \ell(r)) = - \inf_{s\in\R} \ell(s) = 0$.

      Next, since $\lim_{z\to-\infty}\ell(z) = 0$,
      $s<0$ implies
      \[
        \ell^*(s) = \sup_{r\in \R}(rs - \ell(r)) \geq \lim_{r\to-\infty}(rs - \ell(r)) = \infty.
      \]

      Lastly, because $\ell$ is closed,
      $\bars\in \partial \ell(0)$ implies $0 \in \ell^*(\bars)$, which is the first order optimality condition,
      giving $\ell^*(\bars) = \min_{z\in\R} \ell^*(z)$.
      Moreover, by Fenchel's inequality,
      $\ell^*(\bars) + \ell(0) = 0\cdot \bars = 0,$ meaning $\ell^*(\bars) = -\ell(0)$,
      and lastly $\ell(0) > 0$, because $\ell\in\Lclass$.


    \item
      If there existed $z'$ with $\ell'(z') = 0$, then $\ell''>0$ implies $\ell'(z'-1) < 0$, contradicting the fact that $\ell$ is nondecreasing.

      Next, Mean Value Theorem  grants for every $z<0$ a $q_z \in [2z, z]$ such that
      \[
        0
        = \lim_{z\to-\infty} \ell(z)
        = \lim_{z\to-\infty} (\ell(2z) + \ell'(q_z)(z - 2z))
        = \lim_{z\to-\infty} (-z)\ell'(q_z),
      \]
      which necessitates $\lim_{z\to-\infty} \ell'(z) = 0$ since
      $\ell'$ is nondecreasing and $\ell'\geq 0$.

    \item
      Similarly to the above derivation for first derivatives,
      Mean Value Theorem grants for every $z<0$ a $q_z \in [2z, z]$ such that
      \[
        0
        = \lim_{z\to-\infty} \ell'(z)
        = \lim_{z\to-\infty} (\ell'(2z) + \ell''(q_z)(z - 2z))
        = \lim_{z\to-\infty} (-z)\ell''(q_z),
      \]
      which necessitates $\liminf_{z\to-\infty}\ell''(z) = 0$ by positivity of $\ell''$.

    \item
      Since $\lim_{z\to-\infty} \ell'(z) = 0$ (as above) and $\ell'' > 0$ and $\ell$ is Lipschitz,
      then there exists $L\geq 0$ with $\lim_{z\to\infty} \ell'(z) = L < \infty$.
      Similarly to the proof of the preceding property, Taylor's theorem grants for every
      $z>0$ a $q_z \in [z, 2z]$ with
      \[
        L
        = \lim_{z\to\infty} \ell'(z)
        = \lim_{z\to\infty} (\ell'(2z) + \ell''(q_z)(z - 2z))
        = L + \lim_{z\to\infty} (-z)\ell''(q_z),
      \]
      which again necessitates $\liminf_{z\to\infty}\ell''(z) = 0$ by positivity of $\ell'$.

    \item
      By strict convexity of $\ell$, $\ell^*$ is differentiable over the interior of its domain \citep[Theorem E.4.1.1]{HULL}. By part (i), $\dom\ell^*$ includes $0$ and $\bar{s}>0$, so we can write $\lim_{s\downarrow 0} (\ell^*)'(s) = \lim_{s\downarrow 0} (\ell')^{-1}(s) = -\infty$. Where the last step follows because $\ell'$ is strictly increasing and $\lim_{r\to-\infty}\ell'(r)=0$.

      Given $\lim_{r\to\infty}\ell'(r)=L< \infty$, we obtain as before
      $\lim_{s\uparrow L} (\ell^*)'(s) = \lim_{s\uparrow 0} (\ell')^{-1}(s) = \infty$.
      %
  \end{enumromansep}
\end{proof}

\begin{proposition}
  \label[proposition]{fact:eta:basic}
  Let $\ell\in\Ldiff$ be given.
  \begin{enumroman}
    \item
      The link $\phi$ is a monotone increasing bijection between $\R$ and $(0,1)$,
      and moreover continuously differentiable.


    \item
      If $\phi$ is convex over $(-\infty,0]$ and concave over $[0,\infty)$,
        then $L_\phi = \ell''(0) / (2\ell'(0))$.
        (This holds in particular for the logistic and exponential losses,
        which therefore have $L_\phi$, respectively, equal to $1/4$ and $1/2$.)
  \end{enumroman}
\end{proposition}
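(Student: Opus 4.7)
For part (i), the plan is to exploit the symmetry $\phi(r) + \phi(-r) = 1$ together with $\ell'>0$ granted by \Cref{fact:loss_prop}(ii). First, positivity of $\ell'$ makes the denominator $\ell'(r)+\ell'(-r)$ strictly positive, so $\phi$ is well-defined with values in $(0,1)$ and inherits $C^1$ smoothness from $\ell'$ (which is $C^1$ because $\ell\in\Ldiff$). A direct quotient-rule computation gives
\[
  \phi'(r) = \frac{\ell''(r)\ell'(-r) + \ell'(r)\ell''(-r)}{\bigParens{\ell'(r)+\ell'(-r)}^2},
\]
which is strictly positive since $\ell' > 0$ and $\ell'' > 0$; hence $\phi$ is strictly increasing. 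For the range, \Cref{fact:loss_prop}(ii) also supplies $\lim_{r\to-\infty}\ell'(r)=0$, while $\ell'$ is increasing and bounded below by $\ell'(0)>0$ on $[0,\infty)$, so $\lim_{r\to\infty}\phi(r) = \lim_{r\to\infty}\ell'(r)/(\ell'(r)+\ell'(-r)) = 1$; the symmetry $\phi(-r) = 1-\phi(r)$ then yields $\lim_{r\to-\infty}\phi(r)=0$. Combining strict monotonicity, continuity, and these limits gives the claimed bijection $\R\to(0,1)$.

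For part (ii), since $\phi$ is $C^1$, its Lipschitz constant equals $\sup_r |\phi'(r)|$. Under the assumed shape (convex on $(-\infty,0]$ and concave on $[0,\infty)$), the derivative $\phi'$ is nondecreasing on $(-\infty,0]$ and nonincreasing on $[0,\infty)$, so it attains its supremum at $r=0$; thus $L_\phi = \phi'(0)$. Evaluating the formula for $\phi'$ above at $r=0$,
\[
  \phi'(0) = \frac{2\ell''(0)\ell'(0)}{(2\ell'(0))^2} = \frac{\ell''(0)}{2\ell'(0)},
\]
as required. The specific values for logistic and exponential follow by direct substitution: $\ell'_{\log}(0)=1/2$, $\ell''_{\log}(0)=1/4$ gives $1/4$; $\ell'_{\exp}(0)=\ell''_{\exp}(0)=1$ gives $1/2$.

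The only step that requires more than routine calculation is verifying that the shape hypothesis in part (ii) is actually satisfied by the logistic and exponential losses so the parenthetical claim is justified. For $\ell_{\exp}$ the link is $\phi(r)=e^r/(e^r+e^{-r})=\tfrac12(1+\tanh r)$, whose second derivative is a negative multiple of $\tanh r \cdot \mathrm{sech}^2 r$, clearly nonnegative on $(-\infty,0]$ and nonpositive on $[0,\infty)$; for $\ell_{\log}$ the link is the standard sigmoid $\phi(r)=(1+e^{-r})^{-1}$, whose second derivative $\phi''(r)=\phi(r)(1-\phi(r))(1-2\phi(r))$ changes sign precisely at $r=0$ with the same pattern. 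These quick checks complete the proof.
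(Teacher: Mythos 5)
Your proof is correct and follows essentially the same route as the paper's: the same quotient-rule formula for $\phi'$, positivity from $\ell'>0$ and $\ell''>0$, the limits via $\lim_{r\to-\infty}\ell'(r)=0$, and the observation that the shape hypothesis forces $\phi'$ to peak at $0$ with value $\ell''(0)/(2\ell'(0))$. The only cosmetic difference is that you verify the convexity/concavity of the logistic and exponential links via the sigmoid and $\tanh$ identities rather than the explicit $\phi''$ formulas the paper writes out, which amounts to the same computation.
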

\begin{proof}
  \begin{enumromansep}
    \item
      Note that
      \begin{equation}
        \phi'(z)
        = \frac {\ell''(z)\ell'(-z) + \ell'(z)\ell''(-z)}{(\ell'(z) + \ell'(-z))^2};
        \label{eq:eta:grad}
      \end{equation}
      which is positive and continuous, because $\ell\in\Ldiff$, so $\phi$ is increasing.
      Note that $\lim_{r\to\infty}\phi(r)=1$ and $\lim_{r\to-\infty}\phi(r)=0$, because $\ell'$
      is increasing and $\lim_{r\to-\infty}\ell'(r)=0$. The bijection statement follows by continuity.

    \item
      By assumption, $\phi'$ is largest at 0.  By the form of $\phi'$ given in \Eq{eta:grad}
      above, it follows that $\phi'(0) = \ell''(0) / (2\ell'(0))$.
      The convexity/concavity property may be manually checked for the exponential and logistic
      losses, since they respectively give $\phi''$ to be
      \[
        - \frac {4e^{2x}(e^{2x} - 1)}{(1+e^{2x})^3}
        \qquad\textup{and}\qquad
        - \frac {e^x(e^x - 1)}{(1+e^x)^3}.
      \]
  \end{enumromansep}
\end{proof}

\subsection{Elements of $\Lbnd$}
\label{subsec:bbL_3}


\begin{lemma}
  \label[lemma]{fact:exp_bbL2_helper}
  Let finite non-null measure $\mu$ over $\cZ$ and function $f:\cZ\to \R$ be given with
  $f\geq 0$ $\mu$-a.e.\@ and $\int \exp(f)d\mu<\infty$. Set
  $b := \int \exp(f) d\mu / \mu(\cZ)$.
  Then $\int \exp(f / b)d\mu \leq\mu(\cZ) e^{1/e}$.
\end{lemma}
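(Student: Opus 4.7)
The plan is to reduce to the probability-measure case and then exploit the convexity of the cumulant generating function $t\mapsto \ln\int e^{tf}d\mu$.

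First, I would normalize: set $\nu := \mu/\mu(\cZ)$, a probability measure. Then $b = \int e^f\, d\nu$, and the desired bound becomes the claim that $\int e^{f/b}\, d\nu \le e^{1/e}$, after which multiplying by $\mu(\cZ)$ recovers the stated inequality. Next, since $f\ge 0$ $\nu$-a.e., Jensen's inequality (or just the monotonicity of $e^{(\cdot)}$) yields $b = \int e^f\, d\nu \ge \int 1\, d\nu = 1$, so $1/b \in (0,1]$.

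Now introduce the cumulant generating function $h(t) := \ln \int e^{tf}\, d\nu$, which is well-defined and finite on $[0,1]$ by the assumption $\int e^f\, d\mu < \infty$, and is convex in $t$ (this is standard: by H\"older's inequality or by differentiating twice and recognizing a variance). We have $h(0) = 0$ and $h(1) = \ln b$. Applying convexity on the segment $[0,1]$ at the point $t = 1/b \in (0,1]$ gives
\[
  h(1/b) \le (1 - 1/b)\,h(0) + (1/b)\,h(1) = \frac{\ln b}{b}.
\]
Elementary calculus then bounds $\ln b / b$: the function $b\mapsto (\ln b)/b$ is maximized at $b = e$ with value $1/e$, so $h(1/b) \le 1/e$. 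Exponentiating yields $\int e^{f/b}\, d\nu \le e^{1/e}$, and multiplying through by $\mu(\cZ)$ completes the argument.

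There is no real obstacle here; the only thing to be slightly careful about is verifying that $b \ge 1$ (so $1/b$ lies in the interval on which convexity is being applied), which uses both hypotheses $f\ge 0$ and that we have reduced to a probability measure. Everything else is the standard convexity of $\ln\int e^{tf}$ together with the sharp inequality $\ln b \le b/e$.
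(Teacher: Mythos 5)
Your proposal is correct and is essentially the paper's own argument in different clothing: the convexity of $t\mapsto\ln\int e^{tf}d\nu$ applied at $t=1/b$ is exactly the Jensen/H\"older step the paper performs with the concave map $r\mapsto r^{1/b}$, both yielding $\int e^{f/b}d\nu\le b^{1/b}$, and both conclude by maximizing $b^{1/b}$ (equivalently $(\ln b)/b$) at $b=e$. No further comment needed.
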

\begin{proof}
  Note that $b \geq \int \exp(0)d\mu / \mu(\cZ) = 1$.
  Consequently, the function $r \mapsto r^{1/b}$ is concave,
  and thus Jensen's inequality (applied to the normalized measure $\mu/\mu(\cZ)$) grants
  \begin{align*}
    \int \exp(f / b)d\mu
    &= \mu(\cZ) \int \exp(f)^{1/b} d\mu/\mu(\cZ)
    \leq \mu(\cZ) \left(\int \exp(f) d\mu/\mu(\cZ) \right)^{1/b}
    = \mu(\cZ) b^{1/b}.
  \end{align*}
  Next it will be shown that the function $g(z) := z^{1/z}$ is maximized over $(0,\infty)$ at $\bar z:=e$,
  which gives the result.  To this end, note
  \[
    g'(z) = z^{1/z}z^{-2}(1-\ln(z)),
  \]
  which is positive for $z \in (0, \bar z)$, zero at $\bar z$, and negative for $z>\bar z$.
\end{proof}

\begin{lemma}
  \label[lemma]{fact:exp_bbL2}
  Let finite measure $\mu$ over $\cZ$ and function $f:\cZ\to \R$ be given with
  $f\geq 0$ $\mu$-a.e.\@ and $\mu(\cZ) \leq 2$.
  If $\ell\in\Lclass$ denotes the exponential loss $\ell = \exp$,
  then
  $\|f\|_{\beta} \leq \int \exp(f)d\mu/\mu(\cZ)$.
\end{lemma}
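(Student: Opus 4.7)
The plan is to unfold the definition of the norm $\|\cdot\|_\beta$ and directly verify that the candidate value $b := \int \exp(f)d\mu/\mu(\cZ)$ places $f/b$ inside the unit ball defining the norm. Concretely, I want to show $\int \beta(f/b)d\mu \le 1$, from which $\|f\|_\beta \le b$ follows by the infimum definition of the norm.

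First, I would compute $\beta$ explicitly for $\ell=\exp$. From $\ell(0)=\ell'(0)=1$, the definition in \Eq{beta} gives
\[
  \beta(s) = \max\bigBraces{e^s - 1 - s,\; e^{-s} - 1 + s}.
\]
A quick monotonicity check (the difference of the two terms has derivative $e^s+e^{-s}-2 \ge 0$ and vanishes at $0$) shows that for $s\ge 0$ the first argument dominates, so $\beta(s) = e^s - 1 - s$ whenever $s \ge 0$. Since $f \ge 0$ $\mu$-a.e., we may substitute to obtain
\[
  \mint \beta(f/b)\,d\mu
  = \mint e^{f/b}\,d\mu \;-\; \mu(\cZ) \;-\; \mint \tfrac{f}{b}\,d\mu.
\]

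Next I would drop the last term using $f \ge 0$ and $b > 0$, and apply \Cref{fact:exp_bbL2_helper} directly to the first term (this lemma is tailored exactly to the quantity $\int e^{f/b}d\mu$). This yields
\[
  \mint \beta(f/b)\,d\mu \;\le\; \mu(\cZ) e^{1/e} - \mu(\cZ) \;=\; \mu(\cZ)\bigParens{e^{1/e} - 1}.
\]
Finally, using the hypothesis $\mu(\cZ)\le 2$ together with the numerical bound $2(e^{1/e}-1) < 1$ (since $e^{1/e} \approx 1.445$), the right-hand side is at most $1$. Thus $f/b$ lies in the unit ball of \Cref{sec:orlicz}, so $\|f\|_\beta \le b$, which is the claim.

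There is no real obstacle: the only subtlety is the explicit computation of $\beta$ on the non-negative half-line (so that $\beta$ collapses to the simpler expression $e^s - 1 - s$ on the range of $f$), and the numerical verification that the constant $e^{1/e}-1$ is small enough that the factor $\mu(\cZ)\le 2$ does not overwhelm the bound. The edge case $b=0$, which would arise only if $\mu(\cZ)=0$, is vacuous since then $\|f\|_\beta=0$ as well.
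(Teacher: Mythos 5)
Your proposal is correct and follows essentially the same route as the paper's proof: identify $\beta(s)=e^s-1-s$ on $s\ge 0$, plug in $b=\int\exp(f)d\mu/\mu(\cZ)$, invoke \Cref{fact:exp_bbL2_helper}, and use $\mu(\cZ)\le 2$ with $2(e^{1/e}-1)<1$. The only (immaterial) difference is how you show the first branch of the max dominates for $s\ge0$ — you use a monotonicity argument on the difference, while the paper uses second-order Taylor bounds — and you should also note, as the paper does, that the case $\int\exp(f)d\mu=\infty$ is trivial before applying the helper lemma.
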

\begin{proof}
  If $\int \exp(f)d\mu = \infty$, there is nothing to show, thus suppose $\int\exp(f)d\mu<\infty$.
  Since $\ell''(z) = \exp(z)\ge 1$ if $z\ge 0$ and $\le 1$ if $z\le 0$, the Taylor
  expansion yields, for $z\ge 0$,
  \[
    \exp(z) \geq 1 + z + z^2/2
    \qquad\textup{and}\qquad
    \exp(-z) \leq 1 - z + z^2/2.
  \]
  Consequently, for any $z\geq 0$,
  \[
    \exp(z) - (1 + z)
    \geq z^2/2
    \geq \exp(-z) - (1 - z),
  \]
  which means $\beta(z) = \exp(z) - (1 + z)$ when $z\geq 0$.
  Combining this with \Cref{fact:exp_bbL2_helper},
  setting $b := \int \exp(f) d\mu / \mu(\cZ)$ for convenience,
  \begin{align*}
    \int \beta(f/b)d\mu
    &= \int \left(\exp(f/b) - 1 - f/b\right)d\mu
    \leq \mu(\cZ)(e^{1/e} - 1)
    \leq 1.
  \end{align*}
  By the definition of $\|\cdot\|_{\beta}$,
  it follows that $\|f\|_{\beta} \leq b$.
\end{proof}

\begin{lemma}
  \label[lemma]{fact:lipschitz_bbL2}
  Let finite measure $\mu$ over $\cZ$ and function $f:\cZ\to \R$ be given with
  $f\geq 0$ $\mu$-a.e..
  If $\ell\in\Lclass$ is $L$-Lipschitz,
  then
  $\|f\|_{\beta} \leq L\int \ell(f)d\mu / \ell'(0)$.
\end{lemma}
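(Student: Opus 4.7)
\textbf{Proof plan for Lemma~\ref{fact:lipschitz_bbL2}.}

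The plan is to appeal directly to the definition of the Orlicz norm: setting $b := L\int \ell(f)\,d\mu / \ell'(0)$, it suffices to show that $\int \beta(f/b)\,d\mu \leq 1$. If $b=0$, then since $\ell\in\Lclass$ has $\ell(0)>0$ the measure $\mu$ must be null and the claim is trivial, so assume $b>0$. Everything will then reduce to two pointwise estimates: an upper bound on $\beta$ in terms of its argument, and a lower bound of $\ell$ in terms of the identity.

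The first step is to show the pointwise bound $\beta(s) \leq Ls$ for all $s \geq 0$. Splitting the max in the definition~\eqref{eq:beta}: by $L$-Lipschitzness, $\ell(s)-\ell(0) \leq Ls$, so the first branch is at most $Ls - s\ell'(0) \leq Ls$ (since $\ell'(0) \geq 0$ by convexity together with $\ell$ being nondecreasing). By monotonicity, $\ell(-s)-\ell(0) \leq 0$, so the second branch is at most $s\ell'(0) \leq Ls$, where the last inequality uses $\ell'(0) \leq L$ (a direct consequence of $L$-Lipschitzness). The second step is the linear lower bound $\ell(f) \geq \ell'(0)\, f$ pointwise for $f\geq 0$, which follows from convexity ($\ell(f) \geq \ell(0) + \ell'(0) f$) together with $\ell(0) > 0$.

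Combining the two bounds, and using that $f/b \geq 0$ $\mu$-a.e.,
\[
  \int \beta(f/b)\,d\mu
  \;\leq\; \frac{L}{b}\int f\,d\mu
  \;\leq\; \frac{L}{b\,\ell'(0)}\int \ell(f)\,d\mu
  \;=\; 1,
\]
by the choice of $b$. The definition of $\|\cdot\|_\beta$ then yields $\|f\|_\beta \leq b = L\int \ell(f)\,d\mu/\ell'(0)$, as claimed. There is no real obstacle here — unlike the companion Lemma~\ref{fact:exp_bbL2}, which requires Jensen's inequality to tame the superlinear growth of $\exp$, the Lipschitz assumption forces $\beta$ to grow at most linearly, so no auxiliary normalization trick is needed.
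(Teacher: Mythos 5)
Your proposal is correct and follows essentially the same route as the paper: a pointwise linear upper bound on $\beta$ over $[0,\infty)$ from Lipschitzness and monotonicity, the linear lower bound $\ell(f)\geq \ell'(0)f$ from convexity, and then the definition of $\|\cdot\|_\beta$ with the same choice of $b$. The paper's intermediate constant is $\max\{L-\ell'(0),\ell'(0)\}$ rather than your slightly looser $L$, but both yield $\int\beta(f/b)\,d\mu\leq 1$, and your explicit handling of the $b=0$ edge case is a harmless addition.
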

\begin{proof}
  To start, for any $r \geq 0$, since $\ell$ is nondecreasing,
  \begin{align*}
    \beta(r)
    &= \max\{\ell(r) - (\ell(0) + r\ell'(0)), \ell(-r) - (\ell(0) - r\ell'(0))\}
    \\
    &\leq \max\{\ell(0) + rL - (\ell(0) + r\ell'(0)), \ell(0) - (\ell(0) - r\ell'(0))\}
    \\
    &\leq r \max\{L-\ell'(0), \ell'(0)\}.
  \end{align*}
  Setting $b := L\int \ell(f)d\mu/\ell'(0)$,
  \begin{align*}
    \int \beta(f/b)d\mu
    &\leq \max\{L-\ell'(0), \ell'(0)\}
    \frac {\ell'(0)\int f d\mu}{L\int \ell(f)d\mu}
    \\
    &\leq \max\{L-\ell'(0), \ell'(0)\}
    \frac {\ell'(0)\int f d\mu}{L(\ell(0) + \ell'(0) \int f d\mu)}
    \\
    &\leq \max\{L-\ell'(0), \ell'(0)\}
    \frac {\ell'(0)\int f d\mu}{L\ell'(0) \int f d\mu}
    \\
    &\leq 1,
  \end{align*}
  where the last step follows because $\ell'(0)\le L$.
  By the definition of $\|\cdot\|_{\beta}$,
  it follows that $\|f\|_{\beta} \leq b$.
\end{proof}

\begin{proposition}
  \label[proposition]{fact:bbL3_containment}
  Let finite measure $\mu$ over $\cZ$ with $\mu(\cZ)\leq 2$ and hypotheses $\cH$ be given.
  Then $\ell\in\Lclass$ having a finite Lipschitz constant $L$
  entails $c_{\ell,\mu} \leq L / \ell'(0)$,
  and $\ell = \exp$ entails $c_{\ell,\mu} \leq 1/\mu(\cZ)$.
  Secondly, $\ell = \ln(1+\exp(\cdot))$ entails $L_\phi = 1/4$,
  and $\ell = \exp$ entails $L_\phi = 1/2$.
  Thirdly, $\ell = \ln(1+\exp(\cdot))$ entails $c_\ell = 2$,
  and $\ell = \exp$ entails $c_\ell = 1$.
  In particular, in either case, the loss is within $\Lbnd$.
\end{proposition}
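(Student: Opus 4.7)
The plan is to verify each of the three conditions defining $\Lbnd$ for both the exponential and logistic losses, relying on the helper lemmas in \Cref{subsec:bbL_3} plus one elementary derivative check. For condition (iii) on $c_{\ell,\mu}$, \Cref{fact:lipschitz_bbL2} directly gives $\|f\|_\beta \leq L\int\ell(f)d\mu/\ell'(0)$ for any $L$-Lipschitz classification loss, yielding the first assertion $c_{\ell,\mu}\leq L/\ell'(0)$; and \Cref{fact:exp_bbL2} yields $c_{\ell,\mu}\leq 1/\mu(\cZ)$ for $\ell=\exp$. The logistic loss is in fact $1$-Lipschitz (its derivative $e^r/(1+e^r)$ lies in $(0,1)$) with $\ell'(0) = 1/2$, so it inherits $c_{\ell,\mu}\leq 2$ from the Lipschitz case.

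For condition (i), I would invoke \Cref{fact:eta:basic}(ii), which gives $L_\phi = \ell''(0)/(2\ell'(0))$ under the convexity/concavity hypothesis on $\phi$; that hypothesis is verified in the proof of \Cref{fact:eta:basic} directly via the explicit formula for $\phi''$ for each of the two losses. Plugging in $\ell'(0)=\ell''(0)=1$ gives $L_\phi=1/2$ for $\ell = \exp$, while $\ell'(0) = 1/2$ and $\ell''(0) = 1/4$ give $L_\phi = 1/4$ for the logistic loss. For condition (ii), the exponential case is immediate from $\ell' = \ell$, giving $c_\ell = 1$. For logistic, the plan is to substitute $t := e^r \in (0,1]$ to reduce $\ell'(r) \leq 2\ell(r)$ (for $r\leq 0$) to $g(t) := 2\ln(1+t) - t/(1+t) \geq 0$, and conclude by observing $g(0) = 0$ together with $g'(t) = (1+2t)/(1+t)^2 > 0$ for $t\geq 0$.

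The closest thing to an obstacle is this last derivative check for the logistic loss, but it is routine once one makes the change of variables to $t=e^r$. Since both losses are strictly convex and twice continuously differentiable (hence in $\Ldiff$) and satisfy (i)--(iii) with the constants just established, both lie in $\Lbnd$, giving the final ``in particular'' statement.
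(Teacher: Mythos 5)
Your proof is correct and follows essentially the same route as the paper: both delegate the $c_{\ell,\mu}$ and $L_\phi$ claims to \Cref{fact:lipschitz_bbL2}, \Cref{fact:exp_bbL2}, and \Cref{fact:eta:basic}, and both treat $c_\ell$ for $\ell=\exp$ as immediate from $\ell=\ell'$. The only (minor) divergence is the verification of $c_\ell=2$ for the logistic loss, where the paper lower-bounds $\ln(1+e^r)$ via a second-order Taylor expansion of $\ln$ around $1$ to get $\ell(r)\ge e^r/2\ge\ell'(r)/2$, whereas you show $g(t)=2\ln(1+t)-t/(1+t)\ge 0$ by monotonicity from $g(0)=0$; both are routine and equally valid.
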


\begin{proof}
  Everything but the bounds on $c_\ell$ have already been provided by
  \Cref{fact:lipschitz_bbL2},
  \Cref{fact:exp_bbL2},
  and
  \Cref{fact:eta:basic}.
  For $c_\ell$, the bound is immediate for $\ell = \exp$ (since then $\ell = \ell'$),
  thus consider $\ell = \ln(1+\exp(\cdot))$.
  Noting the second-order Taylor expansion of $\ln$ along $[1,1+q]$ with $q\leq 1$ is
  $\ln(1+q) \geq \ln(1) + q \ln'(1) + \inf_{s\in [1,2]} q^2 \ln''(s) / 2$,
  then $r\leq 0$ implies
  \begin{align*}
    \ell(r)
   = \ln(1+e^r)
    &\geq e^r
   - \sup_{s\in [1,2]}\frac {e^{2r}}{2s^2}
    = e^r \left(1
    - \frac {e^r}{2}\right)
    \geq \frac {e^r}{2}
    \geq \frac {e^r}{2(1+e^r)}
    = \frac{\ell'(r)}{2}.
  \end{align*}
\end{proof}

\section{Proof of \Cref{fact:zo}}

The proof of \Cref{fact:zo} is split into two lemmas; first, an upper bound establishing the general
inequality, and second, an example showing the right-hand side of the inequality can be positive and tight.
The proof of this upper bound is a straightforward
consequence of standard manipulations for classification error \citep[Theorem 2.1]{DGL}.

\begin{lemma}
  \label[lemma]{fact:zo:ub}
  Let probability measure $\mu$,
  hypotheses $\cH$,
  and loss $\ell\in\Ldiff$ be given.
  For any $w\in L_1(\cH)$,
  \begin{align*}
    \left|
    \cRz(Hw) - \cRz(\bar \eta(\cdot,1) - 1/2)
    \right|
    &\leq
    \int_{\bar \eta = 1/2} (\eta_\mu(x,1) - \eta_\mu(x,-1))(1-\one[\eta_w(x,1) \geq 1/2]) d\mu_\cX(x)
    \\
    &+
    \underbrace{%
      2\int_{\bar \eta \neq 1/2} \min\left\{1,
        \left(\frac {|\eta_\mu(x,1) - 1/2|}{|\bar\eta(x,1) - 1/2|}\right) |\bar\eta(x,1) - \eta_w(x,1)|
      \right\}d\mu_\cX(x)
    }_{\star},
  \end{align*}
  where $\star \to 0$ as $\int|\bar \eta - \eta_w|d\mu \to 0$.
\end{lemma}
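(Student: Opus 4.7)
My plan is to start from the classical identity for excess classification risk and then split the bound at the locus $\{\bar\eta = 1/2\}$.

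\textbf{Step 1: the pointwise identity.}
Writing $a(x) := \one[(Hw)(x) \geq 0] = \one[\eta_w(x,1)\ge 1/2]$ and $b(x) := \one[\bar\eta(x,1) \geq 1/2]$, a direct expansion of $\cRz(g) = \int[\eta_\mu(x,1)\one[\sign g<0]+\eta_\mu(x,-1)\one[\sign g\ge 0]]\,d\mu_\cX$ gives the identity
\[
  \cRz(Hw) - \cRz\!\bigParens{\bar\eta(\cdot,1) - 1/2}
  \;=\; \mint (b-a)\bigParens{2\eta_\mu(x,1) - 1}\,d\mu_\cX(x).
\]
Taking absolute values and using $|b-a|\in\{0,1\}$ yields the standard upper bound by $\int|b-a|\,|2\eta_\mu-1|\,d\mu_\cX$, which I will control separately on $\Lambda := \{\bar\eta(\cdot,1)=1/2\}$ and its complement.

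\textbf{Step 2: the contribution on $\Lambda$.}
Since the paper's convention is $\sign(0)=+1$, on $\Lambda$ we have $b \equiv 1$ and hence $|b-a| = 1-a = \one[\eta_w(x,1)<1/2]$. Thus the $\Lambda$-piece matches the first term on the right-hand side of the lemma (up to an absolute value absorbed into the overall upper bound on $|\cdot|$).

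\textbf{Step 3: the contribution off $\Lambda$.}
Off $\Lambda$, the indicators $a,b$ differ exactly when $\bar\eta$ and $\eta_w$ lie on opposite sides of $1/2$; in that case $|\bar\eta - \eta_w|\ge|\bar\eta-1/2|>0$, so
\[
  \one[a\neq b] \;\le\; \min\BigBraces{1,\;\frac{|\bar\eta(x,1)-\eta_w(x,1)|}{|\bar\eta(x,1)-1/2|}}.
\]
Multiplying by $|2\eta_\mu-1| = 2|\eta_\mu(x,1)-1/2|\le 1$ and pushing the factor $|\eta_\mu-1/2|$ through the $\min$ (using $|\eta_\mu-1/2|\le 1/2\le 1$) produces exactly the integrand $2\min\bigBraces{1,\,\frac{|\eta_\mu(x,1)-1/2|}{|\bar\eta(x,1)-1/2|}|\bar\eta(x,1)-\eta_w(x,1)|}$ appearing in $\star$.

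\textbf{Step 4: $\star \to 0$ under $L_1(\mu)$ convergence.}
Since $\mu_\cX$ is finite and the integrand is dominated by $1$, the argument is a standard subsequence/dominated convergence sandwich: if some sequence $w_i$ satisfied $\int|\bar\eta-\eta_{w_i}|\,d\mu\to 0$ but $\star\not\to 0$ along it, pass to a subsequence with $\star\ge\varepsilon$, then pass again to get a.e.\ convergence $\eta_{w_i}\to\bar\eta$, whence the integrand converges pointwise to $0$ on $\{\bar\eta\neq 1/2\}$ because the ratio has a strictly positive denominator there, and dominated convergence yields a contradiction.

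The only mildly delicate step is Step 3, where one has to be careful to route all the factors through the $\min$ correctly; everything else is a mechanical expansion or a standard measure-theoretic continuity argument.
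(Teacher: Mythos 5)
Your proof is correct and follows essentially the same route as the paper's: the pointwise identity from the standard excess-risk expansion, the split at $\{\bar\eta=1/2\}$, and the bound $\one[a\neq b]\le\min\{1,|\bar\eta-\eta_w|/|\bar\eta-1/2|\}$ off that set are all exactly the paper's steps. The only divergence is Step 4: the paper proves $\star\to 0$ quantitatively by splitting at a threshold $|\bar\eta-1/2|>\sigma$ and choosing $\sigma:=\sqrt{\|\eta_w-\bar\eta\|_1}$, whereas you use a qualitative subsequence-plus-dominated-convergence argument; both are valid (for yours, note that $\int|\bar\eta-\eta_w|\,d\mu=\int|\bar\eta(x,1)-\eta_w(x,1)|\,d\mu_\cX$ since both conditional models sum to $1$ over $y$, so $L_1(\mu)$ convergence does transfer to the $\mu_\cX$-integrand), though the paper's version has the minor advantage of yielding an explicit rate.
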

\begin{proof}
  Following the derivation of \citet[Theorem 2.1]{DGL},
  for any $g : \cX \to \{-1,+1\}$ and any $x\in\cX$,
  \begin{align*}
    \Pr[ g(X) \neq Y | X=x ]
    &= 1 - \Pr[g(X) = Y | X=x]
    \\
    &= 1 - \left( \one[g(x) = 1] \eta_\mu(x,1) + \one[g(x) = -1] \eta_\mu(x,-1)\right).
  \end{align*}
  Consequently, for any $g_1:\cX\to \{-1,+1\}$, $g_2:\cX\to \{-1,+1\}$,
  and any $x\in \cX$,
  \begin{align}
    &\Pr[ g_1(X) \neq Y | X=x ] - \Pr[ g_2(X) \neq Y | X=x ]
    \notag\\
    &= \eta_\mu(x,1)(\one[g_2(x) = 1] - \one[g_1(x) = 1]) + \eta_\mu(x,-1)(\one[g_2(x) = -1] - \one[g_1(x) = -1])
    \notag\\
    &= (\eta_\mu(x,1) - \eta_\mu(x,-1))(\one[g_2(x) = 1] - \one[g_1(x) = 1]).
    \label{eq:zo:1}
  \end{align}
  With this in mind, define $g_1(x) := \one[\eta_w(x,1) \geq 1/2]$ and $g_2(x) := \one[\bar \eta(x,1) \geq 1/2]$,
  whereby the signs of $(Hw)(x)$ and $\eta_w(x,1) - 1/2$ agree,
  and
  \begin{align*}
    \MoveEqLeft \cRz(Hw) - \cRz(\bar \eta - 1/2)
    \\
    &=
    \Pr[ g_1(X) \neq Y] - \Pr[g_2(X) \neq Y]
    \\
    &=
    \underbrace{%
      \int_{\bar \eta = 1/2}
      \left(
        \Pr[ g_1(X) \neq Y | X=x] - \Pr[g_2(X) \neq Y|X=x]
      \right)
      d\mu_\cX(x)
    }_{\triangle}
    \\
    &\qquad+
    \underbrace{%
      \int_{\bar \eta \neq 1/2}
      \left(
        \Pr[ g_1(X) \neq Y|X=x] - \Pr[g_2(X) \neq Y|X=x]
      \right)
      d\mu_\cX(x).
    }_{\square}
  \end{align*}
  To bound these terms, applying \Eq{zo:1} to the first term and using $g_2(x) = 1$ along $\bar\eta=1/2$ yields
  \begin{align*}
    \triangle
    &= \int_{\bar\eta = 1/2} (\eta_\mu(x,1) - \eta_\mu(x,-1))(1 - \one[g_1(x) = 1])d\mu_\cX(x).
  \end{align*}
  For the second term, note
  \[
    \one[g_1(x) \neq g_2(x)] \leq \min\left\{1, \frac {|\eta_w(x,1) - \bar\eta(x,1)|}{|\bar \eta(x,1) - 1/2|}\right\}.
  \]
  Combining this with \Eq{zo:1},
  \begin{align*}
    |\square|
    &\leq
    2\int_{\bar \eta \neq 1/2}
    \min\left\{
      |\eta_\mu(x,1) - 1/2|
      ,
      \left(\frac {|\eta_\mu(x,1) - 1/2|}{|\bar \eta(x,1) - 1/2|}\right)
      |\eta_w(x,1) - \bar\eta(x,1)|
    \right\}d\mu_\cX(x)
    \\
    &\leq \star,
  \end{align*}
  with $\star$ given in the statement in the statement.
  To see that $\star \to 0$ as $\|\eta_w-\bar\eta\|_1\to 0$,
  first note, for any $\sigma \in (0,1/2]$, that
  \begin{align*}
    \star
    &\leq
    2\int_{|\bar \eta - 1/2|\in(0 ,\sigma)}
    1 d\mu_\cX(x)
    +
    2\int_{|\bar \eta - 1/2| > \sigma}
      \left(\frac {|\eta_\mu(x,1) - 1/2|}{|\bar \eta(x,1) - 1/2|}\right)
      |\eta_w(x,1) - \bar\eta(x,1)|
    d\mu_\cX(x)
    \\
    &\leq
    2\int_{|\bar \eta - 1/2|\in(0 ,\sigma)}
    1 d\mu_\cX(x)
    +
    \frac 1 \sigma \int
      |\eta_w(x,1) - \bar\eta(x,1)|
    d\mu_\cX(x).
  \end{align*}
  Since the first term goes to 0 as $\sigma\to 0$, it suffices to choose $\sigma:=\sqrt{\|\eta_w-\bar\eta\|_1}$
  and the result follows.
\end{proof}

  In order to establish the tightness of the bound,
  consider any $\eps \in [0,1)$, let $\cX=[-1,1]$, and define the following probability measure $\mu$ over $\cX\times \{-1,+1\} = \cZ$:
  \[
    \mu(x,\pm 1)\;
    \begin{cases}
      a := -1,&b:= 1-\eps;\\
      \mu_\cX(a) = \frac {1-\eps}{2-\eps},
      &
      \mu_\cX(b) = \frac {1}{2-\eps};
      \\
      \eta_\mu(a,+1) = 1,
      &
      \eta_\mu(b,+1) = 1.
    \end{cases}
  \]

\begin{lemma}
  \label[lemma]{fact:zo:lb:new}
  Let scalar $\eps \in [0,1)$,
  probability measure $\mu$ as above,
  hypotheses $\cH := \{ h\}$ where $h(x) = x$,
  and loss $\ell\in\Ldiff$ be given.
  Then the sequence $(w_i)_{i=1}^\infty$ with $w_i := (-1)^i/i$
  satisfies $\eta_{w_i}\to\bar\eta$ and
  \begin{align*}
    \cRz(Hw_i) - \cRz(\bar \eta(\cdot,1) - 1/2)
    &=
    \int_{\bar \eta = 1/2} (\eta_\mu(x,1) - \eta_\mu(x,-1))\one[\eta_w(x,1) < 1/2] d\mu_\cX(x)
    \\
    &=
    \begin{cases}
      \frac {1}{2-\eps}
      &
      \textup{when $i$ is odd},
      \\
      \frac {1-\eps}{2-\eps}
      &
      \textup{when $i$ is even}.
    \end{cases}
  \end{align*}
\end{lemma}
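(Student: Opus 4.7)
The plan is to identify $\bar\eta$ explicitly from the structure of $\mu$, verify the convergence $\eta_{w_i}\to\bar\eta$, and then compute both sides of the claimed equality by a parity-based case analysis on $i$.

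First I will determine $\bar w$ and $\bar\eta$. Since $\cH=\{h\}$ with $h(x)=x$, weightings are scalars $w\in\R$ and $(Hw)(x)=wx$. Because $\eta_\mu(\cdot,+1)\equiv 1$, the risk reduces to
\[
  \cR(w)=\frac{1-\eps}{2-\eps}\,\ell(w)+\frac{1}{2-\eps}\,\ell\bigParens{-(1-\eps)w},
\]
so the first-order condition becomes $\ell'(w)=\ell'\bigParens{-(1-\eps)w}$. Since $\ell\in\Ldiff$ has strictly increasing $\ell'$, the unique minimizer is $\bar w=0$. Part (iii) of \Cref{fact:duality} together with \Cref{def:bareta} then give $\bar\eta(x,y)=\eta_{\bar w}(x,y)=\phi(0)=1/2$ $\mu$-a.e., so $\{\bar\eta=1/2\}$ covers the support of $\mu$. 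The convergence $\eta_{w_i}\to\bar\eta$ in $L_1(\mu)$ follows because $w_i\to 0$, the link $\phi$ is continuous by \Cref{fact:eta:basic}.i, and dominated convergence applies since $|\eta_{w_i}|\le 1$ and $\mu$ is finite.

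For the two numerical claims I will do parity case analysis on $i$. Since $\bar\eta(\cdot,+1)-1/2\equiv 0$ and our convention gives $\sign(0)=+1$ while $y=+1$ $\mu$-a.s., the zero-one risk $\cRz(\bar\eta(\cdot,+1)-1/2)=0$. For $\cRz(Hw_i)$, observe $(Hw_i)(a)=-w_i$ and $(Hw_i)(b)=(1-\eps)w_i$: for even $i$ (so $w_i>0$) point $a$ is misclassified and $b$ is correct, yielding error $\mu_\cX(a)=(1-\eps)/(2-\eps)$; for odd $i$ the roles flip, yielding $\mu_\cX(b)=1/(2-\eps)$. Finally, for the integral, since $\{\bar\eta=1/2\}$ covers the support and $\eta_\mu(\cdot,+1)-\eta_\mu(\cdot,-1)\equiv 1$ there, it collapses to $\mu_\cX(\{x:\phi(w_i x)<1/2\})=\mu_\cX(\{x:w_i x<0\})$, which by the same case split equals $\mu_\cX(a)$ for even $i$ and $\mu_\cX(b)$ for odd $i$. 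The two sides match, completing the proof. There is no substantial obstacle---the example is designed precisely so that $\bar\eta\equiv 1/2$ on the entire support of $\mu$, which is exactly the ambiguity in the Bayes rule for $\bar\eta$ that allows $\star>0$ and makes the bound of \Cref{fact:zo} tight.
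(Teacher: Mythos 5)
Your proposal is correct and follows essentially the same route as the paper's proof: establish $\bar w=0$ via the first-order condition, deduce $\barq=\ell'(0)$ and hence $\bar\eta\equiv 1/2$ from \Cref{fact:duality}.(iii) and \Cref{def:bareta}, and then compute both sides by the parity case analysis on $w_i=(-1)^i/i$. The only (harmless) difference is that you additionally verify uniqueness of the minimizer via strict monotonicity of $\ell'$, whereas the paper merely checks that the gradient vanishes at $0$.
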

\begin{proof}
  Note that $\cR$ has primal optimum $\barw = 0$:
  evaluating the gradient of $\cR$ at $\barw$ gives
  \[
    -a\ell'(-a\barw)\mu_\cX(a) -b\ell'(-b\barw)\mu_\cX(b)
    = \ell'(0)\left(\frac{1-\eps}{2-\eps}\right) -\ell'(0) \left(\frac {1-\eps}{2-\eps}\right)
    = 0.
  \]
  By \Cref{fact:duality}, $\barq = \ell'(0)$ $\mu$-a.e., thus
  $\bar\eta = \phi(0) = 1/2$ $\mu$-a.e., and $\cRz(\bar\eta(\cdot,1)-1/2) = 0$.

  Turning now to $w_i$, since $\bar\eta = 1/2$ and $\eta_\mu = \one[\bar\eta \geq 1/2]$ everywhere,
  \begin{align*}
    \cRz(Hw_i) - \cRz(\bar\eta(\cdot,1)-1/2)
    &= \cRz(Hw_i)
    = \sum_{x\in \{a,b\}} \mu_\cX(x) \one[ \eta_{w_i}(x,1)< 1/2 ]
    \\
    &=
    \int_{\bar \eta = 1/2} (\eta_\mu(x,1) - \eta_\mu(x,-1))\one[\eta_{w_i}(x,1) < 1/2] d\mu_\cX(x).
  \end{align*}
  Moreover, when $i$ is odd, then $\cRz(Hw_i) = \mu_\cX(b)$, whereas $i$ being even implies $\cRz(Hw_i) = \mu_\cX(a)$.

  Lastly, the convergence statement follows since $w_i \to \barw$, thus $\eta_{w_i} = \phi(Hw_i) \to \phi(H\bar w) = \bar\eta$
  by continuity of $\phi$ (cf. \Cref{fact:eta:basic}).
\end{proof}

\begin{proofof}{\Cref{fact:zo}}
  The proof follows by instantiating the bound in \Cref{fact:zo:ub} for each $w_i$, and applying $\limsup_{i\to\infty}$ to
  the absolute value of both sides.  On the other hand, \Cref{fact:zo:lb:new} with any $\eps \in [0,1)$ provides the instance with $\star > 0$
  and both $\limsup$s being equal. Note that the existence of oscilation exhibited in \Cref{fact:zo:lb:new} does not depend
  on our particular definition of $\sign(0)$.
\end{proofof}

\section{Proofs from \Cref{sec:duality}}

To prove the main duality result (\Cref{fact:duality}), we rely on a pairing of Orlicz spaces
$M_\beta$ and $L_{\beta^*}$ implied by \Cref{prop:orlicz}.iv for a specific choice
of $\beta$ introduced in \Eq{beta}. We begin by showing how the norms $\norm{\cdot}_\beta$ and $\norm{\cdot}_{\beta^*}$
relate to the primal and dual objectives.

Recall that $\beta$ is a symmetrized version of a loss $\ell\in\Lclass$ with the first-order Taylor expansion at zero subtracted, and
it thus represents the curvature of $\ell$:
\[
    \beta(s) \coloneqq \max\Set{
      \ell( s) - \BigParens{\ell(0) + s \ell'(0)} ,\;
      \ell(-s) - \BigParens{\ell(0) + (-s) \ell'(0)}
    }
\enspace.
\]
Note that this $\beta$ satisfies the conditions on $\theta$ in \Cref{prop:orlicz} and it is finite on $\R$,
so we obtain the Banach space pairing between $M_\beta(\mu)$ with norm topology and $L_{\beta^*}(\mu)$ with weak${}^*$ topology.

\begin{lemma}
  \label[lemma]{fact:orlicz_prop}
  Given a finite measure $\mu$ over $\cZ$ and a loss function $\ell \in \Lclass$, the following
  hold:
  \begin{enumromansep}
    \item
      If $f \in M_\beta(\mu)$, then $\int\ell(f)d\mu < \infty$.
    \item
      $\beta^*(s) \le \ell(0) + \min\Set{\ell^*\bigParens{\ell'(0)-|s|},\,\ell^*\bigParens{\ell'(0)+|s|}}$.
    \item
      Let $\nu$ be any measure absolutely continuous with respect to $\mu$,
      and let $f$ denote its density with respect to $\mu$, meaning $f := d\nu/d\mu$.
      Then $\int \ell^*(f)d\mu < \infty$ implies $f \in L_{\beta^*}(\mu)$.
  \end{enumromansep}
\end{lemma}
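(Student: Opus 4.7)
The plan is to prove the three items in order, using the definition of $\beta$ together with Fenchel's inequality and convexity of $\ell^*$. A useful fact throughout is that $\beta$, and hence $\beta^*$, is symmetric about $0$, and by Fenchel equality applied to $\ell'(0)\in\partial\ell(0)$ one has $\ell^*(\ell'(0))=-\ell(0)$, so $\ell^*$ is finite at $\ell'(0)$ (and continuous there whenever $\ell'(0)$ is interior to $\dom\ell^*$).

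For (i), I would rearrange the definition of $\beta$ to get the pointwise bound $\ell(s)\le \beta(s)+\ell(0)+s\,\ell'(0)$ and integrate over the finite measure $\mu$. The term $\int\beta(f)\,d\mu$ is finite because $f\in M_\beta(\mu)$ (take $r=1$ in the definition of $M_\beta$). To control the linear term, Fenchel gives $|f|\,t\le \beta(f)+\beta^*(t)$ for any $t>0$; a suitable $t$ in $\dom\beta^*$ exists because $\beta^*(0)=0$ and $\beta$ is real-valued, so $\dom\beta^*$ contains a neighborhood of $0$. This yields $f\in L_1(\mu)$ and hence $\int\ell(f)\,d\mu<\infty$.

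For (ii), I would compute $\beta^*$ directly from its definition. Writing $\beta(r)=\max\{A(r),B(r)\}$ with $A(r):=\ell(r)-\ell(0)-r\ell'(0)$ and $B(r):=\ell(-r)-\ell(0)+r\ell'(0)$, we obtain
\[
   \beta^*(s) = \sup_r \min\bigBraces{rs-A(r),\;rs-B(r)}
   \le \min\bigBraces{\sup_r\bigBracks{rs-A(r)},\;\sup_r\bigBracks{rs-B(r)}}.
\]
The first inner supremum equals $\ell(0)+\ell^*(\ell'(0)+s)$ by definition of $\ell^*$, and the second, after the substitution $r\mapsto -r$, equals $\ell(0)+\ell^*(\ell'(0)-s)$. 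Symmetry of $\beta^*$ then allows $s$ to be replaced by $|s|$, producing the claimed bound.

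For (iii), I would apply (ii) with $s=rf$ to get $\beta^*(rf)\le \ell(0)+\ell^*(\ell'(0)+rf)$ (using $f\ge 0$ $\mu$-a.e.) for any $r>0$. For $r\in(0,1)$, I rewrite $\ell'(0)+rf=(1-r)\bigParens{\ell'(0)/(1-r)}+r\cdot f$ as a convex combination and apply convexity of $\ell^*$, yielding
\[
   \ell^*(\ell'(0)+rf)\;\le\;(1-r)\,\ell^*\bigParens{\ell'(0)/(1-r)}+r\,\ell^*(f).
\]
For $r$ sufficiently small, $\ell'(0)/(1-r)$ lies in $\dom\ell^*$ (by continuity of $\ell^*$ at the interior point $\ell'(0)$), so integrating gives the desired $\int\beta^*(rf)\,d\mu<\infty$, i.e.\ $f\in L_{\beta^*}(\mu)$. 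The main obstacle is the degenerate case in which $\sup\dom\ell^*=\ell'(0)$ (as happens for the hinge loss if $\ell'(0)$ is chosen as the upper endpoint of $\partial\ell(0)$): then $\ell'(0)/(1-r)$ is outside $\dom\ell^*$ for every $r>0$. Here I would use instead the symmetric ``$-$'' branch $\beta^*(rf)\le \ell(0)+\ell^*(\ell'(0)-rf)$, noting that $\int\ell^*(f)\,d\mu<\infty$ forces $f\le \ell'(0)$ $\mu$-a.e., so for $r\le 1$ the quantity $\ell'(0)-rf$ stays in $\dom\ell^*=[0,\ell'(0)]$, and $\ell^*(\ell'(0)-rf)$ is bounded above by affine interpolation between $\ell^*(0)=0$ and $\ell^*(\ell'(0))=-\ell(0)$, which integrates to a finite value.
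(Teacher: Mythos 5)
Your proposal is correct; part (ii) is essentially the paper's argument verbatim, but parts (i) and (iii) take different routes. For (i), the paper never needs $f\in L_1(\mu)$: it bounds $\ell(f)\le\ell(f)+\ell(-f)$ and notes that the sum of the two recentred branches, $[\ell(f)-\ell(0)-\ell'(0)f]+[\ell(-f)-\ell(0)+\ell'(0)f]$, has the linear terms cancel and is at most $2\beta(f)$, giving $\int\ell(f)\,d\mu\le 2\int\beta(f)\,d\mu+2\ell(0)\mu(\cZ)$ in one line. Your Fenchel detour through $\int|f|\,d\mu<\infty$ also works, though the claim that $\dom\beta^*$ contains a neighborhood of $0$ should be justified by $\beta$ being convex, vanishing at $0$, and not identically zero (hence having positive recession slope), rather than merely by $\beta$ being real-valued. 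For (iii), the paper uses that $\ell^*$ attains its minimum at $\bars=\ell'(0)$, so that $\ell^*\le 0$ on $[0,\bars]$ and $\ell^*$ is nondecreasing on $[\bars,\infty)$; this turns (ii) into $\beta^*(s)\le\ell(0)$ for $|s|\le\bars$ and $\beta^*(s)\le\ell(0)+\ell^*(2|s|)$ for $|s|>\bars$, and the single choice $r=1/2$ then yields $\int\beta^*(f/2)\,d\mu\le 2\ell(0)\mu(\cZ)+\int\ell^*(f)\,d\mu<\infty$ uniformly over all $\ell\in\Lclass$. Your convex-combination bound reaches the same conclusion, but at the cost of an explicit case split on whether $\ell'(0)=\sup\dom\ell^*$ (the hinge-type degeneracy); the paper's monotonicity argument absorbs that case automatically, since on the set $\{f/2>\bars\}$ its bound is exactly $\ell(0)+\ell^*(f)$ and integrability of $\ell^*(f)$ is the hypothesis.
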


\begin{proof}
  \begin{enumromansep}
    \item
      Since $f\in M_\beta(\mu)$ means $\int\beta(f)d\mu<\infty$,
      the definition of $\beta$ and property $\ell\geq 0$ grant
      \begin{align*}
        \int \ell(f)d\mu
        &\leq
        \int \BigParens{\ell(f) + \ell(-f)}d\mu
        \\
        &= \int \BigParens{
               \ell(f) - \bigParens{\ell(0) + \ell'(0)f}
             + \ell(-f) - \bigParens{\ell(0) - \ell'(0)f}
            }
            d\mu
         + 2\int \ell(0)d\mu
        \\
        &\leq 2 \int \beta(f)d\mu + 2\ell(0)\mu(\cZ)
        \\
        &<\infty
    \enspace.
      \end{align*}

    \item
      For convenience, define
      \[
        \ell_+(r) := \ell(r) - \bigParens{\ell(0) + r\ell'(0)}
        \qquad\textup{and}\qquad
        \ell_-(r) := \ell(-r) - \bigParens{\ell(0) - r\ell'(0)}
    \enspace,
      \]
      and note (e.g., from definition of conjugate or by Theorem 12.3 of \citealp{ROC}) that
      \[
        \ell_+^*(s) := \ell(0) + \ell^*\bigParens{\ell'(0) + s}
        \qquad\textup{and}\qquad
        \ell_-^*(s) := \ell(0) + \ell^*\bigParens{\ell'(0) - s}
    \enspace.
      \]
      Since $\beta = \max\{\ell_+,\ell_-\}$,
      then, by definition of conjugate, $\beta^* \leq \min\{\ell_+^*, \ell_-^*\}$, yielding the result.

    \item
      Let $\bar{s}\coloneqq\ell'(0)$. By \Cref{fact:loss_prop}.i,
      $\ell^*$ is minimized at $\bar{s}>0$, so it must be non-increasing on $[0,\bar{s}]$ and non-decreasing on $[\bar{s},\infty)$. Also,
      by \Cref{fact:loss_prop}.i, $\ell^*(0)=0$, so $\ell^*\le 0$ on $[0,\bar{s}]$. Part~(ii) therefore implies
    \[
      \beta^*(s)\le
      \begin{cases}
      \ell(0) & \text{if $|s|\le \bar{s}$}
    \\
      \ell(0) + \ell^*(2|s|) & \text{if $|s|>\bar{s}$.}
      \end{cases}
    \]
      Let $f=d\nu/d\mu$, i.e., $f=|f|$ ($\mu$-a.e.) and assume that $\int\ell^*(f)d\mu<\infty$.
      Using the previous bound on $\beta^*$, write
      \begin{align*}
        \int \beta^*(f/2) d\mu
      &\le
        \ell(0)\mu(\cZ)
        +
        \int_{f/2>\bar{s}} \ell^*(f)d\mu
      \\
      &=
        \ell(0)\mu(\cZ)
        +
        \int \ell^*(f)d\mu
        -
        \int_{f/2\le \bar{s}}\ell^*(f)d\mu
      \\
      &\le
        \ell(0)\mu(\cZ)
        +
        \int \ell^*(f)d\mu
        +
        \ell(0)\mu\Parens{\Set{f/2\le \bar{s}}}
      \\
      &< \infty
    \enspace,
      \end{align*}
      where the next to last step follows, because $\ell^*(s)\geq\ell^*(\bars) = -\ell(0)$ by \Cref{fact:loss_prop}.i.
  \end{enumromansep}
\end{proof}

\begin{proofof}{\Cref{fact:duality}}
  The duality law will be proved via Fenchel's duality (\Cref{thm:fenchel}).
%
%
%
  To begin, we need to define Banach space pairings. One of them is $\bigParens{L_1(\cH),\,L'}$ where $L'$ is the topological
  dual of $L_1(\cH)$ and the other is $\bigParens{M_\beta(\mu),\,L_{\beta^*}(\mu)}$, which is a valid pairing as argued at the
  beginning of this appendix.

  We invoke \Cref{thm:fenchel} with
  $F:L_1(\cH)\to\R$, $G:M_\beta(\mu)\to\R$ defined by
\[
   F(w) = 0\text{ for all $w$,}
\quad
   G(f) = \int\ell(f)d\mu
\]
  and $A:L_1(\cH)\to M_\beta(\mu)$ defined as in \Cref{sec:intro}. Note that
  $F^*(u)=\ind[u=0]$ where $\ind$ denotes the convex indicator, yielding the constraint $A^\top\q=0$.
  To prove \Eq{duality}, it remains to show that
  $A$ is continuous as a map from $L_1(\cH)$ to $M_\beta(\mu)$,
  $G$ is finite on $M_\beta(\mu)$ and
\[
   G^*(\q) = \int\ell^*(\q)d\mu
\enspace.
\]
  Finiteness of $G$ follows by \Cref{fact:orlicz_prop}.i; the expression for the conjugate
  $G^*$ follows by \Cref{prop:conj:int}, because $M_\beta(\mu)$ and $L_{\beta^*}(\mu)$ are
  decomposable (by \Cref{prop:orlicz}). Finally, to argue continuity of $A$, consider $w,w'\in L_1(\cH)$.
  From the definition of $A$, $\abs{(Aw)(z)}\le\norm{w}_1$, so $Aw$ is a bounded measurable function
  and hence in $M_\beta(\mu)$ (by decomposability). Also,
\begin{equation}
\label{eq:A:w:w'}
  \bigAbs{\bigParens{A(w'-w)}(z)}\le\norm{w'-w}_1
\enspace.
\end{equation}
  Let $f_1(z)=1$ for all $z$.
  For any $f$ and $g$ such that $\abs{f}\le\abs{g}$, we have
  $\norm{f}_\beta\le\norm{g}_\beta$, so \Eq{A:w:w'} implies
\[
  \bigNorm{A(w'-w)}_\beta\le\norm{w'-w}_1\norm{f_1}_{\beta}
\enspace,
\]
  showing the continuity of $A$, because $\norm{f_1}_\beta$ is finite (by decomposability).

  It remains to show the properties of the dual optima:
  \begin{enumromansep}
    \item
      The bound follows since $\ell^*(s) = \infty$ whenever $s<0$ by \Cref{fact:loss_prop}.

    \item
      Any dual optimum $\barq$ may be modified on a $\mu$-null set to obtain
      $\hatq$ satisfying the condition.
      To start,
      define $S \coloneqq \{ x \in \cX : \barq(x,1) = \barq(x,-1) = 0 \}$;
      from part (i), $\barq \geq 0$ ($\mu$-a.e.), so it suffices to produce $\hatq$
      by modifying $\barq$ on a $\mu$-null subset of $S$.

      Recall that $\eta_\mu(x,y)$ represents the conditional probability of $y$ given $x$,
      i.e., $d\mu(x,y)=\eta_\mu(x,y)d\mu_\cX(x)$ and $\eta_\mu(x,-1)+\eta_\mu(x,1)=1$.
      We will write $\cY=\Set{-1,1}$.
      First consider those points where $\eta_\mu(x,y) \in (0,1)$; in particular, the set
      \[
        S_0 := \left\{ x \in S : \eta_\mu(x,1) \in (0,1) \right\},
      \]
      and, for the sake of contradiction, suppose that $\mu_\cX(S_0) > 0$.
      Pick $\bars\in\partial\ell(0)$, whereby $\ell^*(\bars) < \ell^*(0) = 0$ by \Cref{fact:loss_prop}.
      Define $\q\in L_{\beta^*}(\mu)$ as
      \[
        \q(x,y) := \begin{cases}
          \barq(x,y)
          &\textup{when } x\not\in S_0,
          \\
          \bars
          &\textup{when } x\in S_0 \textup{ and } \eta_\mu(x,-y)\ge\eta_\mu(x,y),
          \\
          \bars \cdot \frac {\eta_\mu(x,-y)}{\eta_\mu(x,y)}
          &\textup{when } x\in S_0 \textup{ and } \eta_\mu(x,-y)<\eta_\mu(x,y).
        \end{cases}
      \]
      We show that $\q$ is dual-feasible and achieves a better objective value than $\barq$.
      By construction, $\q\in L_{\beta^*}(\mu)$ (since $\barq\in L_{\beta^*}$, which is decomposable, and the adjustment is bounded),
      and moreover, for every $w\in L_1(\cH)$,
      \begin{align*}
        (A^\top\q)(w)
        &= \int_{S_0\times\cY} (Aw)\q\,d\mu + \int_{S_0^c\times\cY} (Aw)\barq\,d\mu
        \\
        &=
        \int_{S_0} (Hw)(x) \BigParens{\q(x,-1)\eta_\mu(x,-1) - \q(x,1)\eta_\mu(x,1)} d\mu_\cX(x)
        \\
        &\quad{}
        +
        \Parens{\int (Aw)\barq\,d\mu-\int_{S_0\times\cY} (Aw)\barq\,d\mu}
        \\
        &= 0 + (0 - 0)
      \enspace,
      \end{align*}
      where the last step follows from the definition of $\q$, feasibility of $\barq$ and the fact that $S_0\subseteq S$.
      Thus, $\q$ is feasible.
      On the other hand,
      \[
        \int \ell^*(\q)d\mu = \int \ell^*(\barq)d\mu + \int_{S_0\times\cY}\ell^*(\q)d\mu
      \enspace,
      \]
      because $\barq=0$ along $S_0\times\cY$.
      By construction, $\q\in(0,\bars]$ along $S_0\times\cY$. Further, $\ell^*(s) < 0$ for $s\in(0,\bars]$ by \Cref{fact:loss_prop}, so
      $\ell^*(\q)<0$ along $S_0\times\cY$. Hence, $\mu_\cX(S_0) > 0$ implies $\q$ attains a lower objective value than $\barq$, a contradiction;
      thus $\mu_\cX(S_0) = 0$.

      It has been shown that $\barq(x,y) + \barq(x,-y) > 0$ over $(x,y)$ with
      $\eta_\mu(x,y) \in (0,1)$, $\mu$-a.e.; consequently, it suffices to consider $(x,y)$ with $\eta_\mu(x,y) \in \{0,1\}$.
      Define $\hatq\in L_{\beta^*}(\mu)$ as
      \[
        \hatq(x,y) := \begin{cases}
          \barq(x,y)
          &\textup{when } \eta_\mu(x,y) \in (0,1],
          \\
          \bars
          &\textup{when } \eta_\mu(x,y) = 0.
        \end{cases}
      \]
      Since the adjustment is only on points where $\eta_\mu(x,y) = 0$,
      then $\hatq = \barq$ $\mu$-a.e., and thus is also a dual solution. Furthermore, since $\mu_\cX(S_0) = 0$,
      then $\mu_\cX$-a.e.\@ over $x\in S$, we have $\hatq(x,-1) + \hatq(x,1) \geq\bars > 0$
      as desired.

    \item
      This follows directly from \Cref{thm:fenchel} and \Cref{prop:subgrad:strict}.i.

    \item
      Consider a sequence $(w_i)_{i=1}^\infty$ minimizing the primal. By \Eq{duality} and since $A^\top\barq=0$, this means that
      \begin{equation}
      \label{eq:duality:pf:1}
           \int\ell(Aw_i)d\mu + \int\ell^*(\barq)d\mu - \angles{A^\top\barq,\,w_i}\;\to\;0
      \end{equation}
      as $i\to\infty$. Let $r_i=Aw_i$. Since $\angles{A^\top\barq,\,w_i}=\angles{\barq,\,Aw_i}=\angles{\barq,r_i}$, \Eq{duality:pf:1} can be rearranged to
      \[
           \int\BigBracks{\ell(r_i)+ \ell^*(\barq) - \barq r_i}d\mu\;\to\;0
      \enspace.
      \]
      By Fenchel's inequality, the integrand is non-negative, so we actually have
      \begin{equation}
      \label{eq:duality:pf:2}
         \ell(r_i(z)) + \ell^*(\barq(z)) - \barq(z)r_i(z)\;\to\;0\quad\text{$\mu$-a.e.\@ over $z\in\cZ$.}
      \end{equation}
      Denote the set of points $z$ where $\ell^*$ is differentiable at $\barq(z)$ as
      $S$. Define $\barr(z)\coloneqq(\ell^*)'(\barq(z))$
      for $z\in S$. Over $z\in S$, we have by first-order optimality for conjugates that $\ell^*(\barq)=\barq\barr-\ell(\barr)$,
      and $\barq=\ell'(\barr)$, and thus \Eq{duality:pf:2} implies
      \[
         \ell(r_i) - \ell(\barr) - \ell'(\barr)(r_i-\barr)\;\to\;0\quad\text{$\mu$-a.e.\@ over $z\in S$.}
      \]
      Hence, from strict convexity of $\ell$ we obtain that $r_i\to\barr$, $\mu$-a.e.\@ over $z\in S$.
      Now, let $S_\cX\coloneqq\{x\in\cX:\: (x,1)\in S\text{ and }(x,-1)\in S\}$ be the set of points $x$ where $\ell^*$ is
      differentiable at both $\barq(x,1)$ and $\barq(x,-1)$. From the definition of $r_i$, we have $r_i(x,1)+r_i(x,-1)=0$ and thus
      we must also have $\barr(x,1)+\barr(x,-1)=0$, $\mu_\cX$-a.e.\@ over $x\in S_\cX$. Unrolling the definition of $\barr$
      yields the desired result.

    \item
      If $\ell$ is differentiable, then $\ell^*$ is strictly convex \citep[Theorem E.4.1.2]{HULL},
      whereby $\int\ell^*d\mu$ is also strictly convex by \Cref{prop:subgrad:strict}.ii, and thus
      the dual optimizer is unique up to $\mu$-null sets.

  \end{enumromansep}
\end{proofof}

To close, note an additional technical property of $\barq$ which will be useful in various proofs.

\begin{lemma}
  \label[lemma]{fact:barq:endpoint}
  Given finite measure $\mu$,
  hypotheses $\cH$,
  and loss $\ell\in\Ldiff$ with $L := \lim_{r\to\infty}\ell'(r)$,
  it follows that every dual optimum $\barq$ satisfies $\mu(\{z\in\cZ : \barq(z) \geq L\}) = 0$.
\end{lemma}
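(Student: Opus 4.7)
The plan is to split on whether $\mu(\{\barq > L\}) > 0$ or $\mu(\{\barq = L\}) > 0$ and derive a contradiction in each. If $L = \infty$ the conclusion is vacuous (as $\barq$ is $\R$-valued), so assume $L < \infty$. The first case is easy: since $\ell'' > 0$ and $\ell'(r)\uparrow L$, one has $\ell' < L$ strictly, hence $\ell(r) \le \ell(0) + Lr$ by integration; taking $r\to\infty$ in $\ell^*(s) \ge rs - \ell(r) \ge r(s-L) - \ell(0)$ shows $\ell^*(s) = \infty$ for $s > L$. So if $\mu(\{\barq > L\}) > 0$, then $\int\ell^*(\barq)\,d\mu = \infty$, and the dual objective at $\barq$ is $-\infty$, contradicting optimality (the feasible point $q\equiv 0$ gives value $0$).

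The bulk of the work is to rule out $E := \{\barq = L\}$ with $\mu(E) > 0$. If $\ell^*(L) = \infty$, the same argument as above applies, so assume $\ell^*(L) < \infty$. I would take a primal minimizing sequence $(w_i)$; combining the duality law from \Cref{fact:duality} with $A^\top\barq = 0$ yields
\[
  \int \BigBracks{\ell(Aw_i) + \ell^*(\barq) - \barq\cdot Aw_i}\,d\mu \to 0.
\]
The integrand is pointwise non-negative by Fenchel's inequality, so along a subsequence it converges to $0$ $\mu$-a.e. On $E$ this becomes $\ell((Aw_i)(z)) - L\,(Aw_i)(z) \to -\ell^*(L)$; since $r\mapsto \ell(r) - Lr$ is strictly decreasing (its derivative $\ell' - L$ is negative) with limit $-\ell^*(L)$ only as $r\to\infty$, this forces $(Aw_i)(z)\to \infty$ for $\mu$-a.e.\@ $z\in E$.

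The final step, and the conceptual core of the argument, is that this divergence blows up the primal integral. Since $\ell'(0) > 0$, one has $\ell(r) \ge \ell(0) + r\ell'(0) \to \infty$, so $\ell((Aw_i)(z)) \to \infty$ on $E$, and Fatou's lemma gives
\[
  \liminf_i \int \ell(Aw_i)\,d\mu \;\ge\; \int_E \liminf_i \ell(Aw_i)\,d\mu \;=\; \infty,
\]
contradicting that $(w_i)$ minimizes a primal infimum bounded above by $\ell(0)\mu(\cZ) < \infty$. The main obstacle is the middle step: upgrading the vanishing Fenchel gap on $E$ into the divergence $(Aw_i)\to\infty$. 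This upgrade is exactly the finite-$L$ analog of the standard boundary-behavior fact $\lim_{s\uparrow L}(\ell^*)'(s) = \infty$ from \Cref{fact:loss_prop}.v, and it is the one place where the strict convexity of $\ell$ is essential.
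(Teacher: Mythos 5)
Your proof is correct, but it takes a genuinely different route from the paper's. The paper argues entirely on the dual side: it perturbs $\barq$ to $(1-\alpha)\barq$ (which remains feasible since the constraint is linear), splits $\{\barq>0\}$ into three regions, and shows via a local Lipschitz bound, monotonicity of $\ell^*$ on $[\ell'(0),L]$, and the blow-up $(\ell^*)'(s)\uparrow\infty$ as $s\uparrow L$ that the perturbation would strictly improve the dual objective whenever $\mu(\barq=L)>0$, contradicting optimality of $\barq$. You instead argue primal-dually: you take a primal minimizing sequence, use strong duality and $A^\top\barq=0$ to make the pointwise Fenchel gap vanish $\mu$-a.e.\@ (along a subsequence), observe that on $\{\barq=L\}$ the gap $\ell(r)-Lr+\ell^*(L)$ is strictly positive for every finite $r$ and vanishes only as $r\to\infty$ (the finite-$L$ face of \Cref{fact:loss_prop}.v), and then invoke Fatou to contradict finiteness of the primal infimum. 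This is essentially the same machinery the paper deploys to prove \Cref{fact:duality}.iv, so your argument reuses an existing tool rather than introducing the three-region perturbation; the trade-off is that you need a minimizing sequence and a subsequence extraction, whereas the paper's argument is self-contained on the dual side. All the delicate points in your write-up check out: $\ell^*(s)=\infty$ for $s>L$ rules out $\mu(\barq>L)>0$; the case $\ell^*(L)=\infty$ is disposed of the same way; strict monotonicity of $r\mapsto\ell(r)-Lr$ (from $\ell'<L$, itself a consequence of $\ell''>0$) upgrades the vanishing gap to $(Aw_i)(z)\to\infty$; and $\ell(r)\geq\ell(0)+r\ell'(0)$ with $\ell'(0)>0$ makes Fatou deliver the contradiction against the finite primal value $\cR(0)=\ell(0)\mu(\cZ)$.
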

\begin{proof}
  Note that $\ell^*$ is strictly convex (by differentiability of $\ell$) and differentiable everywhere except possibly at the endpoints of its domain (by strict convexity of $\ell$).
  If $L=\infty$, there is nothing to show, thus suppose $L<\infty$,
  which entails $\dom(\ell^*) \subseteq [0,L]$ (since the image of the derivative map $\ell'$ is the domain of the conjugate derivative map $(\ell^*)'$, and this coincides, up to the endpoints, with $\dom\ell^*$). So it suffices to show that $\mu(\barq=L)=0$.
  
  Note that $\ell'(0) \in (0,L)$ since $\ell'' > 0$.
  Define
  a scalar $N := (L + \ell'(0)) / 2$, set $D:=\{z\in\cZ : \barq(z) \in (0,L] \}$,
  and partition $D$ into the three pieces
  \begin{align*}
    R_1
    &:= \{ z\in \cZ : \barq(z) \in (0, N] \},
    \\
    R_2
    &:= \{ z\in \cZ : \barq(z) \in  (N, L) \},
    \\
    R_3
    &:= \{ z\in \cZ : \barq(z) =  L \}.
  \end{align*}
  We next study the integral $\int \ell^*((1-\alpha)\barq)d\mu$ for small values of $\alpha$ over these pieces.

  \begin{description}
    \item[($R_2$)]
      Since $\ell^*$ is increasing along $[\ell'(0), L]$,
      then every sufficiently small $\alpha>0$ and every $z\in R_2$ satisfies $\ell^*((1-\alpha)\barq(z)) < \ell^*(\barq(z))$,
      and in particular
      \[
        \int_{R_2} \ell^*((1-\alpha)\barq)d\mu
        \leq \int_{R_2} \ell^*(\barq)d\mu.
      \]

    \item[($R_1$)]
      Consider the function
      \[
        F(\alpha) = \int_{R_1} \ell^*((1-\alpha)\barq)d\mu.
      \]
      This is a univariate convex function which is finite on a neighborhood
      of $0$. Pick $\tau>0$ such that $[-\tau,\tau]$ lies in this neighborhood.
      Since this is a closed bounded subset of the relative interior of $\dom F$,
      we obtain (by \citealp[Theorem 10.4]{ROC}) that $F$ is Lipschitz-continuous
      on $[-\tau,\tau]$. Let $L'$ be its Lipschitz constant on $[-\tau,\tau]$.
      For $|\alpha|\le\tau$, we obtain
      \[
        \int_{R_1}  \ell^*((1-\alpha)\barq)d\mu
        \leq
        \alpha L' + \int_{R_1} \ell^*(\barq)d\mu.
      \]

    \item[($R_3$)]
      Note $\lim_{z\uparrow L} (\ell^*)'(z) = \infty$ (by \Cref{fact:loss_prop}), thus the definition of subgradient grants
      \begin{align*}
        \int_{R_3} \ell^*((1-\alpha)\barq)d\mu
        &= \mu(R_3)\ell^*((1-\alpha)L)
        \\
        &\leq \mu(R_3)\left( \ell^*(L) - (\ell^*)'((1-\alpha)L)(L - (1-\alpha)L) \right)
        \\
        &= -\alpha L \mu(R_3) (\ell^*)'((1-\alpha)L)
        +
        \int_{R_3} \ell^*(\barq)d\mu.
      \end{align*}
  \end{description}

  To finish,
  first note $\barq \in [0,L]$ for $\mu$-a.e.\@ $z\in\cZ$ (since otherwise $\int\ell^*(\barq)d\mu > \int\ell^*(0)d\mu = 0$),
  and $\ell^*((1-\alpha)\barq) = 0$ wherever $\barq = 0$.
  Combining these pieces,
  since $\barq$ is optimal and $(1-\alpha)\barq$ is feasible for $\alpha\in[0,1]$,
  then for sufficiently small $\alpha > 0$,
  \begin{align*}
    \int \ell^*(\barq)d\mu
    &\leq
    \int \ell^*((1-\alpha)\barq)d\mu
    \\
    &=
    \int_{R_1} \ell^*((1-\alpha)\barq)d\mu
    +
    \int_{R_2} \ell^*((1-\alpha)\barq)d\mu
    +
    \int_{R_3} \ell^*((1-\alpha)\barq)d\mu
    \\
    &\leq
    \alpha L' + \int_{R_1} \ell^*(\barq)d\mu
    +
    \int_{R_2} \ell^*(\barq)d\mu
    - \alpha L \mu(R_3) (\ell^*)'((1-\alpha)L)
    + \int_{R_3} \ell^*(\barq)d\mu.
    \\
    &=
    \alpha\Big(L' - L \mu(R_3) (\ell^*)'((1-\alpha)L)\Big) + \int\ell^*(\barq)d\mu,
  \end{align*}
  which rearranges to give
  \[
    L \mu(R_3) \underbrace{(\ell^*)'((1-\alpha)L)}_{\triangle}
    \leq
    L'.
  \]
  Since $L > 0$ and $\triangle \to \infty$ as $\alpha \downarrow 0$ whereas $L'$ is constant,
  it follows that $\mu(R_3) = 0$.
\end{proof}

\section{Proof of \Cref{lemma:dual:D} and \Cref{cor:hc_split}}

This brief appendix section collects proofs of two results from the introductory part
of \Cref{sec:proof_outlines}.

\begin{proofof}{\Cref{lemma:dual:D}}
  Applying \Cref{fact:duality}, to both $\mu$ and $\mu_D$,
  \begin{align*}
    \inf_{w\in L_1(\mu)} \cR(w) &=
    \max_{\q\in L_{\beta^*}(\mu):\:A^\top \q = 0}
       \Bracks{ - \int \ell^*(\q)d\mu },
       \\
       \inf_{w\in L_1(\mu)} \cR(w;\mu_D) &=
    \max_{\q\in L_{\beta^*}(\mu_D):\:A^\top \q = 0}
       \Bracks{ - \int_D \ell^*(\q)d\mu }.
     \end{align*}
  Of course, $\barq$ attains the first dual maximum over $\mu$; note, as follows, that
  it also attains the dual maximum over $\mu_D$.
  First, $\barq$ is feasible for the second problem, since $\barq \in L_{\beta^*}(\mu)$ and $\barq=0$ on $D^c$, so we also have $\barq \in L_{\beta^*}(\mu_D)$,
  and for every $v\in L_1(\cH)$,
  \[
    0
    = \int (Av)\barq\,d\mu
    = \int_D (Av)\barq\,d\mu.
  \]
  Furthermore, since $\ell \in \Lclass$ implies $\ell^*(0) = 0$ (by \Cref{fact:loss_prop}),
  it follows that
  \[
    \int \ell^*(\barq)d\mu
    = \int_D \ell^*(\barq)d\mu.
  \]
  Consequently,
  \begin{equation}
    \max_{\substack{\q\in L_{\beta^*}(\mu) \\ A^\top \q = 0}}
       \Bracks{ - \int \ell^*(\q)d\mu }
       = -\int \ell^*(\barq)d\mu
       = -\int_D \ell^*(\barq)d\mu
      \leq
      \max_{\substack{\q\in L_{\beta^*}(\mu_D)\\ A^\top \q = 0}}
       \Bracks{ - \int_D \ell^*(\q)d\mu }.
       \label{eq:dual:D:1}
  \end{equation}

  Now consider any dual optimum $\barq_D$ over $\mu_D$,
  and set $\hatq(z) := \barq_D(z)\one[z\in D]$.
  Mimicking the derivations above, $\hatq$ is feasible and optimal over $D$
  (indeed, $\barq_D$ and $\hatq$ only differ on a $\mu_D$-null set).
  Similarly, however, $\hatq$ is also feasible for the full problem over $\mu_D$,
  and $\int \ell^*(\hatq)\mu_D = \int\ell^*(\hatq)d\mu$,
  implying that the inequality in \Eq{dual:D:1} is an equality, and
  $\barq$ and $\hatq$ are optimal for both $\mu$ and $\mu_D$.
\end{proofof}

\begin{proofof}{\Cref{cor:hc_split}}
  Using the fact that the dual and thus also primal optimal values coincide
  for $\mu$ and $\mu_D$, as well as the fact that $\ell\ge 0$, we obtain
  \begin{align*}
    \cE(w;\mu_D)
    = \int_D\ell(Aw)d\mu - \inf_{v\in L_1(\cH)} \int_D \ell(Av)d\mu
    & \leq \int\ell(Aw)d\mu - \inf_{v\in L_1(\cH)} \int \ell(Av)d\mu
    = \cE(w)
  \end{align*}
  directly, and similarly
  \begin{align*}
    \cR(w;\mu_{D^c})
    &= \int\ell(Aw)d\mu - \int_D\ell(Aw)d\mu
    \\
    &\leq \int\ell(Aw)d\mu - \inf_{v\in L_1(\cH)} \int_D\ell(Av)d\mu
    \\
    &= \int\ell(Aw)d\mu - \inf_{v\in L_1(\cH)} \int\ell(Av)d\mu
    \\
    &= \cE(w).
  \end{align*}
\end{proofof}

\section{Proofs from \Cref{subsec:sketch:convergence}}

Before proving \Cref{fact:Dc_controls} in full,
we establish a general form of its first part.
Unlike the proof of the second part of \Cref{fact:Dc_controls},
the first part does not rely upon the structure of $\cD^c$ in any way;
indeed it is simply Markov's inequality.

\begin{lemma}
  \label[lemma]{fact:loss_markov}
  Let finite measure $\mu$, hypotheses $\cH$, loss $\ell\in\Lclass$, and arbitrary set $C\subseteq \cZ$ be given.
  Then for any $w\in L_1(\cH)$ and $r >0$, the set $S_r := \{z\in C : \ell\bigParens{(Aw)(z)} \geq r \}$
  satisfies $\mu(S_r) \leq \cR(w;\mu_C)/r$.
\end{lemma}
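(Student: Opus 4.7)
The plan is to recognize this as a direct application of Markov's inequality to the nonnegative random variable $\ell\bigParens{(Aw)(z)}$ under the measure $\mu_C$. No structural property of $C$ (such as being an easy set $\cD^c$) is needed; the statement is purely measure-theoretic.

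First I would record that every $\ell \in \Lclass$ satisfies $\ell \geq 0$: by definition of a classification loss, $\inf_{z\in\R}\ell(z) = 0$, and combined with convexity and monotonicity this forces $\ell(z) \geq 0$ everywhere. Consequently the integrand $\ell(Aw)$ is nonnegative $\mu$-a.e.

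Next I would chain the inequalities
\[
  \cR(w;\mu_C)
  = \mint_C \ell(Aw)\,d\mu
  \;\geq\; \mint_{S_r} \ell(Aw)\,d\mu
  \;\geq\; \mint_{S_r} r \, d\mu
  = r\,\mu(S_r),
\]
where the first inequality uses $\ell(Aw) \geq 0$ together with $S_r \subseteq C$, and the second uses the definition of $S_r$, on which $\ell\bigParens{(Aw)(z)} \geq r$ pointwise. Dividing by $r > 0$ yields the claimed bound $\mu(S_r) \leq \cR(w;\mu_C)/r$.

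There is no real obstacle here; the only thing to double-check is that $S_r$ is measurable so that $\mu(S_r)$ is well-defined, which follows because $Aw$ is measurable (as an $L_1(\cH)$-combination of measurable $h\in\cH$) and $\ell$ is continuous, so $\ell\circ Aw$ is measurable and $S_r$ is the preimage of the closed set $[r,\infty)$ intersected with $C$.
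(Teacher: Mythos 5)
Your proposal is correct and is essentially the paper's own argument: both are Markov's inequality applied to the nonnegative integrand $\ell(Aw)$ over $\mu_C$, the paper phrasing it via the pointwise bound $r\,\one[z\in S_r]\le \ell\bigParens{(Aw)(z)}$ and you via restricting the integral to $S_r$. No gaps.
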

\begin{proof}
  Emulating the proof of Markov's inequality,
  every $z\in\cZ$ satisfies
  \[
    r \one[ z\in S_r ]
    \leq r \one[ \ell\bigParens{(Aw)(z)} \geq r]
    \leq \ell\bigParens{(Aw)(z)},
  \]
  thus integrating both sides along $C$ and dividing by $r$ gives
  \[
    \mu(S_r)
    \leq \frac {\int_{C}\ell(Aw)d\mu}{r}
    = \frac {\cR(w;\mu_{C})}{r}.
  \]
\end{proof}

\begin{proofof}{\Cref{fact:Dc_controls}}
  Part (i) is proved by applying \Cref{fact:loss_markov} with $C:= D^c$,
  and then applying \Cref{cor:hc_split} for the inequality $\cR(w;\mu_{D^c}) \leq \cE(w)$.

  For part (ii), first note that if $r\ge\ell(0)$, we are done, because $|\bar \eta - \eta_w|\le 1$. Now consider $r<\ell(0)$.
  Since $\bar \eta = 1$ for $\mu$-a.e.\@ $z\in D^c$ and $\eta_w \geq 0$ by definition,
  then
  \begin{align*}
    \int_{D^c \setminus S_r}\left|
    \bar \eta - \eta_w
    \right|d\mu
    &=
    \int _{D^c\setminus S_r} (1-\eta_w) d\mu
    =
    \int _{D^c\setminus S_r} \frac {\ell'((Aw)(x,y))}{\ell'((Aw)(x,y)) + \ell'((Aw)(x,-y))} d\mu
    =: \heartsuit,
  \end{align*}
  thus it remains to control $\heartsuit$.
  Since every $z\in D^c\setminus S_r$ has $\ell\bigParens{(Aw)(z)} < r < \ell(0)$,
  the increasing property of $\ell$ implies $(Aw)(z) \leq 0$.
  Consequently, it follows that $\ell'\bigParens{(Aw)(z)} \leq c_\ell\ell\bigParens{(Aw)(z)} < c_\ell r$,
  and also that
  $\ell'((Aw)(x,-y)) = \ell'(-(Aw)(x,y)) \geq \ell'(0)$ since $\ell'$ is nondecreasing by convexity.
  Combining these bounds,
  \begin{align*}
    \heartsuit
    &=
    \int_{D^c\setminus S_r} \frac {1}{1 + \ell'((Aw)(x,-y)) / \ell'((Aw)(x,y))} d\mu(x,y)
    \leq
    \frac {\mu(D^c\setminus S_r)}{1 + \ell'(0) / (c_\ell r)},
  \end{align*}
  which gives the desired bound after rearrangement, noting that $c_\ell r>0$.
\end{proofof}

In order to prove \Cref{fact:D_controls}, it will be necessary to establish an additional
structural property of dual optima.
In particular, recall the function $\barf$, which is used in the proof of \Cref{fact:D_controls},
and which is equal to $(\ell')^{-1}(\barq)$ whenever
$(\ell')^{-1}$ is defined for both $\barq(x,y)$ and $\barq(x,-y)$.
It is this final condition---needing both $(x,y)$ and $(x,-y)$---which requires the
extra work here.

For the purposes of \Cref{fact:D_controls}, it will suffice to establish that $\mu$-a.e $(x,y)\in \cD$
satisfies $(x,-y)\in\cD$, which is precisely the following \namecref{fact:terrible}.
This result is in fact a consequence of \Cref{fact:Dc_controls}: the idea is that for those points with $(x,y)\in\cD$ but
$(x,-y)\in\cD^c$, applying \Cref{fact:Dc_controls} grants that every low error predictor must achieve
small error on this latter set.
But this leads to a contradiction, since it necessitates that the error on the mirrored points, which reside in $\cD$,
must be large.

\begin{lemma}
  \label[lemma]{fact:terrible}
  Let finite measure $\mu$, hypotheses $\cH$, and loss $\ell\in\Lbnd$ be given.
  Then there exists a dual optimum $\barq$ and corresponding difficult set $\cD$
  such that $\mu$-a.e.\@ over $(x,y)\in\cD$ we also have $(x,-y)\in\cD$.
\end{lemma}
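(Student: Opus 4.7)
The plan is to take the dual optimum $\barq$ from \Cref{fact:duality} (satisfying (i)--(ii)), set $\cD := \{\barq > 0\}$, and study the \emph{asymmetry set} $B := \{(x,y) \in \cD : (x,-y) \in \cD^c\}$. Using the disintegration $d\mu(x,y) = \eta_\mu(x,y)\,d\mu_\cX(x)$, I split $B = B_1 \cup B_2$, where $B_1 := \{(x,y) \in B : \eta_\mu(x,-y) > 0\}$ (the mirror label has positive conditional mass) and $B_2 := B \setminus B_1$. The $B_1$ piece will be shown $\mu$-null via a contradiction argument using \Cref{cor:hc_split}; the $B_2$ piece will be absorbed by modifying $\barq$ on a $\mu$-null subset of $\cD^c$, yielding a new dual optimum $\hat\barq$ whose difficult set $\hat\cD$ satisfies the lemma.

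For $B_2$, let $m(x,y) := (x,-y)$ denote the reflection. Disintegration gives $\mu(m(B_2)) = \int \eta_\mu(x,-y)\,\one_{B_2}(x,y)\,d\mu_\cX(x) = 0$. Define $\hat\barq := \barq + \ell'(0)\cdot\one_{m(B_2)}$, noting $\ell'(0) > 0$ by \Cref{fact:loss_prop}.ii; since the modification is on a $\mu$-null set, $\hat\barq = \barq$ $\mu$-a.e., so $\hat\barq$ remains a dual optimum, and its difficult set $\hat\cD = \cD \cup m(B_2)$ contains the mirror of every point in $B_2$. For $B_1$, suppose for contradiction that $\mu(B_1) > 0$. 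Since $\inf \cR < \infty$ by \Cref{fact:duality}, a minimizing sequence $(w_i)$ exists with $\cE(w_i) \to 0$, and \Cref{cor:hc_split} gives $\cR(w_i; \mu_{\cD^c}) \to 0$. Passing to a subsequence, $\ell((Aw_i)(z)) \to 0$ and hence $(Aw_i)(z) \to -\infty$ (by \Cref{fact:loss_prop}.i) for $\mu$-a.e.\ $z \in \cD^c$. Let $N$ be the $\mu$-null set where this convergence fails; disintegration yields $\mu_x(N_x) = 0$ for $\mu_\cX$-a.e.\ $x$. Because $(x,y) \in B_1$ forces $\eta_\mu(x,-y) > 0$, one has $-y \notin N_x$ for $\mu$-a.e.\ $(x,y) \in B_1$; therefore $(Aw_i)(x,y) = -(Aw_i)(x,-y) \to +\infty$ and $\ell((Aw_i)(x,y)) \to \infty$ for $\mu$-a.e.\ $(x,y) \in B_1$. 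Fatou's lemma then produces $\liminf_i \int_{B_1} \ell(Aw_i)\,d\mu = \infty$ whenever $\mu(B_1) > 0$, contradicting $\cR(w_i) \to \inf \cR < \infty$. Thus $\mu(B_1) = 0$, and $\hat\barq$ with $\hat\cD$ satisfies the lemma.

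The main obstacle is transporting the $\mu$-a.e.\ convergence from $\cD^c$ back through the reflection $m$ to $\cD$. Because $m$ need not preserve $\mu$, a $\mu$-null subset of $\cD^c$ can have a $\mu$-positive preimage under $m$, so one cannot simply pass $\mu$-a.e.\ behaviour across $m$ in bulk. The split $B = B_1 \cup B_2$ is precisely designed to circumvent this: on $B_1$ the positivity of $\eta_\mu(x,-y)$ locks the null fiber $N_x$ out of the relevant labels, making disintegration work; on $B_2$ the mirror is itself $\mu$-null, so adjoining it to the difficult set is a costless null-set modification of $\barq$.
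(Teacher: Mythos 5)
Your proof is correct, and its overall architecture coincides with the paper's: both split the asymmetric points of $\cD$ according to the conditional probability of the mirror label, dispose of the piece whose mirror is $\mu$-null (your $B_2$, which is exactly the paper's $K_1$; your $B_1$ is the paper's $K_0\cup K_+$) by a $\mu$-null modification of $\barq$ that plants the value $\ell'(0)$ on the mirror image, and kill the remaining piece by contradiction using the fact that small excess risk forces the margin on $\cD^c$ to become arbitrarily negative (\Cref{cor:hc_split} and \Cref{fact:Dc_controls}), hence arbitrarily positive loss on the mirrored points inside $\cD$. Where you genuinely diverge is in executing that contradiction. The paper works with a single near-optimal $w$: it extracts $U\subseteq K_+$ on which both conditional probabilities are bounded below by some $\tau>0$, uses the resulting inequality $\mu(U\cap C_-)\geq\tau\,\mu(U_-\cap C)$ to transport mass across the reflection, and chooses $\eps_0$ explicitly so that $\ell(-\ell^{-1}(\sqrt{\eps_0}))>6\cR(0)/\tau^3$ forces $\cR(w)>\cR(w)$. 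You instead take a minimizing sequence, pass to a subsequence converging $\mu$-a.e.\ on $\cD^c$, transport the a.e.\ statement across the reflection fiberwise via the disintegration (correctly exploiting that $\eta_\mu(x,-y)>0$ on $B_1$ keeps the mirror label out of the null fiber $N_x$), and invoke Fatou; this is shorter and avoids all quantitative bookkeeping, at the cost of the subsequence-plus-Fatou machinery, whereas the paper's quantitative version exposes the constants it reuses in the same style elsewhere. One cosmetic repair: \Cref{fact:duality}.i only gives $\barq\geq 0$ $\mu$-a.e., so to guarantee $m(B_2)\subseteq\hat{\cD}$ pointwise (which matters, since $B_2$ itself may have positive measure) you should either first replace $\barq$ by $\max\{\barq,0\}$ or simply set $\hat{\barq}:=\ell'(0)$ on $m(B_2)$ outright, as the paper does.
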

\begin{proof}
  Let an arbitrary dual optimum $\barq_0$ be given as provided by \Cref{fact:duality},
  and let $\cD_0$ denote the corresponding difficult set.
  If this provided $\barq_0$ already satisfies the necessary properties, the proof is done, therefore suppose it does not.

  Define three sets
  \begin{align*}
    K_0
    &:=
    \left\{
      (x,y) \in \cD_0
      :
      (x,-y)\in \cD_0^c,
      \eta_\mu(x,y) = 0
    \right\},
    \\
    K_1
    &:=
    \left\{
      (x,y) \in \cD_0
      :
      (x,-y)\in \cD_0^c,
      \eta_\mu(x,y) = 1
    \right\},
    \\
    K_+
    &:=
    \left\{
      (x,y) \in \cD_0
      :
      (x,-y)\in \cD_0^c,
      \eta_\mu(x,y) \in (0,1)
    \right\},
  \end{align*}
  and an adjusted dual optimum 
  \[
    \barq(x,y)
    :=
    \begin{cases}
      0
      &\textup{when $(x,y) \in K_0$},
      \\
      \ell'(0)
      &\textup{when $(x,-y) \in K_1$},
      \\
      \barq_0(x,y)
      &\textup{otherwise}.
    \end{cases}
  \]
  Since $\mu(K_0) = 0 = \mu(\{(x,y) : (x,-y) \in K_1\})$ by construction,
  then $\barq = \barq_0$ $\mu$-a.e., meaning $\barq$ is also a dual optimum to \Eq{duality}.
  Defining $\cD := \{z\in\cZ : \barq(z) > 0\}$,
  if $(x,y) \in \cD$ and $(x,-y) \in \cD^c$, then it must hold that $(x,y) \in K_+$.
  The proof is done if $\mu(K_+) = 0$; this will constitute the remainder of the proof.

  Assume contradictorily that $\mu(K_+) > 0$.
  Define
  \[
    U_\xi := \left\{ (x,y) \in K_+ : \min\{\eta_\mu(x,y), \eta_\mu(x,-y)\} \geq \xi\right\}.
  \]
  By continuity of measures, $\lim_{\xi\downarrow 0} \mu(U_\xi) = \mu(\cup_{\xi>0}U_\xi) = \mu(K_+)$,
  thus there exists a fixed $\tau>0$ so that $U := U_\tau$ has $\mu(U)\geq \tau$.
  For convenience, define $U_- := \{(x,-y) : (x,y) \in U\}$ (and use $S_-$ for this ``flipped sign'' transformation of any set $S\subseteq \cZ$).  By the conditions on $U$,
  then $\mu(U_-) \geq \tau \mu(U) \geq \tau^2$,
  and for any set $C\subseteq \cZ$,
  \begin{equation}
    \mu(U \cap C_-)
    \geq \tau \mu(U_- \cap C).
    \label{eq:sotired:30}
  \end{equation}

  Now choose $\eps_0>0$ so that $\ell(-\ell^{-1}(\sqrt{\eps_0})) > 6\cR(0)/\tau^3$,
  set $\eps :=  \min\{ \eps_0, \tau^4/4, \cR(0)\}$,
  and choose $w\in L_1(\cH)$ with $\cE(w) \leq \eps$.
  Applying \Cref{fact:Dc_controls} to $w$ with $r:= \sqrt{\eps}$, the
  set
  \[
    S_r := \left\{
      z\in\cD^c : \ell((Aw)(z)) \ge r
    \right\}
  \]
  satisfies $\mu(S_r) \leq \eps/r=r$.  For convenience, define $V:= \cD^c \setminus S_r$,
  whereby $\mu(V) \geq \mu(\cD) - r$, and every $z\in V$ has $\ell((Aw)(z)) \leq \sqrt{\eps}$,
  which will be more useful in the form $(Aw)(z) \leq \ell^{-1}(\sqrt{\eps})$.
  Furthermore, since $U_- \subseteq \cD^c$,
  \[
    \tau^2 \leq \mu(U_-)
    = \mu(U_- \cap V) + \mu(U_- \cap V^c)
    \leq \mu(U_- \cap V) + \mu(\cD^c \cap V^c)
    = \mu(U_- \cap V) + \mu(S_r),
  \]
  which rearranges to give $\mu(U_- \cap V) \geq \tau^2 - \mu(S_r) \geq \tau^2 - \sqrt{\eps} \geq \tau^2/2$.
  Note by \Eq{sotired:30} that
  \[
    \mu(U\cap V_-)
    \geq \tau \mu(U_- \cap V)
    \geq \frac {\tau^3}{2},
  \]
  and $z\in V_-$ has $(Aw)(z) \geq -\ell^{-1}(\sqrt{\eps})$,
  and more importantly $\ell((Aw)(z)) \geq \ell(-\ell^{-1}(\sqrt{\eps})) > 6\cR(0)/\tau^3$.
  Consequently, since $\cE(w)\le\cR(0)$ and $\cR(0)>0$,
  \begin{align*}
    \cR(w)
    &\geq \int_{U \cap V_-} \ell(Aw)d\mu(x,y)
    \\
    &\geq \mu(U\cap V_-) \ell(-\ell^{-1}(\sqrt{\eps}))
    \\
    &\geq 3\cR(0)
    \\
    &\ge \cR(0) + \cE(w) + \inf_{v\in L_1(\cH)}\cR(v)
    \\
    &= \cR(0) + \cR(w)
    \\
    &> \cR(w),
  \end{align*}
  a contradiction.
\end{proof}

\begin{proofof}{\Cref{fact:D_controls}}
  First consider $S_+$; by $\ell \geq 0$ and convexity, $\ell(c_1)\ge \ell(0)+ c_1\ell'(0)\ge c_1 \ell'(0)$,
  and
  \[
    \cR(w) \geq \int_{S_+}\ell(Aw)d\mu \geq \ell(c_1) \mu(S_+) \geq c_1\ell'(0)\mu(S_+),
  \]
  which rearranges to give $\mu(S_+)\leq \cR(w) / (c_1 \ell'(0))$.

  To control $S_-$ we take advantage of $S_+$: the region $\cD$ is a set of points where
  it is impossible for $S_-$ to be large without $S_+$ being large as well, and $\barq$ is a witness
  to this fact.
  To start, note by $A^\top \barq = 0$ and $\barq = 0$ on $\cD^c$ that
  \[
    0 =  \ip{Aw}{\q} = \int_\cD(Aw)\barq d\mu =
    \int_{Aw > 0} (Aw)\barq d\mu
    + \int_{Aw < 0} (Aw)\barq d\mu,
  \]
  which rearranges to yield
  \[
    \int_{Aw > 0} (Aw)\barq d\mu
    = -  \int_{Aw < 0} (Aw)\barq d\mu.
  \]
  Combining this with H\"older's inequality for Orlicz spaces (see \Cref{prop:orlicz}),
  \begin{align*}
    2\|\barq\|_{\beta^*}\left\|Aw\,\one[Aw > 0]\right\|_{\beta}
    &\geq \left|\ip{\barq}{Aw\,\one[Aw > 0]}\right|
    \\
    &= \left| \ip{\barq}{Aw\,\one[Aw < 0]} \right|
    \\
    &\geq c_1 c_2 \mu(S_-).
  \end{align*}
  Now using the definition of $c_{\ell,\mu}$ and rearranging,
  \[
    \cR(w)
    \geq \int_{Aw > 0} \ell(Aw)d\mu
    \geq \frac
    {c_1 c_2 \mu(S_-)}
    {2c_{\ell,\mu} \|\barq\|_{\beta^*}},
  \]
  which gives the desired bound on $\mu(S_-)$.


  In order to control $|\bar\eta - \eta_w|$ on $U$,
  suppose without loss of generality that $\mu$-a.e.\@ $(x,y) \in \cD$ satisfies $(x,-y)\in\cD$ (see \Cref{fact:terrible}),
  and define a scalar $L:=\lim_{r\to\infty}\ell(r)$,
  a set $\cD' := \{z\in\cD : z < L\}$,
  and a function
  \[
    \barf(z) := \begin{cases}
      (\ell^*)'(\barq(z))
      &\textup{when $z\in\cD'$},
      \\
      0
      &\textup{otherwise,}
    \end{cases}
  \]
  Note that $\barf$ is well-defined (and measurable) by construction,
  since strict convexity of $\ell$ implies differentiability of $\ell^*$ along the interior
  of $\dom(\ell^*)$ \citep[Theorem E.4.1.1]{HULL}, which coincides with the set $\cD'$
  (because the domain of $(\ell^*)'$ is the image of $\ell'$ by first-order optimality for conjugates).
  By Taylor's theorem, for every $z\in \cD'$ there exists $q_z \in [(Aw)_z, \barf(z)]$ with
  \begin{align*}
    \ell\bigParens{(Aw)(z)}
    &= \ell(\barf(z)) + \ell'(\barf(z))\bigParens{(Aw)(z) - \barf(z)} + \frac 1 2 \bigParens{(Aw)(z) - \barf(z)}^2\ell''(q_z)
    \\
    &= -\ell^*(\barq(z)) + \barq(z)\bigParens{(Aw)(z)} + \frac 1 2 \bigParens{(Aw)(z) - \barf(z)}^2\ell''(q_z)
    \\
    &\geq -\ell^*(\barq(z)) + \barq(z)\bigParens{(Aw)(z)} + \frac \tau 2 \bigParens{(Aw)(z) - \barf(z)}^2\one[ z\in U],
  \end{align*}
  where the second line made use of $\barq(z) = \ell'(\barf(z))$
  and Fenchel's inequality.
  All terms in this final bound are integrable over $\cD'$,
  and moreover either $\cD=\cD'$, or $L<\infty$ and $\mu(\cD\setminus\cD') = 0$ by \Cref{fact:barq:endpoint},
  thus applying $\int_\cD$ to both sides gives
  \begin{align*}
    \int_{\cD} \ell(Aw)d\mu
    &\geq -\int_\cD\ell^*(\barq)d\mu
    + \ip{\barq}{Aw}
    + \frac \tau 2 \int_U (Aw - \barf)^2d\mu
    \\
    &= \inf_{v\in L_1(\cH)} \int_\cD\ell(Av)d\mu
    + \frac \tau 2 \int_U (Aw - \barf)^2d\mu,
  \end{align*}
  which made use of $A^\top\barq = 0$ and the fact that $\barq$ also
  maximizes the dual problem restricted to $\mu_\cD$ (by \Cref{lemma:dual:D}).
  Rearranging the preceding Taylor expansion gives
  \begin{equation}
    \int_U (Aw - \barf)^2d\mu \leq \frac {2\cE(w;\mu_\cD)}{\tau}.
    \label{eq:D_controls:1}
  \end{equation}
  The next step is to convert between $\barf$ and $\bar\eta$.
  To this end, recall from the construction of $\barf$ and subsequent discussion
  that $\barf(z) = (\ell^*)'(\barq(z))$ for $\mu$-a.e.\@ $z\in\cD$ (and $\mu$-a.e.\@ $(x,y)\in\cD$ has $(x,-y)\in\cD$),
  thus \Cref{fact:duality}.iv
  grants
  \[
    \barf(x,y) = (\ell^*)'(\barq(x,y)) = -(\ell^*)'(\barq(x,-y)) = -\barf(x,-y)
    \qquad\textup{for $\mu$-a.e.\@ $(x,y)\in\cD$}.
  \]
  In particular,
  this grants $\phi(-\barf(z)) = \bar\eta(z)$ for $\mu$-a.e.\@ $z\in\cD$,
  which combined with \Eq{D_controls:1}
  and the notation $L_\phi$ for the Lipschitz constant of $\phi$
  means
  \begin{align*}
    \int_U\left|
    \eta - \eta_w
    \right|d\mu
    &=
    \int_U\left|
    \phi(-\barf(z)) - \phi\bigParens{-(Aw)(z)}
    \right|d\mu(z)
    \\
    &\leq
    L_\phi \int_U\left|
    \barf - Aw
    \right|d\mu
    \\
    &\leq
    L_\phi \sqrt{\int_U\left|
      \barf - Aw
      \right|^2d\mu
    }
    \\
    &\leq
    L_\phi \sqrt{\frac {2\cE(w;\mu_\cD)}{\tau}},
  \end{align*}
  where the penultimate step used Jensen's inequality.
\end{proofof}


\begin{proofof}{\Cref{fact:convergence:simplified}}
  First note that the bound for a single $w\in L_1(\cH)$ immediately implies the convergence result,
  thus it suffices to prove the bound.

  To this end, let $w\in L_1(\cH)$ be given,
  set $\eps := \cE(w)$, and before defining $f_1$ (which will not depend on $w$), define two helper functions:
  \begin{align*}
       \tau(r)&=\inf_{\abs{z}\le r}\ell''(z)\enspace,
  \\   g_\eps &= \begin{cases}
           \min\Set{r\ge0: \tau(r)\le 2\sqrt{\eps}}
           &\text{if $2\sqrt{\eps}\le\tau(1)$,}
       \\
           1
           &\text{if $2\sqrt{\eps}>\tau(1)$.}
       \end{cases}
  \end{align*}
  The key properties are that $\tau(r)>0$, it is continuous, non-increasing, and $\lim_{r\to\infty}\tau(r)=0$ because
  $\liminf_{r\to-\infty} \ell''(r) = 0$ (by \Cref{fact:loss_prop}).
  On the other hand, the definition of $g_\eps$ implies that
  \[
     \tau(g_\eps)=\min\Set{2\sqrt{\eps},\,\tau(1)}
  \enspace,
  \]
  which means that $g_\eps\to\infty$ as $\eps \downarrow 0$.

  Next, $f_1$ will be constructed by splitting $\|\bar \eta - \eta_w\|_1$ along $\cD$ and $\cD^c$,
  and subsequently using \Cref{fact:D_controls} and \Cref{fact:Dc_controls} to control each term.
  When applying \Cref{fact:Dc_controls}, the bound may be simplified by using $r:= \sqrt{\eps}$ and $\mu(\cD^c\setminus S_r) \leq 1$.
  When applying \Cref{fact:D_controls}
  (and using \Cref{cor:hc_split} to give $\cE(w;\mu_\cD) \leq \eps$),
  the bound may be simplified by setting
  $c_1 := g_\eps$,
  $c_2 := \max\{c_1^{-1/2}, \ell'(-g_\eps)\}$,
  and
  $c_3 := \ell'(g_\eps)$.
  With these definitions, it follows that the $\tau$ of \Cref{fact:D_controls},
  which equals $\min\{ \inf_{|z|\leq c_1} \ell''(z) , \inf_{z\in[c_2,c_3]} \ell''((\ell^*)'(z)) \}$, coincides with $\tau(g_\eps)$.
  If $c_3 < c_2$, set $f_1(\eps) = 1$; otherwise, \Cref{fact:D_controls} may be applied,
  and together with the terms from \Cref{fact:Dc_controls} it follows that
  \begin{align*}
    \int |\bar\eta - \eta_w|d\mu
    &=
    \int_{\cD^c} |\bar\eta - \eta_w|d\mu
    + \int_{\cD} |\bar\eta - \eta_w|d\mu
    \\
    &\leq
    \sqrt{\eps} 
    + \sqrt{\eps}\max\left\{ \frac {1}{\ell(0)}, \frac {c_\ell}{\ell'(0)}\right\}
    \\
    &\qquad+ \left(\eps + \inf_{v\in L_1(\cH)} \int \ell(Av)d\mu\right)
      \left(
        \frac {1}{g_\eps \ell'(0)}
        + \frac {2 c_{\ell,\mu}\|\barq\|_{\beta^*}}{\sqrt{g_\eps}}
      \right)
      \\
      &\qquad + \underbrace{L_\phi\sqrt{\frac{2\eps}{\tau(g_\eps)}}}_{\star}
      \\
      &\qquad + \underbrace{\mu(\{z \in \cZ : \barq(z) \in (0, \max\{\ell'(-g_\eps), g_\eps^{-1/2}\})
      \lor \barq(z) > \ell'(g_\eps)\})}_{\triangle}
      \\
    &=: f_1(\eps).
  \end{align*}
  By construction, $f_1$ is well-defined, does not depend on $w$, and satisfies the desired inequality; it remains to be shown that
  $f_1(\eps)\to 0$ as $\eps\downarrow 0$.  It suffices to consider $\triangle$ 
  and $\star$, since all
  other terms contain $\eps$ in a numerator, or $g_\eps$ in a denominator (where, as shown before,
  $g_\eps\to\infty$ as $\eps\downarrow 0$), without any worry of cancellations mitigating these effects.


  To handle $\triangle$, first expand the terms as
  \[
    \triangle \leq
    \underbrace{\mu(\{z \in \cZ : \barq(z) \in (0, \max\{\ell'(-g_\eps), g_\eps^{-1/2}\}) \})}_{\square}
    +
    \underbrace{\mu(\{z \in \cZ : \barq(z) > \ell'(g_\eps)\})}_{\Diamond}.
  \]
  $\square \to 0$ as $\eps \downarrow 0$, since $g_\eps^{-1/2} \to 0$ as $\eps\downarrow 0$
  and since $\ell'(-r) \to 0$ as $r\to\infty$ by \Cref{fact:loss_prop}.
  Lastly, to show $\Diamond\to 0$, there are two cases.  First, if $\ell'$ grows unboundedly, then $\ell'(g_\eps)$ will
  cover all values as $\eps\downarrow 0$.  On the other hand, if $L := \lim_{r\to\infty}\ell'(r) < \infty$,
  then  $\mu(\{z\in\cZ: \bar q\geq L\}) = 0$ as provided by \Cref{fact:barq:endpoint}
  means once again that $\ell'(g_\eps)$ will cover all values ($\mu$-a.e.) as $\eps\downarrow 0$.

  Lastly, to handle $\star$, we use $\tau(g_\eps)=\min\Set{2\sqrt{\eps},\tau(1)}$
  to obtain that
  \[
   L_\phi\sqrt{2\eps/\tau(g_\eps)}=L_\phi\max\{\sqrt{2\eps/\tau(1)},\eps^{1/4}\}
   \enspace,
  \]
  which goes to zero as $\eps\to 0$.
\end{proofof}

\section{Proofs from \Cref{subsec:sketch:findim:gen}}

As in the main text, this appendix first develops the quantity $\Bal(\mu)$,
and then uses it to develop the deviation bounds.

\subsection{Basic properties of $\Bal(\mu)$}

To start, note the range of values for $\Bal(\mu)$.  When $\Ker(\mu)^\perp \neq \{0\}$, then
$\Bal(\mu) \in [0,\mu(\cZ)]$, but the case $\Ker(\mu)^\perp = \{0\}$ means, via usual conventions
on infima, that $\Bal(\mu) = \infty$.  This represents a certain degeneracy in the learning problem;
indeed, it is a scenario where there is nothing to learn, since equivalently $\Ker(\mu)=\R^d$,
and thus every element of $\R^d$ has no impact on the problem.

With this in mind, the first \namecref{fact:bal} relates boundedness and risk.

\begin{lemma}
  \label[lemma]{fact:bal}
  Let finite measure $\mu$, hypotheses $\cH$, loss $\ell\in\Lclass$, and $\bars\in \partial\ell(0)$ be given.
  Then every $w \in \Ker(\mu)^\perp$ satisfies
  \[
    \|w\|_1 \leq \frac {\cR(w)}{\bars\Bal(\mu)}.
  \]
\end{lemma}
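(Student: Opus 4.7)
The plan is to combine two elementary ingredients: a pointwise lower bound $\ell(r)\ge \bars r$ coming from convexity, and the positive homogeneity of $w\mapsto \int |(Aw)|_+\,d\mu$, which allows reduction to the unit-norm constraint appearing in the definition of $\Bal(\mu)$.

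First, I would establish that $\ell(r)\ge \bars r$ for every $r\in\R$. Since $\bars\in\partial\ell(0)$, convexity yields $\ell(r)\ge \ell(0)+\bars r$, and the defining property $\ell(0)>0$ of a classification loss (so in particular $\ell(0)\ge 0$) gives the stated bound. Using this pointwise inequality together with $\ell\ge 0$,
\[
  \cR(w)
  \;=\;\mint \ell(Aw)\,d\mu
  \;\ge\;\mint_{Aw>0} \ell(Aw)\,d\mu
  \;\ge\;\bars\mint |(Aw)|_+\,d\mu.
\]

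Second, I would exploit that $A$ is linear, so $w\mapsto\int |(Aw)|_+\,d\mu$ is positively homogeneous of degree one. For any nonzero $w\in\Ker(\mu)^\perp$, the normalization $\hat w := w/\|w\|_1$ lies in $\Ker(\mu)^\perp$ and has $\|\hat w\|_1=1$, so the definition of $\Bal(\mu)$ immediately implies $\int |(A\hat w)|_+\,d\mu \ge \Bal(\mu)$, and multiplying back by $\|w\|_1$ gives
\[
  \mint |(Aw)|_+\,d\mu \;\ge\; \|w\|_1\,\Bal(\mu).
\]
(The case $w=0$ is vacuous since both sides of the desired inequality vanish.) Chaining the two bounds produces $\cR(w)\ge \bars\,\|w\|_1\,\Bal(\mu)$, and rearrangement yields the claim whenever $\bars\,\Bal(\mu)>0$; the degenerate case $\bars\,\Bal(\mu)=0$ makes the right-hand side $+\infty$ under the usual convention and the inequality holds trivially.

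There is no real obstacle: the lemma is a direct bookkeeping exercise combining the definition of $\Bal(\mu)$ with the subgradient inequality at $0$. The only subtlety is verifying $\ell(r)\ge \bars r$ globally (including for $r<0$, where it is trivial from $\ell\ge 0$ and $\bars\ge 0$), and being careful about the boundary cases $w=0$ and $\bars\,\Bal(\mu)=0$.
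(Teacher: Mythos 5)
Your proposal is correct and matches the paper's argument essentially line for line: both restrict the risk integral to the set $\{Aw>0\}$, apply the subgradient inequality $\ell(Aw)\ge\ell(0)+\bars(Aw)$ there, and invoke the definition of $\Bal(\mu)$ via homogeneity to conclude $\cR(w)\ge\bars\|w\|_1\Bal(\mu)$. Your extra care with the cases $w=0$ and $\Bal(\mu)\in\{0,\infty\}$ is harmless bookkeeping that the paper only touches in a parenthetical remark.
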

\begin{proof}
  By definition of $\Bal(\mu)$, since $\ell \geq 0$,
  \[
    \cR(w)
    \geq \int_{Aw > 0} \ell(Aw)d\mu
    \geq \int_{Aw > 0} (\ell(0) + \bars(Aw))d\mu
    \geq \|w\|_1 \bars \Bal(\mu),
  \]
  which rearranges to give the result.
  (As a sanity check, the case $\Bal(\mu) = \infty$ means $\Ker(\mu)^\perp = \{0\}$, whereby
  $\|w\|_1 = 0$ automatically.)
\end{proof}

Next, note that the infimand within the definition of $\Bal(\mu)$ is Lipschitz continuous.

\begin{lemma}
  \label[lemma]{fact:bal_lipschitz}
  Let finite measure $\mu$ and hypotheses $\cH$, $|\cH|=d$, be given,
  and define the function $f(w) := \int_{Aw > 0} (Aw) d\mu$.
  Then, for every $w,w'\in\R^d$,
  \[
    |f(w) - f(w')| \leq \|w-w'\|_1\mu(\cZ).
  \]
\end{lemma}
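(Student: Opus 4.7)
The plan is to recognize that $f(w) = \int (Aw)_+\, d\mu$ where $(r)_+ := \max\{r,0\}$, and then exploit the fact that the positive-part function is $1$-Lipschitz on $\R$. Concretely, pointwise for each $z \in \cZ$,
\[
  \bigAbs{(Aw)(z)_+ - (Aw')(z)_+} \leq \bigAbs{(Aw)(z) - (Aw')(z)} = \bigAbs{\bigParens{A(w-w')}(z)}.
\]

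Next I would use linearity of $A$ together with the standing assumption that every $h \in \cH$ satisfies $\sup_x |h(x)| \leq 1$, which (as noted in the ``Basic notation'' paragraph of the introduction) yields $\sup_z |(Av)(z)| \leq \|v\|_1$ for any $v \in L_1(\cH)$. Applying this with $v = w - w'$ gives the pointwise bound $|(A(w-w'))(z)| \leq \|w-w'\|_1$.

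Finally, integrating over $\cZ$ against the finite measure $\mu$ yields
\[
  |f(w) - f(w')| \leq \int \bigAbs{(Aw)_+ - (Aw')_+}\, d\mu \leq \|w-w'\|_1 \, \mu(\cZ),
\]
which is the desired Lipschitz bound. There is no real obstacle here: the argument is a combination of the $1$-Lipschitzness of $(\cdot)_+$, linearity of $A$, the uniform bound $\|h\|_\infty \leq 1$, and finiteness of $\mu$; no appeal to duality, to the structure of $\cD$, or to any properties of $\ell$ is required.
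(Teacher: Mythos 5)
Your proof is correct, and it takes a genuinely different (and cleaner) route than the paper's. The paper partitions $\cZ$ into the three disjoint sets $N(w)\cap N(w')$, $N(w)\setminus N(w')$, and $N(w')\setminus N(w)$ (where $N(v) := \{z : (Av)(z) > 0\}$), and bounds the contribution of each piece separately: on the intersection it uses $|A(w-w')| \le \|w-w'\|_1$ directly, and on each symmetric difference it argues that the integrand itself must be at most $\|w-w'\|_1$ in absolute value because it is positive for one weight vector and nonpositive for the other. Your argument collapses all of this into the single pointwise observation that $f(w) = \int \max\{Aw,0\}\,d\mu$ and that $r\mapsto\max\{r,0\}$ is $1$-Lipschitz, after which linearity of $A$, the bound $|(Av)(z)|\le\|v\|_1$, and finiteness of $\mu$ finish the job. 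The two approaches yield the identical constant; yours is shorter and avoids the case analysis entirely, and in fact the paper itself invokes precisely this $1$-Lipschitzness of the positive-part map later (in the proof of \Cref{fact:findim:bal_stable}, to bound Rademacher complexity), so your argument is fully consistent with the paper's toolkit.
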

\begin{proof}
  Let $w,w'$ be given, and define $N(v) := \{ z\in \cZ : (Av)(z) > 0\}$.
  Since $|h(z)| \leq 1$ for every $h\in \cH$ (whereby $|(Av)(z)| \leq \|v\|_1$ for every $z$ and $v$),
  \begin{align*}
    |f(w) - f(w')|
    &=\left|
    \int_{N(w)\cap N(w')} A(w-w')d\mu
    + \int_{N(w)\setminus N(w')} (Aw) d\mu
    - \int_{N(w')\setminus N(w)} (Aw') d\mu
    \right|
    \\
    &\leq \|w-w'\|_1 \mu(N(w) \cap N(w'))
    +
    \left|
    \int_{N(w)\setminus N(w')} (Aw) d\mu
    \right|
    + \left|
    \int_{N(w')\setminus N(w)} (Aw') d\mu
    \right|.
  \end{align*}
  Since the second and third terms are symmetric, it suffices to consider the second.
  To this end, note that
  \[
    z \in N(w)\setminus N(w')
    \quad\Longrightarrow\quad (Aw)(z) > 0
  \]
  and
  \[
    z \in N(w)\setminus N(w')
    \quad\Longrightarrow\quad (Aw)(z)
    = (Aw')(z) + (A(w-w'))(z)
    \leq 0 + \|w-w'\|_1,
  \]
  which combine to give
  \[
    z \in N(w)\setminus N(w')
    \quad\Longrightarrow\quad |(Aw)(z)|
    \leq \|w-w'\|_1,
  \]
  and thus
  \begin{align*}
    \left|
    \int_{N(w)\setminus N(w')} (Aw) d\mu
    \right|
    &\leq
    \int_{N(w)\setminus N(w')}
    \left|
    Aw
    \right|d\mu
    \leq
    \|w-w'\|_1 \mu(N(w) \setminus N(w')).
  \end{align*}
  The result follows.
\end{proof}

It will now be shown that $\Bal(\mu_D) > 0$ whenever $\mu_D > 0$.
To prove this, the preceding \namecref{fact:bal_lipschitz} showed that the infimand in the definition of $\Bal(\mu)$
is continuous; on the other hand, since $|\cH|<\infty$,
the domain of the infimum is compact, which together with the aforementioned continuity
gives attainment at a necessarily positive point.

\begin{lemma}
  \label[lemma]{fact:findim:bal_positive}
  Let finite measure $\mu$, hypotheses $\cH$, loss $\ell \in \Lclass$,
  and dual variable $\q \in L_{\beta^*}(\mu)$ with $\q \geq 0$ $\mu$-a.e.\@
  and $A^\top \q = 0$ be given,
  and set $D := \{ z \in \cZ : \q(z) > 0\}$.
  If $\mu(D) > 0$ and $|\cH|<\infty$, then $\Bal(\mu_D) > 0$.
\end{lemma}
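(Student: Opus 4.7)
The plan is to combine compactness of the constraint set (afforded by $|\cH| = d < \infty$) with the dual-feasibility condition $A^\top \q = 0$ to rule out the possibility of a ``one-sided'' unit vector in $\Ker(\mu_D)^\perp$.

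First I would dispose of the degenerate case $\Ker(\mu_D)^\perp = \{0\}$; then the constraint set in the definition of $\Bal(\mu_D)$ is empty and the infimum is $+\infty > 0$ by convention. So I may assume $\Ker(\mu_D)^\perp \neq \{0\}$, whereby the set $C := \{w \in \Ker(\mu_D)^\perp : \|w\|_1 = 1\}$ is non-empty. Since $|\cH| = d < \infty$ identifies $L_1(\cH)$ with $\R^d$, the set $C$ is closed and bounded in $\R^d$, hence compact.

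Next I would define $f(w) := \int |(Aw)(z)|_+\,d\mu_D(z)$ and note that \Cref{fact:bal_lipschitz}, applied to the finite measure $\mu_D$, shows $f$ is Lipschitz (thus continuous) on $\R^d$. A continuous function on a non-empty compact set attains its minimum, so there exists $\bar w \in C$ with $f(\bar w) = \Bal(\mu_D)$.

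For contradiction, suppose $\Bal(\mu_D) = 0$, i.e., $f(\bar w) = 0$. Then $(A\bar w)(z) \leq 0$ for $\mu_D$-a.e. $z$. Here comes the key step, which I expect to be the only slightly nontrivial point: dual feasibility $A^\top \q = 0$ yields
\[
  0 \;=\; \int (A\bar w)\,\q\,d\mu \;=\; \int_D (A\bar w)\,\q\,d\mu,
\]
where the second equality uses that $\q$ vanishes off $D$. The integrand is non-positive $\mu_D$-a.e. (since $\q > 0$ on $D$ by definition of $D$, and $(A\bar w) \leq 0$ $\mu_D$-a.e.), so it must vanish $\mu_D$-a.e., which forces $(A\bar w)(z) = 0$ for $\mu_D$-a.e. $z$. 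That is, $\bar w \in \Ker(\mu_D)$. Combined with $\bar w \in \Ker(\mu_D)^\perp$, this gives $\bar w = 0$, contradicting $\|\bar w\|_1 = 1$ and completing the proof.
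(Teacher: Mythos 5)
Your proposal is correct and follows essentially the same route as the paper: Lipschitz continuity of the infimand via \Cref{fact:bal_lipschitz}, compactness of the unit sphere in $\Ker(\mu_D)^\perp$ from $|\cH|<\infty$ to attain the infimum, and the decorrelation constraint $A^\top\q=0$ together with $\q>0$ on $D$ to rule out a vanishing value. The only cosmetic difference is that you argue by contradiction at the minimizer, whereas the paper shows the infimand is positive at every point of the compact set; these are equivalent once attainment is established.
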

\begin{proof}
  If $\Ker(\mu)^\perp = \{0\}$, then $\Bal(\mu)=\infty > 0$ immediately,
  thus suppose $\Ker(\mu)^\perp$ is a nontrivial subspace, meaning in particular
  that there exists $w\in\Ker(\mu)^\perp$ with $\|w\|_1 = 1$.
  By \Cref{fact:bal_lipschitz}, the map $w \mapsto \int_{Aw > 0} (Aw) d\mu_D$ is continuous;
  since moreover the (nonempty) set $C=\{ w \in \Ker(\mu)^\perp : \|w\|_1 = 1\}$ is compact when
  $|\cH|<\infty$, it follows that the minimization in the definition of $\Bal(\mu_D)$ is attained
  at some point in $C$.  The remainder of the proof establishes that the integral is indeed
  positive everywhere on $C$.

  Consider any $w \in C$. Since $C\cap\Ker(\mu_D)=\emptyset$,
  it must hold that $\mu_D(\{z \in \cZ : (Aw)(z) \neq 0 \}) > 0$ (else $w\in\Ker(\mu_D)$),
  and thus at least one of the two expressions
  $\int_{Aw > 0}(Aw)d\mu_D$ and $\int_{Aw < 0}(Aw) d\mu_D$ must be nonzero.
  If the first is nonzero, it is positive, and the proof is done,
  thus suppose that only the second expression is nonzero,
  which necessarily means it is negative.
  Since $A^\top \q = 0$, then
  $\ip{Aw}{\q} = 0$, which can be split into negative and positive parts to yield
  \[
    \int_{Aw > 0}(Aw)\q\,d\mu_D = - \int_{Aw < 0}(Aw)\q\,d\mu_D > 0
  \]
  as desired.
\end{proof}

\Cref{fact:findim:bal_positive} was stated for general $\q\in L_{\beta^*}(\mu)$ due to its use in
future lemmas; however, by instantiating it for a dual optimum $\barq$, it follows that
$\Bal(\mu_{\cD}) > 0$ whenever $\mu(\cD)>0$.

\begin{proposition}
  \label[proposition]{fact:findim:D:0}
  Let $\mu$ be a finite measure, $\cH$ be a hypothesis set with $|\cH|<\infty$,
  and $\ell\in \Lclass$ be a loss with corresponding difficult set $\cD$.
  Then $\Bal(\mu_\cD) > 0$ whenever $\mu(\cD) > 0$.
\end{proposition}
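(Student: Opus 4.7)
The plan is essentially an immediate invocation of \Cref{fact:findim:bal_positive}. By the definition of a difficult set, $\cD$ is associated with some dual optimum $\barq$ satisfying $\cD = \{z \in \cZ : \barq(z) > 0\}$, and \Cref{fact:duality} guarantees that this $\barq$ lies in $L_{\beta^*}(\mu)$, is nonnegative $\mu$-a.e., and satisfies the decorrelation constraint $A^\top \barq = 0$ (the latter simply being dual feasibility). These are precisely the hypotheses of \Cref{fact:findim:bal_positive} with $\q := \barq$ and $D := \cD$, so applying it under the assumption $\mu(\cD) > 0$ yields the conclusion $\Bal(\mu_\cD) > 0$.

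I do not expect a genuine obstacle: the real content has already been packaged into \Cref{fact:findim:bal_positive}, which in turn rests on three ingredients worth recalling to confirm applicability. First, the Lipschitz continuity of the map $w \mapsto \int_{Aw > 0}(Aw)\,d\mu_\cD$ from \Cref{fact:bal_lipschitz} ensures the infimand in the definition of $\Bal(\mu_\cD)$ is continuous. Second, finiteness of $\cH$ (i.e., working in $\R^d$) makes the set $\{w \in \Ker(\mu_\cD)^\perp : \|w\|_1 = 1\}$ compact, upgrading the infimum to an attained minimum. Third, the constraint $A^\top \barq = 0$ forces any $w$ with $\int_{Aw < 0}(Aw)\barq\,d\mu_\cD < 0$ to also satisfy $\int_{Aw > 0}(Aw)\barq\,d\mu_\cD > 0$, ruling out a zero value of the infimand at any point of the compact domain. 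The proposition therefore follows as a specialization of \Cref{fact:findim:bal_positive} to the case where $\q$ is a dual \emph{optimum} rather than merely a feasible nonnegative element, with no additional argument required.
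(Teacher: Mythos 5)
Your proposal is correct and matches the paper's proof exactly: both simply instantiate \Cref{fact:findim:bal_positive} with $\q := \barq$ and $D := \cD$, citing \Cref{fact:duality} for the required properties ($\barq \in L_{\beta^*}(\mu)$, $\barq \geq 0$ $\mu$-a.e., $A^\top\barq = 0$). Your recap of the ingredients inside \Cref{fact:findim:bal_positive} is accurate but not needed for this proposition.
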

\begin{proof}
  The result follows by applying \Cref{fact:findim:bal_positive} to $\barq$ and $\cD$,
  noting that they satisfy the desired properties by \Cref{fact:duality} and the definition of $\cD$.
\end{proof}

The next two properties will establish the interplay between $\Bal$, $\cD$, and also primal-dual optimal
pairs $(\bar w, \barq)$.

\begin{lemma}
  \label[lemma]{fact:findim:primal_optimum}
  Let finite measure $\mu$, hypotheses $\cH$ with $|\cH|<\infty$, loss $\ell \in \Lclass$, and $\bars\in\partial\ell(0)$ be given.
  If $\Bal(\mu) > 0$,
  then there exists a primal-dual
  optimal pair $(\bar w, \barq)$ to \Eq{duality} which satisfies
  $\bar w \in \Ker(\mu)^\perp$,
  and $\|\bar w\|_1 \leq \ell(0)\mu(\cZ)/ (\bars \Bal(\mu))$,
  and
  $\barq(z) \in\partial\ell((Aw)(z))$ for $\mu$-a.e.\@ $z\in\cZ$.
\end{lemma}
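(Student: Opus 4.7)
The plan is to first establish existence of a primal optimum $\bar w$ lying in $\Ker(\mu)^\perp$ and satisfying the norm bound, and then invoke \Cref{fact:duality} to obtain a matching dual optimum with the desired subgradient property.

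First I would observe that $\cR(w)$ depends on $w$ only through $Aw$, so $\cR$ is invariant under translations by any element of $\Ker(\mu)$. Consequently $\inf_{w\in L_1(\cH)} \cR(w) = \inf_{w\in \Ker(\mu)^\perp} \cR(w)$, and any minimizer over $\Ker(\mu)^\perp$ is automatically a global primal minimizer over $L_1(\cH) = \R^d$.

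Next I would leverage \Cref{fact:bal} to establish coercivity of $\cR$ on $\Ker(\mu)^\perp$. Since $|\cH|=d<\infty$ and $|(Aw)(z)|\leq\|w\|_1$, the risk $\cR$ is a finite convex function on $\R^d$, hence continuous. Setting $B := \ell(0)\mu(\cZ)/(\bars\Bal(\mu))$ and noting $\cR(0) = \ell(0)\mu(\cZ)$, the sublevel set $S := \{w\in\Ker(\mu)^\perp : \cR(w) \leq \cR(0)\}$ is contained in $\{w\in\R^d : \|w\|_1 \leq B\}$ by \Cref{fact:bal}, and is closed by continuity of $\cR$. Thus $S$ is compact, and Weierstrass yields a minimizer $\bar w\in S$ with $\|\bar w\|_1 \leq B$; by the invariance noted above, $\bar w$ is also a global primal optimum. (The degenerate case $\Bal(\mu)=\infty$ forces $\Ker(\mu)^\perp=\{0\}$, in which case $\bar w = 0$ works trivially; and $\bars > 0$ is ensured because $\ell\in\Lclass$ has $\inf\ell = 0 < \ell(0)$, which precludes $0\in\partial\ell(0)$ for a nondecreasing convex $\ell$.)

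Finally, with $\bar w$ established as a primal optimum, \Cref{fact:duality}(iii) furnishes a dual optimum $\barq$ satisfying $\barq(z)\in\partial\ell\bigParens{(A\bar w)(z)}$ for $\mu$-a.e.\@ $z\in\cZ$, completing the pair $(\bar w,\barq)$ with all the claimed properties. I do not anticipate a serious obstacle; the real content is that \Cref{fact:bal} supplies exactly the coercivity needed to upgrade the primal infimum to an attained minimum in finite dimensions, after which existence of a compatible dual optimum is immediate from \Cref{fact:duality}.
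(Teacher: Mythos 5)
Your proposal is correct and follows essentially the same route as the paper: restrict to $\Ker(\mu)^\perp$ by translation invariance, use the norm bound from \Cref{fact:bal} applied to the sublevel set of $\cR(0)=\ell(0)\mu(\cZ)$ to get a compact feasible region in $\R^d$, attain the minimum by continuity, and then pull the dual optimum with the subgradient relation from \Cref{fact:duality}. Your extra remarks on the degenerate case $\Bal(\mu)=\infty$ and on $\bars>0$ are fine but not points of divergence.
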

\begin{proof}
  From the definition of $\Ker(\mu)$, it suffices to optimize the primal over
  $\Ker(\mu)^\perp$, and by
  \Cref{fact:loss_prop}, the primal optimization can be further restricted to the compact
  convex set
  \[
    \Set{w\in\Ker(\mu)^\perp:\:\|w\|_1 \leq \ell(0)\mu(\cZ)/ (\bars \Bal(\mu))}
  \enspace,
  \]
  where a minimum $\bar w$ is attained (by continuity of convex functions on
  $\R^d$). The relationship with $\barq$ follows by \Cref{fact:duality}.
\end{proof}




The remainder of this subsection will build towards the construction of
the canonical difficult set $\Dcan$:
the difficult sets $\cD$ provided by losses in $\Ldiff$ are ``maximal'' in the measure-theoretic sense.
To this end, the following \namecref{fact:findim:hc_rels:helper} is essential.

\begin{lemma}
  \label[lemma]{fact:findim:hc_rels:helper}
  Let finite measure $\mu$,
  hypotheses $\cH$ with $|\cH|<\infty$,
  loss $\ell \in \Ldiff$,
  and a corresponding difficult set $\cD$ be given.
  For any set $S$ with $\Bal(\mu_S) > 0$,
  then $\mu(S \setminus \cD) = 0$.
\end{lemma}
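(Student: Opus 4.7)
The plan is to argue by contradiction: set $T := S\setminus\cD = \{z\in S : \barq(z)=0\}$, assume $\mu(T)>0$, and build a dual-feasible perturbation of $\barq$ that strictly improves the dual objective, contradicting the optimality of $\barq$ guaranteed by \Cref{fact:duality}. The guiding intuition is that $\barq$ sits at zero on $T$, while \Cref{fact:loss_prop}.v says $\ell^*(s)/s\to -\infty$ as $s\downarrow 0$, so adding any feasible strictly positive density on $T$ should produce an infinite first-order gain.

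To produce such a density, I would apply \Cref{fact:findim:primal_optimum} to the subproblem on $\mu_S$ (permitted by $\Bal(\mu_S)>0$) and obtain a primal--dual optimal pair $(\bar w_S,\barq_S)$ with $\|\bar w_S\|_1<\infty$. Because $\ell\in\Ldiff$ is differentiable, the relation $\barq_S\in\partial\ell(A\bar w_S)$ collapses to $\barq_S=\ell'(A\bar w_S)$ $\mu_S$-a.e., and combining $|(A\bar w_S)(z)|\le\|\bar w_S\|_1$ with $\ell'>0$ and the monotonicity of $\ell'$ (\Cref{fact:loss_prop}.ii) pins $\barq_S$ between two strictly positive constants on $S$. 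The extension $\tilde\barq_S:=\barq_S\one_S$ is bounded, hence lies in $L_{\beta^*}(\mu)$, is non-negative, and satisfies $A^\top\tilde\barq_S=0$ because $\int(Aw)\tilde\barq_S\,d\mu=\int(Aw)\barq_S\,d\mu_S=0$ by feasibility for the subproblem. Therefore $\barq_\epsilon:=(1-\epsilon)\barq+\epsilon\tilde\barq_S$ is dual-feasible for every $\epsilon\in[0,1]$, and $\tilde\barq_S$ is strictly positive on $T$ by construction.

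I would then define $H(\epsilon):=\int\ell^*(\barq_\epsilon)d\mu-\int\ell^*(\barq)d\mu$: optimality of $\barq$ forces $H(\epsilon)\ge 0$ and hence $H(\epsilon)/\epsilon\ge 0$ for $\epsilon\in(0,1]$. I would then show $H(\epsilon)/\epsilon\to-\infty$, which is the contradiction. Writing $D_\epsilon(z):=[\ell^*(\barq_\epsilon(z))-\ell^*(\barq(z))]/\epsilon$, convexity of $\ell^*$ along the convex combination makes $\epsilon\mapsto D_\epsilon(z)$ non-decreasing, so $D_\epsilon\le D_{1/2}\le\ell^*(\tilde\barq_S)-\ell^*(\barq)$ for $\epsilon\in(0,1/2]$; the dominator is in $L^1(\mu)$ since $\tilde\barq_S$ is bounded on $S$ and $\barq$ is dual-optimal. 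Pointwise on $T$, $\barq=0$ and $\tilde\barq_S>0$, so $D_\epsilon=\ell^*(\epsilon\tilde\barq_S)/\epsilon=\tilde\barq_S\cdot\ell^*(\epsilon\tilde\barq_S)/(\epsilon\tilde\barq_S)\to-\infty$ by \Cref{fact:loss_prop}.v; elsewhere the pointwise limit is finite (on $\cD$ the bound $\barq<L$ $\mu$-a.e.\@ from \Cref{fact:barq:endpoint} ensures $\ell^*$ is differentiable at $\barq$, and off $S\cup\cD$ both summands vanish). Since $\mu(T)>0$, $\int\lim_{\epsilon\downarrow 0}D_\epsilon\,d\mu=-\infty$, and reverse Fatou with dominator $D_{1/2}$ yields $\limsup_{\epsilon\downarrow 0}H(\epsilon)/\epsilon\le-\infty$, contradicting $H(\epsilon)/\epsilon\ge 0$.

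The main obstacle is not the pointwise singularity itself but producing an integrable upper envelope $D_{1/2}$: without it, the pointwise divergence on $T$ would not transfer to a strict global drop in the dual objective. The envelope only exists because $\tilde\barq_S$ is pinned between positive constants on $S$, and that pinning is the conjunction of \Cref{fact:findim:primal_optimum} (which bounds $\|\bar w_S\|_1$) with differentiability of $\ell$ (which makes $\barq_S$ a single-valued function of $A\bar w_S$); this is precisely why the hypothesis $\ell\in\Ldiff$ (not merely $\Lclass$) is needed.
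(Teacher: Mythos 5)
Your proof is correct, but after the common opening move it diverges from the paper's argument in its key mechanism. Both proofs begin identically: apply \Cref{fact:findim:primal_optimum} to $\mu_S$ (licensed by $\Bal(\mu_S)>0$) to extract a dual optimum $\barq_S=\ell'(A\bar w_S)$ that is bounded between two positive constants on $S$ and satisfies the decorrelation constraint. The paper then forms $\hat\q=\barq+\barq_S\one_S$, which is feasible and strictly positive on $D:=\cD\cup S$, re-applies \Cref{fact:findim:bal_positive} and \Cref{fact:findim:primal_optimum} to get a dual optimum $\barq_D>0$ on $D$, and derives the contradiction from strict convexity of $\q\mapsto\int\ell^*(\q)d\mu$ (\Cref{prop:subgrad:strict}) together with the fact (via \Cref{lemma:dual:D}) that $\barq$ must also be optimal for the problem restricted to $\mu_D$ yet differs from $\barq_D$ on the positive-measure set $S\setminus\cD$. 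You instead perturb $\barq$ linearly toward $\tilde\barq_S=\barq_S\one_S$ and show the one-sided directional derivative of the dual objective is $-\infty$, using the infinite slope of $\ell^*$ at zero (\Cref{fact:loss_prop}.v), monotone difference quotients, and an integrable upper envelope. Your route is closer to the paper's stated intuition that the entropy's infinite slope at zero prevents $\barq$ from vanishing unless forced, and it needs only the single dual optimum $\barq$ rather than a second optimization over $D$; the cost is the dominated-convergence bookkeeping (your envelope argument is sound: $\ell^*(\tilde\barq_S)$ is bounded because $\barq_S$ lives in $[\ell'(-\|\bar w_S\|_1),\ell'(\|\bar w_S\|_1)]$, and $-\ell^*(\barq)\le\ell(0)$). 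The paper's route avoids all integrability issues by leaning on uniqueness of the restricted dual optimum, at the price of invoking more of its own machinery. Both uses of $\ell\in\Ldiff$ match: you need it for $\barq_S>0$ and for \Cref{fact:loss_prop}.v; the paper needs it for $\barq_S>0$ and for strict convexity of $\ell^*$.
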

\begin{proof}
  Suppose contradictorily that $\mu(S \setminus \cD) > 0$,
  which entails $\mu(S) > 0$,
  and let $\barq$ denote the dual optimum associated with $\cD$.

  Applying \Cref{fact:findim:primal_optimum} to loss $\ell$ and measure $\mu_{S}$,
  it follows from $\Bal(\mu_S) > 0$ that there exists a primal optimum $\bar w_S$
  and corresponding dual optimum
  $\barq_S$ with $\barq_S \in \partial\ell(A\bar w_S)$ $\mu_S$-a.e., and consequently $\barq_S >0$
  $\mu$-a.e.\@ since $\ell \in \Ldiff$.

  Define $\hat \q(z) := \barq(z) + \barq_S(z)\one[z\in S]$,
  whereby, for any $w\in \R^d$,
  \[
    \ip{Aw}{\hat \q}
    = \ip{Aw}{\barq}
    + \ip{Aw}{\barq_S\one[z\in S]}
    = \int (Aw)(\barq) d\mu
    + \int_{S} (Aw) \barq_S d\mu
    = 0 + 0.
  \]
  Additionally, $\hat \q \geq 0$ $\mu$-a.e.\@ with $\hat \q > 0$ $\mu$-a.e.\@ along $D := \cD \cup S$,
  thus \Cref{fact:findim:bal_positive} and \Cref{fact:findim:primal_optimum} may be applied to obtain
  a dual optimum $\barq_D$ which is positive $\mu$-a.e.\@ along $D$.  Of course, $\barq$ was feasible for
  the problem restricted to $D$, and by strict convexity of $\int \ell^*(\q)d\mu$ (see \Cref{prop:subgrad:strict}),
  it follows that
  $\int \ell^*(\barq_D)d\mu_{D} < \int \ell^*(\barq)d\mu_{D}$.
  But this is a contradiction, since $z\mapsto \barq_D(z)\one[z\in D]$ is feasible for the full
  problem without changing its objective value, and meanwhile $\barq$ was optimal for the
  full problem.
\end{proof}

\begin{proofof}{\Cref{fact:findim:Dcan}}
  We will show that
  if $\ell_1\in \Lclass$ and $\ell_2\in\Ldiff$ with corresponding difficult
  sets $\cD_1$ and $\cD_2$, then $\mu(\cD_1\setminus \cD_2) = 0$,
  which will yield the proof.
  For (i), it suffices to instantiate the claim with $\ell_1 = \ell$ and $\ell_2 = \exp$,
  and (ii) follows by instantiating the claim once with $\ell_1 = \ell$ and $\ell_2=\exp$,
  and a second time with $\ell_1 = \exp$ and $\ell_2 = \ell$.

  The proof of the general claim is as follows.
  If $\mu(\cD_1) = 0$, then $\mu(\cD_1\setminus \cD_2) = 0$ automatically,
  thus suppose $\mu(\cD_1) > 0$.
  In this case, \Cref{fact:findim:bal_positive} grants $\Bal(\mu_{\cD_1}) > 0$,
  and thus applying \Cref{fact:findim:hc_rels:helper} with loss $\ell_2$ and
  $S:=\cD_1$ gives $\mu(\cD_1\setminus \cD_2) = 0$.
\end{proofof}

\subsection{Splitting $\hcR_n$ along $\cD$ and $\cD^c$}
\label{sec:findim:hcR_and_D}

As granted by the development of $\Bal(\mu)$, recall from the main text that there exists a canonical difficult set $\Dcan$,
which by \Cref{fact:findim:Dcan} is not tied to any specific loss.
The goal of this section is to show, as stated in \Cref{fact:findim:hc_split},
that $\hcR$ can be split along $\Dcan$, just like $\cR$ (cf. \Cref{cor:hc_split}),
despite $\Dcan$ being constructed over $\mu$ rather than $\hmu$.

As the first step, we establish
the existence of arbitrarily good predictors over $\Dcan^c$.

\begin{lemma}
  \label[lemma]{fact:findim:hc_split:helper}
  Let finite measure $\mu$,
  hypotheses $\cH$ with $|\cH|=d$,
  and canonical difficult set $\Dcan$ be given.
  Then for every $\eps > 0$,
  there exists $v \in \R^d$ such that $(Av)(z) = 0$ for $\mu$-a.e.\@ $z\in \Dcan$,
  and $\mu(\{ z \in \Dcan^c : (Av)(z) \geq -1 \}) \leq \eps$.
\end{lemma}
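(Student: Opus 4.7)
The plan is to construct $v$ by taking a minimizing sequence for the exponential risk and projecting it onto $K:=\Ker(\mu_\Dcan)$, then verifying that the projection still attains very negative margins on $\Dcan^c$. The key insight is that the $K^\perp$-component of any such minimizing sequence is uniformly norm-bounded via the balance condition on $\mu_\Dcan$, so the projection perturbs margins on $\Dcan^c$ only by a bounded additive constant, which is harmless under an exponential Markov bound.

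Concretely, I would first dispense with the degenerate case $\mu(\Dcan)=0$, where $K=\R^d$ and the statement reduces to finding $w$ with $\mu(\{z\in\Dcan^c:(Aw)(z)\ge -1\})\le\eps$; this follows directly from \Cref{cor:hc_split} (driving $\cR(w_i;\mu_{\Dcan^c})\to 0$ along any minimizing sequence for $\cR$ with $\ell=\exp$) combined with Markov's inequality applied to $\exp$. In the main case $\mu(\Dcan)>0$, I would fix a minimizing sequence $(w_i)$ for $\cR$ with $\ell=\exp$, orthogonally decompose $w_i=v_i+u_i$ in $\R^d$ with $v_i\in K$ and $u_i\in K^\perp=\Ker(\mu_\Dcan)^\perp$, and argue as follows. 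By construction $(Av_i)=0$ $\mu$-a.e.\ on $\Dcan$, so $(Au_i)=(Aw_i)$ $\mu$-a.e.\ on $\Dcan$, which gives the identity $\cR(u_i;\mu_\Dcan)=\cR(w_i;\mu_\Dcan)$; the right side is bounded by $\cR(w_i)\le\cR(0)+1$ for large $i$ since the sequence is minimizing. Applying \Cref{fact:bal} to $\mu_\Dcan$ (whose balance is strictly positive by \Cref{fact:findim:D:0}) yields a constant $C$ with $\|u_i\|_1\le C$ for all large $i$, hence the pointwise bound $|(Au_i)(z)|\le\|u_i\|_1\le C$. This gives the inclusion $\{z\in\Dcan^c:(Av_i)(z)\ge -1\}\subseteq\{z\in\Dcan^c:(Aw_i)(z)\ge -1-C\}$, whose mass is at most $e^{1+C}\cR(w_i;\mu_{\Dcan^c})$ by Markov applied to $\exp$; by \Cref{cor:hc_split} this tends to $0$. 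Choosing $i$ large enough delivers $v:=v_i$ satisfying both required properties.

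The main obstacle is the uniform control of $\|u_i\|_1$: a priori both $v_i$ and $u_i$ could blow up with $i$, and a growing $\|u_i\|_1$ would overwhelm the Markov bound on $\Dcan^c$ since $(Av_i)$ inherits an additive perturbation of size $\|u_i\|_1$. The saving grace is precisely that $u_i$ lands in $\Ker(\mu_\Dcan)^\perp$, where \Cref{fact:bal} supplies a linear norm bound in terms of $\cR(u_i;\mu_\Dcan)$, and the identity $\cR(u_i;\mu_\Dcan)=\cR(w_i;\mu_\Dcan)$, together with the uniform bound on $\cR(w_i)$, makes this norm bound uniform in $i$. This interplay between the balance on $\Dcan$, the orthogonal splitting, and \Cref{cor:hc_split} is what forces the projection onto $K$ to preserve the quality of the margins on $\Dcan^c$.
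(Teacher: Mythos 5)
Your proposal is correct and follows essentially the same route as the paper's proof: decompose a near-optimal weight vector into its $\Ker(\mu_{\Dcan})$ and $\Ker(\mu_{\Dcan})^\perp$ components, use \Cref{fact:bal} together with $\Bal(\mu_{\Dcan})>0$ (\Cref{fact:findim:D:0}) to bound the norm of the orthogonal part uniformly, and then absorb that bounded perturbation into a Markov-type bound on $\Dcan^c$ via \Cref{cor:hc_split}. The only differences are cosmetic (a minimizing sequence rather than a single $u$ with a pre-chosen accuracy $\eps_0$, and a direct exponential Markov bound in place of \Cref{fact:Dc_controls}), and your explicit use of $\cR(u_i;\mu_{\Dcan})=\cR(w_i;\mu_{\Dcan})$ makes the norm-bound step slightly more transparent than the paper's.
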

\begin{proof}
  Throughout this proof, set $\ell := \exp\in\Lbnd$, whereby $\cD = \Dcan$ by definition of $\Dcan$,
  and let $\eps > 0$ be given.

  There are now two cases to consider; first consider the simpler case $\mu(\Dcan) = 0$.
  Choose any $v\in L_1(\mu)$ with $\cE(v) \leq \eps\ell(-1)$,
  and first note that $(Av)(z) = 0$ for $\mu$-a.e.\@ $z\in\Dcan$ without any effort since $\mu(\Dcan) = 0$.
  On the other hand, by \Cref{fact:Dc_controls} (with $r:= \ell(-1)$),
  \begin{align*}
    \mu(\{ z \in \Dcan^c : (Av)(z) \geq -1 \})
    &= \mu(\{ z \in \Dcan^c : \ell((Av)(z)) \geq \ell(-1) \})
    \leq \frac {\cE(w)}{\ell(-1)}
    \leq \eps,
  \end{align*}
  which completes the proof under the assumption $\mu(\Dcan) = 0$.

  Now consider the case $\mu(\Dcan) > 0$,
  whereby \Cref{fact:findim:D:0} grants $\Bal(\mu_{\Dcan}) > 0$.
  Let $\bars\in\partial\ell(0)$ be arbitrary and set
  \[
    B := \frac {1 + \inf_{v\in \R^d} \cR(v;\mu_{\Dcan})}{\bars\Bal(\mu_{\Dcan})},
  \]
  whereby \Cref{fact:bal}
  grants that every $w \in \Ker(\mu_{\Dcan})^\perp$ with $\cE(w, \mu_{\Dcan}) \leq 1$
  satisfies
  \[
    \|w\|_1
    \leq \frac {\cR(w;\mu_{\Dcan})}{\bars \Bal(\mu_{\Dcan})}
    \leq B.
  \]

  Now let $\eps > 0$ be given, and choose $\eps_0 \in (0, \min\{\eps^2,1\}]$ such that
  $\ell^{-1}(\sqrt{\eps_0}) \leq -1 - B$.
  Let $u\in \R^d$ be given
  with $\cE(u) \leq \eps_0$,
  whereby \Cref{cor:hc_split} grants that
  $\max\{
    \cE(u; \mu_{\Dcan})
    ,
    \cR(u; \mu_{\Dcan^c})
  \} \leq \eps_0$ as well.
  By \Cref{fact:Dc_controls} with $r:=\sqrt{\eps_0}$
  and the above definitions,
  \begin{align*}
    \eps
    \geq \sqrt{\eps_0}
    &\geq \mu(\{ z \in \Dcan^c : \ell((Au)(z)) \geq \sqrt{\eps_0}\})
    \\
    &= \mu(\{ z \in \Dcan^c : (Au)(z) \geq \ell^{-1}(\sqrt{\eps_0})\})
    \\
    &\geq  \mu(\{ z \in \Dcan^c : (Au)(z) \geq -1 -B\}).
  \end{align*}
  Now write $u$ as the direct sum $u = v \oplus u_\perp$, where $v \in \Ker(\mu_{\Dcan})$ and
  $u_\perp \in \Ker(\mu_{\Dcan})^\perp$.  By the earlier derivation, $\|u_\perp\|_1 \leq B$,
  and thus, for any $z\in\cZ$, we have $|(Au_\perp)(z)|\leq B$, and
  \[
    (Av)(z) \geq -1
    \quad\Longrightarrow\quad
    (Av)(z) \geq -1 -B - (Au_\perp)(z)
    \quad\iff\quad
    (Au)(z)
    \geq - 1 -B.
  \]
  This combines with the earlier derivation to yield
  \begin{align*}
    \eps
    &\geq \mu(\{ z \in \Dcan^c : (Au)(z) \geq -1 - B\})
    \geq \mu(\{ z \in \Dcan^c : (Av)(z) \geq -1\})
  \end{align*}
  as desired.
\end{proof}

Thanks to the preceding \namecref{fact:findim:hc_split:helper},
splitting $\hcR_n$ into $\Dcan$ and $\Dcan^c$,
is straightforward (and similar to the proof of \Cref{cor:hc_split}).

\begin{lemma}
  \label[lemma]{fact:findim:hc_split}
  Let probability measure $\mu$,
  hypotheses $\cH$ with $|\cH|=d$,
  and canonical difficult set $\Dcan$ be given.
  Then, for any $\delta>0$,
  with probability $1-\delta$ over a random draw of size $n$ from $\mu$,
  every loss $\ell\in\Ldiff$ satisfies
  \[
    \inf_{w\in \R^d} \cR(w;\hmu)
    =
    \inf_{w\in \R^d} \cR (w;\hmu_{\Dcan})
  \]
  and for every $w\in \R^d$
  \[
    \cE(w;\hmu_{\Dcan}) \leq \cE(w;\hmu)
\enspace,
\quad
    \cR(w;\hmu_{\Dcan^c}) \leq \cE(w;\hmu)
\enspace.
  \]
\end{lemma}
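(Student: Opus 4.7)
The plan is to first establish the infimum equality $\inf_{w\in\R^d}\cR(w;\hmu)=\inf_{w\in\R^d}\cR(w;\hmu_{\Dcan})$; the pointwise bounds on $\cE(w;\hmu_{\Dcan})$ and $\cR(w;\hmu_{\Dcan^c})$ then follow by precisely the same algebra as in the proof of \Cref{cor:hc_split}, using $\cR(w;\hmu)=\cR(w;\hmu_{\Dcan})+\cR(w;\hmu_{\Dcan^c})$ and $\ell\geq 0$. The ``$\leq$'' direction of the infimum equality is trivial since $\cR(w;\hmu_{\Dcan})\leq\cR(w;\hmu)$; the substantive task is to show, for every $w\in\R^d$, every $\ell\in\Ldiff$, and every $\eta>0$, the existence of $u\in\R^d$ with $\cR(u;\hmu)\leq\cR(w;\hmu_{\Dcan})+\eta$.

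The key tool is \Cref{fact:findim:hc_split:helper}, invoked with $\eps:=\delta/n$ to produce a sample-independent vector $v\in\R^d$ (crucially, depending neither on $w$ nor on the particular $\ell\in\Ldiff$, since $\Dcan$ is loss-independent by \Cref{fact:findim:Dcan}) such that the sets $N:=\{z\in\Dcan:(Av)(z)\neq 0\}$ and $M:=\{z\in\Dcan^c:(Av)(z)\geq -1\}$ satisfy $\mu(N)=0$ and $\mu(M)\leq\delta/n$. Let $E:=\{\hmu(N)=0\}\cap\{\hmu(M)=0\}$. Then $\Pr[\hmu(N)>0]=0$ since $\mu(N)=0$, and a union bound across the $n$ sample points gives $\Pr[\hmu(M)>0]\leq n\mu(M)\leq\delta$, whence $\Pr(E)\geq 1-\delta$. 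Condition on $E$ henceforth; note the event is the same for every $\ell\in\Ldiff$.

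For any $w\in\R^d$, any $\ell\in\Ldiff$, and $\alpha>0$, set $u_\alpha:=w+\alpha v$. Linearity of $A$ combined with $\hmu(N)=0$ gives $(Au_\alpha)(z)=(Aw)(z)$ for $\hmu$-a.e.\ $z\in\Dcan$, so $\cR(u_\alpha;\hmu_{\Dcan})=\cR(w;\hmu_{\Dcan})$. On $\Dcan^c$, every sample point avoids $M$ (since $\hmu(M)=0$), so $(Av)(z)<-1$ and $(Au_\alpha)(z)\leq\|w\|_1-\alpha$; since $\ell$ is nondecreasing with $\lim_{r\to-\infty}\ell(r)=0$ (\Cref{fact:loss_prop}.i), this yields $\cR(u_\alpha;\hmu_{\Dcan^c})\leq\ell(\|w\|_1-\alpha)\to 0$ as $\alpha\to\infty$. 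Picking $\alpha$ large enough delivers the required $u_\alpha$. The main technical obstacle is arranging a single $v$ that simultaneously handles all $\ell\in\Ldiff$ and survives the passage from $\mu$ to $\hmu$; this is resolved precisely because $\Dcan$ does not depend on $\ell$, and because sharpening the helper lemma's $\eps$ to $\delta/n$ lets the naive Markov-style union bound $n\mu(M)$ absorb into $\delta$.
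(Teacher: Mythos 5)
Your proposal is correct and follows essentially the same route as the paper's proof: both invoke \Cref{fact:findim:hc_split:helper} to obtain a sample-independent, loss-independent direction $v$ that vanishes $\hmu$-a.e.\ on $\Dcan$ and is $\le -1$ on the sampled points of $\Dcan^c$, then drive the empirical risk on $\Dcan^c$ to zero along $w+\alpha v$ and finish with the \Cref{cor:hc_split} algebra. The only (immaterial) difference is in the probability bookkeeping: you take $\eps=\delta/n$ and a union bound over the $n$ sample points, whereas the paper sets $\eps$ via $\mu(\Dcan^c)$ and $n_c=|\cS\cap\Dcan^c|$ and bounds $(1-\eps_0)^{n_c}$ directly; your choice even sidesteps the need to condition on $n_c$.
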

\begin{proof}
  Let $\cS$ to denote the sample,
  where $\cS_c := \cS \cap \Dcan^c$ with size $n_c:=|\cS_c|$ denotes the portion falling within $\Dcan^c$,
  and $\cS_\cD := \cS\cap\Dcan$ the portion falling within $\Dcan$.
  If $n_c = 0$, then all claims follow immediately (indeed, this implies $\hmu = \hmu_{\Dcan}$ and $\cR(w;\hmu_{\Dcan^c}) = 0$),
  thus suppose $n_c > 0$.

  By \Cref{fact:findim:hc_split:helper} with $\eps := \mu(\Dcan^c)\eps_0$
  and $\eps_0\coloneqq \min\{ 1/2, -\ln(1-\delta)/(2n_c)\}$,
  there exists $v \in \R^d$ satisfying $(Av)(z) = 0$ for $\mu$-a.e.\@ $z\in \Dcan$, and
  \[
    \mu(\{z\in \Dcan^c : (Av)(z) \geq -1\}) \leq \eps=\mu(\Dcan^c)\eps_0,
  \]
  or equivalently
  \[
    \mu_{|\Dcan^c}(\{z\in \Dcan^c : (Av)(z) \geq -1\}) \le \eps_0
  \enspace.
  \]
  Consequently, with probability at least $1-\delta$ over the draw of $\cS$, conditional on $n_c$, we obtain $(Av)(z_i) = 0$ for
  every $z_i \in \cS_{\cD}$ and $(Av)(z_i) \le -1$ for every $z_i\in \cS_c$, the latter statement since
  \begin{align*}
    \Pr[\forall i \in \cS_c,\, (Av)(z_i) \leq -1]
    &=
    \mu_{|\Dcan^c}(\{z\in \Dcan^c : (Av)(z) \geq -1\})^{n_c}
    \\
    &\geq (1-\eps_0)^{n_c}
    \\
    &\geq (1-(2\eps_0) + (2\eps_0)^2/2)^{n_c}
    \\
    &\geq \exp(-2n_c\eps_0)
    \\
    &\geq 1-\delta.
  \end{align*}
  Since $\lim_{z\to-\infty}\ell(z) = 0$,
  every $w\in \R^d$ and $z_i\in\cS_c$ satisfies
  $\inf_{r\geq 0} \ell((A(w+rv))(z_i)) = 0$, thus
  \begin{align*}
    \inf_{w\in \R^d} \cR(w;\hmu)
    &=
    \inf_{\substack{w\in \R^d \\ r \geq 0}} \int \ell(A(w +rv)) d\hmu
    \\
    &=
    \inf_{\substack{w\in \R^d \\ r \geq 0}} \left(\int_{\Dcan} \ell(A(w +rv)) d\hmu +  \int_{\Dcan^c} \ell(A(w +rv)) d\hmu\right)
    \\
    &=
    \inf_{w\in \R^d} \left(\int_{\Dcan} \ell(Aw) d\hmu
      +
    \inf_{r \geq 0} \int_{\Dcan^c} \ell(A(w +rv)) d\hmu\right)
    \\
    &=
    \inf_{w\in \R^d} \cR(w;\hmu_{\Dcan}).
  \end{align*}

%

  For the last part,
  proceeding similarly to the proof of \Cref{lemma:dual:D},
  the above derivation and $\ell \geq 0$ grant
  \begin{align*}
    \cE(w;\hmu_{\Dcan}) = \int_{\Dcan}\ell(Aw)d\hmu - \inf_{v\in \R^d} \int_{\Dcan} \ell(Av)d\hmu
    & \leq \int\ell(Aw)d\hmu - \inf_{v\in \R^d} \int \ell(Av)d\hmu
    = \cE(w;\hmu)
  \end{align*}
  directly, and
  \begin{align*}
    \cR(w;\hmu_{\Dcan^c})
    &= \int\ell(Aw)d\hmu - \int_{\Dcan}\ell(Aw)d\hmu
    \\
    &\leq \int\ell(Aw)d\hmu - \inf_{v\in \R^d} \int_{\Dcan}\ell(Av)d\hmu
    \\
    &= \int\ell(Aw)d\hmu - \inf_{v\in \R^d} \int\ell(Av)d\hmu
    = \cE(w;\hmu).
  \end{align*}
\end{proof}

\subsection{Controlling deviations over $\Dcan^c$}

This section will establish the deviation bound over $\Dcan^c$,
namely \Cref{fact:findim:gen:helper:Dc}.
Superficially, this is merely an application of the VC theorem,
however there are two issues under the surface.

First, note that this \namecref{fact:findim:gen:helper:Dc} does not attempt to control
$\cE(w;\mu_{\Dcan^c})$, which of course would allow a direct application of \Cref{fact:Dc_controls}
and ostensibly an easy analysis over $\Dcan^c$ within the proof of \Cref{fact:findim:gen:simplified}.
The reason is that there is evidence $\cE(w;\mu_{\Dcan^c})$ cannot be controlled without
placing strong restrictions on $w\in \R^d$ \citep{colt_logistic_hazan}.
On the other hand, the margin-like bound here is sufficient
to aid in the proof of \Cref{fact:findim:gen:simplified}.

The second issue is that $\Dcan^c$ is an object constructed over $\mu$ rather than $\hmu$,
which is circumvented via \Cref{fact:findim:hc_split}.

\begin{lemma}
  \label[lemma]{fact:findim:gen:helper:Dc}
  Let probability measure $\mu$,
  hypotheses $\cH$ with $|\cH|=d$,
  loss $\ell \in \Lclass$,
  and canonical difficult set $\Dcan$ with $\mu(\Dcan^c) > 0$ be given.
  Then with probability at least $1-2\delta$ over an i.i.d. draw of size $n$ from $\mu_{|\Dcan^c}$,
  every $w\in \R^d$ and $\eps > 0$ with $\eps \geq \sqrt{\hcE_n(w)}$ satisfies
  \[
    \mu_{|\Dcan^c}\left(
      \left\{
        z \in \Dcan^c : \ell(Aw) \geq \eps
      \right\}
    \right)
    \leq
    \frac{\sqrt{\hcE_n(w)}}{\hmu(\Dcan^c)}
    + \sqrt{\frac {32(1+d)\ln(1 + n) + 4\ln(1/\delta)}{n}}.
  \]
\end{lemma}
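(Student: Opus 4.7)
The plan is to combine a VC-type uniform deviation bound on loss level sets with an empirical Markov inequality. Since $\ell\in\Lclass$ is nondecreasing, each target set may be rewritten as
\[
  B_{w,\eps}\coloneqq\{z\in\Dcan^c:\ell((Aw)(z))\ge\eps\}
  =\{z\in\Dcan^c:(Aw)(z)\ge\ell^\flat(\eps)\},
\]
where $\ell^\flat(\eps)\coloneqq\inf\{s:\ell(s)\ge\eps\}$. The full family $\{z\mapsto\one[(Aw)(z)\ge t] : (w,t)\in\R^d\times\R\}$ corresponds, via the lifted feature map $(x,y)\mapsto(yh_1(x),\dots,yh_d(x),1)$, to halfspaces through the origin in $\R^{d+1}$, so its VC dimension is at most $d+1$ and Sauer--Shelah bounds its growth function by $(n+1)^{d+1}$. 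A standard one-sided VC inequality (e.g., \citealp{DGL}) then yields, with probability at least $1-\delta$ over the draw from $\mu_{|\Dcan^c}$,
\[
  \mu_{|\Dcan^c}(B_{w,\eps})\;\le\;\hmu(B_{w,\eps})+\sqrt{\frac{32(1+d)\ln(1+n)+4\ln(1/\delta)}{n}}
\]
uniformly in $(w,\eps)$, where $\hmu$ is the empirical distribution of the sample.

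Next, I would bound the empirical level set via the empirical Markov inequality from \Cref{fact:loss_markov}, which gives $\hmu(B_{w,\eps})\le\hcR_n(w;\hmu)/\eps$. Because the sample lies entirely in $\Dcan^c$, $\hmu(\Dcan^c)=1$; to close the loop I need $\hcR_n(w;\hmu)\le\hcE_n(w)$, which holds with equality as soon as $\inf_{v\in\R^d}\hcR_n(v;\hmu)=0$. To secure this, I would mimic the proof of \Cref{fact:findim:hc_split:helper}: that lemma produces a $v\in\R^d$ with $(Av)(z)=0$ $\mu$-a.e.\@ on $\Dcan$ and with $\mu_{|\Dcan^c}(\{(Av)\ge -1\})\le\eps_0$; choosing $\eps_0\le -\ln(1-\delta)/n$ ensures, with probability at least $1-\delta$, that $(Av)(z)\le -1$ at every one of the $n$ sampled points, and scaling $v$ by $r\to\infty$ drives $\hcR_n(rv;\hmu)\to 0$ since $\lim_{z\to-\infty}\ell(z)=0$.

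Combining these ingredients, whenever $\eps\ge\sqrt{\hcE_n(w)}$ the Markov step gives
\[
  \hmu(B_{w,\eps})\;\le\;\frac{\hcE_n(w)}{\eps}\;\le\;\sqrt{\hcE_n(w)}\;=\;\frac{\sqrt{\hcE_n(w)}}{\hmu(\Dcan^c)},
\]
and stacking the VC deviation on top yields the claimed inequality uniformly in $w$ and $\eps$. A union bound over the two failure events produces the overall confidence $1-2\delta$.

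The main obstacle is obtaining uniformity in both $w$ and the threshold $\eps$ within a single VC bound (a simple union bound over thresholds would lose the clean $(d+1)$-dimensional dependence); the lift to the augmented $(d+1)$-dimensional halfspace class handles this without enlarging the complexity. A secondary subtlety is that $\inf_v\hcR_n(v;\hmu)=0$ holds only with high probability, not deterministically, which is exactly the source of the second $\delta$; verifying this carefully requires adapting the helper construction of \Cref{fact:findim:hc_split:helper} to sampling from $\mu_{|\Dcan^c}$ rather than from $\mu$ itself, which is straightforward since the underlying property of $v$---that $(Av)\le -1$ on almost all of $\Dcan^c$---transfers under conditioning.
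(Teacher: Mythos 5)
Your proposal is correct and follows essentially the same route as the paper: a VC bound for the class of margin level sets $\{(Aw)\geq t\}$ (which dominates the loss level sets since $\ell$ is nondecreasing), the empirical Markov inequality of \Cref{fact:loss_markov}, and a high-probability argument that the empirical infimum of the risk restricted to the sampled points of $\Dcan^c$ is zero — the paper packages this last step as a citation to \Cref{fact:findim:hc_split}, whose proof is exactly the adaptation of \Cref{fact:findim:hc_split:helper} you sketch. The only cosmetic difference is your reading $\hmu(\Dcan^c)=1$ for a sample drawn wholly from $\mu_{|\Dcan^c}$, versus the paper's use of the full-sample empirical measure; both yield the stated inequality.
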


\begin{proof}
  Since the set of linear threshold functions with weight vectors in $\R^d$ has VC dimension $1+d$,
  the nondecreasing property of $\ell$ combined with the VC theorem grants \citep{bbl_esaim}, with probability $1-\delta$,
  \begin{align*}
    &\sup_{\substack{w\in \R^d \\ s > 0}}
    \left|
    \mu_{|\Dcan^c}\left(
      \left\{
        z \in \Dcan^c : \ell(Aw) \geq s
      \right\}
    \right)
    -
    \hmu_{|\Dcan^c}\left(
      \left\{
        z \in \Dcan^c : \ell(Aw) \geq s
      \right\}
    \right)
    \right|
    \\
    &\qquad\qquad\qquad
    \leq
    \sup_{\substack{w\in \R^d \\ r\in \R}}
    \left|
    \mu_{|\Dcan^c}\left(
      \left\{
        z \in \Dcan^c : (Aw) \geq r
      \right\}
    \right)
    -
    \hmu_{|\Dcan^c}\left(
      \left\{
        z \in \Dcan^c : (Aw) \geq r
      \right\}
    \right)
    \right|
    \\
    &\qquad\qquad\qquad
    \leq \sqrt{\frac {16(1+d)\ln(1 + n) + 2\ln(1/\delta)}{n}}.
  \end{align*}
  Now let $w\in \R^d$ be arbitrary.
  Instantiating the above display with $w$ and $s:=\eps > 0$,
  and then applying \Cref{fact:loss_markov} on measure $\hmu_{|\Dcan^c}$ and set $\Dcan^c$ with scalar $r:= \eps > 0$,
  it follows that
  \[
    \mu_{|\Dcan^c}\left(
      \left\{
        z \in \Dcan^c : \ell(Aw) \geq \eps
      \right\}
    \right)
    \leq
    \frac {\cR(w;\hmu_{|\Dcan^c})}{\eps}
    + \sqrt{\frac {16(1+d)\ln(1 + n) + 2\ln(1/\delta)}{n}}.
  \]
  To finish,
  \Cref{fact:findim:hc_split} grants
  $\cR(w;\hmu_{|\Dcan^c}) = \cR(w;\hmu_{\Dcan^c})/\hmu(\Dcan^c) \leq \hcE_n(w) / \hmu(\Dcan^c)$
  after discarding another $\delta$ failure probability,
  and the result follows by plugging in $\eps \geq \sqrt{\hcE_n(w)}$.
\end{proof}

\subsection{Controlling deviations over $\Dcan$: Proof of \Cref{fact:findim:gen:helper:D}}

In order to establish \Cref{fact:findim:gen:helper:D}, two lemmas are in order:
the first shows that $\Bal(\mu)$ is statistically stable,
and the second develops a refined deviation bound for $\Lbnd$ over $\Dcan$.

\begin{lemma}
  \label[lemma]{fact:findim:bal_stable}
  Let probability measure $\mu$,
  hypotheses $\cH$ with $|\cH|=d$,
  and canonical difficult set $\Dcan$ with $\mu(\Dcan)>0$ be given.
  Then with probability at least $1-\delta$ over a draw of size $n$ from $\mu_{|\Dcan}$,
  $\Ker(\mu_{|\Dcan}) \subseteq \Ker(\hmu_{|\Dcan})$, and
  \[
    \Bal(\hmu_{|\Dcan})
    \geq
    \Bal(\mu_{|\Dcan}) - 8\sqrt{\frac{\ln(2d) + \ln(4/\delta)}{n}}.
  \]
  Moreover, if $n \geq 256 (\ln(2d + \ln(4/\delta)) / \Bal(\mu_{|\Dcan})^2$,
  then
  $\Bal(\hmu_{|\Dcan}) \geq \Bal(\mu_{|\Dcan})/2$
  and
  $\Ker(\mu_{|\Dcan}) = \Ker(\hmu_{|\Dcan})$.
\end{lemma}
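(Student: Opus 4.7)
The plan is to first establish the kernel inclusion, then quantify how $\int |Aw|_+$ concentrates uniformly over the unit $\ell_1$-ball via Rademacher complexity, and finally combine these to control $\Bal(\hmu_{|\Dcan})$. For the inclusion, note that $\Ker(\mu_{|\Dcan})$ is a linear subspace of $\R^d$, hence spanned by some finite basis $v_1,\dots,v_k$. Each set $N_j:=\{z\in\Dcan:(Av_j)(z)\neq 0\}$ has $\mu_{|\Dcan}$-measure zero, so with probability one no sample lies in $\bigcup_{j\leq k} N_j$; linearity of $A$ then forces $(Av)(z_i)=0$ for every $v\in\Ker(\mu_{|\Dcan})$ and every sample $z_i$, so $\Ker(\mu_{|\Dcan})\subseteq\Ker(\hmu_{|\Dcan})$ almost surely.

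The core technical step is a one-sided uniform concentration for $w\mapsto\int |Aw|_+\,d\mu_{|\Dcan}$ over $\{\|w\|_1\le 1\}$. Since $(Aw)(x,y)=\sum_h w[h]\bigl(-yh(x)\bigr)$ with $|-yh(x)|\le 1$, \Cref{fact:rad:prop}(iii) bounds the Rademacher complexity of $\{Aw:\|w\|_1\le 1\}$ by $\sqrt{2\ln(2d)/n}$. The contraction principle \Cref{fact:rad:prop}(iv) applied to the $1$-Lipschitz map $|\cdot|_+$ preserves this bound, while the elements of the composed class lie in $[0,1]$. Applying \Cref{fact:rad:dev} (with identity loss) and simplifying with $\sqrt{a}+\sqrt{b}\leq\sqrt{2(a+b)}$ yields, with probability at least $1-\delta$,
\[
  \xi_n\coloneqq\sup_{\|w\|_1\le 1}\Bigl(\mint|Aw|_+\,d\mu_{|\Dcan}-\mint|Aw|_+\,d\hmu_{|\Dcan}\Bigr)
  \;\le\; 8\sqrt{\frac{\ln(2d)+\ln(4/\delta)}{n}}.
\]

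Combining the two ingredients proves the first claim: because $\Ker(\mu_{|\Dcan})\subseteq\Ker(\hmu_{|\Dcan})$ we get $\Ker(\hmu_{|\Dcan})^\perp\subseteq\Ker(\mu_{|\Dcan})^\perp$, so any $w$ feasible for $\Bal(\hmu_{|\Dcan})$ is automatically feasible for $\Bal(\mu_{|\Dcan})$; hence $\int|Aw|_+\,d\mu_{|\Dcan}\geq\Bal(\mu_{|\Dcan})$, and the uniform deviation transfers this to $\int|Aw|_+\,d\hmu_{|\Dcan}\geq\Bal(\mu_{|\Dcan})-\xi_n$. Taking the infimum over feasible $w$ gives the inequality.

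For the moreover part, the sample-size hypothesis forces $\xi_n\leq\Bal(\mu_{|\Dcan})/2$, yielding $\Bal(\hmu_{|\Dcan})\geq\Bal(\mu_{|\Dcan})/2$ immediately. For the reverse kernel inclusion, suppose contradictorily that some $v\in\Ker(\hmu_{|\Dcan})\setminus\Ker(\mu_{|\Dcan})$ exists; decompose $v=v_\parallel+v_\perp$ with $v_\parallel\in\Ker(\mu_{|\Dcan})$ and $v_\perp\in\Ker(\mu_{|\Dcan})^\perp\setminus\{0\}$. The first paragraph gives $v_\parallel\in\Ker(\hmu_{|\Dcan})$, so $v_\perp=v-v_\parallel\in\Ker(\hmu_{|\Dcan})$ as well; then $u\coloneqq v_\perp/\|v_\perp\|_1$ satisfies $\|u\|_1=1$, $u\in\Ker(\mu_{|\Dcan})^\perp$, and $\int|Au|_+\,d\hmu_{|\Dcan}=0$. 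The uniform bound now forces $\int|Au|_+\,d\mu_{|\Dcan}\leq\xi_n<\Bal(\mu_{|\Dcan})$, contradicting the definition of $\Bal$. The main obstacle is the Rademacher bookkeeping that must yield the constant $8$; everything else is structural manipulation of the two kernels and their complements.
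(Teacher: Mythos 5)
Your proposal is correct and follows essentially the same route as the paper: the a.s.\ kernel inclusion via a basis of $\Ker(\mu_{|\Dcan})$, a uniform one-sided Rademacher bound for $w\mapsto\int|Aw|_+d\mu_{|\Dcan}$ over the unit $\ell_1$-ball (contraction through the $1$-Lipschitz positive part, yielding the constant $8$), and the transfer of feasibility through $\Ker(\hmu_{|\Dcan})^\perp\subseteq\Ker(\mu_{|\Dcan})^\perp$. The only omissions are cosmetic: the paper separately dispatches the degenerate case $\Bal(\mu_{|\Dcan})=\infty$ (where your argument holds vacuously), and your final contradiction implicitly uses $\Bal(\mu_{|\Dcan})>0$, which follows from $\mu(\Dcan)>0$ via \Cref{fact:findim:D:0}.
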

\begin{proof}
  Let $\cS := (z_i)_{i=1}^n$ denote the random draw from $\mu_{|\Dcan}$.

  First it will be shown that $\Ker(\mu_{|\Dcan}) \subseteq \Ker(\hmu_{|\Dcan})$ with probability 1.
  If $\Ker(\mu_{|\Dcan}) = \{0\}$, then the claim is immediate, thus suppose $\Ker(\mu_{|\Dcan})$
  is a nontrivial subspace of $\R^d$.
  Pick an orthonormal basis $(w_j)_{j=1}^k$ for $\Ker(\mu_{|\Dcan})$.
  For each $w_j$, define $N_j := \{z\in\cZ : (Aw_j)(z) \neq 0\}$,
  whereby $\mu_{|\Dcan}(N_j) = 0$ since $w_j \in \Ker(\mu_{|\Dcan})$.
  Since $\mu(\cup_{j=1}^kN_j) = 0$, then with probability 1 over the draw of sample $\cS$,
  every $z_i\in \cS$ and $w_j$ satisfy $(Aw_j)(z_i) = 0$.
  Consequently, given an arbitrary $w\in\Ker(\mu_{|\Dcan})$,
  there exist scalars $(\alpha_j)_{j=1}^k$ such that $w = \sum_{j=1}^k \alpha_j w_j$,
  and thus, for every $z_i \in \cS$, by linearity
  \[
    (Aw)(z_i)
    = \sum_{j=1}^k \alpha_j (Aw_j)(z_i)
    = 0,
  \]
  meaning $w \in \Ker(\hmu_{|\Dcan})$ as well.
  Hence,
  $\Ker(\mu_{|\Dcan}) \subseteq \Ker(\hmu_{|\Dcan})$.

  Throughout the remainder of this proof, discard the failure event for the above control on $\Ker(\hmu_{|\Dcan})$:
  in particular, suppose $\Ker(\mu_{|\Dcan}) \subseteq \Ker(\hmu_{|\Dcan})$,
  and equivalently $\Ker(\mu_{|\Dcan})^\perp \supseteq \Ker(\hmu_{|\Dcan})^\perp$.

  In order to produce the lower bound on $\Bal(\hmu_{|\Dcan})$,
  first consider the case $\Bal(\mu_{|\Dcan}) = \infty$.
  This means $\Ker(\mu_{|\Dcan})^\perp = \{0\}$,
  which combined with $\Ker(\mu_{|\Dcan})^\perp \supseteq \Ker(\hmu_{|\Dcan})^\perp$
  means $\Bal(\hmu_{|\Dcan}) = \infty$ as well,
  giving the desired bound.

  Now consider the case $\Bal(\mu_{|\Dcan}) < \infty$.
  First note that the map $z \mapsto \max\{(Aw)(z), 0\}$ is the composition
  of the 1-Lipschitz univariate map $\max\{\cdot,0\}$ together with a linear function,
  so by \Cref{fact:rad:prop}, it has Rademacher complexity
  $\|w\|_1\sqrt{2\ln(2d) / n}$ since $\abs{-yh(x)}\le 1$ for all $(x,y)$.
  Combining this with standard deviation bounds for Rademacher complexity
  (\Cref{fact:rad:dev}),
  with probability $1-\delta$,
  \begin{align}
    \sup_{\|w\|\leq 1} \left(
    \int\max\{ Aw, 0\}d\mu_{|{\Dcan}}
    -
    \int\max\{ Aw, 0\}d\hmu_{|{\Dcan}}
    \right)
    &\leq 8 \sqrt{\frac{\ln(2d) + \ln(4/\delta)}{n}}.
    \label{eq:gen:H:1}
  \end{align}
  Combining \Eq{gen:H:1} with $\Ker(\mu_{|\Dcan})^\perp \supseteq \Ker(\hmu_{|\Dcan})^\perp$,
  \begin{align*}
    \Bal(\hmu_{|{\Dcan}})
    &= \inf\left\{\int \max\{Aw, 0\} d\hmu_{|{\Dcan}} : \|w\|_1 = 1, w \in\Ker(\hmu_{|{\Dcan}})^\perp\right\}
    \\
    &\geq \inf\left\{\int \max\{Aw, 0\} d\mu_{|{\Dcan}} : \|w\|_1 = 1, w \in\Ker(\hmu_{|{\Dcan}})^\perp\right\}
    - 8\sqrt{\frac{\ln(2d) + \ln(4/\delta)}{n}}
    \\
    &\geq \inf\left\{\int \max\{Aw, 0\} d\mu_{|{\Dcan}} : \|w\|_1 = 1, w \in\Ker(\mu_{|{\Dcan}})^\perp\right\}
    - 8\sqrt{\frac{\ln(2d) + \ln(4/\delta)}{n}}
    \\
    &= \Bal(\mu_{|{\Dcan}}) - 8\sqrt{\frac{\ln(2d) + \ln(4/\delta)}{n}}.
  \end{align*}

  For the last statements,
  suppose the provided lower bound on $n$;
  this immediately grants the bound $\Bal(\hmu_{|\Dcan})\geq \Bal(\mu_{|\Dcan})/2$ by the above derivation.
  To show $\Ker(\mu_{|\Dcan}) = \Ker(\hmu_{|\Dcan})$,
  it suffices, by the above, to show
  $\Ker(\mu_{|\Dcan}) \supseteq \Ker(\hmu_{|\Dcan})$.
  If $\Ker(\mu_{|\Dcan})^\perp = \{0\}$,
  then $\Ker(\mu_{|\Dcan})^\perp \subseteq \Ker(\hmu_{|\Dcan})^\perp$ immediately since the latter is a subspace,
  thus suppose $\Ker(\mu_{|\Dcan})^\perp$ is nontrivial.
  Let $w\in \Ker(\mu_{|\Dcan})^\perp$ with $\|w\|_1 > 0$ be arbitrary;
  by the definition of $\Bal(\mu_{|\Dcan})$,
  the lower bound on~$n$,
  the deviation bound from \Eq{gen:H:1},
  and since $\Bal(\mu_{|\Dcan}) > 0$ because $\mu(\Dcan)>0$ (see \Cref{fact:findim:D:0}), \begin{align*}
    \int \max\{ Aw, 0\}d\hmu_{|\Dcan}
    &=
    \|w\|_1\left(\int \max\{ Aw/ \|w\|_1, 0\}d\hmu_{|\Dcan}\right)
    \\
    &\geq
    \|w\|_1\left(\int \max\{ Aw/ \|w\|_1, 0\}d\mu_{|\Dcan}
    - 8 \sqrt{\frac{\ln(2d) + \ln(4/\delta)}{n}}\right)
    \\
    &\geq
    \|w\|_1\left(\Bal(\mu_{|\Dcan})
    - 8 \sqrt{\frac{\ln(2d) + \ln(4/\delta)}{n}}\right)
    \\
    &\geq \|w\|_1\Bal(\mu_{|\Dcan})/2 > 0.
  \end{align*}
  Since $\int\max\{Aw,0\} d\hmu_{|\Dcan} > 0$, there must exist $z_i\in\cS$ with $(Aw)(z_i)>0$, and in particular $w\not\in \Ker(\hmu_{|\Dcan})$.
  To see how this gives the result,
  suppose contradictorily that $\Ker(\mu_{|\Dcan}) \subsetneq \Ker(\hmu_{|\Dcan})$,
  whereby there must exist $w \in \Ker(\hmu_{|\Dcan}) \cap \Ker(\mu_{|\Dcan})^\perp$ with $w\neq 0$.
  But the above analysis showed that every $w\in \Ker(\mu_{|\Dcan})^\perp$ with $\|w\|_1 > 0$ has $w\not \in\Ker(\hmu_{|\Dcan})$,
  a contradiction.
\end{proof}

Next lemma
is a refined analysis of deviations over $\Dcan$
under the assumption $\ell\in\Lbnd$.
In particular, $\Bal(\mu_{|\Dcan})$ will be used to establish
strong convexity of $\cR(\cdot;\mu_{|\Dcan})$ around $\bar w$.
The core of the convergence rate argument itself follows almost identically a proof by
\citet[Theorem 1]{karthik_sc_fastrates}, with two important differences that necessitated
a careful reproof.
\begin{itemize}
  \item
    Rather than controlling a function which is strongly convex everywhere thanks to a regularizer,
    it is instead only used that $\cR(\cdot;\mu_{|\Dcan})$ is inherently strongly convex around the optimum without any regularization.

  \item
    This strong convexity around the optimum is only established
    over $\mu_{|\Dcan}$, and in particular not over $\hmu_{|\Dcan}$.
    Of course, since $\Bal(\mu_{|\Dcan})$ is statistically stable (see \Cref{fact:findim:bal_stable}),
    the same proof shows that $\cR(\cdot;\hmu_{|\Dcan})$ is also strongly convex along $\Dcan$,
    but it is interesting and pleasant that the proof works directly without establishing this.
\end{itemize}

\begin{lemma}
  \label[lemma]{fact:findim:sc}
  Let probability measure $\mu$ over $\cZ$,
  hypotheses $\cH$ with $|\cH| < \infty$,
  loss function $\ell \in \Lbnd$,
  and canonical difficult set $\Dcan$ with $\mu(\Dcan) > 0$ be given.
  Let a primal-dual optimal pair $(\bar w,\barq)$ for \Eq{duality}
  with measure $\mu_{|{\Dcan}}$ be given with $\bar w \in \Ker(\mu_{|{\Dcan}})^\perp$.
  Lastly, let $B\geq \|\bar w\|_1$ be given, and set
  $\cW := \{ w \in \Ker(\mu_{|{\Dcan}})^\perp : \|w\|_1 \leq B \}$.
  The following statements hold.
  \begin{enumerate}
    \item
      Set $\tau := \inf_{|z|\leq B} \ell''(z)$
      and $\lambda := \tau \Bal(\mu_{|{\Dcan}})^2$,
      where $\tau > 0$ since $\ell \in \Lbnd$.
      Then, for every $w\in\cW$,
      \begin{equation}
        \cE(w;\mu_{|{\Dcan}})
        \geq \frac \lambda 2 \|w-\bar w\|_1^2.
        \label{eq:findim:sc:0}
      \end{equation}

    \item
      Let a draw $\cS$ from $\mu_{|{\Dcan}}$ of size $n$ be given.
      Then, with probability at least $1-\delta$, every $w\in\cW$ satisfies
      \[
        \cE(w;\mu_{|{\Dcan}})
        \leq
        2\cE(w;\hmu_{|{\Dcan}})
        +
        \frac {1024 \ell'(2B)^2 (\ln(2d) + \ln(4/\delta))}{\lambda n}.
      \]
  \end{enumerate}
\end{lemma}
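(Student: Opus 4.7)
The plan is to handle Part~1 by a Taylor expansion at $\bar w$ combined with a $\Bal$-based lower bound, and then to use the resulting strong-convexity estimate as a variance--mean relation driving a localized Rademacher deviation argument for Part~2. For Part~1, fix $w\in\cW$: since $\|w\|_1,\|\bar w\|_1\leq B$ and every $h\in\cH$ has $|h|\leq 1$, the segment between $(Aw)(z)$ and $(A\bar w)(z)$ lies in $[-B,B]$ for every $z$, where $\ell''\geq\tau>0$ (positivity follows because $\ell\in\Lbnd$ has $\ell''>0$ continuous on the compact interval $[-B,B]$). A second-order Taylor expansion pointwise at $(A\bar w)(z)$ and integration against $\mu_{|\Dcan}$ give
\[
\cR(w;\mu_{|\Dcan})-\cR(\bar w;\mu_{|\Dcan}) \;\geq\; \int \ell'(A\bar w)\,A(w-\bar w)\,d\mu_{|\Dcan} + \frac{\tau}{2}\int\bigParens{A(w-\bar w)}^2 d\mu_{|\Dcan}.
\]
Because $\ell$ is differentiable, \Cref{fact:duality}.iii gives $\barq=\ell'(A\bar w)$ $\mu$-a.e., so the first-order integral equals $(A^\top\barq)(w-\bar w)$ and vanishes by dual feasibility $A^\top\barq=0$. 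Now $w-\bar w\in\Ker(\mu_{|\Dcan})^\perp$, so the definition of $\Bal$ (together with $|s|\geq|s|_+$) yields $\int |A(w-\bar w)|\,d\mu_{|\Dcan}\geq\|w-\bar w\|_1\,\Bal(\mu_{|\Dcan})$, and Jensen's inequality on the probability measure $\mu_{|\Dcan}$ promotes this to the quadratic bound $\int (A(w-\bar w))^2\,d\mu_{|\Dcan}\geq\|w-\bar w\|_1^2\,\Bal(\mu_{|\Dcan})^2$, delivering \eqref{eq:findim:sc:0}.

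For Part~2, set $f_w(z)\coloneqq\ell((Aw)(z))-\ell((A\bar w)(z))$, so that $\bbE[f_w]=\cE(w;\mu_{|\Dcan})$, and $\hat\bbE[f_w]=\cR(w;\hmu_{|\Dcan})-\cR(\bar w;\hmu_{|\Dcan})\geq\cE(w;\hmu_{|\Dcan})$ since $\bar w$ is feasible for the empirical minimization. From $|Aw(z)|,|A\bar w(z)|\leq B$ and monotonicity of $\ell'$, I obtain the pointwise Lipschitz estimate $|f_w(z)|\leq\ell'(2B)\,|A(w-\bar w)(z)|\leq\ell'(2B)\|w-\bar w\|_1$, and combining this with Part~1 yields the key variance--mean relation $\bbE[f_w^2]\leq\ell'(2B)^2\|w-\bar w\|_1^2\leq(2\ell'(2B)^2/\lambda)\,\bbE[f_w]$. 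I would then run a peeling argument over dyadic shells $\cW_k\coloneqq\{w\in\cW:\|w-\bar w\|_1\in(2^{k-1}r_0,2^k r_0]\}$ starting at $r_0\asymp\ell'(2B)/\sqrt{\lambda n}$, with the base ball $\|w-\bar w\|_1\leq r_0$ handled separately: on each shell, \Cref{fact:rad:prop}(iii,iv) and Lipschitz contraction give Rademacher complexity $O(\ell'(2B)\cdot 2^k r_0\sqrt{\ln(2d)/n})$ for the loss class, while $\|f_w\|_\infty\leq\ell'(2B)\cdot 2^k r_0$, so \Cref{fact:rad:dev} bounds $\sup_{w\in\cW_k}\bigParens{\bbE[f_w]-\hat\bbE[f_w]}$ by $O(\ell'(2B)\cdot 2^k r_0\sqrt{(\ln(2d)+\ln(1/\delta))/n})$. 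Combining this with the lower bound $\bbE[f_w]\geq(\lambda/8)(2^k r_0)^2$ from Part~1 and applying AM--GM to the square-root term yields the stated inequality $\bbE[f_w]\leq 2\hat\bbE[f_w]+O(\ell'(2B)^2(\ln(2d)+\ln(1/\delta))/(\lambda n))$, with the union bound over $O(\log(Bn))$ shells absorbed into constants (the factor $1024$ in the statement is generous for this purpose).

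The main obstacle is executing Part~2 without any hypothesis of empirical strong convexity of $\cR(\cdot;\hmu_{|\Dcan})$: standard fast-rate arguments such as those in \citep{karthik_sc_fastrates} exploit strong convexity of a regularizer on both the population and the empirical sides, whereas Part~1 here delivers it only on the population side. The saving observation is that the variance--mean relation $\bbE[f_w^2]\leq(2\ell'(2B)^2/\lambda)\bbE[f_w]$ is itself a purely population statement, so the localization and peeling can be carried out entirely in the population geometry centered at $\bar w$, with the empirical measure entering only through the one-sided uniform deviation inequality that does not need empirical strong convexity---precisely the point flagged in the remarks preceding the lemma.
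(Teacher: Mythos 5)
Your proposal is correct and follows essentially the same route as the paper. Part~1 is proved identically: a pointwise second-order Taylor expansion at $A\bar w$, the first-order term killed by $\barq=\ell'(A\bar w)$ together with $A^\top\barq=0$, and the quadratic term lower-bounded via Jensen's inequality and the definition of $\Bal(\mu_{|\Dcan})$. Part~2 rests on exactly the observation you flag at the end---that the strong convexity (equivalently, the variance--mean relation) is needed only on the population side---which is also the point the paper emphasizes. The one genuine difference is mechanical: the paper, following \citet{karthik_sc_fastrates}, normalizes each $f_w$ by $4^{-k_w}$ and applies \Cref{fact:rad:dev} once to the single class $\cG$, bounding its Rademacher complexity by a geometric series via \Cref{fact:rad:prop}.ii, whereas you union-bound over dyadic shells in $\|w-\bar w\|_1$. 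Your version goes through, but two details deserve care: the per-shell confidence $\delta_k$ places an additive $O(k)$ inside the logarithm, and absorbing it requires the geometric growth $\int f_w\,d\mu_{|\Dcan}\ge(\lambda/8)(2^k r_0)^2$ on shell $k$ rather than just ``constants''; and the base radius should be taken as $r_0\asymp(\ell'(2B)/\lambda)\sqrt{(\ln(2d)+\ln(1/\delta))/n}$ so that the base ball contributes at the order $\ell'(2B)^2(\ln(2d)+\ln(1/\delta))/(\lambda n)$ rather than with an extraneous $\sqrt{\lambda}$. Finally, fix one sign: feasibility of $\bar w$ for the empirical problem gives $\cR(\bar w;\hmu_{|\Dcan})\ge\inf_v\cR(v;\hmu_{|\Dcan})$, hence $\int f_w\,d\hmu_{|\Dcan}\le\cE(w;\hmu_{|\Dcan})$, not $\ge$ as written---and $\le$ is the direction your final step actually requires.
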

\begin{proof}
  \begin{enumerate}

    \item
      To start,
      applying Taylor's theorem pointwise,
      every $w\in\cW$ satisfies
      \begin{align*}
        \cE(w;\mu_{|{\Dcan}})
        &=
        \int \ell(Aw)d\mu_{|{\Dcan}}
        - \int \ell(A\bar w)d\mu_{|{\Dcan}}
        \\
        &\geq \underbrace{\int \ell'(A\bar w)(Aw - A\bar w)d\mu_{|{\Dcan}}}_{\heartsuit}
        + \underbrace{\frac 1 2 \int \tau (Aw - A\bar w)^2d\mu_{|{\Dcan}}}_{\triangle}.
      \end{align*}
      To manage $\heartsuit$, 
      since $\barq = \ell'(A\bar w)$ $\mu$-a.e.\@ (by \Cref{fact:duality}),
      and since $\barq$ is dual feasible (whereby $A^\top \barq = 0$),
      then
      \[
        \heartsuit = \int \ell'(A\bar w)(Aw - A\bar w)d\mu_{|{\Dcan}} = \int \barq (A(w - \bar w))d\mu_{|{\Dcan}} = 0.
      \]
      For the second term $\triangle$, by Jensen's inequality and the definition of $\Bal(\mu_{|{\Dcan}})$,
      \begin{align*}
        \triangle
        &\geq \frac \tau 2 \left( \int |A(w - \bar w)| d\mu_{|{\Dcan}} \right)^2
        \\
        &\geq \frac \tau 2 \left( \int \max\{ A(w - \bar w), 0\} d\mu_{|{\Dcan}} \right)^2
        \\
        &\geq \frac {\tau \Bal(\mu_{|{\Dcan}})^2} 2 \|w - \bar w\|_1^2
      \enspace,
      \end{align*}
      which gives the bound.

    \item
      As discussed above, this proof follows one due to \citet[Proof of Theorem 1]{karthik_sc_fastrates}.

      To start, when $\nu \in \{\mu_{|{\Dcan}}, \hmu_{|{\Dcan}}\}$ and $\ell\in\Lbnd$,
      then $\cR(\cdot;\nu)$ is $L := \ell'(2B)$ Lipschitz;
      additionally, it satisfies \Eq{findim:sc:0},
      meaning $\cR(\cdot;\nu)$ is $\lambda$-strongly-convex around $\bar w$ as above.

      Let $r > 0$ be a constant to be optimized at the end of the proof, and define
      \begin{align*}
        k_w &:= \min\left\{k \in \Z_+ : \cE(w;\mu_{|{\Dcan}}) \leq r 4^k\right\},
        \\
        f_w(z) &:= \ell\bigParens{(Aw)(z)} - \ell\bigParens{(A\bar w)(z)},
        \\
        g_w(z) &:= 4^{-k_w}f_w(z),
        \\
        \cG &:= \left\{
          g_w
          :
          w\in \cW
        \right\}.
      \end{align*}
      Applying \Cref{fact:rad:dev} to $\cG$,
      then with probability at least $1-\delta$, each $w\in\cW$ satisfies
      \begin{equation}
        \int g_wd\mu_{|{\Dcan}}
        \leq
        \int g_wd\hmu_{|{\Dcan}}
        +
        2\underbrace{\fR(\cG)}_{\spadesuit} + 4\underbrace{\sup_{w\in \cW,z\in\cZ}|g_w(z)|}_{\diamondsuit} \sqrt{\frac{2\ln(4/\delta)}{n}}.
        \label{eq:findim:sc:1}
      \end{equation}
      Following the proof scheme of \citet[Proof of Theorem 1]{karthik_sc_fastrates}, the two critical terms are bounded as follows.
      \begin{itemize}
        \item
          First, $\diamondsuit = \sup_{w\in \cW, z\in\cZ}|g_w(z)| \leq L \sqrt{2r / \lambda}$
          as follows.
          For any $w\in\cW$ and any $z\in \cZ$,
          by the fact that $\cR(\cdot ; \mu_{|{\Dcan}})$ is $L$-Lipschitz and satisfies \Eq{findim:sc:0},
          since $\cE(w; \mu_{|{\Dcan}}) \leq  r4^{k_w}$ by definition of $k_w\geq 0$,
          \begin{align*}
            |g_w(z)|
            &= 4^{-k_w} \BigAbs{\ell\bigParens{(Aw)(z)} - \ell\bigParens{(A\bar w)(z)}}
            \\
            &\leq 4^{-k_w} L \|w - \bar w\|_1
            \\
            &\leq 4^{-k_w}L \sqrt{2\cE(w; \mu_{|{\Dcan}})/\lambda}
            \\
            &\leq 4^{-k_w}L \sqrt{2r4^{k_w}/\lambda}
            \\
            &= 4^{-k_w/2} L \sqrt{2r/\lambda}
            \\
            &\leq L \sqrt{2r/\lambda}
          \end{align*}
          as desired.

        \item
          Second, $\spadesuit=\fR(\cG) \leq 4 L\sqrt{r\ln(2d)/(\lambda n)}$.
          For this, first define two helper classes
          \begin{align*}
            \cF(a) &:= \left\{
              f_w
              :
              w \in \cW, \cE(w; \mu_{|{\Dcan}}) \leq a
            \right\},
            \\
            \tilde \cF(a) &:= \left\{
              f_w
              :
              w \in \cW, \|w- \bar w\|_1 \leq \sqrt{2a/\lambda}
            \right\}.
          \end{align*}
          By \Eq{findim:sc:0},
          $f_w \in \cF(a)$ implies $\|w-\bar w\|_1\leq \sqrt{2a/\lambda}$,
          thus $\cF(a) \subseteq \tilde \cF(a)$.
          By various properties of Rademacher complexity from \Cref{fact:rad:prop},
          \begin{align*}
            \fR(\cF(a))
            &\leq \fR(\tilde \cF(a))
            \\
            &\leq L\fR(\{z \mapsto (Aw)(z) : w\in \Ker(\mu_{|{\Dcan}})^\perp, \|w-\bar w\|_1 \leq \sqrt{2a/\lambda}\})
            \\
            &\leq L\fR(\{z \mapsto (Aw)(z) : w\in \Ker(\mu_{|{\Dcan}})^\perp, \|w\|_1 \leq \sqrt{2a/\lambda}\})
            \\
            &\leq L \sqrt{\frac {4a\ln(2d)}{\lambda n}}.
          \end{align*}
          To control $\fR(\cG)$ first note that $0\in \cG$ and $0 \in \cF(a)$ for any $a\geq 0$,
          since these sets all consider the choice $\bar w\in\cW$.  Consequently, \Cref{fact:rad:prop}.ii may
          be applied, which together with \Cref{fact:rad:prop}.i yields
          \begin{align*}
            \fR(\cG)
            &\le \fR\left( \cup_{k=0}^\infty 4^{-k} \cF(r4^k) \right)
            \leq \sum_{k=0}^\infty 4^{-k}\fR\left( \cF(r4^k) \right).
          \end{align*}
          This completes the bound on $\fR(\cG)$, since the above estimates grant
          \begin{align*}
            \sum_{k=0}^\infty 4^{-k}\fR\left( \cF(r4^k) \right)
            \leq L\sqrt{\frac{4r\ln(2d)}{\lambda n}} \sum_{k=0}^\infty 4^{-k/2}
            = 4L\sqrt{\frac{r\ln(2d)}{\lambda n}}.
          \end{align*}
      \end{itemize}

      Continuing with the deviation bound in \Eq{findim:sc:1},
      set $r$ with foresight as
      \[
        r := 8192L^2\left(
            \frac {\ln(2d) + \ln(4/\delta)}{\lambda n}
        \right).
      \]
      Now combining 
      the preceding inequalities on $\diamondsuit$ and $\spadesuit$,
      the choice of $r$,
      and the general inequality $\sqrt a + \sqrt b \leq \sqrt{2(a+b)}$ over nonnegative reals,
      it follows for every $w\in\cW$ that
      \begin{align*}
        \cE(w;\mu_{|\Dcan})
        - \cE(w;\hmu_{|\Dcan})
        &=
        \cR(w;\mu_{|\Dcan})
        - \cR(\barw;\mu_{|\Dcan})
        -\left( \cR(w;\hmu_{|\Dcan})
        - \inf_{v\in \R^d} \cR(w;\hmu_{|\Dcan})\right)
          \\
        &\leq
        \cR(w;\mu_{|\Dcan})
        - \cR(\barw;\mu_{|\Dcan})
        -\left( \cR(w;\hmu_{|\Dcan})
          - \cR(\barw;\hmu_{|\Dcan})
      \right)
        \\
        &=
        \int f_wd\mu_{|{\Dcan}}
        -\int f_wd\hmu_{|{\Dcan}}
        \\
        &=
        4^{k_w}\left( \int g_wd\mu_{|{\Dcan}}
        -\int g_wd\hmu_{|{\Dcan}} \right)
        \\
        &\leq
        4^{k_w}\left(
        8L \sqrt{ \frac{r\ln(2d)}{\lambda n} }
        + 4L \sqrt{ \frac{4r\ln(4/\delta)}{\lambda n} }
        \right)
        \\
        &\leq
        4^{k_w}
        \cdot \sqrt{r}
        \cdot 8L\sqrt{2}
        \cdot \sqrt{ \frac{\ln(2d) + \ln(4/\delta)} {\lambda n} }
        \\
        &=
        \frac {r4^{k_w}}{8}.
      \end{align*}
      To finish the proof, consider two cases for the value of $k_w$: either $k_w = 0$, or $k_w > 0$.
      When $k_w = 0$, then the choice of $r$
      gives
      \[
        \cE(w;\mu_{|{\Dcan}})
        - \cE(w;\hmu_{|{\Dcan}})
        \leq \frac{r4^0}{8}
        = \frac {1024 L^2 (\ln(2d) + \ln(4/\delta))}{\lambda n},
      \]
      which yields the desired inequality since $L \leq \ell'(B)$ and by adding $\cE(w;\hmu_{|{\Dcan}}) \geq 0$ to the right hand side.
      On the other hand,
      when $k_w > 0$,
      then the definition of $k_w$ implies $\cE(w;\mu_{|{\Dcan}}) > 4^{k_w-1}r$,
      which plugged back into the above gives
      \[
        \cE(w;\mu_{|{\Dcan}})
        - \cE(w;\hmu_{|{\Dcan}})
        \leq
        \frac {r4^{k_w}}{8}
        \leq
        \frac {r4^{k_w-1}}{2}
        <
        \frac 1 2
        \cE(w;\mu_{|{\Dcan}}),
      \]
      meaning $\cE(w;\mu_{|{\Dcan}}) \leq 2\cE(w;\hmu_{|{\Dcan}})$, giving the desired bound.
  \end{enumerate}
\end{proof}


\begin{proofof}{\Cref{fact:findim:gen:helper:D}}
  This proof will be focused on parts (ii) and (iii);
  to start, note the following two supporting results,
  the second of which will establish part (i) of the desired statement along the way.
  \begin{itemize}
    \item
      First note how $\inf_{w\in \R^d} \cR(w;\nu)$ can be related for $\nu\in \{\mu_{|\Dcan}, \hmu_{|\Dcan}\}$.
      By \Cref{fact:findim:D:0} and the assumption $\mu(\Dcan) > 0$,
      $\Bal(\mu_{|\Dcan}) > 0$,
      and thus \Cref{fact:findim:primal_optimum} gives a primal optimum $\barw \in \Ker(\mu_{|{\Dcan}})^\perp$
      with
      $\|\bar w\|_1 \leq \ell(0)/(\bars \Bal(\mu_{|{\Dcan}}))$.
      Consequently, by Hoeffding's inequality applied to a random variable with range
      $\ell(\|\bar w\|_1)$, with probability at least $1-\delta$,
      \begin{align*}
        \inf_{v \in \R^d} \cR(v;d\hmu_{|\Dcan})
        &\leq \cR(\bar w;d\hmu_{|\Dcan})
        \leq \cR(\bar w;d\mu_{|\Dcan}) + \ell(\|\bar w\|_1)\sqrt{\frac {2\ln(1/\delta)}{n}}.
        \\
        &=
        \inf_{v \in \R^d} \cR(v;d\mu_{|\Dcan}) + \ell(\|\bar w\|_1)\sqrt{\frac {2\ln(1/\delta)}{n}},
      \end{align*}
      which will be useful via the rearrangement
      \begin{equation}
        - \inf_{v \in \R^d} \cR(v;d\mu_{|\Dcan})
        \leq
        - \inf_{v \in \R^d} \cR(v;d\hmu_{|\Dcan})
        + \ell(\|\bar w\|_1)\sqrt{\frac {2\ln(1/\delta)}{n}}.
        \label{eq:sotired:23}
      \end{equation}
    \item
      Secondly, assume the final consequence of \Cref{fact:findim:bal_stable} holds,
      discarding along the way another failure event having probability at most $\delta$:
      by the lower bound on $n$,
      $\Ker(\mu_{|\Dcan}) = \Ker(\hmu_{|\Dcan})$ and
      \begin{align}
        2\Bal(\hmu_{|{\Dcan}}) \geq \Bal(\mu_{|{\Dcan}}).
        \label{eq:sotired:24}
      \end{align}

      To see the value of $\Ker(\mu_{|\Dcan}) = \Ker(\hmu_{|\Dcan})$,
      given any $w \in \R^d$,
      henceforth write $w = w_0 \oplus w_\perp$ with $w_0 \in \Ker(\mu_{|\Dcan})$ and $w_\perp \in \Ker(\mu_{|\Dcan})^\perp$,
      where additionally $w_0 \in \Ker(\hmu_{|\Dcan})$ and $w_\perp \in \Ker(\hmu_{|\Dcan})^\perp$.
      As a first consequence,
      \begin{equation}
        (Aw)(z) = (Aw_0)(z) + (Aw_\perp)(z) = (Aw_\perp)(z)
        \quad
        \textup{for $\mu$-a.e.\@ and $\hmu$-a.e.\@ $z\in\Dcan$}.
        \label{eq:sotired:200}
      \end{equation}
      which further implies
      \begin{equation}
        \cR(w_\perp;\nu) = \cR(w;\nu)
        \qquad\textup{and}\qquad
        \cE(w_\perp;\nu) = \cE(w;\nu)
        \qquad
        \textup{for $\nu \in \{\mu_{|\Dcan},\hmu_{|\Dcan}\}$.}
        \label{eq:sotired:20000}
      \end{equation}
      Secondly, by \Cref{fact:bal} applied to measure $\hmu_{|\Dcan}$ (where \Cref{fact:bal} requires $w_\perp \in \Ker(\hmu_{|\Dcan})^\perp$),
      and also using \Eq{sotired:24}, \Eq{sotired:20000}, and the form of $B_w$,
      \begin{equation}
        \|w_\perp\|_1
        \leq \frac {\cR(w_\perp;\hmu_{|\Dcan})}{\bars\Bal(\hmu_{|{\Dcan}})}
        \leq \frac {2 \cR(w;\hmu_{|\Dcan})}{\bars\Bal(\mu_{|{\Dcan}})}
        \leq B_w.
        \label{eq:sotired:2000}
      \end{equation}
      This last inequality is essential as it allows $\|w_\perp\|_1$ and $\cR(w;\hmu_{|\Dcan})$ to be
      related, the latter being a purely sample-dependent quantity.
      In particular, part (i) follows immediately by combining \Eq{sotired:200} and \Eq{sotired:2000};
      that is, for $\mu$-a.e.\@ $z\in \Dcan$ and $\hmu$-a.e.\@ $z\in\Dcan$,
      $|(Aw)(z)| = |(Aw_\perp)(z)| \leq \|w_\perp\|_1 \leq B_w$.
  \end{itemize}

  The remainder of the proof will establish parts (ii) and (iii)
  by organizing $\R^d$ into sets $(\cW_i)_{i\geq 1}$ with $\R^d = \cup_{i\geq 1} \cW_i$.
  For every integer $i\geq 1$ define
  \begin{align*}
    R_i &:= i + \|\bar w\|_1, 
    \\
    \cW_i &:= \left\{w \in \R^d : \|w_\perp\|_1 \leq R_i \right\},
    \\
    \delta_i &:= \delta/(i+1)^2.
  \end{align*}
  By this choice, $\sum_{i\geq 1} \delta_i \leq \delta$,
  and thus proving both types of bound contributes to the final $2\delta$ in the full statement.
  Secondly, note how \Eq{sotired:2000} gives a way to use
  $\cR(w;\hmu_{|\Dcan})$ to choose $i$ with $w\in\cW_i$:
  the largest $i$ granting $w\in \cW_i$ satisfies
  $i \leq 1 - \|\bar w\|_1 + \|w_\perp\|_1
  \leq B_w - 1 - \|\bar w\|_1$.

  With this structure in place, parts (ii) and (iii) are established for each $i\in \Z_{++}$ separately as follows,
  and the general bounds follow by
  replacing the term $R_i$ via
  \[
    R_i \leq i + \|\barw\|_1 \leq \BigParens{B_w-1 - \|\bar w\|_1} + \|\barw\|_1 = B_w - 1.
  \]
  Note that, restricted to $\cW_i$, $\ell$ satisfies $\mu$-a.e.\@ boundededness in the
  sense that $\sup_{w\in\cW_i} \ell((Aw)(z)) \leq \ell(R_i)$ for $\mu$-a.e.\@ $z\in \Dcan$ and also $\hmu$-a.e.\@ $z\in\Dcan$
  (by part (i)),
  and $\ell$ is Lipschitz with constant $\ell'(R_i)$.

  \begin{enumerate}
    \item[(ii)]
      Using Rademacher complexity of Lipschitz functions (\Cref{fact:rad:dev,fact:rad:prop}),
      and \Eq{sotired:20000} to swap $w$ and $w_\perp$,
      and noting the general inequality $\sqrt{a} + \sqrt{b} \leq \sqrt{2a + 2b}$ for nonnegative reals,
      then for any fixed $i$, with probability at least $1-\delta_i$,
      every $w\in \cW_i$ satisfies
      \begin{align*}
        \cR(w;\mu_{|{\Dcan}})
        -
        \cR(w;\hmu_{|{\Dcan}})
        &=
        \cR(w_\perp;\mu_{|{\Dcan}})
        -
        \cR(w_\perp;\hmu_{|{\Dcan}})
        \\
        &\leq
        2\ell'(R_i)R_i\sqrt{2\ln(2d)/n} + 4\ell(R_i)\sqrt{2\ln(4/\delta_i)/n}
        \\
        &\leq
        8(\ell'(R_i)R_i + \ell(R_i))\sqrt{\frac {\ln(2d) + \ln(4/\delta_i)}{n}}
        \\
        &\leq
        8\ell(2R_i)\sqrt{\frac {\ln(2d) + \ln(4/\delta_i)}{n}},
      \end{align*}
      where the last simplification used $\ell(R_i) + \ell'(R_i)(2R_i - R_i) \leq \ell(2R_i)$ by convexity.
      To finish the proof, combining the above display with \Eq{sotired:23} gives
      \begin{align*}
        \cE(w;\mu_{|{\Dcan}})
        -
        \cE(w;\hmu_{|{\Dcan}})
        &=
        \cR(w;\mu_{|{\Dcan}})
        -
        \cR(w;\hmu_{|{\Dcan}})
        - \inf_{v\in \R^d}\cR(v;\mu_{|{\Dcan}})
        + \inf_{v\in \R^d}\cR(v;\hmu_{|{\Dcan}})
        \\
        &\leq
        \cR(w;\mu_{|{\Dcan}})
        -
        \cR(w;\hmu_{|{\Dcan}})
        + \ell(\|\bar w\|_1)\sqrt{\frac {2\ln(1/\delta)}{n}}.
        \\
        &\leq
        10\ell(2R_i)\sqrt{\frac {\ln(2d) + \ln(4/\delta_i)}{n}}.
      \end{align*}

    \item[(iii)]
      Similarly to the purely Lipschitz case above, but now using \Cref{fact:findim:sc} to control deviations,
      with probability at least $1-\delta_i$, each $w\in \cW_i$ satisfies
      \begin{align*}
        \cE(w; \mu_{|{\Dcan}})
        &=
        \cE(w_\perp,\mu_{|{\Dcan}})
        \\
        &\leq
        2\cE(w_\perp; \hmu_{|{\Dcan}})
        +
        \frac {1024 \ell'(2R_i)^2 (\ln(2d) + \ln(4/\delta_i))}{n\tau(R_i) \Bal(\mu_{|{\Dcan}})^2}
        \\
        &=
        2\cE(w;\hmu_{|{\Dcan}})
        +
        \frac {1024 \ell'(2R_i)^2 (\ln(2d) + \ln(4/\delta_i))}{n\tau(R_i) \Bal(\mu_{|{\Dcan}})^2}.
      \end{align*}
  \end{enumerate}
\end{proofof}

\subsection{Proof of \Cref{fact:findim:gen:simplified}}

Before proving \Cref{fact:findim:gen:simplified},
note briefly how samples drawn from $\mu$
can be treated as a draw from $\mu_{|\Dcan}$ and $\mu_{|\Dcan^c}$.

\begin{lemma}
  \label[lemma]{fact:findim:sample_split}
  Let probability measure $\mu$
  and a canonical difficult set $\Dcan$ be given.
  Let $\cS$ denote a draw from $\mu$ of size $n \geq 8\ln(1/\delta)$,
  and define $\cS_\cD := \cS \cap \Dcan$ and $\cS_c := \cS \cap \Dcan^c$
  with sizes $n_\cD := |\cS_\cD|$ and $n_c := |\cS_c|$.
  Then with probability at least $1-\delta$ over the draw of $\cS$,
  \[
    n_\cD \geq n\mu(\Dcan)/2,
    \qquad\qquad
    n_c \geq n\mu(\Dcan^c)/2,
  \]
  and $\cS_{\cD}$ and $\cS_c$ can be treated as draws of size $n_{\Dcan}$ and $n_c$ from
  $\mu_{|\Dcan}$ and $\mu_{|\Dcan^c}$, respectively.
\end{lemma}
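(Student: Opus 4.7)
The plan is to treat this as a routine concentration-plus-conditioning argument, split into a size claim and a distributional claim.

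First I would dispose of degenerate cases: if $\mu(\Dcan)=0$ then $n_\cD = 0$ almost surely and the bound $n_\cD \geq n\mu(\Dcan)/2 = 0$ holds trivially, and the ``draw from $\mu_{|\Dcan}$'' assertion is vacuous. Symmetrically for $\mu(\Dcan^c)=0$. So I may assume $\mu(\Dcan), \mu(\Dcan^c) \in (0,1)$.

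For the size bound, note that $n_\cD = \sum_{i=1}^n \one[z_i \in \Dcan]$ is a sum of i.i.d.\@ Bernoulli$(\mu(\Dcan))$ random variables, so $n_\cD \sim \mathrm{Bin}(n, \mu(\Dcan))$. The multiplicative Chernoff bound gives $\Pr[n_\cD < n\mu(\Dcan)/2] \leq \exp(-n\mu(\Dcan)/8)$, and likewise for $n_c = n - n_\cD$ we get $\Pr[n_c < n\mu(\Dcan^c)/2] \leq \exp(-n\mu(\Dcan^c)/8)$. A union bound together with the hypothesis $n \geq 8\ln(1/\delta)$ (slightly strengthened by a constant factor as needed to absorb $\mu(\Dcan), \mu(\Dcan^c)$ which are bounded away from pathologically small values in any interesting application, or more carefully stated in terms of $n\min\{\mu(\Dcan),\mu(\Dcan^c)\}$) yields both lower bounds simultaneously with probability at least $1-\delta$.

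For the distributional claim, I would invoke the standard fact that if $z_1,\ldots,z_n \stackrel{\text{i.i.d.}}{\sim} \mu$ and $\{\Dcan, \Dcan^c\}$ is a measurable partition of $\cZ$, then conditional on the indicator vector $(\one[z_i \in \Dcan])_{i=1}^n$ the restricted samples $\cS_\cD$ and $\cS_c$ are independent, and $\cS_\cD$ is an i.i.d.\@ draw of size $n_\cD$ from $\mu_{|\Dcan}$ while $\cS_c$ is an i.i.d.\@ draw of size $n_c$ from $\mu_{|\Dcan^c}$. This follows directly by disintegration: for any measurable $S \subseteq \Dcan$, $\Pr[z_i \in S \mid z_i \in \Dcan] = \mu(S)/\mu(\Dcan) = \mu_{|\Dcan}(S)$, and the joint independence across $i$ is preserved under conditioning on coordinate-wise events.

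The only mild obstacle is the tension between the stated hypothesis $n \geq 8\ln(1/\delta)$ and the $\mu(\Dcan)$-dependence coming out of Chernoff; I would resolve it by either interpreting the bound as implicitly requiring $n\mu(\Dcan)$ and $n\mu(\Dcan^c)$ to be $\gtrsim \ln(1/\delta)$ (which is the regime of interest when $\Dcan$ carries nontrivial mass in the downstream application to \Cref{fact:findim:gen:simplified}), or by noting that constants hidden in the main theorem's $\Omega(\cdot)$ and $\cO(\cdot)$ absorb the missing $1/\mu(\Dcan)$ factor. Neither adjustment is conceptually deep, and the rest of the proof is a direct concatenation of Chernoff with the conditional i.i.d.\@ fact above.
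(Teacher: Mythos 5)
Your proposal is correct and follows essentially the same route as the paper: a multiplicative Chernoff bound for the sizes $n_\cD$ and $n_c$, combined with the standard rejection-sampling fact that, conditional on the partition indicators, $\cS_\cD$ and $\cS_c$ are i.i.d.\@ draws from $\mu_{|\Dcan}$ and $\mu_{|\Dcan^c}$. The tension you flag between $n\ge 8\ln(1/\delta)$ and the $\mu(\Dcan)$-dependence in the Chernoff exponent is genuinely present in the paper's own one-line proof (which writes the deviation as $\sqrt{2\ln(1/\delta)/n}$ without the $1/\mu(\Dcan)$ factor), so your resolution---absorbing that factor into the $\mu$-dependent constants of the downstream application---is the right reading.
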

\begin{proof}
  Treating the partitioned sample as two independent draws is the usual rejection sampling.
  Moreover,
  by multiplicative Chernoff bounds \citep[Theorem 9.2]{kearns_vazirani} and the lower bound on $n$,
  \begin{align*}
    n_{\Dcan}/n = \hmu(\Dcan) &\geq \mu(\Dcan)\left(1 - \sqrt{2\ln(1/\delta)/n}\right) \geq \mu(\Dcan) / 2,
    \\
    n_c / n = \hmu(\Dcan^c) &\geq \mu(\Dcan^c)\left(1 - \sqrt{2\ln(1/\delta)/n}\right) \geq \mu(\Dcan^c) / 2.
  \end{align*}
\end{proof}

All the pieces are in place to prove \Cref{fact:findim:gen:simplified}.

\begin{proofof}{\Cref{fact:findim:gen:simplified}}
  To prove the bound,
  set $\delta' := \delta/7$,
  and let a sample $\cS$ be given
  with size
  \[
    n \geq 8\ln(1/\delta')
    + \one[\mu(\Dcan) > 0] \left(\frac {512(\ln(2d) + \ln(4/\delta'))}{\mu(\Dcan)\Bal(\mu_{|\Dcan})^2}\right)
    = \Omega(\ln(1/\delta')).
  \]
  By \Cref{fact:findim:sample_split}, conditioning away a first failure probability of $\delta'$,
  $\mu(\Dcan) > 0$ implies the set $\cS_\cD = \cS\cap \Dcan$ is an i.i.d. draw from $\mu_{|\Dcan}$ of size $n_\cD = \Omega(n)$
  satisfying moreover
  \begin{equation}
    n_\cD \geq \frac {256(\ln(2d) + \ln(4/\delta'))}{\Bal(\mu_{|\Dcan})^2},
    \label{eq:sotired:27}
  \end{equation}
  whereas
  $\mu(\Dcan^c) > 0$ implies the set $\cS_c = \cS\cap \Dcan^c$ is an i.i.d. draw from $\mu_{|\Dcan^c}$ of size $n_c = \Omega(n)$.

  Let $w\in \R^d$ be arbitrary,
  and note
  \begin{equation}
    \int |\eta_w - \bar\eta|d\mu
    =
    \underbrace{
      \int_{\Dcan} |\eta_w - \bar\eta|d\mu
    }_{\heartsuit}
    +
    \underbrace{
      \int_{\Dcan^c} |\eta_w - \bar\eta|d\mu
    }_\triangle;
    \label{eq:sotired:26}
  \end{equation}
  the proof will proceed by controlling $\heartsuit$ and $\triangle$ separately,
  where either term is 0 automatically if either $\mu(\Dcan) = 0$ or $\mu(\Dcan^c) = 0$, respectively.
  Note, throughout this proof, that $\cD = \Dcan$ $\mu$-a.e.\@ since $\ell\in\Lbnd$ thanks to \Cref{fact:findim:Dcan}.

  First consider the term $\heartsuit$ (when $\mu(\Dcan) > 0$);
  the goal will be to invoke \Cref{fact:D_controls},
  however many of the messy terms therein will be controlled via \Cref{fact:findim:gen:helper:D}.
  In particular, assume the various parts of \Cref{fact:findim:gen:helper:D},
  and condition away an additional $4\delta'$ failure probability,
  noting that $n_\cD$ is sufficiently large by \Eq{sotired:27}.
  Let $B_w$ be as in the statement of \Cref{fact:findim:gen:helper:D},
  which crucially only depends on $w$ only through $\hcE_n(w)$,
  and satisfies $|(Aw)(z)| \leq B_w$ for $\mu$-a.e.\@ $z\in \Dcan$.
  Furthermore, since $\Bal(\mu_{|\Dcan}) > 0$ by $\mu(\Dcan)>0$ and \Cref{fact:findim:bal_positive},
  by \Cref{fact:findim:primal_optimum}
  there exists a primal optimum $\bar w$ with $\barq_{\Dcan} \in\partial \ell(A\bar w)$ $\mu$-a.e.\@ over $\Dcan$,
  and $\mu$-a.e.\@ $z\in\Dcan$ satisfies
  \[
    |(A\bar w)(z)| \leq \|\barw\|_1 \leq \frac {\ell(0)\mu_{|\Dcan}(\cZ)}{\ell'(0)\Bal(\mu_{|\Dcan})} \leq B_w.
  \]
  By \Cref{lemma:dual:D}, $\tilde\q(z) := \barq_{\Dcan}(z)\one[z\in\Dcan]$ is also optimal for the full problem
  over $\mu$; but \Cref{fact:duality} provided that the full dual optimum is $\mu$-a.e.\@ unique,
  meaning the general $\barq$ and this specialized $\tilde\q$ agree $\mu$-a.e.\@ over $\Dcan$,
  and in particular $\mu$-a.e.\@ $z\in \Dcan$ satisfies
  \[
    \ell'(-B_w) \leq \ell'((A\barw)(z)) \quad = \quad \barq(z) \quad = \quad \ell'((A\barw)(z)) \leq \ell'(B_w).
  \]
  Consequently, applying \Cref{fact:D_controls}
  with constants $c_1 := B_w$, $c_2 := \inf_{|r| \leq B_w} \ell'(r) = \ell'(-B_w)$,
  and $c_3 := \sup_{|r| \leq B_w} \ell'(r) = \ell'(B_w)$,
  we have
  \[
    \mu(S_-)=\mu(S_+)=\mu(V)=0
  \enspace,
  \]
  so it suffices to include the term for $U$. Now using \Cref{fact:findim:gen:helper:D}.iii
  to relate $\cE(w;\mu_{|\Dcan})$ and $\cE_n(w;\hmu_{|\Dcan})$,
  additionally the general inequality $\sqrt{a +b} \leq \sqrt{a} + \sqrt{b}$ for nonnegative reals,
  and lastly recalling the notation $\tau(B_w) := \inf_{|q|\leq B_w} \ell''(q)$ from \Cref{fact:findim:gen:helper:D},
  \begin{align*}
    \heartsuit
    &\leq L_\phi \sqrt{\frac {2\cE(w;\mu_{\Dcan})}{\tau(B_w)}}
    \\
    &\leq L_\phi \sqrt{\frac{2\mu(\Dcan)}{\tau(B_w)}}\left(\sqrt{2\hcE_n(w;\mu_{|\Dcan})}
    + \sqrt{
        \frac {1024 \ell'(2B_w)^2 (\ln(2d) + \ln(4B_w^2/\delta'))}{n_\cD \Bal(\mu_{|\Dcan})^2 \tau(B_w)}
      }
    \right)
    \\
    &\leq
    \cO\left(
      f_2\left(\hcE_n(w)\right)
      \left(
        \sqrt{\hcE_n(w)}
        + \sqrt{\frac {\ln(1/\delta)}{n}}
      \right)
    \right),
  \end{align*}
  where the term $f_2(\cE_n(w))$ collects all terms depending on $B_w$, which itself depends on $w$ only through $\hcE_n(w)$
  as per \Cref{fact:findim:gen:helper:D}.

  Now consider the term $\triangle$ (when $\mu(\Dcan^c) > 0$);
  the goal will be to invoke \Cref{fact:Dc_controls},
  however once again some terms in the bound will be handled manually via \Cref{fact:findim:gen:helper:Dc}.
  Set $\eps := r:= \sqrt{\hcE_n(w)}$,
  and define
  $S_r := \{z\in\Dcan^c : \ell\bigParens{(Aw)(z)} \geq r\}$ exactly as in \Cref{fact:Dc_controls},
  and which also appears in \Cref{fact:findim:gen:helper:Dc};
  applying \Cref{fact:findim:gen:helper:Dc} with this $\eps$ to $w$ (where $\eps >0 $ since $\cE(w;d\hmu_{\Dcan}) > 0$
  by $\mu(\Dcan)>0$ and the assumed lower bound on $n_c$ and since $\ell\in\Lbnd$),
  and discarding an additional $2\delta'$ failure probability along the way,
  $\mu(S_r) \leq \cO\BigParens{\hcE_n(w)^{1/2} + \sqrt{(\ln(n_c) + \ln(1/\delta'))/n_c}}$.
  Combining this bound on $\mu(S_r)$
  with the bound on  $|\eta_w - \bar\eta|$ from \Cref{fact:Dc_controls}
  (which uses the fact that $\cD = \Dcan$ $\mu$-a.e.\@ since $\ell\in\Lbnd$)
  gives
  \begin{align*}
    \triangle
    &\leq \mu(S_r)
    + r \mu(\Dcan^c\setminus S_r)\max\left\{ \frac {1}{\ell(0)}, \frac {c_\ell}{\ell'(0)}\right\}
    =
    \cO\left(
      \sqrt{\hcE_n(w)} + \sqrt{\frac{\ln(n) + \ln(1/\delta)}{n}}
    \right).
  \end{align*}
  Plugging these bounds on $\triangle$ and $\heartsuit$ back into \Eq{sotired:26} gives the desired inequality.

  Lastly, the convergence statement is, as usual, a consequence of the Borel-Cantelli lemma.
  In particular, let $\eps > 0$ be arbitrary, set $\delta_n := 1 / {n^2}$,
  and define the event
  \[
    E_{n,\eps} := \left[
      \int |\eta_{w_n} - \bar\eta|d\mu > \eps
    \right].
  \]
  Applying the bound above for each $w_n$, as $n\to\infty$ and $\hcE_n(w_n) \to 0$,
  we obtain that there exists some $N$ so that every $n > N$ has $\Pr( E_{n,\eps} ) \leq \delta_n$.
  Consequently,
  \[
    \sum_{n\geq 1} \Pr( E_{n,\eps} )
    \leq \sum_{n=1}^N 1 + \sum_{n > N} \delta_n
    < \infty,
  \]
  and the result follows by applying the Borel-Cantelli lemma.
\end{proofof}

\end{document}